\documentclass[journal,twoside]{IEEEtran}

\usepackage{cite} 
\usepackage[cmex10]{amsmath}
\usepackage{amsfonts,amssymb,amsthm,mathtools,bbm,verbatim}
\usepackage[utf8]{inputenc} 
\usepackage{hyperref}
\hypersetup{colorlinks = true, linkcolor = blue, citecolor = blue, urlcolor=blue}
\usepackage[all]{hypcap} 
\usepackage{physics} 
\usepackage[usenames,dvipsnames]{color}
\usepackage{url} 

\usepackage{breakurl}
\usepackage[pdftex]{graphicx} 
\usepackage{array}
\usepackage{mleftright}       
\mleftright                        
\usepackage[capitalize]{cleveref}
\usepackage{cuted}
\usepackage{caption}
\usepackage{subcaption}
\usepackage{tikz}
\usetikzlibrary{positioning, arrows.meta}
\allowdisplaybreaks           
\frenchspacing                
\crefformat{equation}{(#2#1#3)} 
\crefmultiformat{equation}{(#2#1#3)}
{ and~(#2#1#3)}{, (#2#1#3)}{, and~(#2#1#3)}
\usepackage{romannum} 
\crefname{assumption}{Assumption}{Assumptions}
\Crefname{assumption}{Assumption}{Assumptions}
\usepackage{balance}   

\newcommand{\vertiii}[1]{{\left\vert\kern-0.25ex\left\vert\kern-0.25ex\left\vert #1 
    \right\vert\kern-0.25ex\right\vert\kern-0.25ex\right\vert}}

\DeclareSymbolFont{extraitalic}      {U}{zavm}{m}{it}
\DeclareMathSymbol{\stigma}{\mathord}{extraitalic}{168}
\renewcommand{\complement}{\mathrm{c}}
\def\thr{{\mathsf{thr}}}

\def\calS{{\mathcal{S}}}

\def\Z{{\mathcal{Z}}}
\def\G{{\mathcal{G}}}

\def\calM{{\mathcal{M}}}
\def\P{{\mathbb{P}}}
\def\E{{\mathbb{E}}}
\def \PE{{\mathcal{P}_\mathcal{E}}}
\def\var{{\mathrm{var}}}

\def\R{{\mathbb{R}}}
\def\N{{\mathbb{N}}}
\def\Q{{\mathbb{Q}}}
\def\1{{\mathbf{1}}}
\def\0{{\mathbf{0}}}
\def\I{{\mathbbm{1}}}
\def \dmax {{d}}
\def \calE {{\mathcal{E}}}

\def \pihat {{\hat{\pi}}}

\def \Shat {{\hat{S}}}
\def \S {{S}}
\def \Q {{Q}}

\def \calP {{\mathcal{P}}}

\DeclareRobustCommand{\BTLh}{\ensuremath{\mathsf{BTL}_h}}

\def\Ber{{\mathsf{Bernoulli}}}
\def \pisb {{\pi_{B}^{*}}}
\def \Pisb {{\Pi_{B}^{*}}}

\DeclareMathOperator\diag{{diag}}

\newcommand{\T}{\mathrm{T}} 
\newcommand{\F}{\mathrm{F}} 
\renewcommand{\ip}[2]{\langle #1, #2 \rangle}
\DeclareMathOperator*{\argmax}{arg\,max}

\newtheorem{theorem}{Theorem}
\newtheorem{lemma}{Lemma}
\newtheorem{proposition}{Proposition}
\newtheorem{corollary}{Corollary}
\newtheorem{assumption}{Assumption}
\newtheorem{definition}{Definition}

\begin{document}

\pagenumbering{arabic}
\bstctlcite{IEEEexample:BSTcontrol} 

\title{Minimax Hypothesis Testing for the Bradley-Terry-Luce Model} 

\author{Anuran~Makur~and~Japneet~Singh
\thanks{The author ordering is alphabetical. This work was supported by the National Science Foundation (NSF) CAREER Award under Grant CCF-2337808. An earlier version of this paper was presented in part at the 2023 IEEE International Symposium on Information Theory \cite{MakurSinghBTL}.}
\thanks{
Anuran Makur is with the Department of Computer Science and the Elmore Family School of Electrical and Computer Engineering, Purdue University, West Lafayette, IN 47907, USA (e-mail: amakur@purdue.edu).}
\thanks{
Japneet Singh is with the Elmore Family School of Electrical and Computer Engineering, Purdue University, West Lafayette, IN 47907, USA (e-mail: sing1041@purdue.edu).}
}

\maketitle

\begin{abstract}
The Bradley-Terry-Luce (BTL) model is one of the most widely used models for ranking a collection of items or agents based on pairwise comparisons among them. Specifically, given $n$ agents, the BTL model endows each agent $i$ with a latent skill score $\alpha_i > 0$ and posits that the probability that agent $i$ is preferred over agent $j$ in a comparison is $\alpha_i/(\alpha_i + \alpha_j)$. In this work, our objective is to formulate a hypothesis test that determines whether a given pairwise comparison dataset, with $k$ comparisons per pair of agents, originates from an underlying BTL model. We formalize this testing problem in the minimax sense and define the critical threshold of the problem. We then establish upper bounds on the critical threshold for general observation graphs and highlight their dependence on two fundamental structural properties: principal ratio and edge expansion. Additionally, we derive lower bounds on the critical threshold for the special case of complete induced graphs, thereby demonstrating that the critical threshold scales as ${\Theta}((nk)^{-1/2})$ in a minimax sense in this case. In particular, our test statistic for the upper bounds is based on a new approximation we derive for the separation distance between general pairwise comparison models and the class of BTL models. To further assess the performance of our statistical test, we prove upper bounds on conditional probabilities of error. Additionally, we derive several other auxiliary results over the course of our analysis, such as bounds on principal ratios of graphs, $\ell^2$-bounds on BTL parameter estimation under model mismatch, stability of rankings under the BTL model with small model mismatch, etc. Finally, we conduct several experiments on synthetic and real-world datasets to validate some of our theoretical results. Moreover, we also propose an approach based on permutation testing to determine the threshold of our test in a data-driven manner in these experiments. 
\end{abstract}

\begin{IEEEkeywords}
Bradley-Terry-Luce model, hypothesis testing, minimax risk, spectral methods, principal ratio of Markov matrix.
\end{IEEEkeywords}

\section{Introduction}

In recent years, the availability of pairwise comparison data and its subsequent analysis has significantly increased across diverse domains. Pairwise comparison data consists of information gathered in the form of comparisons made among a given set of items or agents. Many real-world applications, including sports tournaments, consumer preference surveys, and political voting, generate data in the form of pairwise comparisons. Such datasets serve a range of purposes, such as ranking items \cite{BradleyTerry1952, Luce1959, Plackett1975, McFadden1973, Zermelo1929, Hunter2004, GuiverSnelson2009, CaronDoucet2012, NegahbanOhShah2017, Chenetal2019,  HendrickxOlshevskySaligrama2020}, analyzing team performance over time \cite{BongLiShrotriyaRinaldo2020}, studying market or sports competitiveness 
\cite{JadbabaieMakurShah2020, JadbabaieMakurShah2020journal}, and even fine-tuning large language models using reinforcement learning from human feedback \cite{InstructGPT, DPO}.

A popular modeling assumption while performing such learning and inference tasks with pairwise comparison data is to assume that the data conforms to an underlying \emph{Bradley-Terry-Luce} (BTL) model \cite{BradleyTerry1952, Luce1959, Plackett1975, McFadden1973, Zermelo1929} as a generative model for the data. 
Given $n$ items $\{1,\dots,n\}$, the BTL model assigns a latent ``skill score'' $\alpha_i > 0$ to each item $i$, representing its relative merit or utility compared to other items, and posits that the likelihood of $i$ being preferred over an item $j$ in a pairwise comparison is given by 
\begin{equation}
\P(i \text{ is preferred over } j) = \frac{\alpha_i}{\alpha_i + \alpha_j}.
\end{equation}
The BTL model is known to be a natural consequence of the assumption of \textit{independence of irrelevant alternatives} (IIA), which is widely used in economics and social choice theory \cite{Luce1959}. 
However, the IIA assumption underlying the BTL model has been questioned, and it has been shown that the BTL model may not always accurately capture real-world datasets \cite{DavidsonMarschak1959, McLaughlinLuce1965, Tversky1972}. For instance, in the case of sports tournaments, the BTL model is incapable of capturing the home advantage effect, which refers to the possible advantage that a home team may experience when playing against a visiting team. The home advantage effect has been observed in several sports, such as soccer \cite{Clarke1995} and cricket \cite{Morley2005}, and ignoring this effect can lead to biased estimates of the skill scores of teams. Additionally, real-world datasets may exhibit non-transitive behaviors that violate the BTL model. 
To address these limitations, modifications of the BTL model that incorporate the home advantage effect \cite[Chapter 10]{Agresti2002}, Thurstonian models \cite{Thurstone1927}, and other generalizations of the BTL model \cite{Hunter2004} have been explored in the literature. Furthermore, to capture more complex behaviors within pairwise comparison datasets, various other models have been introduced, such as models based on Borda scores \cite{ShahWainwright2018, HeckelShahRamchandranWainwright2019} and non-parametric stochastically transitive models \cite{Chatterjee2015, Shahetal2016, Mania2018}.

Nevertheless, primarily because of its simplicity and interpretability, the BTL framework remains one of the most widely used models. A large fraction of the associated results in the literature focuses on estimation of the skill score parameters of the BTL model. Some popular approaches include maximum likelihood estimation \cite{Zermelo1929,Hunter2004,YCOM}, rank centrality (or Markov-chain-based) methods \cite{NegahbanOhShah2012,NegahbanOhShah2017}, least-squares methods \cite{HendrickxOlshevskySaligrama2020}, and non-parametric methods \cite{Chatterjee2015,BongLiShrotriyaRinaldo2020} (also see \cite{GuiverSnelson2009,CaronDoucet2012} for Bayesian inference for BTL models). Once the parameters are estimated, they are then used for inference tasks, such as ranking items and learning skill distributions \cite{JadbabaieMakurShah2020, JadbabaieMakurShah2020journal}. As alluded to above, an inherent assumption for such statistical analysis to be meaningful in real-world scenarios is that the BTL model accurately represents the given pairwise comparison data. If the underlying data-generating process violates the assumptions made by the BTL model, then the model's predictions and guarantees cannot be trusted.  Hence, it is important to develop a systematic method to test the validity of the BTL assumption on real-world datasets. Such a method can help identify scenarios where a BTL model can provide a useful approximation to the underlying preferences or skills, thereby providing essential guidance for downstream applications. 

In this work, our broad objective is to develop and address key questions concerning hypothesis testing for the BTL model given pairwise comparison data (also see \cite{MakurSinghBTL}). Specifically, we aim to formulate and rigorously perform minimax analysis of a test that determines whether a given pairwise comparison dataset conforms to an underlying BTL model. Ideally, we would like to develop a test that does not require additional observations beyond the usual $k=O(1)$ comparisons per pair of items, which is known to be sufficient for consistent parameter estimation (which means the relative $\ell^2$-estimation error vanishes as $n \to \infty$) \cite{Chenetal2019}. 
As we will see, the critical threshold for our testing problem as well as its type \Romannum{1} and type \Romannum{2} probabilities of error converge to zero as $n\to\infty$ with $k=O(1)$ observations. Hence, in this sense, testing for BTL models can be performed under similar conditions on $k$ and $n$ as estimation of BTL model parameters. We next delineate some related literature and then present the main contributions of this work. 

\subsection{Related Literature}
This work lies at the confluence of two fields of study: preference learning and hypothesis testing. On the preference learning front, the problem of analyzing ranked preference data has become increasingly important with the rise in the availability of data on consumer preferences and web surveys. In a complementary vein, the analysis of preference data, such as pairwise comparisons, has a rich history starting from the seminal works \cite{BradleyTerry1952, Plackett1975, Luce1959, McFadden1973, Thurstone1927, Zermelo1929}. Among the various models proposed for ranking and analyzing pairwise comparison data, the BTL model \cite{BradleyTerry1952} has emerged as one of the most widely used and well-studied approaches. Initially introduced in \cite{Zermelo1929} for estimating participants' skill levels in chess tournaments, the BTL model is a special case of the more general Plackett-Luce model \cite{Luce1959,Plackett1975}, originally developed in social choice theory, mathematical psychology, and statistics contexts. The BTL model finds applications in diverse domains, such as sports \cite{Cattelan2012,cattelan2013dynamic}, psychology \cite{Matthews2018pain}, ranking of journals \cite{Stigler1994}, fine-tuning large language models \cite{InstructGPT}, etc. We refer the readers to \cite{Diaconis1988, Shah2019} for a comprehensive overview of different models of rankings. 

In the literature, many studies have focused on estimating parameters of the BTL model and characterizing the corresponding error bounds, cf. \cite{Yellott1977, SimonsYao1999, Hunter2004, NegahbanOhShah2012, NegahbanOhShah2017, Negahbanetal2018, Shahetal2016, Chenetal2019}. For example, \cite{NegahbanOhShah2017} introduces and analyzes spectral estimation methods, \cite{Chenetal2019} presents non-asymptotic bounds for relative $\ell^\infty$-estimation errors of normalized vectors of skill scores for both spectral and maximum likelihood estimators, and \cite{Shahetal2016} provides graph-dependent $\ell^2$-estimation error bounds for the maximum likelihood estimator. 
Another emerging line of work includes uncertainty quantification for the estimated parameters, cf. \cite{UncertaintyQuantificationBTL, LagrangianInference,UncertaintyQuantificationMultiway,UncertaintyQuantificationCovariates}. For example, \cite{UncertaintyQuantificationBTL} proves the asymptotic normality of the maximum likelihood estimator as well as the spectral estimator for BTL skill parameters. In a different vein, \cite{JadbabaieMakurShah2020,JadbabaieMakurShah2020journal} develop a technique for estimating the underlying skill distribution of agents participating in a tournament, introducing a Bayesian flavor to the BTL estimation problem.  
\par
On the other hand, hypothesis testing also has a rich history in statistics, ranging from Pearson's $\chi^2$-test \cite{Lehmann2006} to non-parametric tests \cite{GrettonBorgwardtRaschScholkopf2012}. The classical literature develops some tests for the IIA assumption, e.g., \cite{HausmanMcFadden, McfaddenTyeTrain, ShevFujiiHsiehMcCowan2014}, but these tests are based on asymptotic $\chi^2$-approximations. Moreover, testing for BTL models has been explored far less than the estimation question in recent times. And to the best of our knowledge, no study has developed rigorous hypothesis tests with minimax analysis to determine the validity of the BTL assumption in the literature. The minimax perspective of hypothesis testing, which we specialize in our setting, was initially proposed by \cite{Ingster1994}. Among related works to ours, \cite{RastogiBalakrishnanShahAarti2022} analyzed two-sample testing on pairwise comparison data, and \cite{Seshadri2020} derived lower bounds for testing the IIA assumption given preference data. The formulations in \cite{RastogiBalakrishnanShahAarti2022} and \cite{Seshadri2020} are different to our formulation here and in \cite{MakurSinghBTL}. For example, we quantify the separation distance of a pairwise comparison model from the class of BTL models in terms of the Frobenius norm rather than the variant of total variation (TV) distance used in \cite{Seshadri2020}. For the special case of pairwise comparisons under a complete induced observation graph, the lower bounds in \cite{Seshadri2020} agree with ours in terms of the high-level scaling law of the critical threshold with respect to $n$ and $k$. However, their study does not focus on providing statistical tests and corresponding upper bounds as we do in this work. Finally, in the alternative setting of online preference learning, \cite{Haddenhorst2021} studied the testing problem for various forms of stochastic transitivity and established upper and lower bounds on the sample complexity for testing. 

In this work, we also briefly explore the stability of rankings made under a BTL assumption when the underlying model does not conform to this assumption. Indeed, since the BTL assumption may not perfectly hold in various scenarios, it is crucial to examine the stability of rankings obtained under the BTL assumption. In the literature, \cite{ShahWainwright2018} provided empirical evidence that the BTL assumption is not very robust to changes in the pairwise comparison matrix, and \cite{cucuringu2016sync} proposed the Sync-Rank algorithm, which treated the ranking problem as a group synchronization problem, thereby eliminating the dependence of the algorithm on the BTL assumption.
Furthermore, recent work by \cite{datar2022byzantine} demonstrated the ineffectiveness of standard spectral methods for BTL parameter estimation in the presence of even small fractions of Byzantine voters, and proposed a more robust Byzantine spectral ranking algorithm. In contrast to these works, we introduce and utilize a new measure of separation distance that quantifies deviation from the class of BTL models. As we will see, this measure is instrumental in our analysis to quantify the stability of BTL assumptions in the context of rankings with respect to classical Borda count rankings.

\subsection{Main Contributions}\label{sec:Main Contributions}
In contrast to the majority of the recent literature that focuses on estimation of BTL parameters, our work takes a different approach by developing a systematic method for testing whether a given pairwise comparison dataset conforms to the BTL assumption. Our main contributions are summarized below:
\begin{enumerate}
    \item We devise a notion of ``separation distance'' that allows us to easily quantify the deviation of a general pairwise comparison model from the class of all BTL models. Under some regularity conditions, we show in \cref{Distance to closest BTL model} that this measure is always within constant factors of the Frobenius norm distance between a general pairwise comparison model and the class of BTL models. We then formulate our hypothesis testing problem for BTL models in a minimax sense using this separation distance and also introduce a test statistic in \cref{BoxedTest} based on it. 
    \item We define the critical threshold for our testing problem and establish an upper bound on it in \cref{Main theorem with upper bound} for general induced observation graphs (satisfying mild assumptions). Furthermore, we also derive upper bounds on the type \Romannum{1} and type \Romannum{2} probabilities of error in \cref{thm:Type-1 and Type-2 error}. These bounds provide insights into the influence of various parameters on the error decay rate.
    \item We also provide an information-theoretic lower bound on the critical threshold for complete induced observation graphs in \cref{thm:Lower Bound}, thereby demonstrating the minimax optimal scaling of the critical threshold (up to constant factors). 
    \item Our test requires that the underlying observation graph and the pairwise comparison matrix satisfy certain regularity assumptions, e.g., expansion {and bounded principal ratio}. To substantiate this, we identify different classes of graphs in \cref{sec: Graphs with Bounded Principal Ratio} (see \cref{lem: Height of Perron–Frobenius Eigenvector,lem: Height of Perron–Frobenius Eigenvector for expanders,lem: Height of Perron–Frobenius Eigenvector under sampling}) that fulfill these criteria for all pairwise comparison models.
    \item We also prove several auxiliary results. 
    For example, during our analysis, we obtain $\ell^2$-estimation error bounds for (virtual) BTL parameters in \cref{lem:L2error} when the data is actually generated by a general pairwise comparison model as opposed to a BTL model. These bounds also highlight the robustness of the spectral ranking method under model mismatch. As another example, we utilize the notion of separation distance mentioned above to analyze the stability of BTL assumptions in the context of rankings in \cref{sec: Stability of the BTL Assumption}. More specifically, we investigate the deviation from the BTL condition that is sufficient for the ranking produced under the BTL assumption to differ from the classical Borda count ranking \cite{borda1781}. Our results in \cref{Prop: Stability of BTL Model} show that a deviation of $O(1/\sqrt{n})$ from the BTL condition is sufficient to produce inconsistent BTL and Borda rankings. 
    \item Finally, we perform several experiments on synthetic and real-world datasets to validate some of our theoretical results. Additionally, we propose a non-parametric approach based on permutation testing to determine the non-explicit threshold of our hypothesis test in a data-driven manner in experiments. 
\end{enumerate}

\subsection{Outline}
The paper is organized as follows. Several notational preliminaries are presented in  \cref{Notational Preliminaries}. In \cref{Formal Model and Goal}, we introduce general pairwise comparison models, define the necessary terminology, and present the various regularity assumptions necessary for our analysis. In \cref{sec: Main Results}, we present the main results of our work. Specifically, in \cref{sec:Problem Formulation}, we introduce a notion of separation distance that allows us to measure the deviation of a pairwise comparison model from the class of all BTL models. Additionally, we mathematically formulate the hypothesis testing problem and introduce the associated testing rule. \cref{Upper Bound on Critical Threshold} provides an upper bound on the scaling of the critical threshold for this hypothesis test. In \cref{sec: Type1 and Type2}, we provide upper bounds on type \Romannum{1} and type \Romannum{2} probabilities of error. Moreover, \cref{Lower Bound on Critical Threshold} introduces a matching lower bound on the critical threshold of the test proving minimax optimality (up to constant factors). In \cref{sec: Graphs with Bounded Principal Ratio}, we provide examples of different classes of graphs that meet the required assumptions for all pairwise comparison models. In our paper, formal proofs of various propositions are available in \cref{sec:proofs}, while the upper bound proofs for the critical threshold (\cref{Main theorem with upper bound}) and type \Romannum{1} and type \Romannum{2} probabilities of error (\cref{thm:Type-1 and Type-2 error}) are provided in \cref{Proof:Main theorem with upper bound}. The proof of our lower bound is presented in \cref{sec: Proof of Lower Bound and Stability}. In \cref{sec: Numerical Simulations}, we present numerical simulations for synthetic data that support our theoretical results and also apply our testing rule to both real-world and synthetic datasets. Finally, in \cref{sec: Conclusion}, we reiterate our results and provide some directions for future research. Additionally, we present our remaining proofs and characterize the stability of the BTL model in the Appendices. 

\subsection{Notational Preliminaries} \label{Notational Preliminaries}
        We briefly collect some notation here that is used throughout this work. We let $\R = (-\infty,\infty)$, $\R_+ = [0,\infty)$, and $\N \triangleq \{1,2,3,\dots\}$ denote the sets of real, positive real, and natural numbers, respectively. For any $n \in \N$, we let $\R^n$ and $\R^{n \times n}$ be the sets of all $n$-length vectors and $n \times n$ matrices, respectively, $\1_n \in \R^n$ be the column vector with all entries equal to $1$,  and $[n] \triangleq \{1,\dots,n\}$. For any (index) set $\calS \subseteq [n]$, we let $\1_\calS \in  \R^n$ denote a vector in which the entries are $1$ for elements in the set $\calS$ and $0$ otherwise. 
        Moreover, we let $\calP_n$ denote the $(n-1)$-dimensional probability simplex in $\R^n$, and $\I_{\mathcal{A} }$ denote the indicator function on the set $\mathcal{A}$. Next, for any vector $x \in \R^n$, $\| x\|_2$ and $\|x\|_\infty$ denote its $\ell^2$- and $\ell^\infty$-norms, $x^{p}$ denotes the entrywise $p$th power of $x$, i.e., $x^{p} = [x_1^p , x_2^p,\dots,x_n^p]^\T$, and $\diag(x) \in \R^{n \times n}$ is the diagonal matrix with $x$ along its principal diagonal. For any matrix $A \in \R^{n \times n}$, $\| A\|_2$, $\|A\|_\F$, and $\text{Tr}(A)$ denote the standard spectral norm, Frobenius norm, and trace of $A$, respectively, $A_{ij}$ denotes the $(i,j)$th element of $A$, $A_{:,i}$ denotes the $i$th column of $A$, and $A_{j,:}$ denotes the transpose of the $j$th row of $A$ (i.e., in column vector form). Let $\sigma_1(A) \geq \sigma_2(A) \geq \dots \geq \sigma_n(A)$ represent the (ordered) singular values of $A \in \R^{n \times n}$.
        For an entrywise strictly positive vector $\pi \in \R^n$, we define a Hilbert space $\ell^2(\pi)$ on $\R^n$ with inner product $\ip{x}{y}_{\pi} = \sum_{i=1}^n \pi_i x_i y_i$ and corresponding vector, operator, and weighted Frobenius norms $\|x\|_\pi = \sqrt{\ip{x}{x}_\pi}$, $\|A\|_\pi = \sup_{\|x\|_\pi = 1} \|x^\T A \|_{\pi} $, and $\|A\|_{\pi,\F} = \big(\sum_{i}\sum_j \pi_j A^2_{ij}\big)^{1/2}$, respectively. Finally, we utilize standard Bachmann-Landau asymptotic notation, e.g., $O(\cdot)$, where it is assumed that the parameter $n \rightarrow \infty$ or the parameter $k \rightarrow \infty$, where $n$ represents the number of items and $k$ is the number of pairwise comparisons per pair of items. Throughout this paper, we use ``high probability'' to refer to a probability of at least  $1 - 1/\text{poly} (n)$, where  $\text{poly} (n)$ captures a polynomial function of $n$.  

\section{Formal Setup and Goal}
 \label{Formal Model and Goal}
 
 We begin by introducing a general pairwise comparison model. Consider a scenario where a set of $n$ agents, indexed by $[n]$ with $n \in \N\backslash\!\{1\}$, engage in a tournament consisting of several observed pairwise comparisons. This scenario is ubiquitous in many application domains, such as sports tournaments, discrete choice models in economics, etc. For example, in a sports tournament, $[n]$ represents the teams or players that play pairwise games with each other. Similarly, in the discrete choice models in economics, $[n]$ represents the available alternatives that an individual may choose from. 

    Within this context, we assume that each observation corresponds to a pairwise comparison between agents $i$ and $j$ for distinct $i,j \in [n]$. We focus on a general asymmetric setting, where an ``$i$ vs. $j$'' comparison can have a different interpretation to a ``$j$ vs. $i$'' comparison. 
    For example, in an ``$i$ vs. $j$'' comparison, $i$ and $j$ may represent the home team and the away team, respectively. This type of asymmetry is common in sports like cricket, soccer, etc. \cite{Morley2005}, where it is often observed that the home team has a certain advantage over the away team. Nevertheless, it is worth mentioning that many of our ensuing theorems, insights, and other results can be extended with some effort to the symmetric setting, where  ``$i$ vs. $j$'' and  ``$j$ vs. $i$'' comparisons are considered equivalent. 
    
    In general, comparisons between all pairs $i, j \in [n]$ may not be observed. To model this, we assume that we are given an induced observation graph $\G = ([n], \calE)$ with vertex set $[n]$ and edge set $\calE$, where an edge $(i, j) \in \calE$ (with $i \neq j$) exists if and only if comparisons of the form ``$i$ vs. $j$'' are observed. Additionally, for convenience, self-loops are included in $\calE$ for all vertices (i.e., $(i, i) \in \calE$ for all $i \in [n]$). Let $E \in \{0,1\}^{n \times n}$ be the adjacency matrix of the graph $\G$ with $E_{ij} = 1$ if $(i, j) \in \calE$ and $0$ otherwise. We also define the projection operator $\PE(X) \triangleq X \odot E$ for $X \in \R^{n \times n}$, where $\odot$ denotes the Hadamard product. Moreover, we assume that the edge set $\calE$ is symmetric (i.e., the graph is actually undirected); hence, if  ``$i$ vs. $j$'' comparisons are observed, then we require that  ``$j$ vs. $i$'' comparisons are observed as well. Finally, we assume that the graph is connected and is given a priori, i.e., it does not depend on the outcomes of observed pairwise comparisons. (Connectedness of the graph is a required, standard assumption in the literature, cf. \cite{Ford1957,Hunter2004}.) In the sequel, we will further specify several classes of graphs where we can prove theoretical guarantees on hypothesis testing for the BTL model.
    
    The next definition presents the general pairwise comparison model. Note that in the literature, various well-known probabilistic models have been developed to capture pairwise comparison settings, such as the BTL model \cite{BradleyTerry1952, Luce1959, McFadden1973}, Thurstonian model \cite{Thurstone1927}, non-parametric models \cite{Chatterjee2015, Shahetal2016}, etc. These are all specializations of the general model below.
    
    \begin{definition}[Pairwise Comparison Model] \label{Def: Pairwise Comparison Model} For any pair of distinct agents $i,j \in [n]$, let $p_{ij} \in (0,1)$ denote the probability that agent $j$ beats agent $i$ in a  ``$i$ vs. $j$''  pairwise comparison. We refer to the collection of parameters $\{p_{ij}: (i, j) \in \calE, \, i \neq j\}$ as a \emph{pairwise comparison model} (on $\calE$).
        \end{definition}
    
    Furthermore, a pairwise comparison model can be aptly summarized by a \emph{pairwise comparison matrix} $P \in \R^{n \times n}$ with
    \begin{equation}
        P_{ij} \triangleq  \begin{cases}  p_{ij} , & i \neq j \text{ and } (i,j) \in \calE\\
                                \frac{1}{2} , & i = j \\ 
                                0, & \text{otherwise }
                  \end{cases}, 
    \end{equation} 
    where we have set $ P_{ii} = \frac{1}{2}$ for notational convenience.
    In our analysis, we find it convenient to assign a time-homogeneous Markov chain (or a row stochastic matrix) on the finite state space $[n]$ to any pairwise comparison model. We establish this canonical assignment as follows. 
    \begin{definition}[Canonical Markov Matrix] \label{Def: Canonical Markov Matrix}  
        For any pairwise comparison model $\{p_{ij} \in (0,1) : (i,j) \in \calE, \, i\neq j \}$ with pairwise comparison matrix $P \in \R^{n \times n}$, its \emph{canonical Markov matrix} is the row stochastic matrix $S \in \R^{n \times n}$, where 
        \begin{equation}    \label{eq:  Canonical Markov Matrix}
            S_{ij} \triangleq \begin{cases} \displaystyle{\frac{p_{ij}}{ d }}, & i \neq j  {\text{ and }}  (i, j) \in \calE \\
                \displaystyle{1 - \frac{1}{d }\sum_{\substack{  u \in [n]\backslash\!\{i\}:\\ (i,u)  \in \calE} }{p_{iu}}} , & i = j \\ 
                0, & \text{otherwise}
                    \end{cases} ,
        \end{equation}
    and $d$ is a sufficiently large constant such that the Markov matrix $S$ is at least $\big(\frac{1}{2}\big)$-lazy, i.e., $S_{ii} \geq \frac{1}{2}$ for all $i \in [n]$.
    \end{definition}
     For fixed induced graphs, we set the parameter $d = 2 d_{\max},$ where $d_{\max } \triangleq \max_{i \in [n]} \sum_{j=1}^n E_{ij}$ is the maximum degree of a node in $\G$. {Further discussion regarding this parameter can be found} in \cref{sec: Graphs with Bounded Principal Ratio}. 
     {We note that due to the strict positivity of $p_{ij}$'s and connectedness of $\G$, $S$ possesses a unique, entrywise strictly positive, stationary distribution.} As noted earlier, the most well-known specialization of the pairwise comparison model in \cref{Def: Pairwise Comparison Model} is the BTL model defined below \cite{BradleyTerry1952, Luce1959, McFadden1973}.

    \begin{definition}[BTL Model {\cite{BradleyTerry1952,Luce1959,McFadden1973}}] \label{Def: BTL Model}
        A pairwise comparison model $\{ p_{ij} \in (0,1): (i,j) \in \calE, i \neq j\}$ on $\calE$ is known as a BTL (or multinomial logit) model if there exist skill score parameters $\alpha_i > 0$ for every agent $i \in [n]$ such that:
        \begin{equation*}
            \forall (i,j) \in \calE \textup{ with }  i \neq j, \, \enspace p_{ij} = \frac{\alpha_j}{\alpha_i + \alpha_j} . 
        \end{equation*}
    Hence, we can describe a BTL model entirely using the collection of its $n$ skill score parameters $\{\alpha_i: i \in [n]\}$.
    \end{definition}

    Next, we explain the data generation process. Fix any pairwise comparison model $\{p_{ij} \in (0,1): (i,j) \in \calE, \, i \neq j\}$. For any pair $i \neq j$, define the outcome of the $m$th 
     ``$i$ vs. $j$'' pairwise comparison between them as a Bernoulli random variable
    \begin{equation}
        Z_{m,{ij}} \triangleq \begin{cases} 1 , & \text{ if } j \text{ beats } i \ (\text{with probability } p_{ij}) \\
                                            0 , & \text{ if } i \text{ beats } j \ (\text{with probability } 1-p_{ij})
                              \end{cases},
    \end{equation}
    for $m \in [k_{ij}]$, where $k_{ij}$ denotes the number of observed  ``$i$ vs. $j$'' comparisons. We assume throughout that the observation random variables $\Z \triangleq \{Z_{m,{ij}} : (i,j) \in \calE, \, i \neq j, \, m \in [k_{ij}]\}$ are mutually independent.
    Moreover, let $Z_{ij}  \triangleq \sum_{m=1}^{k_{ij}}Z_{m,{ij}} $. Clearly, it follows that for any $(i,j) \in \calE$ with $i \neq j$, $Z_{ij}$ is a binomial random variable, i.e., $Z_{ij} \sim \text{Bin}(k_{ij},p_{ij})$, and for simplicity, we set $Z_{ii} = k_{ii} = 0$ for all $i \in [n]$. 

    \subsection{Main Goal} 
    \label{subsec: Main Goal}
    
    Given the observations $\Z$ of a tournament as defined above, our objective is to determine whether the underlying pairwise comparison model is a BTL model on the observed comparison set $\calE$. This corresponds to solving a \emph{composite hypothesis testing} problem:
        \begin{equation}
            \begin{aligned}
                H_0 & : \Z \sim \text{BTL model for some $\alpha_1,\dots,\alpha_n > 0$}, \\
                H_1 & : \Z \sim \text{pairwise comparison model that is not BTL},
            \end{aligned}\label{eq:CompositeHypothesisTest}
        \end{equation}
    where the null hypothesis $H_0$ states that $\Z$ is distributed according to a BTL model (on the observed comparison set $\calE$), and the alternative hypothesis $H_1$ states that $\Z$ is distributed according to a general non-BTL pairwise comparison model. 

    Note that in the settings considered in this work, we do not necessarily have pairwise comparisons for all pairs. It is straightforward to see that it is not possible to test whether the unobserved pairs adhere to a BTL model or conform to some general pairwise comparison model. (Indeed, for any unobserved pair $(i,j)$, it is not possible to distinguish between $p_{ij} = \alpha_{j}/(\alpha_i+\alpha_j)$ and $p_{ij} + \Delta$ for any $\Delta\neq 0$.) Therefore, our focus is solely on \emph{testing whether given pairwise comparison data adheres to a BTL model on the set of observed comparisons $\calE$}. 

   To pose the hypothesis testing problem in \cref{eq:CompositeHypothesisTest} more rigorously, we demonstrate an interesting relation between a BTL model and its canonical Markov matrix. Recall that a Markov chain on the state space $[n]$, defined by the row stochastic matrix $W \in \R^{n \times n}$, is said to be \emph{reversible} if it satisfies the \emph{detailed balance conditions} \cite[Section 1.6]{LevinPeresWilmer2009}:
    \begin{equation}
    \forall i,j \in [n], \, i \neq j, \enspace \pi_i W_{ij} = \pi_j W_{ji} ,
    \end{equation} 
    where $W_{ij}$ denotes the probability of transitioning from state $i$ to state $j$, and $\pi = (\pi_1,\dots,\pi_n)$ denotes the stationary (or invariant) distribution of the Markov chain (which always exists). Equivalently, the Markov chain $W$ is reversible if and only if
    \begin{equation}
    \diag(\pi) W = W^{\T} \diag(\pi) . 
    \end{equation} 
    It turns out that there is a tight connection between reversible Markov chains and the BTL model. This is elucidated in the ensuing proposition, which is a slightly more general version of \cite[Lemma 6]{RajkumarAgarwal2014}, \cite[Section 2.2]{NegahbanOhShah2017}.

    \begin{proposition}[BTL Model and Reversibility]
    \label{Prop: BTL Model and Reversibility} 
    For a symmetric comparison set $\calE$, a pairwise comparison model $\{p_{ij} \in (0,1): (i,j) \in \calE, \, i \neq j\}$ is a BTL model if and only if its canonical Markov matrix $S \in \R^{n \times n}$ is reversible and satisfies the translated skew-symmetry condition $p_{ij} + p_{ji} =  1$ for all $(i,j) \in \calE$.
    \end{proposition}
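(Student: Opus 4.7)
The plan is to establish both directions via direct manipulation of the detailed balance equations. For the forward direction, I assume the model is BTL with skill scores $\alpha_1,\dots,\alpha_n>0$. The translated skew-symmetry
\[
p_{ij}+p_{ji} \;=\; \frac{\alpha_j}{\alpha_i+\alpha_j} + \frac{\alpha_i}{\alpha_i+\alpha_j} \;=\; 1
\]
is then immediate. To verify reversibility of $S$, I set $\pi_i = \alpha_i / \sum_{k=1}^n \alpha_k$. For any $i\neq j$ with $(i,j)\in\calE$, the quantity $\pi_i S_{ij} = \alpha_i \alpha_j / ((\alpha_i+\alpha_j)\, d\, \sum_k \alpha_k)$ is symmetric in $(i,j)$, so $\pi_i S_{ij} = \pi_j S_{ji}$. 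For $(i,j)\notin\calE$ with $i\neq j$, symmetry of $\calE$ forces $S_{ij}=S_{ji}=0$ and detailed balance holds trivially. Since detailed balance with respect to $\pi$ implies that $\pi$ is a stationary distribution, $S$ is reversible.

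For the converse, suppose $S$ is reversible with stationary distribution $\pi$ and $p_{ij}+p_{ji}=1$ for all $(i,j)\in\calE$. Because $\G$ is connected and $S_{ii}\geq 1/2$ by construction, $S$ is irreducible and aperiodic, so $\pi$ is unique and entrywise strictly positive. Writing detailed balance on the off-diagonal entries yields $\pi_i\, p_{ij}/d = \pi_j\, p_{ji}/d$, i.e., $\pi_i\, p_{ij} = \pi_j\, p_{ji}$. Substituting $p_{ji} = 1-p_{ij}$ and solving the resulting linear equation gives
\[
p_{ij}\,(\pi_i+\pi_j) \;=\; \pi_j \quad\Longrightarrow\quad p_{ij} \;=\; \frac{\pi_j}{\pi_i+\pi_j},
\]
which is precisely the BTL form with skill parameters $\alpha_i = \pi_i$.

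Rather than a hard obstacle, the argument primarily requires care with definitions. The main subtlety is justifying that $\pi$ is entrywise strictly positive so that dividing by $\pi_i+\pi_j$ is legitimate; this is handled by invoking connectedness of $\G$ together with the laziness condition $S_{ii}\geq 1/2$ built into \cref{Def: Canonical Markov Matrix}. A second point worth emphasizing is that the translated skew-symmetry condition $p_{ij}+p_{ji}=1$ is essential in the converse direction: because \cref{Def: Pairwise Comparison Model} allows the asymmetric ``$i$ vs.\ $j$'' and ``$j$ vs.\ $i$'' comparisons to carry different probabilities, reversibility of $S$ alone is not enough to pin down the BTL form, and this extra constraint is exactly what couples $p_{ij}$ with $p_{ji}$ so that the detailed balance equation can be solved into the BTL ratio.
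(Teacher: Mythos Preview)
Your proof is correct and follows essentially the same approach as the paper's: both directions proceed by direct verification, setting $\pi_i \propto \alpha_i$ and checking detailed balance in the forward direction, and combining reversibility with $p_{ij}+p_{ji}=1$ to solve for the BTL form in the converse. If anything, your write-up is slightly more careful in explicitly handling the case $(i,j)\notin\calE$ and in justifying strict positivity of $\pi$.
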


    The proof is provided in \cref{Proof of Prop: BTL Model and Reversibility}. The proof relies on the fact that for a BTL model (with a symmetric edge set $\calE$), we can exactly compute the stationary distribution of its canonical Markov matrix in closed form. In cases where the graph lacks symmetry, the stationary distribution of the canonical Markov matrix may depend on the underlying graph topology. Furthermore, the assumption of a symmetric induced graph made earlier is in some sense necessary because, e.g., if the induced graph is not symmetric for some pair $(i,j)$, then we cannot check translated skew-symmetry for $(i,j)$.

    \subsection{Expansion and Principal Ratio} 
    To carry out our analysis, we need a few more assumptions on the underlying graph $\G$ and the probabilities $p_{ij}$ of the pairwise comparison model. Broadly, these assumptions ensure that the canonical Markov matrix $S$ possesses sufficient connectivity, quantified by edge expansion, and a bounded principal ratio. We first recall the notion of edge expansion of non-negative matrices, which characterizes the extremal ``connectivity'' or ``flow'' from one subset of vertices to another under stationarity.   
    \begin{definition}[Edge Expansion of Non-negative Matrices \cite{MehtaSchulman}]
    Consider any non-negative matrix $R \in \mathbb{R}^{n \times n}$ with a Perron-Frobenius eigenvalue of $1$ and corresponding left and right eigenvectors $u$ and $v$, which we assume are entrywise strictly positive and normalized such that $u^\T v =1$. Let $D_u \triangleq \diag(u)$ and $D_v \triangleq \diag(v)$ be diagonal matrices with $u$ and $v$ on the principal diagonals, respectively. Then, the \emph{edge expansion} of $R$ is defined as
    \begin{align}
        \phi(R) & \triangleq  \min _{\calS \subseteq[n]} \frac{ \mathbf{1}_{\calS}^\T D_{u} R D_{v} \mathbf{1}_{\calS^\complement } }{\min \{ \mathbf{1}_{\calS}^\T D_{u} D_{v} \mathbf{1}_n,\mathbf{1}_{\calS^\complement}^\T D_{u} D_{v} \mathbf{1}_n \}} \nonumber \\ 
        & = \min _{\calS \subseteq[n] } \frac{\sum_{i \in \calS, j \in \calS^\complement } R_{i j}  u_{i}  v_{j}}{\min\{\sum_{i \in \calS} u_{i}  v_{i}, \sum_{i \in \calS^\complement } u_{i}  v_{i}\} }.
        \label{original expression for expansion}   
    \end{align}
    \end{definition}
    Note that in cases where there are no strictly positive left and right eigenvectors $u$ and $v$ corresponding to the eigenvalue $1$, the edge expansion $\phi(R)$ is defined to be $0$. 
    For the row-stochastic matrix $S$ with stationary distribution $\pi$, its edge expansion simplifies to
    \begin{equation}
        \phi(S) = \min_{\calS \subseteq [n]} \frac{\sum_{i \in \calS, j \in \calS^\complement } p_{ij} \pi_i \mathbbm{1}_{(i,j) \in \calE } }{ d \min\{\sum_{i \in \calS} \pi_i, \sum_{i \in \calS^\complement } \pi_i \} }. \label{exact expression for edge expansion of DTM}
    \end{equation}
    We introduce the following assumption on the edge expansion of $S$. 
    \begin{assumption}[Edge Expansion] 
    \label{assm:Large Expansion of DTM}
    For the canonical Markov matrix $S$ corresponding to the pairwise comparison model $P$, we assume that there exists a constant $\xi > 0$ such that
    \begin{equation}
            \phi(S) \geq \xi. 
    \end{equation}
    \end{assumption}

    As mentioned earlier, in addition to edge expansion, we require the canonical Markov matrix $S$ to have a bound on its \emph{principal ratio}, cf. \cite{EigenvectorsIrregularGraphs} (which is the \textit{Birkhoff norm of the Perron-Frobenius vector} \cite{BirkhoffContractionCoefficient}, and is sometimes called the \textit{height of the Perron-Frobenius vector} \cite{Minc1970}). 
   \begin{definition}[Principal Ratio {\cite{EigenvectorsIrregularGraphs}}]
   Given the stationary distribution $\pi$ of a canonical Markov matrix $S$, its \textit{principal ratio}, denoted $h_\pi$, is defined as the ratio of its maximum and minimum entries:
      \begin{equation}
            h_\pi \triangleq \frac{\max_{i \in [n]} \pi_i}{\min_{j \in [n]} \pi_j}. \label{def: principal ratio}
        \end{equation}
    \end{definition} 

    We note that the principal ratio is both a function of the underlying graph $\G$ and the pairwise comparison probabilities $p_{ij}$. Intuitively, it captures a notion of ``regularity'' of Markov matrices like $S$ (akin to the regularity of vanilla graphs like $\G$). Indeed, a principal ratio closer to $1$ signifies a ``more regular'' Markov matrix $S$; for instance, in random walks on graphs, the principal ratio equals $1$ if and only if the graph is regular (i.e., all nodes have identical degrees) \cite{EigenvectorsIrregularGraphs}. This ratio has a long history in matrix theory and the analysis of Markov chains \cite{birkhoff1,birkhoff2}. For example, Birkhoff's geometric ergodicity theory connects the principal ratio to the contraction coefficient in projective metrics, where smaller ratios imply faster convergence to stationarity \cite{BirkhoffContractionCoefficient}. Since the principal ratio impacts the rate of convergence of random walks on graphs, it also arises in the analysis of spectral algorithms, such as PageRank \cite{PagerankPrincipalRatio}, and matrix analysis \cite{ostrowski1952}. Furthermore, several combinatorial studies have derived bounds on the principal ratio under various settings \cite{EigenvectorsIrregularGraphs,AksoyChung2016}.

  To proceed, we make the following assumption on the principal ratio of $S$.
    \begin{assumption}[Bounded Principal Ratio] \label{assm: Bounded Principal Ratio} There exists a constant $h > 0$ such that the principal ratio $h_\pi$ is bounded as:
    \begin{equation}
        h_\pi \leq h. 
    \end{equation}
    \end{assumption}   
    We remark that we only require \cref{assm:Large Expansion of DTM,assm: Bounded Principal Ratio} in the analysis of our hypothesis testing framework.\footnote{We generally think of $\xi$ and $h$ in \cref{assm:Large Expansion of DTM,assm: Bounded Principal Ratio} as constants that do not depend on $n$. However, our results illustrate precise dependencies with respect to these constants in cases where they depend on other problem parameters.} 
     
    In the sequel, we introduce a more palatable simplifying assumption that will be further discussed. This assumption is standard in the analysis of pairwise comparison models under consideration (see, e.g., \cite{NegahbanOhShah2017,Chenetal2019,JadbabaieMakurShah2020journal}). 
    \begin{assumption}[Dynamic Range] \label{Pijbounded}
        We assume that there exists a constant $\delta \in (0,1)$ such that for all $(i,j) \in \calE$,
        \begin{equation}
            \frac{\delta}{1+\delta} \leq p_{ij} \leq \frac{1}{1+\delta} . 
        \end{equation}
    \end{assumption}
    We will highlight that \cref{Pijbounded} implies \cref{assm:Large Expansion of DTM,assm: Bounded Principal Ratio} for various well-known classes of induced graphs.
    For now, note that coupled with \cref{Pijbounded,assm: Bounded Principal Ratio}, \Cref{assm:Large Expansion of DTM} can be expressed in terms of the canonical notion of edge expansion of $\G$. Indeed, observe that
   \begin{align}
        \phi(S) & = \min_{\calS \subseteq [n] } \frac{\sum_{i \in \calS, j \in \calS^\complement } p_{ij} \pi_i \mathbbm{1}_{(i,j) \in \calE } }{ d \min\{\sum_{i \in \calS} \pi_i, \sum_{i \in \calS^\complement } \pi_i \} } \nonumber \\
         & \geq \frac{\delta }{d h_{\pi} (1+\delta) } \min_{\calS \subseteq [n] } \frac{|\calE(\calS,\calS^\complement)|}{\min\{|\calS|, |\calS^\complement|\}} \nonumber\\ 
         & \geq \frac{\delta  }{d h (1+\delta) } \tilde{\phi}(\G) , \label{trivial lower bound of expansion}
    \end{align} 
    where $\calE(\calS,\calS^\complement) \triangleq \{(i,j) \in \calE : i \in \calS, \, j \in \calS^\complement\}$ is the set of edges in $\calE$ from set $\calS$ to $\calS^\complement$, and $\tilde{\phi}(\G)$ is the usual definition of edge expansion of $\G$ \cite{AlonChung1989}: 
    \begin{equation}
        \tilde{\phi}(\G) \triangleq \min_{\calS \subseteq [n]} \frac{|\calE(\calS,\calS^\complement) |}{ \min\{|\calS|, |\calS^\complement| \}}.
    \end{equation}

    Finally, we emphasize that all BTL models with a bounded condition number, 
    \begin{equation}
    \label{Eq: bound on cond num}
       \frac{ \max_{i\in [n]} \alpha_i }{ \min_{j \in [n]} \alpha_j } \leq \frac{1}{\delta} ,
    \end{equation}
    automatically satisfy both \cref{Pijbounded,assm: Bounded Principal Ratio} (with $h = \delta^{-1}$).\footnote{We refer readers to \cref{Proof of Prop: BTL Model and Reversibility} for an explicit expression for the stationary distribution in BTL models.} Furthermore, if the underlying graph $\G$ exhibits significant edge expansion in the canonical sense, i.e., $\tilde{\phi}(\G) \geq \tilde{\epsilon} d$ for some constant $\tilde{\epsilon} > 0$, then \cref{assm:Large Expansion of DTM} is satisfied with $\xi = O(1)$ for all BTL models with bounded condition number. For general pairwise comparison models, we will present several classes of graphs in \cref{sec: Graphs with Bounded Principal Ratio}, e.g., complete graphs, dense regular graphs, and Erd\H{o}s-R\'enyi random graphs, where any pairwise comparison model satisfying \cref{Pijbounded} will also satisfy \cref{assm:Large Expansion of DTM,assm: Bounded Principal Ratio}  with constant $h$ and $\xi$  (with high probability, where appropriate).

    \section{Main Results and Discussion} \label{sec: Main Results}
    In this section, we present our main contributions and results as outlined in \cref{sec:Main Contributions}. 

\subsection{Minimax Formulation and Decision Rule} \label{sec:Problem Formulation}
    We begin by rigorously formalizing the hypothesis testing problem in \cref{subsec: Main Goal}. To this end, we first define a separation distance between a general pairwise comparison matrix $P$ and its closest BTL model, and then define minimax risk and introduce our decision rule.

    Recall that by \cref{Prop: BTL Model and Reversibility}, any pairwise comparison matrix $P$ is BTL if and only if it satisfies the reversibility condition $\Pi P = P^\T \Pi$ and translated skew-symmetry $P + P^\T = \PE(\1_n\1_n^\T)$, where $\Pi = \diag(\pi)$ and $\pi$ is the stationary distribution of the canonical Markov matrix $S$ corresponding to $P$. It turns out that both conditions are elegantly captured by the matrix $\Pi P + P \Pi - \PE(\1_n \pi^\T)$ as illustrated in \cref{EasyProp}. 

    \begin{proposition}[BTL Model Characterization] \label{EasyProp}
        For a symmetric edge set $\calE$, the pairwise comparison matrix $P$ in \cref{Def: Pairwise Comparison Model} corresponds to a BTL model (on the set $\calE$) if and only if $\Pi P + P \Pi = \PE(\1_n \pi^\T)$.
    \end{proposition}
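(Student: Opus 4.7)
The plan is to reduce the matrix identity to a single scalar condition on the entries of $P$ and $\pi$, and then invoke \cref{Prop: BTL Model and Reversibility} together with the closed-form stationary distribution available for BTL canonical Markov matrices. Concretely, I would first expand both sides entrywise. Since $\Pi$ is diagonal, $(\Pi P + P \Pi)_{ij} = (\pi_i + \pi_j) P_{ij}$ for every $i,j$, which (using $P_{ii} = 1/2$ and the support structure of $P$) equals $\pi_i$ on the diagonal, $(\pi_i + \pi_j) p_{ij}$ on edges $(i,j) \in \calE$ with $i \neq j$, and zero elsewhere. The right-hand side $(\PE(\1_n \pi^\T))_{ij} = \pi_j E_{ij}$ equals $\pi_i$ on the diagonal (since $(i,i) \in \calE$ by convention), $\pi_j$ on edges $(i,j) \in \calE$ with $i \neq j$, and zero elsewhere. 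Consequently, the proposed matrix equation is equivalent to the scalar condition
\begin{equation*}
p_{ij} = \frac{\pi_j}{\pi_i + \pi_j} \quad \text{for all } (i,j) \in \calE \text{ with } i \neq j ,
\end{equation*}
the diagonal and non-edge entries matching automatically.

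For the forward direction, I would assume $P$ is a BTL model with scores $\{\alpha_i\}_{i \in [n]}$ and use the fact (established in the proof of \cref{Prop: BTL Model and Reversibility}) that the stationary distribution of the canonical Markov matrix of a BTL model satisfies $\pi_i \propto \alpha_i$. This follows by applying detailed balance $\pi_i S_{ij} = \pi_j S_{ji}$ on each edge of $\calE$, which reduces to $\pi_i \alpha_j = \pi_j \alpha_i$, and then propagating the resulting equality $\pi_i/\alpha_i = \pi_j/\alpha_j$ through the connected graph $\G$. Hence $p_{ij} = \alpha_j/(\alpha_i + \alpha_j) = \pi_j/(\pi_i + \pi_j)$ on every edge, which is exactly the scalar condition displayed above.

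For the converse, the entrywise equation immediately exhibits $P$ in BTL form with valid skill scores $\alpha_i := \pi_i > 0$: positivity of $\pi$ follows from connectedness of $\G$ and \cref{Pijbounded}, together with the aperiodicity guaranteed by $S_{ii} \geq 1/2 > 0$. Moreover, translated skew-symmetry $p_{ij} + p_{ji} = \pi_j/(\pi_i + \pi_j) + \pi_i/(\pi_i + \pi_j) = 1$ on $\calE$ is automatic from this representation, so the characterization of \cref{Prop: BTL Model and Reversibility} applies and $P$ is BTL.

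There is no substantive obstacle here; the entire argument is a careful entrywise comparison combined with one invocation of \cref{Prop: BTL Model and Reversibility}. The only point requiring mild care is the case split for the three entry types (diagonal, edge, non-edge) and the observation that the same $\pi$ appears on both sides of the matrix equation precisely because it is the stationary distribution of $S$, which is what makes the single compact identity capture both reversibility and translated skew-symmetry simultaneously.
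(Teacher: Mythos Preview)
Your proposal is correct and follows essentially the same approach as the paper: both arguments reduce the matrix identity to the entrywise condition $p_{ij}(\pi_i+\pi_j)=\pi_j$ on edges and then read off the BTL form (with scores $\pi$) in the converse direction, while the forward direction uses the closed-form stationary distribution $\pi_i\propto\alpha_i$ of a BTL canonical Markov matrix. Your write-up is somewhat more explicit about the diagonal and non-edge entries and about invoking \cref{Prop: BTL Model and Reversibility}, but the underlying idea is identical.
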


    \cref{EasyProp} is proved in \cref{EasyPropProof}. It suggests that we can use the Frobenius norm of $\Pi P + P \Pi - \PE(\1_n \pi^\T)$ to quantify the deviation of a pairwise comparison matrix from the family of BTL models. To rigorously argue that the (scaled) Frobenius norm of $\Pi P + P \Pi - \PE(\1_n \pi^\T)$ coincides with the usual measure of separation distance in this setting, we require a useful decomposition of weighted Frobenius norm given in \cref{OrthogonalDecomposition}. 

    \begin{proposition}[Decomposition of Weighted Frobenius Norm]
    \label{OrthogonalDecomposition}
    For any pairwise comparison matrix $P \in \R^{n \times n}$ and any vector $\pi \in \R^n$ with strictly positive entries, we have
    \begin{align*}
        \| \Pi P + P \Pi - \PE(\mathbf{1}_n\pi^\T) & \|^2_{\pi^{-1},\F} = \| \Pi P - P^\T \Pi  \|^2_{\pi^{-1},\F}  \\ 
        & + \|  P + P^\T - \PE(\mathbf{1}_n\mathbf{1}_n^\T) \|^2_{\pi,\F} . 
    \end{align*}
    \end{proposition}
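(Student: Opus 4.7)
The plan is to show that the matrix $A \triangleq \Pi P + P \Pi - \PE(\1_n \pi^\T)$ admits a natural decomposition into an antisymmetric piece and a symmetric piece that are orthogonal in the weighted Frobenius inner product $\ip{X}{Y}_{\pi^{-1},\F} \triangleq \sum_{i,j} \pi_j^{-1} X_{ij} Y_{ij}$ which generates the norm $\|\cdot\|_{\pi^{-1},\F}$.

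\textbf{Step 1 (Matrix identity).} Let $B \triangleq \Pi P - P^\T \Pi$, which is antisymmetric, and $C \triangleq P + P^\T - \PE(\1_n\1_n^\T)$, which is symmetric. The key identity I would verify first is $A = B + C\Pi$. Expanding, $B + C\Pi = \Pi P - P^\T\Pi + (P + P^\T)\Pi - \PE(\1_n\1_n^\T)\Pi = \Pi P + P\Pi - \PE(\1_n\1_n^\T)\Pi$. Since $\Pi$ is diagonal with $\Pi_{jj} = \pi_j$, the product $\PE(\1_n\1_n^\T)\Pi$ has $(i,j)$-entry $\pi_j \I_{(i,j) \in \calE}$, which coincides entrywise with $\PE(\1_n \pi^\T)$, establishing $A = B + C\Pi$.

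\textbf{Step 2 (Expansion of the squared norm).} Substituting this decomposition and expanding bilinearly, $\|A\|^2_{\pi^{-1},\F} = \|B\|^2_{\pi^{-1},\F} + 2\ip{B}{C\Pi}_{\pi^{-1},\F} + \|C\Pi\|^2_{\pi^{-1},\F}$. The plan is to show that the cross term vanishes and that the third term equals $\|C\|^2_{\pi,\F}$.

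\textbf{Step 3 (Norm conversion and cross-term vanishing).} An entrywise computation gives $\|C\Pi\|^2_{\pi^{-1},\F} = \sum_{i,j} \pi_j^{-1}(C_{ij}\pi_j)^2 = \sum_{i,j}\pi_j C_{ij}^2 = \|C\|^2_{\pi,\F}$, converting the $\pi^{-1}$-weighted norm on $C\Pi$ into the $\pi$-weighted norm on $C$. For the cross term, the factors of $\pi_j$ cancel: $\ip{B}{C\Pi}_{\pi^{-1},\F} = \sum_{i,j} \pi_j^{-1} B_{ij}(C_{ij}\pi_j) = \sum_{i,j} B_{ij}C_{ij}$. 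Since $B$ is antisymmetric and $C$ is symmetric (and both vanish on the diagonal), pairing $(i,j)$ with $(j,i)$ for $i \neq j$ yields $B_{ij}C_{ij} + B_{ji}C_{ji} = 0$, so the whole sum vanishes. Combining these facts gives the claimed identity.

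\textbf{Main obstacle.} There is no deep obstacle: the entire argument hinges on spotting the matrix identity $A = B + C\Pi$, after which the decomposition follows from a single entrywise expansion and the antisymmetric-vs-symmetric cancellation. The only non-routine observation is that right-multiplying the indicator projection $\PE(\1_n\1_n^\T)$ by $\Pi$ converts it into $\PE(\1_n\pi^\T)$, which is exactly what allows the $\pi$-dependent term in $A$ to be absorbed into the symmetric factor $C\Pi$.
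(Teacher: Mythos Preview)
Your proof is correct and follows essentially the same approach as the paper: both identify the decomposition $\Pi P + P\Pi - \PE(\1_n\pi^\T) = (\Pi P - P^\T\Pi) + (P + P^\T - \PE(\1_n\1_n^\T))\Pi$, then expand the weighted Frobenius norm and show the cross term vanishes by the standard antisymmetric-against-symmetric trace argument. Your entrywise computation of the cross term and the norm conversion is slightly more explicit than the paper's trace-with-$\Pi^{\pm 1/2}$ manipulation, but the substance is identical.
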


    The proof of \cref{OrthogonalDecomposition} is provided in \Cref{OrthogonalDecompositionProof}. (We note that since $\pi_i > 0$ for all $i \in [n]$, the norm $\|\cdot\|_{\pi^{-1},\F}$ is well-defined.) 
    Using \cref{OrthogonalDecomposition}, we show in \cref{Distance to closest BTL model} that if a pairwise comparison model $P$ satisfies \cref{assm:Large Expansion of DTM,assm: Bounded Principal Ratio} {with $h, \xi =\Theta(1)$}, then the quantity $\| \Pi P + P \Pi - \PE(\1_n \pi^{\T}) \|_\F/\|\pi\|_\infty$ is always within constant factors of the Frobenius-norm-distance between $P$ and the set of BTL models (or more precisely, the set $\BTLh$ defined below). Hence, $\| \Pi P + P \Pi - \PE(\1_n \pi^{\T}) \|_\F/\|\pi\|_\infty$ captures a natural notion of separation distance in this setting.  
    \begin{theorem}[Distance to Closest BTL Model] \label{Distance to closest BTL model}
    Suppose the pairwise comparison matrix $P$ and the induced graph $\G$ satisfy \cref{assm:Large Expansion of DTM,assm: Bounded Principal Ratio}. Then, there exist constants $c_1, c_2 > 0$ (independent of $n$) such that 
    $$
    \begin{aligned}
    c_1\frac{\xi^2}{h^{3} } & \frac{ \|\Pi P + P\Pi - \PE( \1_n \pi^\T)\|_\F}{\|\pi\|_\infty}  \leq  \min_{B \in  \BTLh} \|P - \PE(B) \|_\F  \\
    & \ \ \ \ \ \ \ \ \ \ \ \ \ \ \ \ \ \ \ \ \ \  \leq  c_2 h  \frac{\|\Pi P + P\Pi - \PE(\1_n \pi^\T)\|_\F}{\|\pi\|_\infty} ,
    \end{aligned}
    $$
    where $\BTLh$ is the set of all pairwise comparison matrices $B$ corresponding to BTL models whose skill scores $\alpha \in \R^n$ satisfy $ \frac{ \max_{i\in [n]} \alpha_i }{ \min_{j \in [n]} \alpha_j } \leq h$.
    \end{theorem}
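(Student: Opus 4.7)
The plan is to prove the upper and lower bounds separately. Both directions rest on the entrywise identity
$$[\Pi P + P\Pi - \mathcal{P}_\mathcal{E}(\mathbf{1}_n\pi^T)]_{ij} = P_{ij}(\pi_i+\pi_j) - \pi_j, \quad (i,j)\in\mathcal{E},\ i\neq j,$$
which is immediate from the definition of the left-hand side.

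\textbf{Upper bound.} The strategy is to exhibit a single explicit BTL candidate in $\BTLh$. Take $B_\star$ to be the BTL model whose skill scores equal the stationary distribution $\pi$ of $S$ itself, so that $(B_\star)_{ij}=\pi_j/(\pi_i+\pi_j)$; membership in $\BTLh$ is immediate because the condition number of $B_\star$ is exactly $h_\pi$, which by \cref{assm: Bounded Principal Ratio} is at most $h=1/\delta$. Plugging $P_{ij}(\pi_i+\pi_j)=[\Pi P+P\Pi-\mathcal{P}_\mathcal{E}(\mathbf{1}_n\pi^T)]_{ij}+\pi_j$ into $(B_\star)_{ij}$ gives $P_{ij}-(B_\star)_{ij}=[\Pi P+P\Pi-\mathcal{P}_\mathcal{E}(\mathbf{1}_n\pi^T)]_{ij}/(\pi_i+\pi_j)$, and since $\pi_i+\pi_j\geq 2\pi_{\min}\geq 2\|\pi\|_\infty/h$, squaring and summing over $(i,j)\in\mathcal{E}$ yields the right-hand inequality with $c_2=h/2$.

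\textbf{Lower bound.} Fix any $B\in\BTLh$ with skill scores $\alpha$, normalized so $\alpha=\pi_B$, the stationary distribution of $\mathcal{P}_\mathcal{E}(B)$'s canonical Markov matrix (see \cref{Proof of Prop: BTL Model and Reversibility}). Set $\Delta=P-\mathcal{P}_\mathcal{E}(B)$ and $\epsilon=\pi-\pi_B$. Writing $P_{ij}=B_{ij}+\Delta_{ij}$ and using $B_{ij}(\pi_i+\pi_j)-\pi_j=(\pi_{B,j}\epsilon_i-\pi_{B,i}\epsilon_j)/(\pi_{B,i}+\pi_{B,j})$ decomposes the target entry as
$$[\Pi P + P\Pi - \mathcal{P}_\mathcal{E}(\mathbf{1}_n\pi^T)]_{ij}=\Delta_{ij}(\pi_i+\pi_j)+\frac{\pi_{B,j}\epsilon_i-\pi_{B,i}\epsilon_j}{\pi_{B,i}+\pi_{B,j}}.$$
The $\Delta$-term contributes at most $2\|\pi\|_\infty\|\Delta\|_F$ in Frobenius norm. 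By \cref{assm: Bounded Principal Ratio}, the $\epsilon$-term is entrywise bounded by $(h/2)(|\epsilon_i|+|\epsilon_j|)$, and \cref{assm: Almost Regular Graph} converts $\sum_{(i,j)\in\mathcal{E}}(\epsilon_i^2+\epsilon_j^2)\leq 2d_{\max}\|\epsilon\|_2^2$, so its total contribution is at most $C(h)\sqrt{d_{\max}}\|\epsilon\|_2$. (Alternatively, one can apply \cref{OrthogonalDecomposition} to split the target norm into a skew-symmetry defect—already equal to $\Delta+\Delta^T$ because $B+B^T=\mathcal{P}_\mathcal{E}(\mathbf{1}_n\mathbf{1}_n^T)$—and a reversibility defect $\Pi\mathcal{P}_\mathcal{E}(B)-\mathcal{P}_\mathcal{E}(B)^T\Pi$, reducing to the same $\|\epsilon\|_2$-estimate.) Combining with $\|\pi_B\|_\infty/\|\pi\|_\infty\leq h$ (both lie in $[1/n,h/n]$ by bounded principal ratio and the simplex constraint) and the stationary-distribution perturbation bound in the next paragraph yields $\|\Pi P+P\Pi-\mathcal{P}_\mathcal{E}(\mathbf{1}_n\pi^T)\|_F\leq C\|\pi\|_\infty\|\Delta\|_F$, giving the lower bound with $c_1=1/C$ after minimizing over $B\in\BTLh$.

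\textbf{Main obstacle.} The delicate ingredient is the perturbation bound $\|\epsilon\|_2\leq C\|\pi_B\|_\infty\|\Delta\|_F/\sqrt{d_{\max}}$. The plan is to start from $(\pi-\pi_B)^T(I-S_P)=\pi_B^T(S_P-S_B)$, which lives in the subspace orthogonal to $\mathbf{1}_n$ since both $\pi-\pi_B$ and each row of $S_P-S_B$ sum to zero. A Cauchy--Schwarz estimate on the explicit expression of $S_P-S_B$ (off-diagonal entries $\Delta_{ij}/d$, diagonal entries $-\tfrac{1}{d}\sum_k\Delta_{ik}$, with $d=2d_{\max}$) gives $\|\pi_B^T(S_P-S_B)\|_2\lesssim\|\pi_B\|_\infty\|\Delta\|_F/\sqrt{d_{\max}}$. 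One then inverts $I-S_P$ on $\mathbf{1}_n^\perp$ using a spectral gap of $S_P$ derived from \cref{assm:Large Expansion of DTM} through a Cheeger-type inequality for the DTM $R=\Pi^{1/2}S_P\Pi^{-1/2}$. The delicate point is that $S_P$ is generically non-reversible, so this conversion requires the non-reversible Cheeger machinery (from the reference \cite{MehtaSchulman} used to define $\phi(R)$). A clean workaround is to directly invoke the paper's own \cref{lem:L2error}, which provides $\ell^2$-estimation error bounds for virtual BTL parameters under model mismatch and thereby encodes exactly this stability estimate.
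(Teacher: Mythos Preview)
Your upper bound is identical to the paper's: both plug in the candidate $B_\star$ with skill scores $\pi$ and obtain constant $h/2$.

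Your lower bound is correct but follows a genuinely different route. The paper splits the target norm via \cref{OrthogonalDecomposition} into a skew-symmetry defect (handled trivially by minimizing over all translated skew-symmetric matrices) and a reversibility defect $\|\Pi^{1/2}P\Pi^{-1/2}-\Pi^{-1/2}P^\T\Pi^{1/2}\|_\F$. For the latter it reduces to showing $\min_{\pi_B}\|\Pi_B P - P^\T\Pi_B\|_\F \gtrsim \|\Pi P - P^\T\Pi\|_\F$, relaxes the constraint to $\pi_B^\T\mathbf{1}_n=1$, and analyzes the resulting quadratic program via its Lagrangian: the optimality condition $R\pi_B^*=\lambda^*\mathbf{1}_n$ characterizes $\pi_B^*$ as a Perron vector of an auxiliary matrix, and a perturbation argument bounds $\|\pi-\pi_B^*\|_2$ in terms of $\lambda^*$, which is then shown to be $O(1)$. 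Your approach instead fixes $B\in\BTLh$ directly, decomposes the target entry into a $\Delta$-piece and an $\epsilon=\pi-\pi_B$ piece, and controls $\|\epsilon\|_2$ by the eigenvector perturbation \cref{lem:PertubationOfPi} (with $S=\hat{S}=S_P$, $\tilde{S}=S_B$) combined with the spectral gap \cref{lem:SpectralGapLB for a general graph}. This is arguably more elementary: no Lagrangian, no auxiliary Perron--Frobenius analysis. The paper's variational approach, on the other hand, isolates the reversibility defect intrinsically and may be of independent interest for characterizing the closest reversible chain.

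One small correction: your proposed ``clean workaround'' of invoking \cref{lem:L2error} directly does not apply---that lemma concerns the empirical $\hat{\pi}$ from random data and its constants involve $k$. What you actually need (and correctly sketch beforehand) is precisely \cref{lem:PertubationOfPi} plus \cref{lem:SpectralGapLB for a general graph}, the deterministic ingredients underlying \cref{lem:L2error}'s proof. With those in hand your perturbation bound $\|\epsilon\|_2\lesssim \|\pi_B\|_\infty\|\Delta\|_\F/\sqrt{d_{\max}}$ goes through, and the conversion $\|\pi_B\|_\infty\leq h\|\pi\|_\infty$ via the simplex/principal-ratio argument completes the lower bound.
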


    The proof is provided in \cref{Proof of Distance to closest BTL model}.
It utilizes a new Lagrangian and perturbation-based approach to compute the Frobenius-norm-distance between a given pairwise comparison matrix and its closest reversible counterpart. This approach may be of independent utility in other matrix-theoretic scenarios. We also note that our approach of measuring separation distance is quite different to distance measures between Markov chains based on spectral radius introduced in \cite{daskalakisDikkalaGravin2018,friedWolfer2022}. Next, armed with \cref{EasyProp} and \cref{Distance to closest BTL model}, we formally define the hypothesis testing problem for BTL models. 

\subsubsection{Hypothesis Testing Problem} 

For a given tolerance parameter $\epsilon > 0$ and a pairwise comparison model $P$ satisfying \cref{assm:Large Expansion of DTM,assm: Bounded Principal Ratio}, we can formulate the hypothesis testing problem in \eqref{eq:CompositeHypothesisTest} as:   
    \begin{equation} \label{Hypothesis test}
        \begin{aligned}
        & H_0 : \Z \sim P \text{ such that} \ \ \Pi P+ P\Pi = \PE(\1_n\pi^\T) , \\
        & H_1 : \Z \sim P \text{ such that}\ \ \frac{\| \Pi P+ P\Pi - \PE(\1_n\pi^\T) \|_{\F}}{n \|\pi\|_\infty}  \geq \epsilon.
    \end{aligned}    
    \end{equation}

\subsubsection{Minimax Risk and Decision Rule} 
    Let $\Phi$ denote a \emph{decision rule} or \emph{hypothesis test} that maps the consolidated observations $\Z$ to $\{0,1\}$, where $0$ represents the null hypothesis $H_0$ and $1$ represents the alternative hypothesis $H_1$. Let $\P_{H_0}$ and $\P_{H_1}$ denote the probability distributions of the observations $\Z$ under $H_0$ and $H_1$, respectively. Throughout this paper, we will use $\P_{H_0}$ and $\P_{H_1}$ when we need to specify a hypothesis, and $\P$ when a probability expression holds for both hypotheses. Similarly, $\E_{H_0}$, $\E_{H_1}$, and $\E$ will represent the corresponding expectation operators. Furthermore, for a fixed induced graph $\G$, let $\calM_0$ and $\calM(\epsilon)$ denote the sets of pairwise comparison models that satisfy the null and alternative hypotheses, respectively, in \eqref{Hypothesis test} along with \cref{assm:Large Expansion of DTM,assm: Bounded Principal Ratio}:
    \begin{align}
    \calM_0 & \!\triangleq\!\left\{ P \in [0,1)^{n \times n}: \parbox[]{11em}{$P$ is a pairwise comparison matrix with respect to $\mathcal{E}$ satisfying \cref{assm:Large Expansion of DTM,assm: Bounded Principal Ratio}, and $\Pi P+ P\Pi = \PE(\1_n\pi^\T)$} \right\} , \\
    \calM(\epsilon) & \!\triangleq\!\left\{P \in [0,1)^{n \times n}: \parbox[]{11em}{$P$ is a pairwise comparison matrix with respect to $\mathcal{E}$ satisfying \cref{assm:Large Expansion of DTM,assm: Bounded Principal Ratio}, and $\| \Pi P+ P\Pi - \PE(\1_n\pi^\T) \|_{\F}  \geq n \epsilon \|\pi\|_\infty$}\right\} .
    \end{align}
     Now, we can define the \emph{minimax risk} for the graph $\G$ and a given $\epsilon >0$ as 
    \begin{align}
        \mathcal{R}_{\mathsf{m}}[\G, \epsilon] \triangleq \inf_\Phi \bigg\{ & \sup_{P \in \calM_0} \P_{H_0}(\Phi(\Z) = 1) \nonumber \\ 
        & \ \ \ \ \ + \sup_{P \in \calM(\epsilon)} \P_{H_1}(\Phi(\Z) = 0)\bigg\} ,
        \label{Minimax Risk}
    \end{align}
    where the infimum is taken over all (possibly randomized) $\{0,1\}$-valued decision rules $\Phi$ based on $\mathcal{Z}$. 
    We would like to emphasize here that the suprema in the minimax risk are over all pairwise comparison matrices with a fixed graph $\G$, and the probability measures $\P_{H_0}$ and $\P_{H_1}$ are defined by the randomness in the data generation process given a fixed graph $\G$.
    Finally, we define the \emph{critical threshold} of the hypothesis testing problem in \eqref{Hypothesis test} as the smallest value of $\epsilon$ for which the minimax risk is bounded by $\frac{1}{2}$:
    \begin{equation} \label{critical radius}
    \varepsilon_{\mathsf{c}} = \inf\left\{\epsilon > 0: \mathcal{R}_{\mathsf{m}}[\G, \epsilon] \leq \frac{1}{2} \right\} . \end{equation}
    
    The choice of constant $\frac{1}{2}$ is arbitrary and could be replaced by any constant in $(0,{1})$. 
    
    Formally, \emph{one of our primary goals is to provide bounds on the critical threshold} and determine its scaling with problem parameters like $n$. Intuitively, when $\epsilon$ is larger than $\varepsilon_{\mathsf{c}}$ (in scaling), the minimax risk can be made small, but if $\epsilon$ is smaller than $\varepsilon_{\mathsf{c}}$, then the minimax risk cannot be made small. To analyze the critical threshold, we introduce a statistical test that takes the consolidated observations $\Z$ as input and thresholds the following statistic:
    \begin{align}     
        T \triangleq \sum_{\substack{(i,j)\in  \calE:\\ i\neq j}  } \bigg( \left(\hat{\pi}_{i}+\hat{\pi}_{j}\right)^{2} & \frac{Z_{ij}(Z_{ij}- 1)}{k_{ij}(k_{ij}-1)} +\hat{\pi}_{j}^{2} \nonumber \\ 
        & \ \ \ \ \ -2  \hat{\pi}_{j}\left(\hat{\pi}_{i}+\hat{\pi}_{j}\right) \frac{Z_{ij}}{k_{ij}} \bigg) \I_{k_{ij} > 1}, \label{BoxedTest}
    \end{align}
    where $\hat{\pi}$ denotes the stationary distribution (choosing one arbitrarily if there are several) of the empirical canonical Markov matrix $\Shat \in \R^{n\times n}$ defined via 
    \begin{equation}
        \Shat_{ij} \triangleq  \begin{cases}
        \frac{Z_{ij}}{k_{ij} d}, &  (i,j) \in \calE \text{ and } i\neq j\\
        1-\frac{1}{d}\sum\limits_{\substack{u \in [n]\backslash\!\{i\}:\\ (i,u)\in \calE}}\frac{Z_{iu}}{k_{iu}}, & i = j \\
        0, & \text{otherwise } 
      \end{cases}. \label{EmpiricalMarkovChain}
    \end{equation} 
    To understand the test statistic $T$, notice that if were to set $\hat{\pi} = \pi$ and assume that $k_{ij} \geq 2$ for all $(i,j) \in \calE$ with $i\neq j$ in \cref{BoxedTest} and consider the hypothetical statistic
    \begin{equation}
    \bar{T} =\!\! \sum_{\substack{(i,j)\in  \calE:\\ i\neq j}  }\!\!  \left(\pi_{i}+\pi_{j}\right)^{2} \frac{Z_{ij}(Z_{ij}- 1)}{k_{ij}(k_{ij}-1)} +\pi_{j}^{2} -2  \pi_{j}\left({\pi}_{i}+ {\pi}_{j}\right) \frac{Z_{ij}}{k_{ij}},
    \end{equation}
    then $\E[\bar{T}] = \|\Pi P + P \Pi - \PE(\1_n\pi^\T)\|^2_\F$. So, the statistic $T$ is designed by ``plugging in'' $\hat{\pi}$ in place of $\pi$ in an unbiased estimator $\bar{T}$ of $\|\Pi P + P \Pi - \PE(\1_n\pi^\T) \|^2_\F$. The precise decision rule using $T$ is given in the next section in \cref{eq:Decision rule}.
    
    Lastly, we remark that testing for a single BTL model can be conducted using a $\chi^2$-type goodness-of-fit test \cite{WassermanAllofStatistics}, and testing for a class of BTL models could be attempted using a composite $\chi^2$-test \cite{KAKIZAWAcompositechisquared}. However, performing sharp minimax analysis on the corresponding $\chi^2$-statistics poses challenges. Therefore, our focus lies on the proposed test statistic $T$ in \cref{BoxedTest}. Notably, we demonstrate minimax optimality of the critical threshold for complete graphs, thereby underscoring the effectiveness of considering our proposed test.

\subsection{Upper Bound on Critical Threshold} \label{Upper Bound on Critical Threshold}
    In this section, we present an upper bound on the critical threshold of the minimax hypothesis testing problem for BTL models. 
    For simplicity, we assume that $k_{ij} = k$ for all $(i,j) \in \calE$ with $i\neq j$ throughout the analysis for \cref{Upper Bound on Critical Threshold,sec: Type1 and Type2,Lower Bound on Critical Threshold}. 

    \begin{theorem}[Upper Bound on $\varepsilon_{\mathsf{c}}$] \label{Main theorem with upper bound}
         Consider the hypothesis testing problem in \eqref{Hypothesis test} such that \cref{assm:Large Expansion of DTM,assm: Bounded Principal Ratio} hold. Then, there exist constants $c_3, c_4, c_5 >0$ such that if $n \geq c_3$, the number of comparisons per pair satisfies $k \geq \max\!\big\{2,\frac{c_4 h \log n}{d_{\max} \xi^4 } \big\}$, and $d_{\max} \geq (\log n)^4$, the critical threshold defined in \cref{critical radius} is upper bounded by
         $$ \varepsilon_{\mathsf{c}}^2 \leq \left(\frac{c_5 h^3 }{\xi^4 }\right) \frac{1}{nk}. $$
    \end{theorem}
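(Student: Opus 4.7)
The plan is to analyze the natural test $\Phi(\Z) = \I\{T > \tau\}$ with threshold $\tau$ on the order of $n^{2}\epsilon^{2}\|\pi\|_{\infty}^{2}/2$, chosen to lie between the null mean ($0$) and the alternative mean (at least $n^{2}\epsilon^{2}\|\pi\|_{\infty}^{2}$) of the ``oracle'' statistic $\bar T$ described after \cref{BoxedTest}. The key decomposition is $T = \bar T + (T-\bar T)$, so the minimax risk splits into (i) concentration of $\bar T$ around its mean under each hypothesis and (ii) control of the plug-in perturbation $T-\bar T$ caused by replacing $\pi$ by the empirical stationary distribution $\hat\pi$.

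For (i), $\bar T$ is a sum of the $|\calE| = O(n d_{\max})$ independent summands $\bar T_{ij}$, each a bounded quadratic in $Z_{ij} \sim \mathrm{Bin}(k,p_{ij})$ with $\E[\bar T_{ij}] = ((\pi_{i}+\pi_{j})p_{ij}-\pi_{j})^{2}$ and $|\bar T_{ij}| = O(\|\pi\|_{\infty}^{2})$. A direct moment computation gives a variance bound of the form $\mathrm{Var}(\bar T_{ij}) = O(\E[\bar T_{ij}]\cdot \|\pi\|_{\infty}^{2}/k + \|\pi\|_{\infty}^{4}/k^{2})$, and summing yields $\mathrm{Var}(\bar T) \lesssim \E[\bar T]\|\pi\|_{\infty}^{2}/k + n d_{\max}\|\pi\|_{\infty}^{4}/k^{2}$. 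Under \cref{assm: Bounded Principal Ratio}, $\|\pi\|_{\infty} = \Theta(1/n)$, so Chebyshev's inequality gives type I and type II error probabilities at most a small constant provided that the signal $n^{2}\epsilon^{2}\|\pi\|_{\infty}^{2} \asymp \epsilon^{2}$ dominates $\sqrt{\epsilon^{2}/(nk)} + \sqrt{d_{\max}}/(n^{3/2}k)$, which reduces exactly to $\epsilon^{2} \gtrsim 1/(nk)$ once $d_{\max} \leq n$ (implied by \cref{assm: Almost Regular Graph} for the relevant regime).

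For (ii), the perturbation $T-\bar T$ is the principal technical obstacle. I would Taylor expand $T$ as a polynomial in $\hat\pi-\pi$ and bound each resulting term, using an entrywise estimate of the form $\|\hat\pi-\pi\|_{\infty} \lesssim \|\pi\|_{\infty}\sqrt{\log n/(k d_{\max})}$ on the stationary distribution of the empirical canonical Markov matrix $\hat S$. Such a bound follows by combining the $\ell^{2}$-estimation-under-model-mismatch result advertised as \cref{lem:L2error} with an entrywise stationary-distribution perturbation argument that uses the spectral gap supplied (via Cheeger's inequality) by \cref{assm:Large Expansion of DTM}, together with the regularity guarantees of \cref{assm: Almost Regular Graph,assm: Bounded Principal Ratio}. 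This is precisely where the sample-size hypothesis $k \geq c_{4} h\log n/(d_{\max}\xi^{4})$ and the lower bound $d_{\max} \geq (\log n)^{4}$ in the theorem enter: they guarantee that $\|\hat\pi-\pi\|_{\infty}$ is small enough that $|T-\bar T| \leq \tau/2$ with high probability, so the perturbation is absorbed into the threshold gap rather than dominating it.

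Combining (i) and (ii), both $\P_{H_{0}}(T > \tau)$ and $\P_{H_{1}}(T \leq \tau)$ can be made at most $1/4$ simultaneously whenever $\epsilon^{2} \geq c_{5}/(nk)$ for a sufficiently large constant $c_{5}$, yielding $\mathcal{R}_{\mathsf{m}}[\G] \leq 1/2$ by the union over the two error types, and hence $\varepsilon_{\mathsf{c}}^{2} \leq c_{5}/(nk)$ by \cref{critical radius}. The hardest step is the uniform entrywise control of $\hat\pi - \pi$ under the alternative --- where the underlying $P$ need not be reversible --- since the standard spectral-method analyses in the BTL literature typically assume the reversible case; extending those bounds robustly to non-BTL pairwise comparison models is what \cref{lem:L2error} is designed to do, and its availability is what makes the cleanly-scaling bound $\varepsilon_{\mathsf{c}}^{2} = O(1/(nk))$ achievable.
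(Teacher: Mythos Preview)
Your high-level decomposition $T=\bar T+(T-\bar T)$ is exactly the paper's split $T=T_3+(T_1+T_2)$, and your treatment of part (i) --- the variance bound $\mathrm{Var}(\bar T)\lesssim \E[\bar T]\|\pi\|_\infty^2/k+nd_{\max}\|\pi\|_\infty^4/k^2$ followed by one-sided Chebyshev --- matches the paper's \cref{lem:H0proof} and the subsequent argument almost verbatim. The problem is part (ii).

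Your proposed control of $T-\bar T$ via an entrywise bound $\|\hat\pi-\pi\|_\infty\lesssim\|\pi\|_\infty\sqrt{\log n/(kd_{\max})}$ and a term-by-term Taylor expansion does not produce the needed $O(n\|\pi\|_\infty^2/k)$ scale. A representative term is $\sum_{(i,j)\in\calE}[(\hat\pi_i+\hat\pi_j)^2-(\pi_i+\pi_j)^2]\hat Y_{ij}$; since $\hat Y_{ij}=O(1)$, a naive $\ell^\infty$ bound gives at best $|\calE|\cdot\|\pi\|_\infty\cdot\|\hat\pi-\pi\|_\infty\asymp n\|\pi\|_\infty^2\sqrt{d_{\max}\log n/k}$, which is a factor $\sqrt{kd_{\max}\log n}$ too large. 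The paper instead (a) further splits $T-\bar T$ into a piece $T_1$ involving the \emph{centered} noise $\hat Y_{ij}-p_{ij}^2$ and $\hat p_{ij}-p_{ij}$, and a ``deterministic'' piece $T_2$ involving $p_{ij},p_{ij}^2$; (b) rewrites $T_1$ as bilinear forms such as $(\hat\pi-\pi)^\T Q(\hat\pi+\pi)$ and bounds them using the $\ell^2$ estimate $\|\hat\pi-\pi\|_2\lesssim\|\pi\|_2/\sqrt{kd_{\max}}$ from \cref{lem:L2error} together with spectral-norm concentration $\|Q\|_2,\|\hat P-P\|_2\lesssim\sqrt{d_{\max}/k}$; and (c) shows by an algebraic identity that the first-order-in-$(\hat\pi-\pi)$ part of $T_2$ is not generic but is proportional to $D=\|\Pi P+P\Pi-\PE(\1_n\pi^\T)\|_\F$, yielding $|T_2|\lesssim D\sqrt{n}\|\pi\|_\infty/\sqrt{k}+n\|\pi\|_\infty^2/k$. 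This cancellation --- the linear term vanishes exactly under $H_0$ and scales with the separation $D$ under $H_1$ --- is essential and is not visible from a generic Taylor expansion.

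You also misplace the role of the hypothesis $d_{\max}\geq(\log n)^4$. It is \emph{not} used to make $\|\hat\pi-\pi\|_\infty$ small enough; \cref{lem:L2error} already holds under the weaker $d_{\max}\geq\log n$. Rather, it is what allows the Brailovskaya--van Handel concentration (\cref{thm:user friendly bounds}) to give $\|Q\|_2,\|\hat P-P\|_2\lesssim\sqrt{d_{\max}/k}$ without the extra $\sqrt{\log n}$ that matrix Bernstein would incur; that is precisely the difference between \cref{Main theorem with upper bound} and \cref{Main prop with upper bound}.
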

    The proof is provided in \cref{Proof:Main theorem with upper bound}.
    The proof has several essential steps. Among these, the most important step involves computing $\ell^2$-error bounds (see \cref{lem:L2error} and \cite[Theorem 9]{Chenetal2019}) for the estimated BTL parameters $\hat{\pi}$ when the data is generated by a general pairwise comparison model, which is not necessarily BTL (i.e., under $H_1$). The derivation of these error bounds requires us to analyze the second largest singular values of \emph{divergence transition matrices} (DTMs) $R = \Pi^{1/2} S \Pi^{-1/2}$ \cite{makur2019,MakurZheng2020,Huangetal2024}, corresponding to non-reversible Markov chains $S$, particularly in the context of induced graphs that are not complete. Once we have established these $\ell^2$-error bounds, we can compute bounds on $\E[T]$ and $\var(T)$. This step involves mitigating the correlation between the terms $\hat{\pi}$ and $\{Z_{ij}\}_{(i,j) \in \calE}$ in \cref{BoxedTest} as both of them share the same source of randomness. Broadly speaking, this is done by splitting the product of dependent terms into three parts (by utilizing the identity $\hat{a} \hat{b} = (\hat{a} - a)(\hat{b} - b) + (\hat{a} - a)b + a \hat{b}$, where $\hat{a}$ and $\hat{b}$ are correlated estimators of $a$ and $b$) and bounding each part using appropriate concentration inequalities.  
    Moreover, in the case where the induced graph is complete, the analysis becomes considerably simpler. An instance of such simplification in the proofs arises through the utilization of bounds between contraction coefficients of $S$ (see \cref{subsec:Preliminaries} and \cref{lem:SpectralGapLB}).

    In the scenario where $k_{ij} = k$ for all $(i,j) \in \calE$ with $i\neq j$ and $d_{\max} \geq (\log n)^4$, the decision rule that we analyze to establish \cref{Main theorem with upper bound} is
    \begin{equation}
    \label{eq:Decision rule}
    \Phi(\Z) = \I_{T \geq \thr} ,
    \end{equation}
    where the precise threshold $ {\thr = \Theta(h^3/(nk\xi^4))}$ is given in \cref{TestThreshold}. (In other words, this decision rule returns the alternative hypothesis if and only if $T \geq \thr$ for an appropriate threshold $\thr = \Theta(h^3/(nk\xi^4))$.) 
    In \cref{sec: Estimating the Threshold}, we also present a permutation-test-based approach to obtain a non-explicit threshold for our test based on data. This approach can be more readily employed in simulations and works even when the $k_{ij}$'s are not all equal.

    Finally, we remark that the condition $d_{\max} \geq (\log n)^4$ in \cref{Main theorem with upper bound} can be relaxed to the condition $d_{\max} \geq \log n$ at the expense of a poly-logarithmic factor in the scaling of the critical threshold as demonstrated in the following proposition.
     \begin{proposition}[Upper Bound on $\varepsilon_{\mathsf{c}}$ for Sparse Graphs] \label{Main prop with upper bound}
         Consider the hypothesis testing problem in \eqref{Hypothesis test} such that \cref{assm:Large Expansion of DTM,assm: Bounded Principal Ratio} hold. Then, there exist constants ${\tilde{c}_3, \tilde{c}_4, \tilde{c}_5} >0$ such that if $n \geq {\tilde{c}_3}$, the number of comparisons per pair satisfies $k \geq \max\!\big\{2,\frac{{\tilde{c}_4} h \log n}{d_{\max} \xi^4 } \big\}$, and $d_{\max} \geq \log n$, the critical threshold defined in \cref{critical radius} is upper bounded by      
       $$
     \varepsilon_{\mathsf{c}}^2 \leq \left(\frac{ {\tilde{c}_5} h^3}{\xi^4}\right) \frac{(\log n)^{1/2}}{nk}. 
     $$
    \end{proposition}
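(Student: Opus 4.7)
The plan is to reuse the exact test statistic $T$ and decision rule $\Phi(\Z) = \I_{T \geq t}$ from Theorem~\ref{Main theorem with upper bound}, merely choosing a larger threshold $t = \Theta((\log n)^{1/2}/(nk))$, and re-run the moment analysis of Theorem~\ref{Main theorem with upper bound} while carefully tracking where the assumption $d_{\max} \geq (\log n)^4$ is used. The intuition is that this assumption enters only via concentration bounds on the empirical stationary vector $\hat\pi$ and on the empirical entries $Z_{ij}/k_{ij}$; replacing it by $d_{\max} \geq \log n$ degrades those bounds by a poly-logarithmic factor, and a $(\log n)^{1/2}$ slack in the separation radius is exactly what is needed to absorb the degradation in $\mathbb{E}[T]$ and $\operatorname{var}(T)$.

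First, I would invoke \cref{lem:L2error} (the $\ell^2$-estimation error bound for $\hat\pi$ under model mismatch) together with the standard bounds on $\Shat - S$. Under $d_{\max} \geq \log n$ these bounds still hold, but with a factor of $\sqrt{(\log n)/(d_{\max} k)}$ replacing the smaller factor $\sqrt{1/(d_{\max} k)}$ one gets when $d_{\max} \geq (\log n)^4$. Thus $\|\hat\pi-\pi\|_2^2$ and $\|\hat\pi-\pi\|_\infty^2$ pick up at most a poly-$\log n$ multiplier. Substituting these into the same decomposition $\hat a\hat b = (\hat a - a)(\hat b - b) + (\hat a - a)b + a\hat b$ used to split the dependent terms in $T$, each of the three pieces can be bounded exactly as before, inflated by at most a poly-$\log n$ factor. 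The result is that $|\mathbb{E}[T] - \|\Pi P + P\Pi - \PE(\1_n\pi^\T)\|_\F^2|$ and $\operatorname{var}(T)$ retain the same functional form in $n,k$ as in Theorem~\ref{Main theorem with upper bound}, but with an additional $\mathrm{polylog}(n)$ prefactor that is $o((\log n)^{1/2} n \|\pi\|_\infty^2 / k )$.

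Second, to control the two error probabilities, I would apply Chebyshev's inequality to $T$ under each hypothesis, exactly as in \cref{thm:Type-1 and Type-2 error}. Under $H_0$, $\mathbb{E}[T]$ is small and $\operatorname{var}(T)^{1/2}$ scales as $\tilde{O}(n \|\pi\|_\infty^2/k)$; under $H_1$ with separation $\epsilon$, $\mathbb{E}[T] \gtrsim n^2 \epsilon^2 \|\pi\|_\infty^2$. Setting $t$ midway between the two means and requiring $n^2 \epsilon^2 \|\pi\|_\infty^2 \gtrsim (\log n)^{1/2}\cdot n \|\pi\|_\infty^2 /k$ forces $\epsilon^2 \gtrsim (\log n)^{1/2}/(nk)$, which is the claimed bound. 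The $(\log n)^{1/2}$ slack is precisely the square root of the worst polylogarithmic overhead that appears in $\operatorname{var}(T)$ when $d_{\max}$ is only $\Omega(\log n)$.

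The main technical obstacle will be the sharp tracking of the polylogarithmic factors coming from the $\ell^\infty$-deviation $\|\hat\pi-\pi\|_\infty$ inside the quadratic-in-$\hat\pi$ terms of $T$: in the proof of Theorem~\ref{Main theorem with upper bound}, the bound $d_{\max} \geq (\log n)^4$ is used to absorb a fourth power of $\|\hat\pi-\pi\|_\infty$ that arises after applying the $\hat a\hat b$ decomposition twice. Under the weaker assumption $d_{\max} \geq \log n$, this fourth power contributes up to a $(\log n)^{3}$ overhead inside $\operatorname{var}(T)$; taking square roots in Chebyshev and squaring in the bias term leaves a residual $(\log n)^{1/2}$ penalty in the threshold. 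Once this accounting is done, the remaining parts of the argument — the concentration bounds for $\Shat$, the spectral gap estimate from \cref{lem:SpectralGapLB}, and the reduction to bounding $\|\hat\pi - \pi\|_2, \|\hat\pi-\pi\|_\infty$ — transfer verbatim from the proof of Theorem~\ref{Main theorem with upper bound} in \cref{Proof:Main theorem with upper bound}.
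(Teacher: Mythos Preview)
Your high-level plan --- rerun the proof of Theorem~\ref{Main theorem with upper bound} while tracking the polylogarithmic loss incurred when $d_{\max}\geq(\log n)^4$ is relaxed to $d_{\max}\geq\log n$ --- is exactly what the paper does. However, you have misidentified \emph{where} the stronger assumption is used, and your proposed mechanism for the $(\log n)^{1/2}$ loss is wrong.

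The $\ell^2$-bound on $\hat\pi-\pi$ in \cref{lem:L2error} already holds under $d_{\max}\geq\log n$ with no extra log factor: the conclusion there is $\|\hat\pi-\pi\|_2\leq c_2\|\pi\|_2/\sqrt{kd_{\max}}$, period. Moreover the proof of \cref{Main theorem with upper bound} never uses $\|\hat\pi-\pi\|_\infty$, only $\ell^2$-bounds, so the ``fourth power of $\|\hat\pi-\pi\|_\infty$'' you posit, and the resulting $(\log n)^3$ overhead in $\var(T)$, simply do not occur. You also apply Chebyshev to $T$ directly, whereas the paper splits $T=T_1+T_2+T_3$, removes $T_1,T_2$ by high-probability bounds (\cref{lem: Bounds on T1 and T2}), and applies one-sided Chebyshev only to $T_3$.

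The actual location of the $(\log n)^4$ assumption is in \cref{lem: Bounds on T1 and T2}: the bounds on $|T_1|$ require controlling the spectral norms $\|Q\|_2$ and $\|\hat P-P\|_2$ of the \emph{entrywise} error matrices. Under $d_{\max}\geq(\log n)^4$ the sharp matrix concentration of \cref{thm:user friendly bounds} gives $\|Q\|_2,\|\hat P-P\|_2=O(\sqrt{d_{\max}/k})$; with only $d_{\max}\geq\log n$ one falls back to the standard matrix Bernstein inequality, which costs an extra $\sqrt{\log n}$, so the constants $c_\alpha,c_\gamma$ in \cref{lem: Bounds on T1 and T2} become $O((\log n)^{1/2})$. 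Substituting these into inequality~\eqref{Polynomial} forces $a^2\gtrsim c_\alpha+c_\gamma\asymp(\log n)^{1/2}$, hence $\varepsilon_{\mathsf c}^2\leq c_5(\log n)^{1/2}/(nk)$, with the threshold taken as $t=\Theta(\sqrt{\log n}/(nk))$. Once you redirect your bookkeeping to these spectral-norm terms rather than to $\hat\pi-\pi$, the rest of your outline goes through.
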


    The proof of \cref{Main prop with upper bound} can be gleaned from the proof of \cref{Main theorem with upper bound} in \cref{Proof:Main theorem with upper bound}. In essence, the behavior in \cref{Main prop with upper bound} stems from the fact that the proof of \cref{lem: Bounds on T1 and T2} in \cref{Proof of Bounds on T1 and T2} relies on a concentration inequality (see \cref{thm:user friendly bounds}) which, under the stronger assumption $d_{\max} \geq (\log n)^4$, allows us to avoid the poly-logarithmic factor in \cref{Main prop with upper bound} when establishing \cref{Main theorem with upper bound}. However, if a standard matrix Bernstein inequality were employed for concentration, the ``special'' constants $c_{\alpha}$ and $c_{\gamma}$, defined in \cref{lem: Bounds on T1 and T2}, would scale as $(\log n)^{1/2}$. Then, the proof of \cref{Main prop with upper bound} would follow by using essentially the same logic as the proof of \cref{Main theorem with upper bound} and observing that \cref{Polynomial} yields an additional factor of $(\log n)^{1/2}$ in the scaling of $\varepsilon^2_\mathsf{c}$. Moreover, in this regime, our decision rule returns the alternative hypothesis if and only if $T \geq \frac{\gamma h^3 \sqrt{\log n}}{n k \xi^4} $ for some appropriately chosen constant $\gamma$.

We end this subsection with two remarks. Firstly, we can conclude by combining \cref{Distance to closest BTL model,Main theorem with upper bound} that when the separation distance satisfies  
    \begin{equation}
        \min_{B \in  \BTLh} \|P - \PE(B) \|_\F \geq {\Omega}\bigg(\frac{h^{5/2}}{\sqrt{nk \xi^4}} \bigg),
    \end{equation}  
    our test guarantees a minimax risk of at most $\frac{1}{2}$. This result demonstrates that the minimum separation distance required by our test increases with the principal ratio and when the underlying observation graph has ``poor'' expansion properties. Secondly, while it remains unclear whether the $k \geq 2$ condition is strictly necessary for the aforementioned results, we note that this condition was also required in a related two-sample testing problem~\cite{RastogiBalakrishnanShahAarti2022} where it was shown to be necessary. Intuitively, the $k \geq 2$ requirement arises because our test statistic is based on a second-moment (or variance) approximation, which requires at least two samples to obtain a meaningful estimate.

\subsection{Bounds on Type \Romannum{1} and Type \Romannum{2} Probabilities of Error} \label{sec: Type1 and Type2}
In this section, we provide bounds on the type \Romannum{1} and type \Romannum{2} probabilities of error for our proposed test. Here, the probability of type \Romannum{1} error represents the probability of incorrectly rejecting the null hypothesis when it is true, while the probability of type \Romannum{2} error represents the probability of failing to reject the null hypothesis when it is false. 

\begin{theorem}[Bounds on Type \Romannum{1} and Type \Romannum{2} Probabilities of Error]
\label{thm:Type-1 and Type-2 error}
Consider the hypothesis testing problem in \eqref{Hypothesis test} such that \cref{assm:Large Expansion of DTM,assm: Bounded Principal Ratio} hold, and suppose that $d_{\max} \geq (\log n)^4$ and there exists a constant $c_4 >0$ such that the number of comparisons per pair satisfies $k \geq \max\!\big\{2,\frac{c_4 h \log n}{d_{\max} \xi^4 } \big\}$. Then, we have the following bounds: 
\begin{enumerate}
\item There exist constants $c_6, c^\prime_6, \tilde{c}_6>0$ such that for all $t \geq 0$, and any BTL model in $\mathcal{M}_0$, the probability of type \Romannum{1} error is upper bounded by 
$$
\begin{aligned}
    \mathbb{P}_{H_{0}}\bigg(T\geq u \frac{h^2}{nk} & +  c_{6} \frac{h^3}{\xi^4} \frac{1}{nk}\bigg) \leq \\
    &  \exp \bigg(-c_6^\prime \min \bigg\{\frac{  nu^2 }{   d_{\max  } },  nu \bigg\} \bigg) + \frac{ \tilde{c}_6 }{n^3}. 
\end{aligned}
$$
\item There exist constants $c_7, c^\prime_7, c^{\prime \prime}_7, \tilde{c}_7>0$ such that for all $t \geq 0$, and any pairwise comparison model in $\mathcal{M}(\epsilon)$ with $\epsilon \geq c_7\sqrt{h^3} /\sqrt{nk \xi^4}$  the probability of type \Romannum{2} error is upper bounded by 
\begin{align*}
     \mathbb{P}_{H_{1}}&\left(T \leq  \bigg(D -\! {\frac{\sqrt{h^3} }{ \xi^2}} \frac{c_7}{\sqrt{nk}}\bigg)^{2}\!\! - \frac{{u}}{nk} \right) \leq \\ 
    &\ \ \ \ \ \ \ \ \ \ \ \ \ \exp\! \bigg(- c^\prime_7 \min \bigg\{\frac{  n u^2}{ d_{\max  } },\! nu  \bigg\}
    \bigg) + \\ 
    &\ \ \ \ \ \ \ \ \ \ \ \ \   \exp \left(-c^{\prime \prime}_7\frac{ n^2 \|\pi\|_\infty^2 u^{2} }{ D^2k }\right) + \frac{ \tilde{c}_7 }{n^3},  
\end{align*}
where $D = \|\Pi P+ P\Pi - \PE(\1_n\pi^\T)\|_\F $.
\end{enumerate}
\end{theorem}
\cref{thm:Type-1 and Type-2 error} is established in \cref{Proof of Type-1 error}.
\subsection{Lower Bound on Critical Threshold} \label{Lower Bound on Critical Threshold}
In this section, we present an information-theoretic lower bound on the critical threshold for the hypothesis testing problem in \cref{Hypothesis test} for the special case of a complete induced graph $\G$. Our lower bound demonstrates the minimax optimality of the scaling of the critical threshold provided in the upper bound in \cref{Main theorem with upper bound} for complete induced graphs. (We will show in \cref{sec: Graphs with Bounded Principal Ratio} that pairwise comparison models on complete graphs can satisfy \cref{assm:Large Expansion of DTM,assm: Bounded Principal Ratio}.) 
As noted earlier, we assume that $k_{ij} = k$ for all $i,j \in [n]$ with $i\neq j$. 

\begin{theorem}[Lower Bound on $\varepsilon_{\mathsf{c}}$] \label{thm:Lower Bound}
Consider the hypothesis testing problem in \cref{Hypothesis test} and assume that the corresponding induced graph $\G$ is a complete graph. Then, there exists a constant $c_8>0$ such that the critical threshold defined in \cref{critical radius} is lower bounded by
$$ \varepsilon_{\mathsf{c}}^2 \geq \frac{c_8}{n k} . $$
\end{theorem}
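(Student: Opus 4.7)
The natural strategy is Le Cam's two-point method adapted to a composite alternative: fix a single BTL null $P_0 \in \calM_0$ and construct a prior $\mu$ supported on $\calM(\epsilon)$ such that the mixture $\bar{\P}_1 \triangleq \E_{\tau \sim \mu}[\P_{P^\tau}]$ has chi-squared divergence from $\P_{P_0}$ strictly less than $1$ whenever $\epsilon^2 \leq c_8/(nk)$ for a sufficiently small constant $c_8$. Combined with the standard bound $\mathcal{R}_{\mathsf{m}}[\G] \geq 1 - \text{TV}(\P_{P_0}, \bar{\P}_1) \geq 1 - \tfrac{1}{2}\sqrt{\chi^2(\bar{\P}_1 \| \P_{P_0})}$, this yields $\mathcal{R}_{\mathsf{m}}[\G] > 1/2$, contradicting $\epsilon \geq \varepsilon_{\mathsf{c}}$ and thereby establishing $\varepsilon_{\mathsf{c}}^2 \geq c_8/(nk)$.

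For the construction, take $P_0$ to be the uniform BTL model (all skill scores equal) so that $(P_0)_{ij} = 1/2$ for $i \neq j$ and $\pi_0 = \1_n/n$. For the alternative, draw independent Rademacher random variables $\tau_{\{i,j\}} \in \{\pm 1\}$ indexed by unordered pairs with $i \neq j$, and set $p^\tau_{ij} = p^\tau_{ji} = 1/2 + \epsilon \, \tau_{\{i,j\}}$. The crucial design feature $p^\tau_{ij} = p^\tau_{ji}$ forces the canonical Markov matrix $S^\tau$ to be doubly stochastic, so $\pi^\tau = \1_n/n$ regardless of $\tau$. A direct computation with $\Pi^\tau = I/n$ then yields $\|\Pi^\tau P^\tau + P^\tau \Pi^\tau - \PE(\1_n(\pi^\tau)^\T)\|_\F^2 = n^{-2}\sum_{i \neq j}(2p^\tau_{ij} - 1)^2 = 4\epsilon^2(n-1)/n$, which exceeds $\epsilon^2 = (n\epsilon\|\pi^\tau\|_\infty)^2$ for $n \geq 2$. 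Hence every realization $P^\tau$ lies in $\calM(\epsilon)$; the conditions in \cref{Pijbounded,assm: Bounded Principal Ratio} hold trivially (with $h_\pi = 1$ and $p^\tau_{ij} \in [1/4, 3/4]$ for $\epsilon \leq 1/4$), and the DTM expansion condition in \cref{assm:Large Expansion of DTM} is routine on the complete graph with uniform $\pi^\tau$ via an elementary Cheeger-type estimate.

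For the chi-squared computation, the observations $\{Z_{m,ij}\}$ remain mutually independent Bernoulli trials under any $P^\tau$. For a single trial $Y$ of bias $1/2 + \epsilon \sigma$, the identity $L(Y \mid 1/2 + \epsilon\sigma)/L_0(Y) = 1 + 2\epsilon\sigma(2Y-1)$ together with $(2Y-1)^2 = 1$ and $\E_{1/2}[2Y-1] = 0$ gives $\E_{P_0}[(L^{\sigma}/L_0)(L^{\sigma'}/L_0)] = 1 + 4\epsilon^2 \sigma\sigma'$. Since each unordered pair $\{i,j\}$ contributes $2k$ independent trials sharing the common sign $\tau_{\{i,j\}}$, and distinct pairs are mutually independent, we obtain $\E_{P_0}[L_\tau L_{\tau'}/L_0^2] = \prod_{\{i,j\}}(1 + 4\epsilon^2 \tau_{\{i,j\}}\tau'_{\{i,j\}})^{2k}$. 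Averaging over independent copies $\tau,\tau' \sim \mu$ and noting that $\tau_{\{i,j\}}\tau'_{\{i,j\}}$ is uniform on $\{\pm 1\}$, each factor equals $\tfrac{1}{2}[(1+4\epsilon^2)^{2k} + (1-4\epsilon^2)^{2k}] = \sum_{j \geq 0}\binom{2k}{2j}(4\epsilon^2)^{2j}$, which for $k\epsilon^2 = O(1)$ is at most $1 + C k^2\epsilon^4$ for an absolute constant $C$. Multiplying over the $\binom{n}{2}$ unordered pairs gives $\chi^2(\bar{\P}_1 \| \P_{P_0}) \leq \exp(C n^2 k^2 \epsilon^4 / 2) - 1$, so choosing $c_8$ small enough that $n^2 k^2 \epsilon^4 \leq c_8^2$ forces $\chi^2 < 1$, completing the plan.

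The main technical obstacle I anticipate is uniformly controlling the factor $\sum_{j \geq 0}\binom{2k}{2j}(4\epsilon^2)^{2j}$ by $1 + O(k^2\epsilon^4)$ in $k$: one may either truncate the binomial series directly or use $\tfrac{1}{2}[(1+x)^{2k} + (1-x)^{2k}] \leq \cosh(2kx)$ together with $\cosh(y) - 1 \leq y^2 e^{|y|}/2$, noting that $k\epsilon^2 = O(1/n)$ in the target regime. The secondary checks—that the prior is genuinely supported on $\calM(\epsilon)$ and that \cref{Pijbounded,assm:Large Expansion of DTM,assm: Bounded Principal Ratio} hold with constants uniform in $n$—are straightforward precisely because the symmetric construction $\tau_{\{i,j\}} = \tau_{\{j,i\}}$ preserves the doubly-stochastic structure and hence the exactly uniform stationary distribution, trivializing the spectral/expansion analysis.
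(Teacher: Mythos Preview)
Your proposal is correct and takes a genuinely different route from the paper's proof. Both arguments follow the Ingster--Suslina paradigm (the point null $P_0=\tfrac12\1_n\1_n^\T$ versus a mixture alternative, followed by a $\chi^2$ bound), but the priors differ substantially.

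The paper perturbs only $O(n)$ entries: it places a uniform prior over $n/2\times n/2$ permutation matrices $Q_\theta$ and builds a $2\times 2$ block matrix $P_\theta$ with off-diagonal blocks $\tfrac12\1\1^\T\pm\eta Q_\theta$. Since this perturbation is not entrywise symmetric, the stationary distribution $\pi_\theta$ is only approximately uniform (entries $\tfrac1n(1\pm 4\eta/n)$) and must be computed by hand, as must the separation distance. The $\chi^2$ calculation then reduces to controlling the moment generating function of the number of coincidences $B$ between two independent random permutations, which requires a derangement bound on $\P(B=b)$.

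Your construction instead perturbs all $\binom{n}{2}$ pairs with independent Rademacher signs and, crucially, enforces $p^\tau_{ij}=p^\tau_{ji}$. This symmetry makes the canonical Markov matrix exactly doubly stochastic, so $\pi^\tau=\1_n/n$ deterministically; the separation distance, \cref{Pijbounded}, \cref{assm:Large Expansion of DTM}, and \cref{assm: Bounded Principal Ratio} are then all immediate, and the $\chi^2$ factorizes completely over pairs with each factor bounded by $\cosh(8k\epsilon^2)\le e^{32k^2\epsilon^4}$. This is more elementary than the paper's permutation combinatorics. In effect, the paper trades a sparse perturbation for a correlated mixture (permutations), while you trade a dense perturbation for full independence across coordinates; the latter is the simpler bargain for this problem and exploits the asymmetric ``$i$ vs.\ $j$'' setting (where $p_{ij}+p_{ji}\neq 1$ is permitted) more directly.
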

The proof of \cref{thm:Lower Bound} is provided in \cref{sec: Proof of Lower Bound and Stability}. The proof uses the \emph{Ingster-Suslina method} for constructing a lower bound on the critical threshold \cite{IngsterSuslina2003,RastogiBalakrishnanShahAarti2022}. This method is similar to the well-known \emph{Le Cam's method}, but it establishes a minimax lower bound by considering a point and a mixture on the parameter space instead of just two points. (Although Le Cam's method could also be used for this proof in principle, the Ingster-Suslina method greatly simplifies the calculations to bound TV distance in our setting.) Our proof constructs a perturbed family of pairwise comparison models from a fixed BTL model and utilizes the complete graph structure to compute both the stationary distribution and the separation metric $\|\Pi P + P\Pi - \PE(\1_n \pi^\T)\|_\F/\|\pi\|_\infty$ in closed form under $H_1$. We remark that due to our problem setting, our proof here is much simpler than and different to the technique developed in \cite{Seshadri2020}, where the separation distance is quantified for Eulerian graphs in terms of sums of TV distances. We leave the problem of determining minimax lower bounds for more general graph topologies satisfying \cref{assm:Large Expansion of DTM,assm: Bounded Principal Ratio} for future work. We also note that the bounds in \cref{Main theorem with upper bound,thm:Lower Bound} portray that the $\Theta(1/\sqrt{nk})$ scaling of the critical threshold is minimax optimal even if we took suprema over all induced graphs satisfying \cref{assm:Large Expansion of DTM,assm: Bounded Principal Ratio} in the definition of minimax risk in \cref{Minimax Risk}. Finally, we remark that the construction of our lower bound inherently satisfies weak stochastic transitivity. Consequently, when the class of non-BTL models is restricted to the class of weakly stochastically transitive matrices, our lower bound result continues to hold. So, even under the restricted class of models satisfying weak stochastic transitivity, we obtain the same lower bound, and thus the testing problem is not simpler (in terms of critical threshold).

\subsection{Graphs with Bounded Principal Ratio} \label{sec: Graphs with Bounded Principal Ratio}

In this section, we establish bounds on both the principal ratio and edge expansion for three distinct classes of graphs: complete graphs, $\tilde{d}$-regular graphs, and random graphs generated from Erd\H{o}s-R\'enyi models. These three classes represent a few examples of graphs for which \cref{assm:Large Expansion of DTM,assm: Bounded Principal Ratio} hold and the theoretical guarantees of our testing framework are valid for any pairwise comparison matrix $P$ consistent with \cref{Pijbounded}. In the first case, we assume that the induced graph is complete and pairwise comparisons among all pairs are observed. The second scenario involves $\tilde{d}$-regular graphs that are sufficiently dense (as explained later) and possess some degree of edge expansion. And the third case assumes the existence of a complete underlying pairwise comparison model consistent with \cref{Pijbounded}, where comparisons between any pair $(i,j)$ (and $(j,i)$), for $i>j$, are observed independently with probability $p$.\footnote{Throughout this work, 
$p$ without subscripts denotes the Erdős-Rényi edge probability, while $p_{ij}$ refers to pairwise comparison probabilities.}  We show that there exists a constant $\mathsf{c}_p > 0$ such that as long as $n p \geq \mathsf{c}_p \log n $, the Erd\H{o}s-R\'enyi pairwise comparison model satisfies \cref{assm:Large Expansion of DTM,assm: Bounded Principal Ratio} with high probability. We also note that much of our technical analysis in each of the three scenarios lies in analyzing the principal ratio.

\subsubsection{Complete Graphs}
We begin by deriving bounds on the principal ratio and the edge expansion of a canonical Markov matrix $S$ corresponding to a complete graph. To this end, we consider a pairwise comparison matrix $P$ corresponding to a complete graph on $n$ vertices and construct $S$ using $d=2n$. (Note that $d = n$ would also work for the case of a complete graph.) In the following proposition, we show that the principal ratio is always upper bounded by $1 / \delta^{2}$ for all pairwise comparison matrices consistent with \cref{Pijbounded}. 

    \begin{proposition}[Principal Ratio for Complete Graphs] \label{lem: Height of Perron–Frobenius Eigenvector}
        Let $S$ be a canonical Markov matrix in \cref{Def: Canonical Markov Matrix} corresponding to a complete graph with $d = 2n$ and stationary distribution $\pi$. Suppose further that \cref{Pijbounded} holds. Then, we have 
            $$ h_\pi = \frac{\max_{i \in [n]} \pi_i}{\min_{j \in [n]} \pi_j} \leq  \frac{1}{\delta^2} . $$
    \end{proposition}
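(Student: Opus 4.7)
The plan is to exploit the stationarity equation $\pi^\T S = \pi^\T$ directly, using the dynamic range bounds from \cref{Pijbounded} to squeeze each entry of $\pi$ between two values that differ by a factor of at most $1/\delta^2$. No spectral or eigenvector-perturbation machinery should be needed; the complete-graph structure ensures that every stationarity equation involves a sum over all other vertices, which is what makes the bounds uniform in $j$.

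Concretely, I would first write out the stationarity condition componentwise. Using \cref{eq:  Canonical Markov Matrix} with $d = 2n$, for each $j \in [n]$,
\begin{equation*}
\pi_j = \pi_j S_{jj} + \sum_{i \neq j} \pi_i \frac{p_{ij}}{d},
\qquad S_{jj} = 1 - \frac{1}{d}\sum_{k \neq j} p_{jk}.
\end{equation*}
Canceling the diagonal term and clearing $d$ leaves the clean ``flow balance'' identity
\begin{equation*}
\pi_j \sum_{k \neq j} p_{jk} \;=\; \sum_{i \neq j} \pi_i \, p_{ij}.
\end{equation*}
This is where the completeness of $\G$ is essential: every sum ranges over all indices other than $j$.

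Next, I would apply \cref{Pijbounded} to both sides. Lower-bounding the left side and upper-bounding the right side with $\frac{\delta}{1+\delta} \leq p_{ij} \leq \frac{1}{1+\delta}$, and using $\sum_{i\neq j}\pi_i = 1-\pi_j$, yields
\begin{equation*}
\pi_j (n-1)\,\frac{\delta}{1+\delta} \;\leq\; \frac{1-\pi_j}{1+\delta},
\qquad\text{so}\qquad \pi_j \;\leq\; \frac{1}{(n-1)\delta + 1}.
\end{equation*}
Reversing the inequalities gives the matching lower bound $\pi_j \geq \delta / ((n-1) + \delta)$. Taking the ratio of the maximum of the upper bound over the minimum of the lower bound produces
\begin{equation*}
h_\pi \;\leq\; \frac{(n-1)+\delta}{\delta\bigl((n-1)\delta + 1\bigr)}.
\end{equation*}

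The last step is a short algebraic verification that this expression is at most $1/\delta^2$, which reduces to checking $\delta^3 \leq \delta$; this holds for any $\delta \in (0,1)$. There is no real obstacle here, but the one step that merits care is the derivation of the flow balance identity, since one must verify that the diagonal correction $S_{jj}$ and the choice $d = 2n$ drop out cleanly. Everything else is immediate from \cref{Pijbounded} and the normalization $\sum_i \pi_i = 1$.
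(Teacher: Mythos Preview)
Your proof is correct and uses essentially the same approach as the paper: both derive the flow-balance identity $\pi_j\sum_{k\neq j}p_{jk}=\sum_{i\neq j}\pi_i p_{ij}$ from stationarity and then apply the bounds from \cref{Pijbounded}. The only difference is cosmetic: the paper forms the ratio $\pi_i/\pi_j$ directly from two such identities and uses $\pi_i\le\pi_j$ to get $\frac{1-\pi_i}{1-\pi_j}\ge 1$, arriving at $\pi_i/\pi_j\ge\delta^2$ in one line, whereas you bound each $\pi_j$ individually and then check algebraically that the resulting $n$-dependent ratio is at most $1/\delta^2$.
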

    The proof is provided in \cref{Proof of lem: Height of Perron–Frobenius Eigenvector}; our argument is a modification of that in \cite[Theorem 3.1]{Minc1988}.
    \cref{lem: Height of Perron–Frobenius Eigenvector} illustrates that \cref{assm: Bounded Principal Ratio} holds with $h = 1/\delta^2$. It also implies that the pairwise comparison model satisfies \cref{assm:Large Expansion of DTM} with $\xi = \frac{\delta^3}{4(1+\delta)}$, because we obtain the following lower bound on the edge expansion $\phi(S)$ using \cref{trivial lower bound of expansion} by substituting $d = 2n$:
    \begin{equation}
           \phi(S) \geq \frac{\delta^3}{2(1+\delta)n} \tilde{\phi}(\G) = \frac{\delta^3}{4(1+\delta)},
       \end{equation}   
    where we have utilized the fact that for complete graphs, $\tilde{\phi}(\G) = \frac{n}{2}$. 
    Alternatively, for complete graphs, we can obtain tighter upper bounds on the critical threshold (i.e., with better implicit dependence on $\delta$) by directly bounding the second largest singular value $\sigma_2(R)$ of the DTM $R = \Pi^{1/2} S \Pi^{-1/2}$ (see \cref{lem:SpectralGapLB} in \cref{subsec:Preliminaries}) instead of relying on expansion properties and \emph{Cheeger inequalities} (i.e., using \cref{assm:Large Expansion of DTM} and \cref{lem:SpectralGapLB for a general graph} in \cref{subsec:Preliminaries}). This alternative approach leverages bounds between contraction coefficients, specifically, in terms of the \emph{Dobrushin contraction coefficient} for TV distance (cf. \cite{makur2019, MakurZheng2020}); see \cref{subsec:Preliminaries} for details. Thus, for the case of complete graphs, we have shown that any pairwise comparison model satisfying \cref{Pijbounded} also satisfies \cref{assm:Large Expansion of DTM,assm: Bounded Principal Ratio}. This allows us to test whether data generated from any pairwise comparison matrix $P$ corresponding to a complete induced graph satisfying \cref{Pijbounded} conforms to an underlying BTL model. However, the testing procedure for a complete graph requires $n(n-1)k$ samples.

    \subsubsection{Dense $\tilde{d}$-Regular Graphs}
    We next derive a bound on the principal ratio of a canonical Markov matrix $S$ corresponding to a $\tilde{d}$-regular graph under some additional assumptions. In this case, we set the parameter $d = 2\tilde{d}$ to construct $S$. Notably, we illustrate that when the degree of the regular graph satisfies $\tilde{d} = \Theta(n)$, i.e., the graph is dense, the principal ratio can be upper bounded by a constant. This result holds even if the induced graph depends on the complete set of underlying pairwise comparison probabilities. The ensuing proposition bounds the principal ratio for dense $\tilde{d}$-regular graphs under additional assumptions.
    
    \begin{proposition}[Principal Ratio for Dense Regular Graphs] \label{lem: Height of Perron–Frobenius Eigenvector for expanders}
    Let $S$ be a canonical Markov matrix in \cref{Def: Canonical Markov Matrix} corresponding to a $\tilde{d}$-regular graph $\G$ with $d = 2\tilde{d}$ and stationary distribution $\pi$. Suppose further that \cref{Pijbounded} holds, and for some constants $a, b, c > 0$, $\G$ satisfies $|\calE(\mathcal{S}, \mathcal{T})| \geq a|\mathcal{S}||\mathcal{T}|$ for all disjoint subsets $\mathcal{S},\mathcal{T} \subseteq [n]$ with $|\mathcal{S}| \geq bn$ and $|\mathcal{T}| \geq cn$. If $\tilde{d} \geq cn$, then we have 
            $$ h_\pi =  \frac{\max_{i \in [n]} \pi_i }{\min_{j \in [n]} \pi_j}  \leq \frac{\tilde{c}(a)}{\delta^5} , $$
    where $\tilde{c}(a) > 0$ is a constant that depends on $a$.
    \end{proposition}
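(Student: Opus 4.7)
The plan is to mimic the complete-graph argument of \Cref{lem: Height of Perron–Frobenius Eigenvector} while using the hypothesized edge expansion of $\G$ to compensate for the missing edges. The strategy is: extract a local two-sided comparison of $\pi$-values from stationarity, identify large ``high-mass'' and ``low-mass'' sets, and then use expansion to upgrade these local estimates into a global bound on $M/m$, where $M = \pi_{\max}$ and $m = \pi_{\min}$.

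\smallskip
\noindent\textbf{Step 1 (Harnack-type local estimate).} I would rewrite the stationarity identity at vertex $i$ as $\pi_i(1-S_{ii}) = \frac{1}{2\tilde d}\sum_{k:(i,k)\in\calE,\,k\neq i}\pi_k p_{ki}$ and combine the uniform bound $1-S_{ii} \in \bigl[\delta/(2(1+\delta)),\,1/(2(1+\delta))\bigr]$ with the dynamic range in \cref{Pijbounded} to derive
\[
\delta\,\tilde d\,\pi_i \;\le\; \sum_{k:(i,k)\in\calE,\,k\neq i}\pi_k \;\le\; \frac{\tilde d}{\delta}\,\pi_i \qquad \forall\,i \in [n].
\]

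\smallskip
\noindent\textbf{Step 2 (large threshold sets).} Let $i^{\star}$ attain $M$ and $j^{\star}$ attain $m$. Applying the lower bound of Step~1 at $i^{\star}$ and splitting its neighborhood according to whether $\pi_k \ge \delta M/2$ gives
\[
|\mathcal{S}| \ge \tfrac{\tilde d\,\delta}{2}, \qquad \mathcal{S} \triangleq \bigl\{i \in [n] : \pi_i \ge \tfrac{\delta M}{2}\bigr\}.
\]
Applying the upper bound at $j^{\star}$ and splitting its neighborhood according to whether $\pi_k \le K m/\delta$ for a universal constant $K>1$ yields
\[
|\mathcal{T}| \ge \tilde d\,(1 - 1/K), \qquad \mathcal{T} \triangleq \bigl\{j \in [n] : \pi_j \le \tfrac{K m}{\delta}\bigr\}.
\]
Using $\tilde d \ge c n$, both sets are of size $\Omega(n)$; with $K$ chosen large enough (and universal factors absorbed into the graph constants $b,c$), the threshold conditions $|\mathcal{S}| \ge bn$ and $|\mathcal{T}| \ge cn$ of the expansion hypothesis hold.

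\smallskip
\noindent\textbf{Step 3 (disjointness and expansion).} If $M/m \le 2K/\delta^2$ the desired estimate is immediate. Otherwise $\mathcal{S}\cap\mathcal{T} = \emptyset$ by construction, so the hypothesized expansion of $\G$ applies:
\[
|\calE(\mathcal{S}, \mathcal{T})| \;\ge\; a\,|\mathcal{S}|\,|\mathcal{T}|.
\]

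\smallskip
\noindent\textbf{Step 4 (global estimate).} Summing stationarity over $j \in \mathcal{T}$ and restricting the resulting inner sum to $i \in \mathcal{S}$, while lower-bounding $\pi_i \ge \delta M/2$ and $p_{ij} \ge \delta/(1+\delta)$ on the right and upper-bounding $\pi_j \le Km/\delta$ and $1-S_{jj}\le 1/(2(1+\delta))$ on the left, gives
\[
\frac{K m\,|\mathcal{T}|}{2\delta(1+\delta)} \;\ge\; \sum_{j \in \mathcal{T}}\pi_j(1-S_{jj}) \;\ge\; \frac{\delta^2\,M\,a\,|\mathcal{S}|\,|\mathcal{T}|}{4\,\tilde d\,(1+\delta)}.
\]
Canceling $|\mathcal{T}|$ and $(1+\delta)$ and using $\tilde d \le n$ together with $|\mathcal{S}| \ge bn$ yields
\[
h_\pi \;=\; \frac{M}{m} \;\le\; \frac{2K\,\tilde d}{a\,\delta^{3}\,|\mathcal{S}|} \;\le\; \frac{2K}{a\,b\,\delta^{3}} \;\le\; \frac{\tilde c(a)}{\delta^{5}},
\]
where $\tilde c(a) = 2K/(ab)$ and the last inequality uses $\delta \le 1$.

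\smallskip
The main obstacle is the constant bookkeeping in Step~2: one must arrange that the sizes $|\mathcal{S}|$ and $|\mathcal{T}|$ simultaneously meet the thresholds $bn$ and $cn$ built into the expansion hypothesis, despite $|\mathcal{T}| \ge \tilde d(1-1/K)$ being only a fraction of $\tilde d$. This is resolved either by choosing $K$ sufficiently large and absorbing universal constants into the definitions of $b$ and $c$, or, if $\delta$ is very small, by a short bootstrap that grows $\mathcal{S}$ one ``hop'' using the local estimate of Step~1 before invoking expansion---neither modification changes the final $O(1/\delta^{5})$ scaling.
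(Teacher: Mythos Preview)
Your argument is correct and takes a genuinely different route from the paper. The paper works locally with the neighborhoods $N(\ell)$ and $N(s)$ of the maximizer and minimizer: it first shows that a large fraction $\mathcal{L}\subseteq N(\ell)$ satisfies $\pi_v\ge\delta^2\pi_\ell$, then does a case split on $|N(s)\cap\mathcal{L}|$. In the nontrivial case it applies the expansion hypothesis to $\mathcal{L}'=\mathcal{L}\setminus N(s)$ and $N(s)$ and runs a \emph{two-hop} stationarity argument, bounding $\pi_s$ via $\pi_s\gtrsim\frac{\delta}{\tilde d}\sum_{j\in N(s)}\pi_j$ and then each $\pi_j$ via its neighbors in $\mathcal{L}'$, which yields the $1/\delta^5$ factor. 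Your approach instead defines \emph{global} level sets $\mathcal{S}$ and $\mathcal{T}$, dispenses with the case split, and replaces the two-hop chain by a single summation of stationarity over $\mathcal{T}$ with the inner sum restricted to $\mathcal{S}$; this is cleaner and in fact delivers $h_\pi\le 4K/(a\delta^4)$ before you pad to $1/\delta^5$. The obstacle you flag in Step~2---that $|\mathcal{S}|\ge\delta\tilde d/2$ may fall below the fixed threshold $bn$ when $\delta$ is small---is real, but it is \emph{exactly the same gap present in the paper's proof}, since the paper's set $\mathcal{L}'$ also has size only $\Theta(\delta\tilde d)$. In both cases the issue disappears in the intended application (expander mixing gives $|\calE(\mathcal{S},\mathcal{T})|\ge a|\mathcal{S}||\mathcal{T}|$ for all disjoint sets, so $b,c$ may be taken arbitrarily small). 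Your first proposed fix (``absorb into $b,c$'') is not legitimate as stated, since $b,c$ are givens; the bootstrap idea is more promising but would need to be spelled out.
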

    The proof is provided in \cref{Proof of lem: Height of Perron–Frobenius Eigenvector for expanders}. This result generalizes the result in \cite[Theorem 3]{AksoyChung2016}, and illustrates that \cref{assm: Bounded Principal Ratio} holds with $h = \tilde{c}(a)/\delta^5$. The assumption, $|\mathcal{E}(\mathcal{S}, \mathcal{T})| \geq a|\mathcal{S}||\mathcal{T}|$ for all disjoint subsets $\mathcal{S}, \mathcal{T}$ with cardinality $\Theta(n)$, is typically satisfied by \emph{$(n, \tilde{d}, \lambda_2(\G))$-regular expander graphs} $\G$ with $\tilde{d}= \ \nu n$ and $\lambda_2(\G) \leq (1-\tilde{\nu})\tilde{d}$ for some constants $\nu, \tilde{\nu} \in (0,1]$, where $\lambda_2(\G)$ denotes the second largest eigenvalue modulus of the adjacency matrix of $\G$; we refer readers to \cite{AlonChung1989} for the definition of $(n, \tilde{d}, \lambda_2(\G) )$-regular expander graphs. We also refer the readers to \cref{Expansion} for details regarding the existence of $(n, \tilde{d}, \lambda_2(\G))$-regular expander graphs with $\tilde{d} = \nu n$ and $\lambda_2(\G) \leq (1-\tilde{\nu})\tilde{d},$ and why they satisfy the assumption in \cref{lem: Height of Perron–Frobenius Eigenvector for expanders}. 
    Furthermore, pairwise comparison models corresponding to such graphs also satisfy \cref{assm:Large Expansion of DTM} as explained in \cref{Expansion}, cf. \cite[Theorem 9.2.1]{AlonSpencer2008}. Thus, for the case of dense $\tilde{d}$-regular graphs with appropriate expansion properties, we have again established that any pairwise comparison model satisfying \cref{Pijbounded} also satisfies \cref{assm:Large Expansion of DTM,assm: Bounded Principal Ratio}. This permits us to test whether data generated from any pairwise comparison matrix $P$ corresponding to a dense $\tilde{d}$-regular graph satisfying \cref{Pijbounded} and certain expansion properties conforms to an underlying BTL model. Moreover, the denseness requirement $\tilde{d}= \Theta(n)$ implies that the testing procedure requires $\Theta(n^2k)$ samples in order to satisfy the various assumptions. The next setting establishes the utility of our testing procedure for sparse induced graphs.

\subsubsection{Erd\H{o}s-R\'enyi Random Graphs}
In this case, we assume that there exists an underlying (predetermined) pairwise comparison matrix $P^{*}$, corresponding to a complete graph, which satisfies \Cref{Pijbounded}. Subsequently, as in \emph{Erd\H{o}s-R\'enyi random graphs} \cite{BelaBollobas}, we sample each edge $(i, j) \in [n]^2$ with $i > j$ of the undirected induced graph independently with probability $p \in [0,1]$, such that for some sufficiently large constant $\mathsf{c}_p>1$, we have $n p \geq \mathsf{c}_p \log n$.\footnote{Note that the constant $\mathsf{c}_p$ does not depend on $p$.} In essence, we randomly sample the entries of a fixed comparison matrix $P^{*}$ to obtain a pairwise comparison matrix $P = \PE(P^{*})$, where $\calE$ is the random edge set. Next, we construct canonical Markov matrices $S^*$ and $S$ corresponding to $P^*$ and $P$ with $d =3n$ and $d = 3 np$, respectively. The following well-known proposition provides high probability bounds on the vertex degrees for an Erd\H{o}s-R\'enyi random graph, highlighting that $S$ is indeed a $(\frac{1}{2})$-lazy Markov matrix with high probability.
\begin{proposition}[Degree Concentration for Erd\H{o}s-R\'enyi Random Graph {\cite[Lemma 1]{Chenetal2019}}] \label{lem: Degree Concentration} 
Suppose that the induced graph $\mathcal{G}$ is an Erd\H{o}s-R\'enyi random graph with edges selected independently with probability $p$, and let $d_{\min} \triangleq \min_{i \in [n]} \sum_{j \in [n]\backslash\!\{i\}} E_{ij}$ and $d_{\max} \triangleq \max_{i \in [n]} \sum_{j\in[n]} E_{ij}$ be the minimum and maximum degrees, respectively. If $p \geq \frac{c_0 \log n}{n}$ for some sufficiently large constant $c_0>0$, then the event $\mathcal{A}_0$, defined as
$$
\mathcal{A}_0\triangleq \left\{\frac{n p}{2} \leq d_{\min } \leq d_{\max } \leq \frac{3 n p}{2}\right\},
$$
occurs with probability at least $1-O(n^{-5}).$
\end{proposition}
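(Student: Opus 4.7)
The plan is to prove the result by a standard Chernoff bound applied to each vertex degree, followed by a union bound over the $n$ vertices. I will exploit the fact that although the vertex degrees $\{D_i\}_{i \in [n]}$ are not mutually independent (because the edge $(i,j)$ contributes to both $D_i$ and $D_j$), each individual degree $D_i$ is still a sum of $n-1$ independent Bernoulli$(p)$ random variables, which is all that is needed for the per-vertex tail bound.

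First, I would fix an arbitrary $i \in [n]$ and write $D_i = \sum_{j \in [n]\setminus\{i\}} E_{ij}$, where the $E_{ij}$'s are independent $\text{Bernoulli}(p)$ random variables. Hence $D_i \sim \text{Binomial}(n-1,p)$ with $\mathbb{E}[D_i] = (n-1)p$. A multiplicative Chernoff bound then yields, for any $\eta \in (0,1)$,
\begin{equation*}
\mathbb{P}\bigl(|D_i - (n-1)p| \geq \eta (n-1)p\bigr) \leq 2\exp\!\left(-\frac{\eta^2 (n-1)p}{3}\right).
\end{equation*}
Next I would choose $\eta$ small enough that the two deterministic inequalities $(1-\eta)(n-1)p \geq np/2$ and $(1+\eta)(n-1)p \leq 3np/2$ both hold for all sufficiently large $n$; taking $\eta = 1/4$ (say) and assuming $n$ large enough makes both inequalities routine to verify.

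Given the hypothesis $p \geq c_0 (\log n)/n$ for a sufficiently large constant $c_0 > 0$, the exponent satisfies $\eta^2 (n-1)p/3 \geq 6 \log n$ once $c_0$ is taken large enough (depending only on $\eta$). Therefore, each per-vertex deviation event has probability at most $2 n^{-6}$. A union bound over all $n$ vertices then shows that
\begin{equation*}
\mathbb{P}\bigl(\exists\, i \in [n] : D_i \notin [np/2,\, 3np/2]\bigr) \leq 2 n \cdot n^{-6} = O(n^{-5}),
\end{equation*}
which is exactly the complement of $\mathcal{A}_0$. The main (minor) care point will be handling the slack between $(n-1)p$ and $np$ so that the Chernoff multiplicative deviation $\eta$ can be chosen as a fixed constant independent of $n$ while still landing the deviation interval inside $[np/2, 3np/2]$; this is purely an arithmetic check and no real obstacle is anticipated, since no delicate concentration or graph-theoretic argument is required beyond the textbook Chernoff-plus-union-bound recipe.
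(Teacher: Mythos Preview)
Your proposal is correct and is the standard Chernoff-plus-union-bound argument for degree concentration in Erd\H{o}s-R\'enyi graphs. The paper does not supply its own proof of this proposition; it simply cites \cite[Lemma~1]{Chenetal2019}, so there is nothing to compare against beyond noting that your argument is exactly the textbook one underlying that cited lemma.
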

We emphasize that the probability law utilized here differs from that in the minimax risk framework. The definition of minimax risk in \cref{Minimax Risk} assumes a fixed graph $\G$ and considers probabilities associated with the randomness in the data generating mechanism. In contrast, the probability laws in \cref{lem: Degree Concentration} and other results in this section are governed by the graph generation process. 

Let $\pi^*$ and $\pi$ be the stationary distributions of $S^*$ and $S$, respectively. Notably, by our assumption $n p \geq \mathsf{c}_p \log n$ for $\mathsf{c}_p>1$, the underlying random graph $\G$ is connected with high probability \cite{BelaBollobas}. Therefore, $S$ is irreducible and aperiodic with high probability (as $d_{\max} \leq 3np$ implies $S_{ii}>0$ for all $i \in [n]$), and hence, it has a unique stationary distribution $\pi$. 

Utilizing \cref{lem: Height of Perron–Frobenius Eigenvector}, we know that the principal ratio of $\pi^*$ is upper bounded by a constant $1/\delta^2$ (note that \cref{lem: Height of Perron–Frobenius Eigenvector} also holds for $d = 3 n$, since the stationary distribution is independent of $d$ as long as $d \geq n$).
 In the sequel, we will prove that as long as $n p \geq \mathsf{c}_p \log n$ for some large enough constant $\mathsf{c}_p$, the principal ratio of a pairwise comparison model over an Erd\H{o}s-R\'enyi random graph is also upper bounded by a constant with high probability. To bound the principal ratio, we first provide a perturbation bound between $\pi^*$ and $\pi$. 
\begin{theorem}[$\ell^\infty$-Bound under Sub-sampling using Erd\H{o}s-R\'enyi Model] \label{thm: Perturbation Bound on Sparsified Stationary Distribution}
Given an underlying pairwise comparison matrix $P^*$ satisfying \cref{Pijbounded} with $n \geq \tilde{c}_9$ for some constant $\tilde{c}_9 > 1$, suppose we obtain the pairwise comparison matrix $P$ by randomly sampling the entries of $P^*$ according to an Erd\H{o}s-R\'enyi model with parameter $p \in (0,1]$ satisfying $p\geq \frac{{c}_{9}  \log n}{n }$ for some constant $c_9 > 1$. Then, there exists a constant $c_{10} > 0$ such that with probability at least $1-O(n^{-3})$, the stationary distribution $\pi$ satisfies 
$$ \frac{\left\|{\pi}-{\pi}^*\right\|_{\infty}}{\left\|{\pi}^*\right\|_{\infty}} \leq \sqrt{\frac{c_{10} \log n}{n p}} . $$
\end{theorem}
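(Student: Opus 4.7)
The plan is to view $S$ as a random, unbiased perturbation of $S^*$ and translate a concentration bound on $S-S^*$ into an $\ell^\infty$ bound on $\pi-\pi^*$ via the Poisson equation for stationary distributions, with the spectral gap of $S^*$ controlling the sensitivity. First I would verify, by direct computation using the definitions of $S^*$ (with $d=3n$) and $S$ (with $d=3np$), that $\mathbb{E}[S] = S^*$, where the expectation is taken over the random edge indicators $E_{ij}$. Writing $\Delta \triangleq S - S^*$, this yields the decomposition
\begin{equation*}
\Delta = \frac{1}{3np}\sum_{i<j} (E_{ij} - p)\, N_{ij},
\end{equation*}
where each fixed matrix $N_{ij}$ is supported on the four entries $(i,j),(j,i),(i,i),(j,j)$ and has magnitude at most $1$. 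Since the $E_{ij}$ for $i<j$ are independent Bernoulli$(p)$ variables, this is a sum of independent, centered, sparse random matrices.

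Next I would use matrix Bernstein (and an entrywise Bernstein/Hoeffding bound for the vector $\pi^{*T}\Delta$) to control the size of the perturbation. The entrywise bound on $N_{ij}$ gives $\|N_{ij}\|_{op} \le 2$, and the variance proxy $\sum_{i<j}\mathbb{E}[(E_{ij}-p)^2\, N_{ij}^2]/(3np)^2$ has operator norm $O(1/(np))$, so matrix Bernstein yields $\|\Delta\|_{op} \lesssim \sqrt{\log n/(np)}$ with probability $1-O(n^{-3})$, using $np \ge c_9 \log n$. For the vector $\pi^{*T}\Delta$, each coordinate is a sum of $n-1$ independent centered terms of magnitude $O(1/(n^{2}p))$ (using $\|\pi^*\|_\infty = O(1/n)$ from \cref{lem: Height of Perron–Frobenius Eigenvector}), giving $\|\pi^{*T}\Delta\|_\infty \lesssim \sqrt{\log n/(n^3 p)}$ with high probability.

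Then I would invoke the Poisson equation for stationary distributions. Subtracting $\pi^{*T}S^* = \pi^{*T}$ from $\pi^T S = \pi^T$ gives the exact identity $(\pi-\pi^*)^T(I-S^*) = \pi^T \Delta$. Since $S^*$ corresponds to a complete graph with principal ratio bounded by $1/\delta^2$ (\cref{lem: Height of Perron–Frobenius Eigenvector}) and hence a constant spectral gap for its DTM $R^* = \Pi^{*1/2}S^*\Pi^{*-1/2}$ (via \cref{lem:SpectralGapLB}), the group inverse $M$ of $I-S^*$ (on $\{x : \mathbf{1}^T x = 0\}$) is well-defined with bounded operator norm. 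Multiplying on the right by $M$ and using $(\pi-\pi^*)^T\mathbf{1}=0$ yields the closed-form expression
\begin{equation*}
(\pi-\pi^*)^T \;=\; \pi^T \Delta M \;=\; \pi^{*T}\Delta M \;+\; (\pi-\pi^*)^T \Delta M,
\end{equation*}
which, after rearranging, gives $(\pi-\pi^*)^T(I - \Delta M) = \pi^{*T}\Delta M$. Because $\|\Delta M\|_{op} \lesssim \sqrt{\log n/(np)} = o(1)$ under the assumed scaling, $(I-\Delta M)^{-1}$ exists and has bounded norm, and so $\|\pi - \pi^*\|_\infty$ is controlled by the $\ell^\infty$ norm of $\pi^{*T}\Delta M$.

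The hard part will be passing from the entrywise concentration on $\pi^{*T}\Delta$ to an entrywise bound on $\pi^{*T}\Delta M$ at the sharp scale $O(\sqrt{\log n/(np)}/n)$: a naive Cauchy–Schwarz bound would yield $\|\pi^{*T}\Delta M\|_\infty \le \|\pi^{*T}\Delta\|_2\, \|M\|_{op}$, which loses a factor of $\sqrt n$ and gives only $O(1/(n\sqrt p))$ instead of $O(1/(n)\sqrt{\log n/(np)})$. To close this gap I would employ a leave-one-out decoupling: for each vertex $m \in [n]$, define an auxiliary Markov chain $S^{(m)}$ in which the edges incident to $m$ are replaced by their expectations (so $S^{(m)}$ is independent of the random edges at $m$), and bound $|\pi_m - \pi^*_m|$ via the triangle inequality $|\pi_m - \pi^{(m)}_m| + |\pi^{(m)}_m - \pi^*_m|$. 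The first term is small because $S$ and $S^{(m)}$ differ only on one row/column (a rank-two-type perturbation bounded via the Sherman–Morrison identity together with $\|\Delta\|_{op}$), while the second term is handled by the concentration of $\pi^{*T}\Delta$ since $\pi^{(m)}$ is independent of the randomness in row/column $m$. A union bound over $m \in [n]$ then yields the claimed $\ell^\infty$ bound with probability $1-O(n^{-3})$.
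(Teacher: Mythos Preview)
Your ingredients---Bernstein for $\|\Delta\|_{2}$ and $\|\pi^{*\T}\Delta\|_\infty$, the constant spectral gap of $S^*$ via \cref{lem:SpectralGapLB}, and a leave-one-out chain $S^{(m)}$ with row/column $m$ reset to its expectation---are exactly the paper's. The group-inverse paragraph is a detour you yourself abandon; the paper never introduces $M$. The organizational difference is that the paper does \emph{not} use the top-level split $|\pi_m-\pi^*_m|\le|\pi_m-\pi^{(m)}_m|+|\pi^{(m)}_m-\pi^*_m|$. It instead expands the two stationarity identities directly,
\[
\pi_m-\pi^*_m \;=\; \underbrace{\textstyle\sum_j\pi^*_j(S_{jm}-S^*_{jm})}_{L_1}\;+\;\underbrace{(\pi_m-\pi^*_m)S_{mm}}_{L_2}\;+\;\underbrace{\textstyle\sum_{j\neq m}(\pi_j-\pi^*_j)S_{jm}}_{L_3},
\]
absorbs $L_2$ into the left side via the laziness bound $S_{mm}\le 1-\tfrac{\delta}{6(1+\delta)}$ (this is the contraction that lets the self-referential inequality close), bounds $L_1$ by Bernstein, and injects the leave-one-out \emph{inside} $L_3$: writing $L_3=\sum_{j\neq m}(\pi_j-\pi^{(m)}_j)S_{jm}+\sum_{j\neq m}(\pi^{(m)}_j-\pi^*_j)S_{jm}$. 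The first piece picks up an extra $1/\sqrt{np}$ from Cauchy--Schwarz against $\|S_{\cdot\neq m,\,m}\|_2$, and the second is where the independence you invoke actually bites: conditioning on $\G_{-m}$ makes $\pi^{(m)}_j-\pi^*_j$ deterministic while $S_{jm}-S^*_{jm}$ is a centered function of the Bernoulli edge indicator, so Bernstein applies.

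Your top-level split can be pushed through, but the stated mechanisms for the two pieces need adjusting. The second term $|\pi^{(m)}_m-\pi^*_m|$ is $\G_{-m}$-measurable and involves no randomness at $m$, so ``independence from row/column $m$'' is not what controls it; the correct route is stationarity of $\pi^{(m)}$ (whose $m$th column equals $S^*_{:,m}$) combined with an auxiliary $\ell^2$ bound $\|\pi-\pi^*\|_2\lesssim\sqrt{n}\,\|\pi^*\|_\infty\sqrt{\log n/(np)}$, which the paper establishes separately (\cref{lem: L2 error subsampling}). And for your first term to carry an $o(1)$ rather than $O(1)$ coefficient in front of $\|\pi^*\|_\infty$, you need the \emph{fine} comparison $\|\pi-\pi^{(m)}\|_2\lesssim\|\pi^{(m)}\|_\infty\sqrt{\log n/(np)}$; the paper only states the coarser $\|\pi-\pi^{(m)}\|_2\le\|\pi-\pi^*\|_\infty+\|\pi^*\|_\infty$ (\cref{lem: Error bound on leave-out}) because the extra $1/\sqrt{np}$ it gains inside $L_3$ makes the coarse form suffice.
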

The proof of \cref{thm: Perturbation Bound on Sparsified Stationary Distribution} is provided in \Cref{Proof of Perturbation Bound on Sparsified Stationary Distribution} and utilizes the leave-one-out technique in \cite{Chenetal2019}. The theorem immediately yields the following bound on the principal ratio corresponding to the sub-sampled pairwise comparison matrix $P$.
\begin{proposition}[Principal Ratio for Erd\H{o}s-R\'enyi Random Graphs] \label{lem: Height of Perron–Frobenius Eigenvector under sampling}
Given an underlying pairwise comparison matrix $P^*$ satisfying \cref{Pijbounded} with $n \geq \tilde{c}_9$, suppose we obtain the pairwise comparison matrix $P$ by randomly sampling the entries of $P^*$ according to an Erd\H{o}s-R\'enyi model with parameter $p \in (0,1]$ satisfying $p\geq \frac{\mathsf{c}_p  \log n}{n}$ for a constant $\mathsf{c}_p \geq \max\{c_9, 2c_{10}/\delta^4\}$ (where $c_9, \tilde{c}_9, c_{10}$ are given in \cref{thm: Perturbation Bound on Sparsified Stationary Distribution}). Then, with probability at least $1 - O(n^{-3})$, the principal ratio satisfies the bound
        $$  h_\pi = \frac{\max_{i \in [n]} \pi_i }{\min_{j \in [n]} \pi_j}  \leq  \frac{7}{\delta^2}. $$
\end{proposition}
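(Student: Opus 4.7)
The plan is to combine the perturbation bound of \cref{thm: Perturbation Bound on Sparsified Stationary Distribution} with the complete-graph principal ratio bound of \cref{lem: Height of Perron–Frobenius Eigenvector} via an elementary entrywise triangle inequality argument. Since $P^*$ corresponds to a complete graph and satisfies \cref{Pijbounded}, and since the stationary distribution of the canonical Markov matrix in \cref{Def: Canonical Markov Matrix} is independent of the parameter $d$ once $d \geq n$, \cref{lem: Height of Perron–Frobenius Eigenvector} applies to $\pi^*$ even though $S^*$ is constructed with $d = 3n$, yielding $\min_{j \in [n]} \pi^*_j \geq \delta^2 \|\pi^*\|_\infty$. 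This reduces the problem to controlling how much $\pi$ can drift away from $\pi^*$ entrywise under random sub-sampling.

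Next, the condition $\mathsf{c}_p \geq c_9$ makes the hypotheses of \cref{thm: Perturbation Bound on Sparsified Stationary Distribution} directly satisfied, so on an event $\mathcal{A}$ of probability at least $1 - O(n^{-3})$,
\[
\|\pi - \pi^*\|_\infty \;\leq\; \sqrt{\frac{c_{10} \log n}{np}} \, \|\pi^*\|_\infty \;\leq\; \sqrt{\frac{c_{10}}{\mathsf{c}_p}} \, \|\pi^*\|_\infty \;\leq\; \frac{\delta^2}{\sqrt{2}} \, \|\pi^*\|_\infty,
\]
where the last step is precisely where the calibration $\mathsf{c}_p \geq 2 c_{10}/\delta^4$ is used. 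This is the pivotal step: the choice of $\mathsf{c}_p$ forces the entrywise perturbation to be strictly smaller than the minimum entry $\delta^2 \|\pi^*\|_\infty$ of $\pi^*$, which is what keeps $\pi$ bounded away from zero on every coordinate.

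Conditioning on $\mathcal{A}$ and applying the triangle inequality coordinatewise then gives, for every $i, j \in [n]$,
\[
\pi_i \;\leq\; \pi^*_i + \|\pi - \pi^*\|_\infty \;\leq\; \left(1 + \tfrac{1}{\sqrt{2}}\right)\|\pi^*\|_\infty,
\]
\[
\pi_j \;\geq\; \pi^*_j - \|\pi - \pi^*\|_\infty \;\geq\; \delta^2\!\left(1 - \tfrac{1}{\sqrt{2}}\right)\|\pi^*\|_\infty,
\]
using $\delta \in (0,1)$ in the first inequality. Taking the ratio yields
\[
h_\pi \;=\; \frac{\max_{i \in [n]} \pi_i}{\min_{j \in [n]} \pi_j} \;\leq\; \frac{1 + 1/\sqrt{2}}{\delta^2\,(1 - 1/\sqrt{2})} \;=\; \frac{3 + 2\sqrt{2}}{\delta^2} \;\leq\; \frac{7}{\delta^2}
\]
on $\mathcal{A}$, completing the argument. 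The substantive difficulty in this result lies entirely in \cref{thm: Perturbation Bound on Sparsified Stationary Distribution}, whose proof is where the actual random-graph analysis happens; given that tool, the present proposition reduces to this triangle-inequality calibration, and I do not anticipate any further obstacle.
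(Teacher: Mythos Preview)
Your proof is correct and follows essentially the same route as the paper: both invoke \cref{thm: Perturbation Bound on Sparsified Stationary Distribution} together with the complete-graph bound \cref{lem: Height of Perron–Frobenius Eigenvector}, then use the triangle inequality entrywise and the calibration $\mathsf{c}_p \geq 2c_{10}/\delta^4$ to obtain $\frac{1+1/\sqrt{2}}{(1-1/\sqrt{2})\delta^2} = \frac{3+2\sqrt{2}}{\delta^2} \leq \frac{7}{\delta^2}$. The only cosmetic difference is that the paper first writes the lower bound in terms of $\pi^*_{\min}$ and then substitutes $\|\pi^*\|_\infty \leq \pi^*_{\min}/\delta^2$, whereas you substitute $\pi^*_j \geq \delta^2\|\pi^*\|_\infty$ directly.
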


The proof of \cref{lem: Height of Perron–Frobenius Eigenvector under sampling} is provided in  \cref{Proof of Height of Perron–Frobenius Eigenvector under sampling}. In particular, \cref{lem: Height of Perron–Frobenius Eigenvector under sampling} illustrates that \cref{assm: Bounded Principal Ratio} holds with $h = 7/\delta^2$ with high probability. Moreover, random graphs are known to have nice expansion properties \cite[Theorem 2.8]{BelaBollobas} (also see \cite{SipserSpielman1996}).
Indeed, using \cite[Equation 11]{Han2023}, we have that for an Erd\H{o}s-R\'enyi random graph, $\tilde{\phi}(\G) \geq \frac{1}{4} np$ with high probability, thereby showing that the pairwise comparison matrix $P$ also satisfies \cref{assm:Large Expansion of DTM} with {$\xi = \delta^3/(12 \cdot 7\cdot  (1+\delta))$} with high probability. Thus, for the case of Erd\H{o}s-R\'enyi random graphs, we have again demonstrated that any pairwise comparison model satisfying \cref{Pijbounded} also satisfies \cref{assm:Large Expansion of DTM,assm: Bounded Principal Ratio} with high probability, as required for theoretical guarantees on our testing procedure. Notably, in this case, the testing procedure requires $O(k n \log n)$ pairwise comparisons, which matches the total number of observations needed for consistently estimating the parameters of the BTL model in \cite{Chenetal2019}.

\section{Proofs for Problem Formulation} \label{sec:proofs}
We prove \cref{Prop: BTL Model and Reversibility,OrthogonalDecomposition,EasyProp,Distance to closest BTL model} in this section. Throughout this and future sections, we employ a concise notation by using overlapping labels (e.g., $c, \tilde{c}, c_1, c_2, c^\prime, \hat{c}, \dots $) to denote various constants. To avoid ambiguity, we explicitly reserve the notation $c_\alpha, c_\beta, c_\gamma$ for specially defined constants in \cref{lem: Bounds on T1 and T2} and the subsequent proof (these special constants can depend on parameters like $n$). We also note that some of the bounds used in the subsequent analysis (in this and future sections) are valid only with high probability. For brevity, we do not restate this at every occurrence.

\subsection{Proof of \cref{Prop: BTL Model and Reversibility}} \label{Proof of Prop: BTL Model and Reversibility}
    We provide the proof for completeness. If the pairwise comparison model is BTL, it implies that for some weight vector $\alpha \in \R^n_+$, the pairwise comparison matrix $P$ is given by
    $$p_{ij} = \frac{\alpha_j}{\alpha_i + \alpha_j}, \ \forall (i, j) \in \calE. $$
    It is easy to verify that $\pi \triangleq \big(\sum_{i=1}^n \alpha_i\big)^{-1} [\alpha_1, \dots, \alpha_n \big]^\T $ is the stationary distribution of the canonical Markov matrix $S$ corresponding to $P$. Moreover, since $S$ is reversible, for any $(i,j) \in \calE$, we have 
    $$\pi_i S_{ij} = \frac{\alpha_i}{\sum_{i=1}^n \alpha_i} \times \frac{\alpha_j}{ d (\alpha_i + \alpha_j)}  = \pi_j S_{ji} .  $$
    Note that both the stationarity and reversibility conditions hold only if $\calE$ is symmetric.

    For the converse, since $p_{ij} > 0$ for all $(i,j) \in \calE$, $S$ exhibits irreducibility since the graph $\G$ is {strongly-connected}, and aperiodicity as $S_{ii}>0$. As a result, $S$ has a unique stationary distribution $\pi$. By reversibility of $S$, we have for all $(i,j) \in \calE$ with $i \neq j$,
    $$ \pi_i S_{ij} = \pi_j S_{ji} \implies \pi_i p_{ij} = \pi_j p_{ji} \implies  p_{ij} = \frac{\pi_j}{\pi_i + \pi_j} , $$
    where last step follows from the fact that $p_{ij} + p_{ji} = 1 $. Thus, $P$ corresponds to a BTL model with parameters $\pi$. \qed

\subsection{Proof of \cref{EasyProp}} \label{EasyPropProof} 
If $P$ corresponds to a BTL model, it implies that for some strictly positive weights $\alpha_i, \ i \in [n]$, we have $ p_{ij} = \alpha_j/(\alpha_i + \alpha_j)$ for $(i,j) \in \calE$. It is easy to verify that the stationary distribution of the corresponding canonical Markov matrix is  $$\pi \triangleq \left[\frac{\alpha_1}{\sum_{i=1}^n \alpha_i }, \cdots , \frac{\alpha_n}{\sum_{i=1}^n \alpha_i }\right]^\T, $$
and the stationary distribution satisfies $\Pi P + P \Pi = \PE(\1_n \pi^\T)$. Also, the stationary distribution is independent of graph structure (as long as the adjacency matrix of the graph is symmetric).
Conversely, if $\Pi P + P \Pi = \PE(\1_n \pi^\T)$ is true, this implies that  
$$ \forall \ (i,j) \in \calE, \ p_{ij}\left(\pi_{i}+\pi_{j}\right)=\pi_{j} \implies p_{ij}=\frac{\pi_{j}}{\pi_{i}+\pi_{j}} .  $$
This completes the proof. \qed

\subsection{Proof of \cref{OrthogonalDecomposition}} \label{OrthogonalDecompositionProof}
Observe that
\begin{align*}
    & \| \Pi P + P \Pi -  \PE(\mathbf{1}_n\pi^\T ) \|^2_{\pi^{-1},\F} = \| \Pi P - P^\T \Pi \|^2_{\pi^{-1},\F} \\ 
    & \ \ \ \ \ \ \ \ \ \ \ \ \ \ \ \ \ \ \ \ \ + \|  (P^\T  + P - \PE(\mathbf{1}_n\mathbf{1}_n^\T) )\Pi  \|^2_{\pi^{-1},\F} + \\
    &  2\text{Tr}\left(\Pi^{-1/2}(\Pi P - P^\T \Pi)^\T(P^\T  + P - \PE(\mathbf{1}_n\mathbf{1}_n^\T) )\Pi^{1/2}\right) \\
    & = \| \Pi P - P^\T \Pi \|^2_{\pi^{-1},\F} + \|  P^\T  + P - \PE(\mathbf{1}_n\mathbf{1}_n^\T)  \|^2_{\pi,\F}  + \\
    & \ \ \ \ \ \ \ \ \ \ \ \ \ 2\text{Tr}\left((\Pi P - P^\T \Pi)^\T(P^\T  + P - \PE(\mathbf{1}_n\mathbf{1}_n^\T) ) \right).
\end{align*} \sloppy
Now it remains to show that $\text{Tr}\big((\Pi P - P^\T \Pi)^\T(P^\T  + P - \PE(\mathbf{1}_n\mathbf{1}_n^\T ))\big) = 0$. This follows from the fact that 
 $\text{Tr}(A^\T B) = 0$ when $A$ is anti-symmetric, i.e., $A^\T = -A$, and $B $ is symmetric, i.e., $B = B^\T $:
$$
\begin{aligned}
 \text{Tr}(A^\T B) & = - \text{Tr}(A B) = -  \text{Tr}(A B^\T) =  - \text{Tr}(A^\T B) \\ 
 &\implies \text{Tr}(A^\T B) =0  .
 \end{aligned}
 $$
 Clearly, $\Pi P - P^\T \Pi $ is anti-symmetric, and the symmetry of $P^\T  + P - \PE(\mathbf{1}_n\mathbf{1}_n^\T )$ follows since the set $\calE$ is symmetric by assumption.
This completes the proof. \qed

\subsection{Proof of \cref{Distance to closest BTL model}}\label{Proof of Distance to closest BTL model}
We begin our proof by establishing the upper bound. Assume that for a BTL model $B \in \BTLh$, its skill score parameters are given by $\pi_{B} \in \R^n_+$. We will interchangeably use the notation $\pi_B \in \BTLh$ to mean $B \in \BTLh$. Moreover, without loss of generality, we assume that $\sum_{i=1}^n \pi_{B, i} = 1$, where $\pi_{B,i}$ is the $i$th component of vector $\pi_B$ and moreover, let $\Pi_B \triangleq \diag(\pi_{B})$. 
Thus, we have  
$$
\begin{aligned}
  \min_{B \in \BTLh } &  \|P- \PE(B) \|_{\F}  \\
 & =\min _{\pi_B \in \BTLh} \sqrt{\sum_{(i, j) \in \calE} \frac{((\pi_{B,i}+\pi_{B,j})p_{ij}-\pi_{B,j})^{2}}{(\pi_{B,i}+\pi_{B,j})^{2}}} \\
& \stackrel{}{\leq} \min_{\pi_B \in \BTLh} \frac{\left\|\Pi_{B} P+P \Pi_{B}-\PE(\1_n \pi_{B}^{\T}) \right\|_{\F}}{2 \pi_{B,\min }} \\
& \stackrel{\zeta_1}{\leq}\min_{\pi_B \in \BTLh} h\frac{\|\Pi_B P+P \Pi_B - \PE(\1_n \pi_{B}^{\T})\|_{\F}}{2  \|\pi_B\|_{\infty}} \\
&\stackrel{\zeta_2}{\leq} h \frac{\|\Pi P+P \Pi-\PE(\1_n \pi^{\T})\|_{\F}}{2 \|\pi\|_{\infty}},
\end{aligned}
$$
where $\pi_{B,\min} \triangleq \min_{i \in [n]} \pi_{B,i}$, $\zeta_1$ follows from the fact that $\pi_{B, \min } \geq \|\pi_{B}\|_{\infty}/h $ as $B \in \BTLh$, and $\zeta_2$ follows by substituting $\pi_B = \pi$, which are the skill score parameters obtained from the stationary distribution of the canonical Markov matrix $S$ corresponding to $P$, and by \cref{assm: Bounded Principal Ratio}, $\pi$ belongs to the set $\BTLh$.

Now, we will focus on proving the lower bound. Specifically, we will show that there exists a constant $c^\prime$ such that
\begin{align}
 \min_{B \in \BTLh} \|P-\PE(B)\|_{\F} &\geq \frac{1}{2} \|P+P^{\T}-\PE(\1_n\1_n^{\T})\|_{\F} \label{part1aa},\\
 \min_{B \in \BTLh} \|P-\PE(B)\|_{\F} &\geq c^{\prime}\frac{\xi^2}{h^{3} } \|\Pi^{\frac{1}{2}} P \Pi^{-\frac{1}{2}}\!-\!\Pi^{-\frac{1}{2}} P^{\T} \Pi^{\frac{1}{2}}\|_{\F}. \label{part1bb}
\end{align}
If both \cref{part1aa,part1bb} are true, then we can use the following argument to complete the proof. Letting $c_1 = \min\{0.5 ,c^\prime \frac{\xi^2}{h^{3}} \}$, we have
\begin{align}
    &  2 \min_{B \in \BTLh} \|P-\PE(B)\|_{\F} \nonumber \\
    & \geq  c_1(\|\Pi^{\frac{1}{2}} P \Pi^{-\frac{1}{2}}\!-\!\Pi^{-\frac{1}{2}} P^{\T} \Pi^{\frac{1}{2}}\|_{\F} \!  +\! \|P+\!P^{\T}\!-\!\!\PE(\1_n\1_n^{\T}) \|_{\F}) \nonumber \\
    & \stackrel{\zeta_1}{\geq}  c_1\bigg( \frac{\|\Pi P - P^{\T} \Pi\|_{\pi^{-1},\F}}{\sqrt{\|\pi\|_\infty}} + \frac{\|P+P^{\T}-\PE(\1_n\1_n^{\T}) \|_{\pi,\F}}{\sqrt{\|\pi\|_\infty}}\bigg)\nonumber \\
    & \stackrel{\zeta_2}{\geq} c_1 \frac{\|\Pi P+P \Pi- \PE(\1_n \pi^{\T})\|_{\pi^{-1},\F}}{\sqrt{\|\pi\|_\infty}} \nonumber  \\
    & \geq c_1 \frac{\|\Pi P+P \Pi- \PE(\1_n \pi^{\T}) \|_{\F}}{{{\|\pi\|_\infty}}}, 
\end{align}
where $\zeta_1$ follows since $\|A B\|_\F \geq \sigma_{n}(A) \|B\|_\F$ and $\zeta_2$ follows from \cref{OrthogonalDecomposition} and the sub-additivity of the square root function. 

Next, we will prove \cref{part1aa}. Note that
$$
\begin{aligned}
\min _{B \in \BTLh}\|P-\PE(B)\|_{\F} & \geq \min _{ \substack{B \in [0,1]^{n \times n}:\\ B+B^{\T}= \1_n\1_n^{\T}}}\| P - \PE(B)\|_{\F} \\
& =\frac{1}{2}\left\|P+P^{\T}-\PE(\1_n\1_n^{\T})\right\|_{\F},
\end{aligned}
$$
where the first inequality follows since the minimization is performed over a larger set that contains $\BTLh$, and the last equality follows from the fact that for any $(i,j) \in \calE,$ we have 
$$
\min _{b_{ij} \in [0,1]} (p_{i j}-b_{i j})^{2}+(p_{j i}-(1-b_{i j}))^{2}=\frac{1}{2}(p_{i j}+p_{j i}-1)^{2}.
$$
Now we will prove \cref{part1bb}. Observe that
\begin{align*}
 & \min_{B \in \BTLh}\|P-\PE(B)\|_{\F} \\ 
 & \geq \min _{B \in \BTLh} \sqrt{\frac{\pi_{B,\min } }{\|\pi_{B}\|_{\infty}}} \|\Pi_{B}^{1/2} P \Pi_{B}^{-1/2}-\Pi_{B}^{1/2} \PE(B) \Pi_B^{-1/2}\|_{\F} \\
& \stackrel{\zeta_1}{=} \min _{B \in \BTLh} \sqrt{\frac{\pi_{B,\min } }{\|\pi_{B}\|_{\infty}}} \|\Pi_{B}^{1/2} P \Pi_{B}^{-1/2}-\PE(\Pi_{B}^{1/2} B \Pi_B^{-1/2})\|_{\F} \\
& \stackrel{\zeta_2}{\geq} \min_{\pi_{B} \in \BTLh} \min_{ \substack{C \in \R^{n \times n}:\\ C= C^\T}} \frac{\|\Pi_{B}^{1 / 2} P \Pi_{B}^{-1/2}-C\|_{\F}}{\sqrt{h}} \\
& = \min _{\pi_{B} \in \BTLh} \frac{\|\Pi_{B}^{1 / 2} P \Pi_{B}^{-1 / 2}-\Pi_{B}^{-1 / 2} P^{\T} \Pi_{B}^{1 / 2}\|_{\F}}{2 \sqrt{h}},
\end{align*}
where $\zeta_1$ follows since $\Pi_B$ is a diagonal matrix, and ${\zeta_2}$ follows since $\Pi_{B}^{1/2} B \Pi_B^{-1/2}$ (in $\zeta_1$) is a symmetric matrix and we have enlarged the set over which minimization is being performed. The last equality follows from the observation that for any matrix $A \in \R^{n\times n}$, we have 
$$ \min_{\substack{C \in \R^{n \times n}:\\ C=C^\T} } \|A -C\|_\F =  \frac{1}{2} \|A - A^\T\|_\F.$$
Now we will show that there exists a constant $c$ such that
$$
\begin{aligned}
 \min _{\pi_{B} \in \BTLh}&  \|\Pi_{B}^{1 / 2} P \Pi_{B}^{-1 / 2}  - \Pi_{B}^{-1/2} P^{\T} \Pi_{B}^{1 / 2} \|_{\F} \\
& \ \ \ \ \ \ \ \ \ \geq  \frac{c \xi^2}{h^{5/2}} \| \Pi^{1 / 2} P \Pi^{-1 / 2}-\Pi^{1 / 2} P^{\T} \Pi^{-1/2} \|_{\F}.
\end{aligned}
$$
Note that it is sufficient to show that there exists a constant $\tilde{c}$ such that 
\begin{equation} \label{mainpointa}
\min_{\pi_{B} \in \BTLh} \|\Pi_{B} P-P^{\T} \Pi_{B}\|_{\F} \geq  
 \frac{\tilde{c} \xi^2}{h^{1/2}} \|\Pi P-P^{\T} \Pi\|_{\F}.
\end{equation}
This is because if \cref{mainpointa} is true, then 
\begin{align*}
\min_{\pi_{B} \in \BTLh} & \|\Pi_{B}^{1 / 2} P \Pi_{B}^{-1 / 2}  - \Pi_{B}^{-1 / 2} P^{\T} \Pi_B^{1 / 2}\|_{\F} \\ 
& \geq  \min_{\pi_{B} \in \BTLh}  \frac{\|\Pi_{B} P-P^{\T} \Pi_{B}\|_{\F}}{\|\pi_{B}\|_{\infty}} \\ 
& \stackrel{\zeta_1}{\geq} \min_{\pi_{B} \in \BTLh} \frac{n}{h}  \|\Pi_{B} P-P^{\T} \Pi_{B}\|_{\F} \\
& \stackrel{\zeta_2}{\geq}   \frac{\tilde{c} n \xi^2 }{h^{3/2} } \|\Pi P -P^{\T} \Pi\|_{\F} \\
& \stackrel{\zeta_2}{\geq} \frac{\tilde{c} n \xi^2 \pi_{\min} }{h^{3/2} } \|\Pi^{1 / 2} P \Pi^{-1 / 2}-\Pi^{-1 / 2} P^{\T} \Pi^{1/2}\|_{\F} \\
& \stackrel{\zeta_3}{\geq} \frac{\tilde{c} \xi^2  }{h^{5/2} }  \|\Pi^{1 / 2} P \Pi^{-1 / 2}-\Pi^{-1 / 2} P^{\T} \Pi^{1/2}\|_{\F},
\end{align*}
where $\zeta_1$ follows since $\pi_B \in \BTLh$, hence $\pi_{B, i} \geq \|\pi_{B}\|_{\infty}/h$ which implies that $\|\pi_{B}\|_{\infty} \leq \frac{h}{ n}$, $\zeta_2$ follows from our assumption in \cref{mainpointa} (which we will prove next) and we define $\pi_{\min} \triangleq \min_{i \in [n]} \pi_i$, and  $\zeta_3$ follows since $\pi_{\min} \geq \|\pi\|_\infty/h$ and $n \|\pi\|_\infty \geq 1$.

Now it remains to prove \cref{mainpointa}. To do so, observe that
$$ \min_{\pi_{B} \in \BTLh} \|\Pi_{B} P-P^{\T} \Pi_{B}\|^2_{\F} \geq \min_{ \substack{\pi_{B} \in \R_+^n: \\ \pi_{B}^\T \1_n =1} } \|\Pi_{B} P-P^{\T} \Pi_{B}\|^2_{\F}. 
$$

 Assume that $\pisb$ is a solution to the following minimization problem
$$\min_{\pi_{B} \in \R_+^n: \pi_{B}^\T \1_n =1} \|\Pi_{B} P-P^{\T} \Pi_{B}\|^2_{\F},
$$
and let $\Pisb \triangleq \diag(\pisb)$. 
To prove \cref{mainpointa}, we will show that the following ratio is upper bounded by $O(\sqrt{h}/\xi^2)$:
\begin{equation} \label{ratio}
\frac{\|\Pi P - P^{\T} \Pi\|_{\F} }{\|\Pisb P - P^{\T} \Pisb\|_{\F}} = \sqrt{\frac{\pi^{\T} R \pi}{\pi_{B}^{* \T} R \pi_{B}^{*}}}, 
\end{equation}
where the matrix $R$ is defined as follows
\begin{equation}
R_{i j} \triangleq \begin{cases}-p_{i j} p_{j i}, & i \neq j \text{ and } (i,j) \in \calE \\ \sum\limits_{\substack{l:l\neq i :\\ (i,l) \in \calE}} p_{i l}^{2}, & i=j \\ 0, &  \text{otherwise} \end{cases}.    
\end{equation}
Note that $R$ is symmetric and is in fact a positive semidefinite matrix (as $u^\T R u = \sum_{(i,j) \in \calE}(u_ip_{ij} -u_jp_{ji})^2$). Therefore, all its eigenvalues are non-negative. Moreover, when the canonical Markov matrix corresponding to $P$ is not reversible, $R$ is, in fact, positive definite. The positive definiteness of $R$ follows from the fact that if $u^\T R u = 0$ for some vector $u$, then $u_i p_{ij} = u_j p_{ji}$ for all $(i,j) \in \mathcal{E}$. By the connectedness of the graph $\G$, this implies that $u \geq 0$ entrywise (without loss of generality). Consequently, the canonical Markov matrix $S$ corresponding to $P$ is reversible, which leads to a contradiction. 
Additionally, when the canonical Markov matrix corresponding to $P$ is reversible, then \cref{mainpointa} trivially holds as both sides of the inequality are zero. Therefore, our main focus will be on the case when the matrix $P$ is not reversible. First, to begin, let us focus on the denominator term $\pi_{B}^{*^{\T}} R \pi_{B}^{*}$. Recall that $\pisb$ is the solution of the following optimization problem:
$$
\min _{\substack{\pi_{B} \in \R_+^n: \pi_{B}^\T \1_n =1  }} \frac{1}{2} \pi_{B}^{\T} R \pi_{B}.
$$
For $u\geq \0$ and $\lambda \in \R,$ we obtain the Lagrangian    
$$
L(\pi_B,\lambda, u) \triangleq \frac{1}{2} \pi_{B}^{\T} R {\pi_{B}}-\lambda\left(\pi_{B}^{\T}\1_n-1\right) - u^\T \pi_B.
$$
Using Karush-Kuhn-Tucker conditions, this gives the following optimality conditions (primal feasibility, dual feasibility, stationarity, and complementary slackness): 
$$ 
\begin{aligned}
    \pi_{B}^{*} \geq 0,\  \pi_{B}^{*^\T} \1_n = 1,\ u^* \geq \0,\ R \pi_{B}^{*} - \lambda^{*} \1_n = u^*, \ \\  \text{and} \ \forall i\in [n], \, u_i^* (\pi_{B}^*)_i = 0,   
\end{aligned}
$$
where $\lambda^*, u^*$ is the optimal dual solution. 
We claim that when $R$ is positive definite, the primal and dual solution satisfying the above optimality conditions are given by:
$$\pi_{B}^{*} = \frac{R^{-1} \1_n }{ \1_n^\T R^{-1} \1_n }, \lambda^* = \frac{1 }{ \1_n^\T R^{-1} \1_n }, u^* = \0  .$$
Here, $\pi_{B}^{*}$ is non-negative, because $R^{-1}$ is entrywise non-negative. To see this latter fact, we use the inverse-positivity property of $R$, because $R$ is a special kind of $Z$-matrix (i.e., $R_{ij} \leq 0$ for $i \neq j$), namely, an $M$-matrix \cite{ZmatrixMmatrix}.  
Moreover, the optimality conditions imply that
\begin{equation}
    \pi_{B}^{* \T} R \pi^{*}_B=\lambda^{*}. \label{denominator terms}
\end{equation}
Equivalently, the optimality condition implies that $\pi_{B}^{*}$ is an eigenvector of the matrix $S_{B, \lambda^{*}}$ with eigenvalue $1$, where $S_{B, \lambda}$  is a matrix with parameter $\lambda $ and is defined as
\begin{equation} \label{defnsblambda}
    S_{B, \lambda } \triangleq I_n + \frac{1}{n}(\lambda\1_n\1_n^\T - R), 
\end{equation}
where $I_n$ is the identity matrix of size $n$. This is because the optimality condition implies that  ${\pi_{B}^{*}}^{\T} = {\pi_{B}^{*}}^{\T}\big(I_n+   \frac{\lambda^*\1_n \1_n^\T -R}{n} \big)$. Moreover, note that each entry of $S_{B, \lambda^* }$ is strictly greater than $0$ for $\lambda^* > 0$, and therefore, $\pi_{B}^{*}$ is the Perron vector of $S_{B, \lambda^* }$ with eigenvalue 1. Now, consider the numerator term in \cref{ratio}, which can be expressed as
\begin{align}
& \pi^{\T} R \pi =  \pi^{\T} R \pi_{B}^{*}+\left(\pi-\pi_{B}^{*}\right)^{\T} R\left(\pi-\pi_{B}^{*}\right)+\pi_{B}^{* \T} R\left(\pi-\pi_{B}^{*}\right) \nonumber\\
& =  \lambda^{*} \pi^{\T}\1_n + (\pi-\pi^{*}_{B})^{\T} R(\pi-\pi_{B}^{*}) + \lambda^{*}\1_n^\T (\pi-\pi_{B}^{*})\nonumber\\
& =  \lambda^{*}+\left(\pi-\pi^{*}_{B}\right)^{\T} R\left(\pi-\pi_{B}^{*}\right)  \leq  \lambda^{*} + \lambda_{\max }(R)\|\pi-\pi^{*}_B\|_2^{2}, \label{combining3}
\end{align}
where $\lambda_{\max}(R)$ is maximum eigenvalue of $R$. Note that by Gershgorin circle theorem \cite{horn2012matrix}, we have $\lambda_{\max}(R) \leq \max_{i} \sum_{j=1}^n |R_{ij}| \leq 2d_{\max}$. Now, we will upper bound $\|\pi - \pisb\|_2$. Since, $\pi$ and $\pisb$ are Perron vectors of $S$ and $S_{B, \lambda^*}$ with eigenvalues 1, therefore we have
$$
\begin{aligned}
\pi^{\T}-\pi_{B}^{*^\T}= & \pi^{\T} S - \pi_{B}^{*^\T} S_{B, \lambda^{*}} \\ 
& =  \left(\pi-\pi_{B}^{*}\right)^{\T}\left(S-\1_n\pi^{\T}\right)+\pi_{B}^{*^\T}(S-S_{B, \lambda^{*}}) 
\end{aligned}
$$
{Taking norm $\|\cdot\|_{\pi^{-1}}$ on both sides and using the triangle inequality gives}
        \begin{align} 
            \|\pi -\pi_{B}^{*}\|_{\pi^{-1}} & \leq \|\pi -\pi_{B}^{*}\|_{\pi^{-1}} \| S - \1_n \pi^\T\|_{\pi^{-1}} \nonumber\\ 
            & \ \ \ \ \ \ \ \ \ \ \ \ \ \ \ \ \ + \| \pi_B^{*^\T}(S - S_{B,\lambda^*}) \|_{\pi^{-1}}.  
        \end{align}
It is straightforward to verify that (see \cref{subsec:Preliminaries} for more details)
        $$\|S- \mathbf{1}_n\pi^\T\|_{\pi^{-1}} = \| \Pi^{1/2} S\Pi^{-1/2}  -\sqrt{\pi} \sqrt{\pi}^\T \|_2 . $$ 
Rearranging the terms and utilizing \cref{lem:SpectralGapLB for a general graph} we get the following bound 
\begin{align}
\|\pi\!-\!\pisb \|_2 & \leq 4 \sqrt{ \frac{\|\pi\|_{\infty}}{\pi_{\min}} } \frac{\|\pisb^{\T}(S-S_{B, \lambda^{*}})\|_2}{ \xi^2 } \nonumber  \\ 
& \leq \frac{4\sqrt{h} }{\xi^2} \bigg( \!\|\pisb^{\T}(S-S_{B, 0})\|_2 + \frac{\lambda^*}{n} \|\pisb^\T \1_n\1_n^\T\|_2\! \bigg) \nonumber\\
& \leq \frac{4 \sqrt{h}}{\xi^2 } \bigg(\|\pisb^{\T}(S-S_{B, 0})\|_2 + \frac{\lambda^*}{\sqrt{n}} \bigg),  \label{combining1}
\end{align}
where $S_{B, 0}$ is the matrix in \cref{defnsblambda} with $\lambda = 0$. Now consider the $i$th term of the $\pisb^{\T}(S-S_{B, 0})$ as 
\begin{align*}
& (\pisb^{\T}( S-S_{B, 0}))_i  =\frac{1}{d} \bigg( \pi_{B,i}^* \sum_{\substack{ j: j\neq i,\\ (i,j) \in \calE}} \left(\frac{d}{n} p_{ij}^2 - p_{ij}\right)  \\ 
& \ \ \ \ \ \ \ \ \ \ \ \ \ \ \ \ \ \ \ \ \ \ \ + \sum_{\substack{ j: j\neq i,\\ (i,j) \in \calE}} \pi_{B,j}^*\left(p_{ji} - \frac{d}{n}p_{ij} p_{ji}\right) \bigg) \\
& = - \frac{1}{d} \sum_{\substack{ j: j\neq i,\\ (i,j) \in \calE}} (\pi_{B,i}^* p_{ij} - \pi_{B,j}^*p_{ji})\left(1-\frac{d}{n}p_{ij} \right).
\end{align*}
Since, for any graph we have $d \leq 2n$, we obtain $|1-\frac{d}{n}p_{ij}| \leq 1$. Therefore, using the above bound $\|\pisb^{\T}(S-S_{B, 0})\|_2^2$ can be bounded as
\begin{align}
\|\pisb^{\T}(S-S_{B, 0})\|_2^2 & \leq \frac{1}{d^2} \sum_{i=1}^n \bigg(\sum_{\substack{ j: j\neq i,\\ (i,j) \in \calE}} |\pi_{B,i}^* p_{ij} - \pi_{B,j}^*p_{ji}|\bigg)^2 \nonumber \\ 
& \leq \frac{d_{\max}}{d^2} \sum_{(i,j) \in \calE}  (\pi_{B,i}^* p_{ij} - \pi_{B,j}^*p_{ji})^2\nonumber \\
& = \frac{1}{2d} \|\Pisb P - P^\T \Pisb\|_\F^2 = \frac{\lambda^{*}}{2d}. \label{combining2}
\end{align}
Combining \cref{combining1,combining2}, we get
\begin{equation*}
\|\pi-\pisb \|_2 \leq \frac{4\sqrt{h} }{\xi^2} \bigg(\frac{\lambda^*}{\sqrt{n}} + \sqrt{\frac{\lambda^*}{2d}} \bigg ).    
\end{equation*}
Thus, from \cref{combining3} and the fact that $\lambda_{\max}(R) \leq 2d_{\max}$, there exists a constant $c$ such that we have
\begin{align}
    \pi^{\T}R\pi & \leq \lambda^* + \frac{c h}{\xi^4} \lambda^{*} d_{\max} \bigg(\frac{1}{\sqrt{d}} + \sqrt{\frac{\lambda^*}{n}}\bigg)^2 \nonumber\\
    & \leq \lambda^* + \frac{c h}{\xi^4}\lambda^* \left(1+\sqrt{\lambda^{*}}\right)^2. \label{numerator terms}
\end{align}
Therefore, using \cref{denominator terms,numerator terms}, the ratio in \cref{ratio} is upper bounded as 
\begin{equation} 
\frac{\|\Pi P - P^{\T} \Pi\|_{\F} }{\|\Pisb P - P^{\T} \Pisb\|_{\F}} \leq 1 + O\bigg(\frac{ \sqrt{h} }{\xi^2}\sqrt{\lambda^*} \bigg).
\end{equation} 
Finally, it remains to show that $\lambda^* = 1/\1_n^\T R^{-1} \1_n$ is upper bounded by a constant. {We can show this using a spectral bound $\lambda^* \leq \lambda_{\max}(R)/n \leq 2d_{\max}/n \leq 2$.} 

Alternatively, we can also show this through the following argument. Note that $S_{B, 0}$  is a symmetric matrix with absolute eigenvalues strictly less than $1$ (when the canonical Markov matrix corresponding to $P$ is not reversible). This is because for any vector $u \in \R^n$ with $\|u\|_{2}=1$, we have
$$
u^{\T} S_{B, 0} u =u^{\T} u-\frac{1}{n} \sum_{(i, j) \in \calE}\left(u_{i} P_{i j}-u_{j} P_{j i}\right)^{2}<u^{\T} u.
$$
This implies that when the canonical Markov matrix of $P$ is reversible, then the Perron-Frobenius eigenvalue of $S_{B,0}$ is $1$ (otherwise, it is strictly less than one). Also, recall from the Perron-Frobenius theorem that if $0 \leq \mathrm{A} < \mathrm{B}$ entrywise, then $\lambda_{\max}(A) \leq \lambda_{\max}(B)$. Moreover, if $\mathrm{B}$ is irreducible, then the inequality is strict: $\lambda_{\max}(A)<\lambda_{\max}(B)$. Observe that since the induced graph $\G$ is strongly connected, therefore irreducibility holds. Moreover, since for any $\lambda > 0$, $S_{B,\lambda} > S_{B,0}$ entrywise, therefore we have $\lambda_{\max}(S_{B,\lambda}) > \lambda_{\max}(S_{B,0}) $. Thus, $\lambda^{*}$ in $S_{B,\lambda^{*}} $ is the smallest constant such that $\lambda_{\max}(S_{B, \lambda^{*}} )=1$ (as the spectral radius of $S_{B,\lambda^*}$ is 1). Recall that for an entrywise positive matrix $A$ its Perron-Frobenius eigenvalue is lower bounded as $|\lambda_{\max}(\mathrm{A})| \geq \min_{i \in [n]} \sum_{j=1}^n a_{ij}$. Hence, we utilize this condition to show the existence of a constant $\lambda_0 \geq \lambda^*$, such that $\lambda_{\max}(S_{B,\lambda_0}) \geq 1$, as
$$ 
\begin{aligned}
    \lambda_{\max}(S_{B,\lambda_0}) & \geq  \min_{i \in [n]} \bigg(1 + \lambda_0 +  \frac{1}{n} \sum_{\substack{ j: j\neq i,\\ (i,j) \in \calE}} (p_{ij}p_{ji} - p_{ij}^2) \bigg) \\ 
    & \geq \lambda_0 + 1 - \frac{d_{\max}}{n}.
\end{aligned}
$$
Setting $\lambda_0 = \frac{d_{\max}}{n}$ ensures $\lambda_{\max}(S_{B,\lambda_0}) \geq 1$. Thus, $\lambda^* \leq \frac{d_{\max}}{n} \leq 1$,  proving the theorem. \qed

\section{Proofs of Upper Bounds} \label{Proof:Main theorem with upper bound}
    This section is devoted to the proofs of various lemmata and existing results needed to prove \cref{Main theorem with upper bound,thm:Type-1 and Type-2 error}. The main portion of the proof of \cref{Main theorem with upper bound} is presented in \cref{sec:Upper Bound on Critical Threshold} and proof of \cref{thm:Type-1 and Type-2 error} is presented in \cref{Proof of Type-1 error}. But first, we need to establish some key results discussed in the following section.

\subsection{Preliminaries} \label{subsec:Preliminaries}
    In this section, we will prove key lemmata that will be used quite frequently to develop the proof of the main result in \cref{Main theorem with upper bound}.         
        The following lemma is similar to \cite[Theorem 8]{Chenetal2019} but also holds when the canonical Markov matrix of $P$ is not reversible. 
        \begin{lemma}[Eigenvector Perturbation] \label{lem:PertubationOfPi}
          \sloppy Let $\pi, \pihat, \tilde{\pi}$ be the stationary distributions of the row stochastic matrices $S, \Shat, \tilde{S}$, respectively such that $\pi >0$ entrywise. Then, if $\| \Pi^{\frac{1}{2}} S \Pi^{-\frac{1}{2}}  -\sqrt{\pi} \sqrt{\pi}^\T \|_2 + \|S - \hat{S} \|_{\pi^{-1}} < 1$, we have
         $$
              \|\pihat - \tilde{\pi} \|_{\pi^{-1}} \leq \frac{\| \tilde{\pi}^\T(\tilde{S} - \Shat )\|_{\pi^{-1}}}{1- \| \Pi^{\frac{1}{2}} S \Pi^{-\frac{1}{2}}  -\sqrt{\pi} \sqrt{\pi}^\T \|_2 - \|S - \hat{S} \|_{\pi^{-1}}} . $$
        \end{lemma}

        \begin{proof}
        By stationarity of $\tilde{S}$ and $\Shat$, we have
        \begin{align}
            & \tilde{\pi}^\T - \pihat^\T  = \tilde{\pi}^\T \tilde{S} -  \pihat^\T\Shat  \nonumber\\
            & = \tilde{\pi}^\T(\tilde{S} - \Shat)  + (\tilde{\pi} - \pihat )^\T \hat{S}- (\tilde{\pi} - \pihat)^\T \mathbf{1}_n \pi^\T \nonumber\\
            & = \tilde{\pi}^\T(\tilde{S} - \Shat)  + (\tilde{\pi} - \pihat )^\T(\Shat - \S) + (\tilde{\pi} - \pihat)^\T(S- \mathbf{1}_n\pi^\T) .\label{onetotwo}
        \end{align}
        Taking $\ell^2(\pi^{-1})$ norm on both sides and using the triangle inequality gives
        \begin{align}
            \| \tilde{\pi} - \pihat \|_{\pi^{-1}} & \leq \| \tilde{\pi}^\T(\tilde{S} - \Shat) \|_{\pi^{-1}} + \|\tilde{\pi} - \pihat \|_{\pi^{-1}} \| \hat{S} - S\|_{\pi^{-1}} \nonumber\\
            & \ \ \ \ \ + \|\tilde{\pi}  - \pihat \|_{\pi^{-1}} \|S- \mathbf{1}_n\pi^\T\|_{\pi^{-1}} .  \label{similar1}
        \end{align}
        It is straightforward to verify that 
        $$\|S- \mathbf{1}_n\pi^\T\|_{\pi^{-1}}\! =\!\| \Pi^{\frac{1}{2}} S\Pi^{-\frac{1}{2}}  -\sqrt{\pi} \sqrt{\pi}^\T \|_2 = \sigma_2( \Pi^{\frac{1}{2}} S\Pi^{-\frac{1}{2}}),$$
        where $\sigma_2(M)$ denotes the second-largest singular value of $M$, and the last equality follows since $\sqrt{\pi}$ is both the left and right top singular vector of the DTM $\Pi^{1/2} S\Pi^{-1/2}$ \cite[Proposition 2.2]{makur2019}. 
        Thus, rearranging the terms in the above inequality establishes the statement of \cref{lem:PertubationOfPi}. 
    \end{proof}

    We would like to emphasize the distinction between our proof above and the approach adopted in \cite[Theorem 8]{Chenetal2019} for proving a similar result, and elucidate the proof in \cite{Chenetal2019}. In transitioning from \cref{onetotwo} to \cref{similar1}, the authors in \cite{Chenetal2019} utilize the $\ell^2(\pi)$ norm, as opposed to the $\ell^2(\pi^{-1})$ norm. Specifically, they bound the final term on the right-hand side as follows 
    \begin{equation}\label{authors claim} 
    \left\|(\tilde{\pi}  - \pihat)^{\T}\left(S-\1_{n} \pi^{\T}\right)\right\|_{\pi} \leq \|\tilde{\pi}  - \pihat\|_{\pi} \lambda_2(S),
    \end{equation}
    where $\lambda_l(S)$ is the $l$th largest eigenvalue (in magnitude) of $S$ for $l\in [n]$.
    This bound is rather subtle and a detailed reasoning for \cref{authors claim} is missing. Therefore, we provide it below. Define the function: 
    \begin{equation}
        \rho(S) \triangleq \max_{\substack{ e: \|e\|_\pi \leq 1\\ e^\T\1_n = 0} } e^\T (S-\1_n \pi^\T) \Pi (S-\1_n \pi^\T)^\T e. 
    \end{equation}
    Observe that the maximization is the same as
    \begin{equation}
    \rho(S)=\max _{\substack{e: \|e\|_2 \leq 1\\ e^\T\pi^{-1/2} = 0}} e^{\T} \Pi^{-1/2} (S-\1_n \pi^\T)  \Pi (S-\1_n \pi^\T) ^{\T} \Pi^{-1/2} e.
    \end{equation}
    Also, observe that the optimal vector $e^*$ achieving the maximum in the above problem is orthogonal to $\pi^{3/2}$. This is because any component of $e^*$ in the direction of $\pi^{3/2}$ lies in the left nullspace of $S - \1_n \pi^\T$. Therefore, $\rho(S)$ can be simplified as   
    \begin{equation}
    \rho(S)=\max _{\substack{e: \|e\|_2 \leq 1\\ e^\T\pi^{-1/2} = 0,\ e^\T \pi^{3/2} = 0 }} e^{\T} \Pi^{-1/2} S \Pi S^{\T} \Pi^{-1/2} e.
    \end{equation}
    Finally, $\rho(S) = \lambda_2(\Pi^{-1 / 2} S \Pi^{1 / 2})$ follows by observing that $\pi^{-1 / 2}$ and $\pi^{3 / 2}$ are the corresponding right and left eigenvectors of $\Pi^{-1 / 2} S \Pi^{1 / 2}$. And since for the BTL model (under hypothesis $H_0$) $\Pi^{1 / 2} S \Pi^{-1 / 2}$ is a symmetric matrix, it has real eigenvalues. This implies that, by a similarity transform, $S, \Pi^{1 / 2} S \Pi^{-1 / 2}, \Pi^{-1 / 2} S \Pi^{1 / 2}$ all have the same eigenvalues. This proves the bound in \cref{authors claim}. However, this technique does not work for general pairwise comparison models, and therefore we have to resort to $\ell^2(\pi^{-1})$ norm.

    One of our main goals for deriving \cref{lem:PertubationOfPi} is to find upper bounds on $\|\pihat - \pi\|_2$, where $\pihat, \pi$ are stationary distributions of $\Shat, S$, respectively. To achieve this, we will employ \cref{lem:PertubationOfPi} (with the choice $\tilde{S} = S$ and thus, we have $\tilde{\pi} = \pi$). However, to apply this lemma, it is essential to demonstrate that the condition outlined in \cref{lem:PertubationOfPi} is satisfied. In other words, we need to find upper bounds on the terms $\| \Pi^{1/2} S\Pi^{-1/2}  -  \sqrt{\pi} \sqrt{\pi}^\T \|_2 $ and $ \|\Shat - S\|_{\pi^{-1}}$, ensuring that their sum is less than $1$. When the Markov chain corresponding to $S$ is reversible, the DTM matrix $R = \Pi^{1/2} S\Pi^{-1/2}$ is symmetric, and hence, 
    
    \begin{equation}
    \|R -\sqrt{\pi} \sqrt{\pi}^\T \|_2 =  \lambda_2(S) .
    \end{equation}    
    Moreover, since the Markov chain is irreducible by the Perron-Frobenius theorem, we have $ \lambda_2(S) < 1$, and the corresponding upper bounds have been derived in \cite{NegahbanOhShah2012,Chenetal2019}. However, we are interested in the general case where $S$ need not be reversible (as the underlying pairwise comparison matrix may not be BTL). Hence, we bound each of these terms using the following two lemmata.

    \begin{lemma}[Spectral Norm of Noise] 
    \label{lem:NoiseLB}
        For $\Shat$ constructed as in \cref{EmpiricalMarkovChain}, we have
        \begin{equation*}
            \| \Shat -S\|_{\pi^{-1}} \leq  \sqrt{ \frac{\| \pi\|_\infty }{\pi_{{\min}}} } \|\Shat - \S\|_2 \leq 3 \sqrt{\frac{h \log n}{k \dmax } }, 
        \end{equation*}    
        with probability at least $1 - O(n^{-3} )$ and where $\pi_{{\min}} \triangleq \min_{i \in [n]} \pi_i$.
    \end{lemma}
    The proof of \cref{lem:NoiseLB} is quite similar to \cite{NegahbanOhShah2012} but requires slight modifications and is provided in \cref{Proof of NoiseLB} for completeness.  

    Next we upper bound the quantity $\|R -\sqrt{\pi} \sqrt{\pi}^\T \|_2$. Observe that $\|  R -\sqrt{\pi} \sqrt{\pi}^\T \|_2$ is the second largest singular value of the DTM $R$ (as $\sqrt{\pi}$ are both the left and right singular vectors of $R$ corresponding to largest singular value of $1$ \cite[Proposition 2.2]{makur2019}). The second largest singular value of DTM $R$ is also equal to the square root of the contraction coefficient for $\chi^2$-divergence $\eta_{\chi^2}(\pi, \S)$ of the source-channel pair $(\pi, S)$ (see \cite{sarmanov1958, makur2019, MakurZheng2020} for definitions and details). For the case of the complete graph, we can upper bound this quantity by upper bounding the Dobrushin contraction coefficient for TV distance, as demonstrated in the following lemma.
    \begin{lemma}[Spectral Norm Bound for Complete Graph]        \label{lem:SpectralGapLB}
        In the case of a complete graph, let $d = 2n$. Then the following bounds hold on the second largest singular value of the DTM, $R = \Pi^{1/2} S\Pi^{-1/2}$:
            $$ \left\|\Pi^{1/2} S\Pi^{-1/2} -\sqrt{\pi} \sqrt{\pi}^{\T}\right\|_2 =  
            \sqrt{\eta_{\chi^2}(\pi, \S)} \leq  1 - \frac{\delta}{4(1+\delta)}. $$ 
    \end{lemma}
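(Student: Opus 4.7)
The plan has two stages: first identify $\|R-\sqrt{\pi}\sqrt{\pi}^\T\|_2$ with a standard information-theoretic quantity, and then bound that quantity via the Dobrushin coefficient for TV distance, which is easy to control under \cref{Pijbounded} once $d = 2n$.

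First, recall that the top singular vector of $R = \Pi^{1/2} S \Pi^{-1/2}$ is $\sqrt{\pi}$ (both left and right, since $S$ is row stochastic with stationary distribution $\pi$). Hence $R - \sqrt{\pi}\sqrt{\pi}^\T$ is simply the restriction of $R$ to the orthogonal complement of $\sqrt{\pi}$, and its spectral norm is $\sigma_2(R)$. By the standard variational characterization of the $\chi^2$-divergence contraction coefficient (see e.g.\ \cite{sarmanov1958,makur2019,MakurZheng2020}),
\[
\sigma_2(R)^2 \;=\; \sup_{f:\,\E_\pi[f]=0,\,\E_\pi[f^2]>0}\;\frac{\E_\pi[(Sf)^2]}{\E_\pi[f^2]} \;=\; \eta_{\chi^2}(\pi,S),
\]
which establishes the equality $\|R-\sqrt{\pi}\sqrt{\pi}^\T\|_2 = \sqrt{\eta_{\chi^2}(\pi,S)}$.

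Second, I will invoke the classical Cohen--Kemperman--Zbaganu-type domination inequality that for every Markov kernel $S$ and every source $\pi$, $\eta_{\chi^2}(\pi,S) \le \eta_{\mathrm{TV}}(S)$, where the Dobrushin coefficient admits the representation
\[
\eta_{\mathrm{TV}}(S) \;=\; 1 - \min_{i\neq j}\sum_{k=1}^n \min\{S_{ik},S_{jk}\}.
\]
Thus it suffices to lower bound the minimum row-overlap of $S$. With $d=2n$ and the complete induced graph, for $i\neq j$ and any $k\notin\{i,j\}$ we have $S_{ik}=p_{ik}/(2n)\ge \delta/(2n(1+\delta))$ and similarly for $S_{jk}$, so $\min\{S_{ik},S_{jk}\}\ge \delta/(2n(1+\delta))$. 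For $k=i$ (and symmetrically for $k=j$), note that $S_{ii}=1-\frac{1}{2n}\sum_{\ell\neq i}p_{i\ell}\ge 1-\frac{n-1}{2n(1+\delta)}\ge \frac12$, while $S_{ji}=p_{ji}/(2n)\ge \delta/(2n(1+\delta))$, so again $\min\{S_{ii},S_{ji}\}\ge \delta/(2n(1+\delta))$. Summing over the $n$ coordinates gives $\sum_k\min\{S_{ik},S_{jk\}} \ge \delta/(2(1+\delta))$, hence $\eta_{\mathrm{TV}}(S)\le 1-\delta/(2(1+\delta))$.

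Combining the two stages, $\eta_{\chi^2}(\pi,S)\le 1-\delta/(2(1+\delta))$, and then the elementary inequality $\sqrt{1-x}\le 1-x/2$ for $x\in[0,1]$ yields
\[
\sqrt{\eta_{\chi^2}(\pi,S)} \;\le\; 1 - \frac{\delta}{4(1+\delta)},
\]
as claimed. The only real obstacle is handling the diagonal entries $S_{ii}$ when tallying the row-overlap; the laziness condition ensured by the choice $d=2n$ keeps $S_{ii}\ge 1/2$ so the off-diagonal minimum $S_{ji}$ always drives the bound, and no case analysis beyond what is sketched above is needed.
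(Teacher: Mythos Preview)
Your proof is correct and follows essentially the same approach as the paper: both identify $\|R-\sqrt{\pi}\sqrt{\pi}^{\T}\|_2=\sqrt{\eta_{\chi^2}(\pi,S)}$, invoke the domination $\eta_{\chi^2}(\pi,S)\le \eta_{\mathrm{TV}}(S)$, lower bound the row overlap $\sum_k\min\{S_{ik},S_{jk}\}\ge \delta/(2(1+\delta))$ using \cref{Pijbounded} and the laziness $S_{ii}\ge 1/2$, and finish with $\sqrt{1-x}\le 1-x/2$. The only differences are cosmetic (you spell out the diagonal case slightly more explicitly); note the small brace typo in $\min\{S_{ik},S_{jk\}}$.
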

    
    \begin{proof} 
        We will find an upper bound on $\sqrt{\eta_{\chi^2}(\pi, \S)}$ using \cite[Proposition 2.5]{makur2019}:
        $$
        \eta_{\chi^2}(\pi, S) \leq \eta_{\mathsf{TV}}(S) 
         = \max_{i, j \in [n] } \  \left\|S_{i, :}- S_{j, :}\right\|_{\mathsf{TV}} ,
       $$
        where 
        $\eta_{\mathsf{TV}}(S)$ is the Dobrushin contraction coefficient for TV distance $\|\cdot\|_{\mathsf{TV}}$. Note that $S_{ii} \geq 1/2$ and $S_{ij} \leq 1/(2n)$ for $i,j \in [n]$ with $i \neq j$. We can use \cref{Pijbounded} to bound as $\|S_{i, :}- S_{j, :}\|_{\mathsf{TV}}$ between any pair $i$ and $j$ as
        \begin{align*}
              \left\|\S_{i, :}-\S_{j, :}\right\|_{\mathsf{TV}} 
              &= 1 - \sum_{k=1}^n \min\{ S_{ik}, S_{jk} \} \\ 
             & = 1- \left( \sum_{k:k\neq i, j} \frac{ \min\{p_{ik} , p_{jk}\}}{2n} +  \frac{p_{ij} + p_{ji}}{2n} \right)\\
            & \leq 1 - \frac{\delta}{2(1+\delta)}. 
        \end{align*}
        Hence, the lemma holds since for $0 \leq x \leq 1$, we have $\sqrt{1-x}  \leq 1- \frac{x}{2}$.
    \end{proof}

    In the more general case of an arbitrary graph (consistent with our assumptions), we upper-bound the second largest singular value of $R$ by leveraging edge expansion properties and the Cheeger inequality for non-negative matrices, as demonstrated in the following lemma.
    
    \begin{lemma}[Spectral Norm Bound for a General Graph]     \label{lem:SpectralGapLB for a general graph}
        Let $\G$ be the induced graph corresponding to the canonical Markov matrix $S$. Consider the DTM $R = \Pi^{1/2} S\Pi^{-1/2}$, with edge expansion of $S$ lower bounded by $\xi$, i.e., $\phi(S) \geq \xi$, then the following bounds hold on the second largest singular value of $R$:
            $$ \left\|\Pi^{1/2} S\Pi^{-1/2} -\sqrt{\pi} \sqrt{\pi}^{\T}\right\|_2 \leq 1 -\frac{1}{4} \xi^2. $$ 
    \end{lemma}
    \begin{proof}
        Observe that 
        \begin{align*}
            \sigma_2(R) & =\sqrt{\lambda_2\left(R R^{\T}\right)} \stackrel{\zeta_1}{\leq} \sqrt{\lambda_2\left(\frac{R+R^{\T}}{2}\right)} \\
            & \stackrel{\zeta_2}{\leq} \sqrt{1-\frac{1}{2} \phi^2\left(\frac{R+R^{\T}}{2}\right)} \\
            & \stackrel{\zeta_3}{\leq} 1-\frac{1}{4} \phi^2(R)  \stackrel{\zeta_4}{\leq} 1-\frac{1}{4} \xi^2,
        \end{align*}
        where $\zeta_1$ follows by a standard argument and the explanation is provided below. $\zeta_2$ is a consequence of the Cheeger inequality for non-negative matrices in \cref{lem:Cheeger's Inequality for Nonnegative Matrices Satisfying Detailed Balance} (see below). $\zeta_3$ follows since $\sqrt{1- x} \leq 1-x/2 $ for $x\geq 0$, and since $\phi(R) = \phi( (R+R^\T)/2 ) = \phi(R^\T)$ (the proof is provided below for completeness). Finally, $\zeta_4 $ follows because $\phi(R) = \phi(S) \geq \xi,$ by \Cref{assm:Large Expansion of DTM}. Regarding $\zeta_1$, observe that since the matrix $R$ is $\frac{1}{2}$-lazy, i.e., $R_{ii} \geq \frac{1}{2}$ for all $i \in [n]$ (as $S_{ii} \geq \frac{1}{2}$), therefore we have
        \begin{equation*}
            R R^\T  =\frac{R+R^\T}{2}+\frac{(2 R - I)\left(2 R^\T -I\right)}{4}-\frac{I}{4}. 
        \end{equation*}
        Using the above relation, we obtain
            \begin{align} 
                 \nonumber x^\T R R^\T x & \leq \frac{x^\T(R+R^\T)x}{2}  \\ 
                & \nonumber \ \ \ \ \ \ \ \ + \|x\|^2_2 \frac{\|(2 R-I)(2 R^\T-I)\|_2}{4}- \frac{\|x\|_2^2}{4} \\ 
                &  \stackrel{\zeta}{\leq}  \frac{x^\T(R+R^\T) x}{2} \nonumber, 
            \end{align}
        where $\zeta$ follows since $(2R - I)(2R^{\T} - I)$ is a doubly non-negative matrix (as $R$ is $\frac{1}{2}$-lazy) with largest eigenvalue (and, hence, singular value) of $1$ with eigenvector $\sqrt{\pi}$, which gives $\|(2 R-I)(2 R^\T-I)\|_2 = 1$. 
        This gives
        \begin{equation}
             \max_{x: \|x\|_2 \leq 1, x \perp \sqrt{\pi}}  x^\T R R^\T x   \leq \max_{x: \|x\|_2 \leq 1, x \perp \sqrt{\pi}} \frac{x^\T(R+R^\T)x}{2}. \label{standard argument}
        \end{equation}
        Hence, by \cref{standard argument} and variational characterization of eigenvalues, we have 
        $$
            \lambda_2\left(R R^\T\right) \leq \lambda_2\left(\frac{R+R^\T}{2}\right).
        $$
        Finally, $\phi(R) =\phi( (R+R^\T)/2)$ follows by a little algebra. Observe that both $R$ and $(R+R^\T)/2$ share the same left and right singular vectors $\sqrt{\pi}$. Moreover, the numerator term in \cref{original expression for expansion} can be simplified as 
        $$
        \begin{aligned}
            \1_\calS^\T D_u R D_v \1_{\calS^\complement}  & = \1_\calS^\T \Pi^{1/2} (\Pi^{1/2} S \Pi^{-1/2}) \Pi^{1/2} \1_{\calS^\complement} \\
            & = \1_\calS^\T \Pi S  \1_{\calS^\complement} = \1_\calS^\T \Pi S (\1_n - \1_\calS) \\
            & = \1_\calS^\T \pi - \1_\calS^\T \Pi S \1_\calS , 
        \end{aligned}
        $$ 
        which is the same as 
        $$
        \begin{aligned}        
        \1_\calS^\T D_u R^\T D_v \1_{\calS^\complement} & = \1_\calS^\T \Pi^{1/2} (\Pi^{-1/2} S^\T \Pi^{1/2}) \Pi^{1/2} \1_{\calS^\complement} \\ 
        & = \1_\calS^\T  S^\T \Pi \1_{\calS^\complement} = \1_\calS^\T  S^\T \Pi (\1_n - \1_\calS) \\
        & = \1_\calS^\T \pi - \1_\calS^\T  S^\T\Pi \1_\calS \\
        & = \1_\calS^\T \pi - \1_\calS^\T  \Pi S \1_\calS = \1_\calS^\T D_u R D_v \1_{\calS^\complement}. 
        \end{aligned}
        $$
        The equivalence of the denominator terms in \cref{original expression for expansion} follows from \cref{exact expression for edge expansion of DTM}, thus proving $\phi(R) =\phi( (R+R^\T)/2) = \phi(R^\T)$. 
        \end{proof}

        \begin{lemma}[Cheeger Inequalities for Non-negative Matrices Satisfying Detailed Balance {\cite[Theorem 15]{MehtaSchulman}}] \label{lem:Cheeger's Inequality for Nonnegative Matrices Satisfying Detailed Balance}
        Consider a non-negative matrix $M$ with a Perron-Frobenius eigenvalue of $1$ and positive left and right eigenvectors $u$ and $v$. Assume that $M$ satisfies the condition of detailed balance, i.e., $D_u M D_v = D_v M^\T D_u$ where $D_u = \diag(u)$ and $D_v = \diag(v)$. Then, the following inequalities hold:
         \begin{equation*}
                    1 - \lambda_2(M) \leq  \phi(M)  \leq \sqrt{2(1 - \lambda_2(M))}.
         \end{equation*}
         \end{lemma}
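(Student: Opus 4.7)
The plan is to reduce the claim to the classical Cheeger inequality for a reversible Markov chain via a similarity transform, and then invoke the standard cut-rounding argument for the harder direction. First, I would normalize $u, v$ so that $u^\T v = 1$, and define $P \triangleq D_v^{-1} M D_v$, which is similar to $M$ and therefore has the same spectrum. Since $M v = v$, we obtain $P \mathbf{1}_n = D_v^{-1} M v = D_v^{-1} v = \mathbf{1}_n$, so $P$ is row-stochastic. Setting $\pi_i \triangleq u_i v_i$, the identity $u^\T M = u^\T$ yields $\pi^\T P = u^\T M D_v = u^\T D_v = \pi^\T$, so $\pi$ is stationary for $P$. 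Finally, the detailed balance hypothesis $D_u M D_v = D_v M^\T D_u$ reads entrywise as $u_i M_{ij} v_j = v_i M_{ji} u_j$, which is precisely $\pi_i P_{ij} = \pi_j P_{ji}$; hence $P$ is reversible with respect to $\pi$.

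Next, I would verify that $\phi(M)$ coincides with the usual conductance $\Phi(P) \triangleq \min_{\calS} \{\sum_{i \in \calS, j \in \calS^\complement} \pi_i P_{ij}\}/{\min\{\pi(\calS), \pi(\calS^\complement)\}}$ of the reversible chain $P$. A direct calculation gives $\mathbf{1}_\calS^\T D_u M D_v \mathbf{1}_{\calS^\complement} = \sum_{i \in \calS, j \in \calS^\complement} u_i M_{ij} v_j = \sum_{i \in \calS, j \in \calS^\complement} \pi_i P_{ij}$, while $\mathbf{1}_\calS^\T D_u D_v \mathbf{1}_n = \sum_{i \in \calS} \pi_i = \pi(\calS)$, and similarly for $\calS^\complement$. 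Therefore $\phi(M) = \Phi(P)$, and since $M$ and $P$ share the same spectrum, it suffices to establish the classical inequalities $1 - \lambda_2(P) \leq \Phi(P) \leq \sqrt{2(1 - \lambda_2(P))}$ for the reversible Markov chain $P$.

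For the easy direction $1 - \lambda_2(P) \leq \Phi(P)$, I would invoke the variational characterization $1 - \lambda_2(P) = \min_{f} \big\{\tfrac{1}{2} \sum_{i,j} \pi_i P_{ij}(f_i - f_j)^2\big\}/{\sum_i \pi_i f_i^2}$, where the minimum is taken over nonzero $f$ orthogonal to $\mathbf{1}_n$ in $\ell^2(\pi)$, and plug in the test function $f = \mathbf{1}_\calS/\pi(\calS) - \mathbf{1}_{\calS^\complement}/\pi(\calS^\complement)$ for the cut $\calS$ attaining $\Phi(P)$; a short computation then produces the linear bound. The harder direction $\Phi(P) \leq \sqrt{2(1 - \lambda_2(P))}$ is the main obstacle, and I would handle it by the standard threshold-rounding argument: take the $\lambda_2$-eigenvector $f$ of $P$, shift it so that $\pi(\{i : f_i > 0\}) \leq 1/2$, replace $f$ by its positive part, and consider the family of level-set cuts $\calS_t = \{i : f_i^2 > t\}$ for $t \geq 0$. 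A Cauchy--Schwarz manipulation on the Dirichlet form, integrating the ratio of boundary mass to volume against a suitable measure on $t$, produces some $t$ for which $\Phi(\calS_t) \leq \sqrt{2(1 - \lambda_2(P))}$. This is the Jerrum--Sinclair / Lawler--Sokal cut-rounding argument, and it would be the technical heart of the proof; everything else reduces to bookkeeping around the similarity transform $P = D_v^{-1} M D_v$.
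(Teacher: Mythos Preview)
Your reduction is correct and is the standard route to this result: the similarity $P = D_v^{-1} M D_v$ produces a reversible Markov chain with stationary measure $\pi_i = u_i v_i$, the edge expansion $\phi(M)$ coincides with the conductance $\Phi(P)$, and the spectra agree, so the statement becomes the classical Cheeger inequality for reversible chains. One small remark: you should note explicitly that detailed balance forces $M$ (equivalently $P$) to have real spectrum, since $\Pi^{1/2} P \Pi^{-1/2}$ is symmetric; otherwise the quantity $\lambda_2(M)$ in the statement is ambiguous.

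As for comparison with the paper: the paper does not prove this lemma at all. It is quoted verbatim as \cite[Theorem 15]{MehtaSchulman} and used as a black box inside the proof of \cref{lem:SpectralGapLB for a general graph}. So your proposal is not an alternative to the paper's argument but rather a self-contained proof of a result the paper imports.
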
  
    Combining \cref{lem:NoiseLB} and \cref{lem:SpectralGapLB} we obtain the following corollary.
    
    \begin{corollary}[Spectral Gap] \label{combinedupperbound}
        For $k \geq \max\!\big\{1,\frac{c h \log n}{d_{\max} \xi^4 } \big\}$ for some constant $c$, the following bound holds with probability at least $1 - O(n^{-3})$:   
        $$
            \begin{aligned}
                1 - \|\Pi^{1/2} S\Pi^{-1/2}  -\sqrt{\pi} \sqrt{\pi}^\T \|_2 & -  \|\Shat -  S \|_\pi \geq \frac{\xi^2}{8} . 
            \end{aligned}
        $$
    \end{corollary}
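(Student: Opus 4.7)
The plan is to combine the two preceding lemmata (the spectral-norm bound for the DTM under Assumption on large edge expansion, and the high-probability spectral-norm bound on the empirical noise) into a single quantitative statement, and then choose the constant $c$ in the lower bound on $k$ so that the noise contribution is absorbed into half of the spectral gap produced by the expansion bound.

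Concretely, I would first apply Lemma (Spectral Norm Bound for a General Graph) to the DTM $R = \Pi^{1/2} S \Pi^{-1/2}$ to conclude that
\[
1 - \bigl\|\Pi^{1/2} S \Pi^{-1/2} - \sqrt{\pi}\sqrt{\pi}^{\T}\bigr\|_{2} \;\geq\; \tfrac{1}{4}\xi^{2},
\]
using \cref{assm:Large Expansion of DTM}. Next I would invoke Lemma (Spectral Norm of Noise) to obtain, with probability at least $1-O(n^{-3})$,
\[
\|\hat{S} - S\|_{\pi^{-1}} \;\leq\; 3\sqrt{\frac{h\log n}{k\, d_{\max}}}.
\]
Subtracting this upper bound from $\tfrac{1}{4}\xi^{2}$ and requiring the remainder to be at least $\tfrac{1}{8}\xi^{2}$ yields the sufficient condition
\[
3\sqrt{\frac{h\log n}{k\, d_{\max}}} \;\leq\; \tfrac{1}{8}\xi^{2},
\qquad\text{equivalently}\qquad k \;\geq\; \frac{576\, h\log n}{d_{\max}\,\xi^{4}}.
\]
Hence choosing the constant in the hypothesis of the corollary as $c = 576$ (or any suitably large absolute constant, to also subsume the trivial requirement $k\geq 1$) makes the displayed inequality hold on the same high-probability event as Lemma (Spectral Norm of Noise).

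There is essentially no obstacle beyond careful bookkeeping: the proof is a one-shot triangle-inequality-style combination, with the only mild subtlety being to make sure the norms are consistent (the $\pi^{-1}$-weighted operator norm of the noise is what enters the condition of Lemma (Eigenvector Perturbation), so the statement in the corollary should be read in that norm), and that the probability bound $1 - O(n^{-3})$ is inherited directly from Lemma (Spectral Norm of Noise) since the spectral-gap bound coming from the expansion lemma is deterministic.
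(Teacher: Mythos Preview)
Your proposal is correct and follows essentially the same approach as the paper's proof: combine \cref{lem:SpectralGapLB for a general graph} (giving the deterministic $\frac{1}{4}\xi^2$ spectral gap) with \cref{lem:NoiseLB} (giving the high-probability noise bound), then choose the constant $c$ so the noise term is at most $\frac{1}{8}\xi^2$. Your remark about the norm subscript (that the relevant noise norm is $\|\cdot\|_{\pi^{-1}}$, as used in \cref{lem:PertubationOfPi} and \cref{lem:NoiseLB}) is also on point; the paper's displayed $\|\Shat - S\|_\pi$ in the corollary should indeed be read as $\|\Shat - S\|_{\pi^{-1}}$.
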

    \begin{proof}
    Observe that
    $$
    \begin{aligned}
        1 - \| \Pi^{1/2} S \Pi^{-1/2}  - &\sqrt{\pi} \sqrt{\pi}^\T \|_2 -  \|\Shat -  S \|_\pi  \\
& \geq  \frac{1}{4}\xi^2 -    5 \sqrt{ \frac{h \log n}{ k d_{\max} } } \geq \frac{\xi^2}{8}, 
    \end{aligned}
    $$
    where last inequality follows since $k \geq \frac{c h \log n}{d_{\max} \xi^4 }$ for some large enough constant $c$. 
        \end{proof}
    Now, using \cref{combinedupperbound} {to bound the denominator term in} \cref{lem:PertubationOfPi} and following the same procedure in \cite[Theorem 9]{Chenetal2019}, we obtain the following $\ell^2$-error bound.  
    
    \begin{lemma}[$\ell^2$-Error Bound for Pairwise Comparison Model] \label{lem:L2error}
    Under the pairwise comparison model discussed in \cref{Formal Model and Goal} such that \cref{assm:Large Expansion of DTM,assm: Bounded Principal Ratio} holds and for 
    $k \geq \max \big\{1,\frac{c h \log n}{d_{\max} \xi^4 } \big\}$ and $d_{\max} \geq \log n$, the following bound holds:  
    \begin{equation*}
        \|\pi - \pihat\|_2 \leq \frac{C}{\xi^2 } \sqrt{ \frac{h}{k d_{\max} } } \sqrt{n} \|\pi\|_\infty   
    \end{equation*} 
    with probability at least $1 - O(n^{-3})$ for some constants $c, C$ independent of $n, d_{\max}, k$.
    \end{lemma}
    Utilizing the results developed above, the rest of the proof follows using similar arguments in \cite[Theorem 9]{Chenetal2019} and is provided in \cref{Proof of L2 error empirical} for completeness.

\subsection{Proof of \cref{Main theorem with upper bound}} \label{sec:Upper Bound on Critical Threshold}

    In this section, we will utilize the lemmata developed above to prove \cref{Main theorem with upper bound}.
    
    \begin{proof}[Proof of \cref{Main theorem with upper bound}] 
     For every $(i,j) \in \calE$, define $\hat{Y}_{ij} \triangleq \frac{Z_{ij}(Z_{ij}-1)}{k_{ij}(k_{ij}-1)}$ and $\hat{p}_{ij} \triangleq \frac{Z_{ij}}{k_{ij}}$. Since $Z_{ij} \sim \text{Bin}(k_{ij},p_{ij})$, it is easy to verify that 
        \begin{align}
        &\E[\hat{p}_{ij}] = p_{ij}  \  \text{  and } \ \var(\hat{p}_{ij}) = \frac{p_{ij}(1-p_{ij})}{k_{ij}},\nonumber \\
        & \E[\hat{Y}_{ij}]  = p_{ij}^2 \  \text{  and } \nonumber \\
        &  \var(\hat{Y}_{ij}) = \frac{-2\left(2k_{ij}-3\right)p_{ij}^4+4(k_{ij}-2)p_{ij}^3+2p_{ij}^2}{k_{ij}(k_{ij}-1)}    . \label{Yijproperties}
        \end{align}
    Now, the test statistic $T$ in terms of $\hat{Y}_{ij}$ and $\hat{p}_{ij}$ is
    $$T = \sum_{\substack{(i,j) \in \calE: \\ i \neq j} } \left(\hat{\pi}_{i}+\hat{\pi}_{j}\right)^{2} \hat{Y}_{ij} +\hat{\pi}_{j}^{2} - 2  \hat{\pi}_{j}\left(\hat{\pi}_{i}+\hat{\pi}_{j}\right) \hat{p}_{ij}. $$

    We split $T$ as $T=T_{1}+T_{2}+T_{3}$ where
    \begin{align}
      T_{1} & \triangleq \sum_{\substack{(i,j) \in \calE: \\ i \neq j} } \left(\left(\hat{\pi}_{i}+\hat{\pi}_{j}\right)^{2}-\left(\pi_{i}+\pi_{j}\right)^{2}\right)\left(\hat{Y}_{i j}-p_{ij}^{2}\right) \nonumber \\ 
      &\ \ \ \ \ \ \ \ \  -2\left(\hat{\pi}_{j}\left(\hat{\pi}_{i}+\hat{\pi}_{j}\right)-\pi_{j}\left(\pi_{i}+\pi_{ j}\right)\right)\left(\hat{p}_{ij}-p_{ij}\right) , \label{definiton of T1}\\ 
      T_{2}& \triangleq\sum_{\substack{(i,j) \in \calE: \\ i \neq j} } \left(\left(\hat{\pi}_{i}+\hat{\pi}_{j}\right)^{2}-\left(\pi_{i}+\pi_{j}\right)^{2}\right) p_{ij}^{2}+\hat{\pi}_{j}^{2}-\pi_{j}^{2} \nonumber\\ 
      &\ \ \ \ \ \ \ \ \  -2\left(\hat{\pi}_{j}\left(\hat{\pi}_{i}+\hat{\pi}_{j}\right)-\pi_{j}\left(\pi_{i}+\pi_{j}\right)\right) p_{ij}, \label{definiton of T2}\\ 
    T_{3}& \triangleq\sum_{\substack{(i,j) \in \calE: \\ i \neq j} } \left(\left(\pi_{i}+\pi_{j}\right)^{2} \hat{Y}_{i j}+\pi_{j}^{2}-2 \pi_{j}\left(\pi_{i}+\pi_{j}\right) \hat{p}_{ij}\right). \label{definiton of T3} 
    \end{align}
    The following lemma bounds the terms $T_1$ and $T_2$.
    \begin{lemma}[Bounds on $T_1$ and $T_2$] Under the assumptions of \cref{Main theorem with upper bound}, the following bounds hold on $T_1, T_2$: \label{lem: Bounds on T1 and T2}
        \begin{enumerate}
            \item There exist constants \(c_0, \tilde{c}_0, \hat{c}_0\) such that the following tail bound holds for \( |T_1| \):
        \begin{align} 
                & \mathbb{P}\left( |T_1| \geq  c_0\frac{ n \|\pi\|_\infty^2 \sqrt{h} }{k\xi^2 }  +\frac{\tilde{c}_0  n \|\pi\|_\infty^2\sqrt{h}  } {k \xi^2 } \times   \vphantom{\left(\frac{\log^3 n }{ d_{\max} } \right)^{\frac{1}{4} } } \right. \notag \\[-0.3em] 
                & \left.
   \vphantom{\frac{n \|\pi\|_\infty^2 \sqrt{h}}{k\xi^2}}
    \ \ \left( \!\left(\frac{\log^3 n }{ d_{\max} } \right)^{\frac{1}{4} } \!\! +\!\sqrt{\frac{t}{ d_{\max} }} \!+ \!\left( \frac{ t^4 }{ k d_{\max } } \right)^{\frac{ 1}{6} }\!\!  +\sqrt{\frac{t^2}{ k d_{\max}}} \right)\!\right) \nonumber \\ 
     &  \  \ \ \ \ \ \ \ \ \ \ \ \ \ \ \ \ \ \ \leq 16 n e^{-t} + { \frac{ \hat{c}_0 }{n^3} } .\label{tail bound on T1}
         \end{align}
            \item If \( d_{\max} \geq (\log n)^4 \), there exists a constant \( c_\alpha \) such that, with probability at least \( 1 - O(n^{-3}) \), we have
             \begin{equation} \label{finalized bound on T1}
                 |T_{1}| \leq \frac{c_\alpha n \|\pi\|_\infty ^2 \sqrt{h} }{ k\xi^2 }.
             \end{equation}
            \item If \(  d_{\max} \geq (\log n)^4 \), there exist constants \( c_{\beta} \) and \( c_{\gamma} \) such that the following bound holds for \( |T_2| \), with probability at least \( 1 - O(n^{-3}) \):   
        \begin{align} 
                 |T_{2}| & \leq \frac{c_\beta \sqrt{n} \|\pi\|_\infty \sqrt{h} }{ \sqrt{k} \xi^2 } \|\Pi P + P \Pi - \PE(\mathbf{1}_n\pi^\T)   \|_\F \nonumber\\ 
                 & \quad \, + c_\gamma \frac{n\|\pi\|^2_\infty h}{ k \xi^4 } .\label{tail bound on T2}
        \end{align}
        \item If $d_{\max} \geq \log n$, the constants $c_\alpha$ and $c_\gamma$ defined above scale as $O(\sqrt{\log n})$.
        \end{enumerate}
    \end{lemma}
    
    The proof is provided in \cref{Proof of Bounds on T1 and T2}. The following lemma characterizes the mean and the variance of $T_3$.
    
    \begin{lemma}[Mean and Variance of $T_3$] \label{lem:H0proof}
        The following bounds hold for the mean and variance of $T_3$ as defined in \cref{definiton of T3}:
        \begin{enumerate}
            \item Mean of $T_3:$ \[ \E[T_3] =  \|\Pi P + P\Pi - \PE(\mathbf{1}_n\pi^\T) \|^2_\F .\]
            \item Variance of $T_3:$
            \[
            \begin{aligned}                
                \var(T_3) & \leq \frac{4\|\pi \|^2_{\infty}}{k}\|\Pi P+P \Pi-\PE(\mathbf{1}_n \pi^\T)\|_{\F}^{2} \\
                & \quad \, +\frac{4n d_{\max}}{k^2}\|\pi\|_{\infty}^{4} . 
            \end{aligned} 
            \]
        \end{enumerate}
    \end{lemma}
    The proof is provided in \cref{Proof of Mean and Variance of T3}. 
    Now under hypothesis $H_0$, by \cref{lem:H0proof}, $\E_{H_0}[T_3] = 0 $ and $\var_{H_0}(T_3) \leq \frac{4n d_{\max}\|\pi\|^4_\infty}{k^2}$. Moreover, the event $\{T \geq t\}$ can be written as 
    $$
    \begin{aligned}
        \{T \geq t\} & = \left(  \{T\geq t \} \cap \{T \leq T'\} \right ) \cup \left( \{ T \geq t\} \cap \{T > T'\} \right)\\
        & = \left(  \{t \leq T \leq T'\} \right ) \cup \left( \{ T \geq t\} \cap \{T > T'\} \right).
    \end{aligned}
    $$
    Define $\tilde{c}_\alpha = c_\alpha \sqrt{h}/\xi^2$, $\tilde{c}_\beta = c_\beta \sqrt{h}/\xi^2$ and $\tilde{c}_\gamma = c_\gamma h/\xi^4$.     Let $T' =  (\tilde{c}_\alpha + \tilde{c}_\gamma) \frac{\|\pi\|^2_\infty n}{k } + T_3. $ Then, we have (cf. \cite{RastogiBalakrishnanShahAarti2022})
    \begin{align}
        & \P_{H_0}(T  \geq t)   \leq \P_{H_0}(T' \geq t) + \P_{H_0}(T> T') \nonumber\\ 
        & \nonumber \leq \P_{H_0}\left( T_3 \geq t - (\tilde{c}_\alpha+\tilde{c}_\gamma) n\|\pi\|^2_\infty/{k} \right) + O\bigg(\frac{1}{n^3}\bigg) \\
        & \nonumber \stackrel{\zeta_1}{\leq}  \frac{\var_{H_0}(T_3)}{ \var_{H_0}(T_3) + (t - (\tilde{c}_\alpha+\tilde{c}_\gamma) n\|\pi\|^2_\infty/{k})^2} + O\bigg(\frac{1}{n^3}\bigg) \\
        & \nonumber \stackrel{\zeta_2}{\leq}  \frac{4nd_{\max}\|\pi\|^4_\infty/k^2}{4n d_{\max}\|\pi\|^4_\infty/k^2 + 16nd_{\max} \|\pi\|^4_\infty/k^2 } + O\bigg(\frac{1}{n^3}\bigg) \\
        & \stackrel{\zeta_3}{\leq} \frac{1}{4} \nonumber,
    \end{align}
    where $\zeta_1$ follows from one-sided Chebyshev's inequality \cite{HDP}, $\zeta_3$ follows for $n$ large enough, and in $\zeta_2$ we have substituted 
    \begin{equation*}    
      t =  4 \frac{\sqrt{nd_{\max}} \|\pi\|^2_\infty}{k}  + (\tilde{c}_\alpha+\tilde{c}_\gamma)\frac{n\|\pi\|^2_\infty}{k},
    \end{equation*}    
    which is clearly upper bounded by the theoretical threshold $\thr$ (independent of $\pi$) of our test given by
    \begin{align}
      \thr & \triangleq 4 \frac{\sqrt{d_{\max}} h^2}{n^{3/2}k} + (\tilde{c}_\alpha +\tilde{c}_\gamma )\frac{h^2}{n k } \nonumber \\
      &  = 4 \frac{\sqrt{d_{\max}} h^2}{n^{3/2}k} + \left( \frac{{c}_\alpha \sqrt{h}}{\xi^2} + \frac{{c}_\gamma h}{\xi^4}\right)\frac{h^2}{n k } .  \label{TestThreshold}
    \end{align}
    Similarly, under hypothesis $H_1$, for some random variable $T'$ and $t$, we have
        $$
        \begin{aligned}
            \{T < t\} & = \left(  \{T< t \} \cap \{T < T'\} \right ) \cup \left( \{ T <t\} \cap \{T \geq T'\} \right)\\
            & = \left( \{ T < t\} \cap \{T < T'\} \right) \cup \left(  \{T' \leq T < t\} \right ).
        \end{aligned}
        $$
    In this case, define $T' = T_3 -\Delta_T$, where 
        $$
        \begin{aligned}
            \Delta_T = \frac{\tilde{c}_\beta \sqrt{n}\|\pi\|_\infty}{\sqrt{k}} & \|\Pi P + P \Pi - \PE(\mathbf{1}_n\pi^\T) \|_\F \\
            & + (\tilde{c}_\alpha+\tilde{c}_\gamma) \frac{n\|\pi\|^2_\infty }{ k } .
        \end{aligned}
        $$
        Therefore, we can bound $\P_{H_1}(T< \thr)$ as
    \begin{align}
         &\P_{H_1}(T< \thr)  \leq \P_{H_1}(T' < \thr) + \P_{H_1}(T' > T)  \nonumber\\
         &  \leq \P_{H_1}\left(T_3 < \thr + \Delta_T  \right) + O\bigg(\frac{1}{n^3}\bigg) \nonumber\\
            & \nonumber \leq  \frac{\var_{H_1}(T_3)}{ \var_{H_1}(T_3) + (\E_{H_1}[T_3] - \thr - \Delta_T)^2} +  O\bigg(\frac{1}{n^3}\bigg)  \stackrel{\zeta}{\leq} \frac{1}{4},
    \end{align}
    where $\zeta$ is true if 
    $$ 4 \var_{H_1}(T_3) \leq  (\E_{H_1}[T_3] - \thr - \Delta_T)^2 . $$
    Let $D = \|\Pi P + P \Pi - \PE(\mathbf{1}_n\pi^\T)   \|_\F$. The above equation is true if 
    $$
        \begin{aligned}
            & 2\left(\frac{2h}{n\sqrt{k}} D +\frac{2 \sqrt{n d_{\max} } h^2 }{n^2 {k} }\right) \leq \\
            & \ \ \ \ \ \ \ \ \ \ \ \ \bigg ( D^{2} - \frac{ \tilde{c}_{\beta} D  h }{\sqrt{nk }}    - \frac{4h^2 \sqrt{ d_{\max}} }{n^{3/2}{k}} - 2(\tilde{c}_\alpha+\tilde{c}_\gamma) \frac{h^2}{nk}\bigg ) .
        \end{aligned}
    $$
    Substituting $D=\epsilon n \|\pi\|_\infty =  a/\sqrt{nk}$, we obtain the following equivalent condition:
    \begin{align}
     \frac{4 a h }{n^{3/2} k } + \frac{4h^2 \sqrt{d_{\max} } }{ n^{3/2} k } & \leq  \frac{a^2 n \|\pi\|_\infty^2}{k} - \frac{\tilde{c}_\beta a h }{nk } \nonumber \\ 
     & - 4 \frac{\sqrt{d_{\max}} h^2}{n^{3/2} k} - 2(\tilde{c}_\alpha+\tilde{c}_\gamma) \frac{h^2}{nk} . 
    \end{align}
    Using that $n\|\pi\|_\infty \leq 1$, This is equivalent to the following condition:
    \begin{equation}    
        \begin{aligned}
            \frac{4 a }{h n^{1/2} } + 4\sqrt{ \frac{ d_{\max} }{n} }  \leq  \frac{a^2}{h^2} -  \tilde{c}_\beta \frac{a}{h} - 4 \sqrt{ \frac{d_{\max}}{n} } - 2(\tilde{c}_\alpha+\tilde{c}_\gamma) .
        \end{aligned}
    \end{equation}
    Now, we can rewrite the above equation as a polynomial in $a$ as:
        \begin{align}
            \frac{a^2}{h^2} - &\left( \frac{\sqrt{h}}{\xi^2} c_\beta + \frac{4}{\sqrt{n}}\right) \frac{a}{h} \nonumber \\ 
            & \ \  \ \ \ \ \ \  - \left( 8\sqrt{ \frac{d_{\max}}{n} } + 2\left({c}_\alpha \frac{\sqrt{h}}{\xi^2}  + {c}_\gamma  \frac{h}{\xi^4} \right) \right) \geq 0. \label{Polynomial}  
        \end{align}
    Utilizing the fact that $\xi \leq 1$ and $h \geq 1$, the condition in \eqref{Polynomial} is clearly satisfied for a sufficiently large constant $a_0$ such that $a \geq a_0 h^{3/2}/\xi^2 $, which gives 
    $$
    \begin{aligned}
    n \|\pi\|_\infty \epsilon  \geq \frac{a_{0} h^{3/2}}{\xi^2 \sqrt{nk} } \implies \epsilon \geq \frac{a_{0} \sqrt{h^3} }{\sqrt{nk \xi^4}}.
    \end{aligned}
    $$
    Thus, for sufficiently large constant $a_0$, we have demonstrated that the sum of type \Romannum{1} and type \Romannum{2} errors is bounded by $\frac{1}{2}$. Combining this result with the definition of the critical threshold in \cref{critical radius}, we obtain the following bound on $\varepsilon_{\mathsf{c}}$:
$$
 \varepsilon_{\mathsf{c}}^2 \leq O\left( \frac{ h^3 }{ n k\xi^4 } \right). 
$$
This completes the proof.
\end{proof}
We remark that if a standard matrix Bernstein inequality \cite{HDP} were used in the proof of \cref{lem: Bounds on T1 and T2}, the constants $c_{\alpha}$ and $c_{\gamma}$ would scale as $(\log n)^{1/2}$. From \cref{Polynomial}, we would get an additional factor of $(\log n)^{1/2}$ in the scaling of $\varepsilon^2_\mathsf{c}$, thus proving \cref{Main prop with upper bound}.

\subsection{Proofs of Lemmata}

\subsubsection{Proof of \cref{lem: Bounds on T1 and T2}} \label{Proof of Bounds on T1 and T2} 
    For bounding $T_1$ we split $T_1$ as $T_{1}=T_{1 a}+T_{1 b}$, where
        $$
            \begin{aligned}
            &T_{1 a}\triangleq \sum_{\substack{(i,j) \in \calE: \\ i \neq j} }\left(\left(\hat{\pi}_{i}+\hat{\pi}_{j}\right)^{2}-\left(\pi_{i}+\pi_{j}\right)^{2}\right)\left(\hat{Y}_{i j}-p_{ij}^{2}\right) ,\\
            &T_{1 b}\triangleq \sum_{\substack{(i,j) \in \calE: \\ i \neq j} }-2\left(\hat{\pi}_{j}\left(\hat{\pi}_{i}+\hat{\pi}_{j}\right)-\pi_{j}\left(\pi_{i}+\pi_{j}\right)\right)\left(\hat{p}_{ij}-p_{ij}\right).
            \end{aligned}
        $$
    To establish an upper bound for $\P(T_{1a} +T_{1b} \geq t_1 + t_2)$, we utilize the following property: 
     \[ 
     \begin{aligned}
     \forall t_1, t_2 >0, \ \P(T_{1a} + T_{1b} & \geq t_1 + t_2) \\
     & \leq \P(T_{1a} \geq t_1) + \P(T_{1b} \geq t_2).         
     \end{aligned}
     \]
    Hence, we proceed by bounding tail bounds on $T_{1a}$ and $T_{1b}$ separately as below. These bounds will be derived when the event $\mathcal{A}_2$ holds, where we define
    \begin{equation}
        \mathcal{A}_2 \triangleq \bigg\{ \|\pihat - \pi\|_2 \leq c_2 \frac{ \sqrt{n} \|\pi\|_\infty }{\sqrt{k d_{\max}} } \bigg\} \label{eventA2},
    \end{equation}
    where we have absorbed the parameters $h, \xi$ in a special constant $c_2$ for conciseness and we have $c_2 = c \sqrt{h}/\xi^2$. Moreover, by \cref{lem:L2error}, we know that $\P(\mathcal{A}_2) \geq 1 - O(n^{-3})$.
    
    \textbf{Tail Bounds for $T_{1a}$:} Define a matrix $\Q$ as 
    $$Q_{i j}=\left\{\begin{array}{cc}\hat{Y}_{i j}-p_{ij}^{2} & \text{ if } i \neq j \text{ and } (i,j) \in \calE \\ 0 & \text{ otherwise.} \end{array}\right. .$$
    From \Cref{Yijproperties}, we have $\E[Q_{ij}] = 0$. Now we re-write $T_{1a}$ in terms of the matrix $Q$ as    
    \begin{align}
    T_{1 a} &=\sum_{\substack{(i,j) \in \calE: \\ i \neq j} }\left(\hat{\pi}_{i}-\pi_{i}+\hat{\pi}_{j}-\pi_{j}\right) Q_{i j}\left(\hat{\pi}_{i}+\pi_{i}+\hat{\pi}_{j}+\pi_{j}\right) \nonumber\\
            &=(\hat{\pi}-\pi)^\T\Q(\hat{\pi}+\pi)+(\hat{\pi}-\pi)^\T\Q^\T(\hat{\pi}+\pi) \nonumber\\ 
            & \ \ \ \ \ \ \ + \underbrace{\left(\pihat^{2}-\pi^{2}\right)^\T\Q \mathbf{1}_n +\mathbf{1}_n^\T\Q\left(\hat{\pi}^{2}-\pi^{2}\right)}_{\zeta_0}\nonumber \\
            &\stackrel{\zeta_1}{\leq}  2 c_2  \frac{\sqrt{n} \|\pi\|_{\infty}}{\sqrt{k d_{\max} }} \|Q\|_2 (2 \|\pi\|_{2} + \|\pihat - \pi\|_2) \nonumber \\ 
            & \ \ \ \ \ \ \ + 2 {c}_2  \frac{n\|\pi\|_{\infty}^{2}\|Q\|_{2}}{\sqrt{k d_{\max }}} \nonumber  + \tilde{c} {c}_2  \frac{ n\|\pi\|_\infty^2 }{ k }  \\
      & \stackrel{\zeta_2}{\leq} 4c_2 n \|\pi\|_{\infty}^{2} \frac{\|Q\|_2}{\sqrt{k d_{\max} } } + 2 c_2^2  n \|\pi\|^2_{\infty}\frac{\|Q\|_2 }{k d_{\max} } \nonumber \\
      & \ \ \ \ \ \ \ \  +  2{c}_2  \frac{n\|\pi\|_{\infty}^{2}\|Q\|_{2}}{\sqrt{k d_{\max }}}  + \tilde{c} {c}_2  \frac{ n\|\pi\|_\infty^2 }{ k }  \nonumber\\ 
      & \leq 6 {c}_2 n \|\pi\|_\infty^2 \frac{\|Q\|_2}{ \sqrt{k d_{\max}}} + \tilde{c} {c}_2  \frac{ n\|\pi\|_\infty^2 }{ k } ,  \label{Bound1} 
    \end{align}    
    where $\zeta_1$ follows from the fact $x^\T A y \leq \| x\|_2 \|y\|_2 \| A\|_2$ and since the event $\mathcal{A}_2$ holds. In $\zeta_2$, we utilize the fact that $\|\pi\|_2 \leq \sqrt{n} \|\pi\|_\infty$, and the last inequality follows since $k \geq O(h/(\dmax \xi^4))$. The term  $\zeta_0$ is upper bounded as
     \begin{equation}\label{that long derivation}
     \zeta_0 \leq 2{c}_2  \frac{n\|\pi\|_{\infty}^{2}\|Q\|_{2}}{\sqrt{k d_{\max }}} + \tilde{c} {c}_2  \frac{ n\|\pi\|_\infty^2 }{ k } .
     \end{equation}
    The procedure for deriving this bound is somewhat involved and is presented below. 
    
    \textbf{Bounding $\zeta_0$:} Note that
\begin{align}
 & \nonumber \left(\hat{\pi}^{2}-\pi^{2}\right)^{\T} Q \1_{n} \\ 
 & =(\hat{\pi}-\pi)^{\T}(\hat{\Pi}+\Pi) Q \1_{n} = (\hat{\pi}-\pi)^{\T}(\hat{\Pi}- \Pi + 2\Pi) Q \1_{n} \nonumber\\
& =(\hat{\pi}-\pi)^{2^\T} Q \1_{n}+2(\hat{\pi}-\pi)^{\T} \Pi Q \1_{n} \nonumber\\
& \leq\|\hat{\pi}-\pi\|_{2}^{2}\left\|Q \1_{n}\right\|_{\infty}+2\|\hat{\pi}-\pi\|_{2}\|\Pi\|_{2}\|Q\|_{2}\left\|\1_{n}\right\|_{2} \nonumber\\
& \leq \frac{c_{2}^{2} n  \|\pi\|_{\infty}^2 }{k d_{ {\max }}}\left\|Q \1_{n}\right\|_{\infty}+\frac{2 c_{2} \sqrt{n} \|\pi\|_{\infty} }{\sqrt{k d_{ {\max }}}}\|\pi\|_{\infty}\|Q\|_{2} \sqrt{n} \nonumber\\
& \leq \frac{c_{2}^{2}n\|\pi\|_{\infty}^{2}}{k d_{ {\max }}}\left\|Q \1_{n}\right\|_{\infty}+\frac{2 c_{2} n\|\pi\|_{\infty}^2}{\sqrt{k d_{ {\max }}}}\|Q\|_{2}. 
\label{ first simplified bound Q}
\end{align}
Now we will establish concentration bounds for $\|Q\1_n\|_\infty$. To accomplish this, we will utilize McDiarmid's inequality \cite{HDP}. However, first note that
$$
\begin{aligned}
    (Q & \1_{n})_{i} =\sum_{j:(i, j) \in \calE}\left(\hat{Y}_{i j}-p_{i j}^{2}\right) \\ 
    & =\sum_{j:(i, j) \in \calE} \frac{\left(\sum_{m=1}^{k} Z_{m,{i j}}\right)\left(\sum_{m=1}^{k} Z_{m,{i j}}-1\right)}{k(k-1)}-p_{i j}^{2}.
\end{aligned}
$$
Define quantity $V_{ij}$ for $(i,j) \in \calE$ as 
$$
V_{i j}=\frac{\left(\sum_{m=1}^{k} Z_{m,{i j}}\right)\left(\sum_{m=1}^{k} Z_{m, i j}-1\right)}{k(k-1)}.
$$
Let $V_{i j}^{\prime}$ be the value of $V_{i j}$ when one of $Z_{ m, {i j}}$ is replaced by $Z_{m,{i j}}^{\prime}$, i.e., we have
$$ V_{i j}^{\prime}=\frac{\left(Z_{i j}+Z_{m,{i j}}^{\prime}-Z_{m,{i j}}\right)\left(Z_{i j}+Z_{m,{i j}}^{\prime}-Z_{m,{i j}}-1\right)}{k(k-1)}.
$$
Now, the absolute difference $|V^{\prime}_{i j}-V_{i j}|$ is bounded as
$$
\begin{aligned}
\left|V^{\prime}_{i j}-V_{i j}\right| & =\bigg| \frac{2\left(Z_{m,{i j}}^{\prime} - Z_{m,{i j}}\right) Z_{ij}}{k(k-1)} \\ 
& \ \ \ \ \ \ \ \ \ \ +\frac{\left(Z_{m,{i j}}^{\prime}-Z_{m,ij}\right)\left(Z_{m,{i j}}^{\prime}-Z_{m,{i j}-1}\right)}{k(k-1)} \bigg| \\
& \leq \frac{2|Z_{m,{i j}}^{\prime}-Z_{ m,{i j}}|}{k(k-1)}\left|Z_{i j}+\frac{(Z_{m,{i j}}^{\prime}-Z_{m,{i j}}-1)}{2} \right| \\
& \leq \frac{2}{k-1}.
\end{aligned}
$$
An application of McDiarmid's inequality gives
\[ \mathbb{P}\left(\left|\sum_{j:(i, j) \in \calE} \hat{Y}_{i j}-p_{i j}^{2}\right|>t\right) \leq 2 \exp\left(\frac{-2 t^{2}}{k d_{ \max} \left(\frac{2}{k-1}\right)^{2} } \right) .\]
Substituting $t = \tilde{c} \sqrt{\frac{d_{\max} \log n }{k}}$, for some constant $\tilde{c}$, we obtain the following bound:
$$
\forall \ i \in [n], \ \P\bigg( \left(Q \1_{n}\right)_{i} \geq \tilde{c} \sqrt{\frac{d_{\max} \log n }{k}} \bigg) \leq O(n^{-4}).
$$
Therefore, using the union bound, we have
\begin{align} 
\P\bigg(\left\|Q \1_{n}\right\|_{\infty} & \geq \tilde{c} \sqrt{\frac{d_{\max} \log n}{k}}\bigg) \nonumber\\
& \leq \sum_{i=1}^n \P\bigg(\left(Q \1_{n}\right)_{i} \geq  \tilde{c} \sqrt{\frac{d_{\max} \log n}{k}}\bigg) \nonumber\\
& \leq O(n^{-3}). \label{second simplified bound Q} 
\end{align}
Combining \cref{ first simplified bound Q} and \cref{second simplified bound Q}, and utilizing the fact that $c_2 = c\frac{\sqrt{h}}{\xi^2}$ and  $\xi^4 k d_{\max} \geq O(h \log n) $, we have the following bound with high probability:
\begin{align}
\left(\pi^{2}\!-\!\pi^{2}\right)^\T Q \1_{n}\! & \nonumber \leq \tilde{c} c_{2}  \frac{ n\|\pi\|^2_{\infty}}{k } \sqrt{\frac{c h \log n}{ \xi^4 k d_{\max} }}+2 c_{2} \frac{n\|\pi\|_{\infty}^{2}\|Q\|_{2}}{\sqrt{k d_{\max }}} \\ 
& \leq \tilde{c} c_{2}  \frac{ n\|\pi\|^2_{\infty}}{k } + 2 {c}_2  \frac{n\|\pi\|_{\infty}^{2}\|Q\|_{2}}{\sqrt{k d_{\max }}} . \label{that same argument}
\end{align}
In a similar manner, we can bound the term $\1_n^\T Q(\pihat^2 - \pi^2)$, and thus, we obtain the bound in \cref{that long derivation}.

    \textbf{Bounding $\|Q\|_2$:} Now it remains to show that $\|Q\|_2 \leq O(\sqrt{ d_{\max}/k})$ with high probability. In the case of a complete graph, it is much easier to show because each entry $Q_{i j}$ is bounded, and therefore $\Q$ is a random sub-Gaussian matrix, and the variance of each entry is upper bounded by $4/k$ (by \cref{Yijproperties}). Hence, by \cite[Theorem 4.4.5]{HDP}, the spectral norm $\|\Q\|_2 \leq 2c_q  (2\sqrt{d_{\max}} + t)/\sqrt{k}$ for some constant $c_q$ with probability at least $1 - 2e^{-t^2}$. Substituting $t = \sqrt{\log n}$, we get the following bound with a probability at least $1 - O(1/n^3)$ 
    \begin{equation}
        \|\Q\|_2 \leq 6c_q  \sqrt{\frac{ d_{\max} }{k}}. \label{SpectralBoundQ}
    \end{equation}
    For a general graph model (with $d_{\max} \geq \log n$), an application of matrix Bernstein inequality \cite{HDP} yields $\|Q\|_2 \leq O( \sqrt{ {d_{\max} \log n}/{k}} )$ (with high probability). The extra $\log n$ factor becomes a bottleneck later in the analysis. However, using recent advances in concentration inequalities \cite[ Corollary 2.15]{brailovskaya2022} we can indeed show that if $d_{\max} \geq (\log n)^4$, then $\|Q\|_2 \leq O( \sqrt{ d_{\max}/k }) $ with high probability. The tail bounds of $\|Q\|_2$ are computed in \cref{lem:Spectral Norm of Error Improved2}, and therefore, we have
    \begin{align}
    \mathbb{P}\bigg(\|Q\|_2 & \geq \sqrt{\frac{24 d_{\max }}{k}}+c\bigg(\frac{d_{\max }^{1 / 4}}{\sqrt{k}}(\log n)^{3 / 4}+\sqrt{\frac{t}{k}} \nonumber \\ 
    &\ \ \  +\frac{d_{\max }^{1 / 3}t^{2 / 3}}{( k^2 (k-1) )^{1 / 3}} + \frac{t}{k(k-1)}\bigg)\bigg) \leq 4 n e^{-t} .\label{Bound2}
    \end{align}
    Substituting $t = c \log n$, we obtain the bound $ \|Q\|_2 \leq O(\sqrt{d_{\max}/k})$, which by \cref{Bound1} implies the bound on $T_{1a}$ as $T_{1a} \leq O(c_2 n \|\pi\|_\infty^2 /k)$. 
    
    \textbf{Tail Bounds for $T_{1b}$:} Now we bound the quantity $T_{1b}$ as 
    \begin{align}
    T_{1 b} 
    &\nonumber = -2\sum_{\substack{(i,j) \in \calE: \\ i \neq j} }\left(\hat{\pi}_{j}^{2}-\pi_{j}^{2}+\hat{\pi}_{i} \hat{\pi}_{j}-\pi_{i} \pi_{j}\right)\left(\hat{p}_{ij}-p_{ij}\right) \\
    &\nonumber =\sum_{j=1}^n-2\left(\hat{\pi}_{j}^{2}-\pi_{j}^{2}\right)\left(\sum_{\substack{i: (i,j) \in \calE,\\ i \neq j} } (\hat{p}_{ij}-p_{ij})\right) \\
    & \nonumber\ \ \ \ \ \ \ \   -2\sum_{\substack{(i,j) \in \calE: \\ i \neq j} } \left(\hat{\pi}_{i}-\pi_{i}\right)\left(\hat{\pi}_j-\pi_{j}\right)\left(\hat{p}_{ij}-p_{ij}\right) \\
    &\nonumber \ \ \ \ \ \ \ \ \ \  -2\sum_{\substack{(i,j) \in \calE: \\ i \neq j} } \pi_{i}\left(\hat{p}_{ij}-p_{ij}\right)\left(\hat{\pi}_{j}-\pi_{j}\right) \\
    & \nonumber \ \ \ \ \ \ \ \ \ \ -2\sum_{\substack{(i,j) \in \calE: \\ i \neq j} } \pi_{j}\left(\hat{p}_{ij}-p_{ij}\right)\left(\hat{\pi}_{i}-\pi_{i}\right) \\
    &\nonumber =  \underbrace{-2 \mathbf{1}_n^\T(\hat{P}-P)\left(\hat{\pi}^{2}-\pi^{2}\right)}_{\zeta_0} -2(\pihat-\pi)^\T(\hat{P}-P)(\hat{\pi}-\pi) \\
    &\nonumber  \ \ \ \ \ -2  \pi^\T(\hat{P}-P)(\hat{\pi}-\pi)-2(\hat{\pi}-\pi)^\T \left(\hat{P}^\T-P^\T\right) \pi \\
    &\nonumber \leq \frac{4c_2 n\left\|{\pi}\right\|_{\infty}^{2}}{\sqrt{k d_{\max }}}\|\hat{P}-P\|_{2}  + \tilde{c} c_2 \frac{n \|\pi\|_\infty^2}{k } + \\
    & \ \ \ \ \ \ \nonumber 2\|\pihat -\pi\|^2_{2} \|\hat{P}-P\|_2  + 4\|\pi\|_{2} \|\hat{P}-P\|_2\|\hat{\pi}-\pi\|_{2} \\
    &\nonumber \stackrel{\zeta_1}{\leq} \frac{4c_2 n\left\|{\pi}\right\|_{\infty}^{2}}{\sqrt{k d_{\max }}}\|\hat{P}-P\|_{2}+ \tilde{c} c_2 \frac{n \|\pi\|_\infty^2}{k }  \\
    &\nonumber  \ \ \ \ \ + 2 c_2^2 n \|\pi\|^2_{\infty} \frac{ \|\hat{P}-P\|_2}{ k d_{\max} }  + 4  c_2 n \|\pi\|_\infty^2\frac{\|\hat{P}-P\|_2 }{ \sqrt{k d_{\max}} }   \\
    & \stackrel{\zeta_2}{\leq}  \hat{c} c_2  n\|\pi\|_{\infty}^2 \frac{ \|\hat{P}-P\|_2 }{\sqrt{k d_{\max}}} + \tilde{c} c_2 \frac{n \|\pi\|_\infty^2}{k }, \label{Bound3}
    \end{align}
    where $\zeta_1$ holds on the event $\mathcal{A}_2$ and the term $\zeta_0$ is bounded as
    \begin{align}
          |2 \mathbf{1}_n^\T(\hat{P}& -P)\left(\hat{\pi}^{2}- \pi^{2}\right)| \nonumber \\ & \ \ \ \ \ \leq \frac{4c_2 n\left\|{\pi}\right\|_{\infty}^{2}}{\sqrt{k d_{\max }}}\|\hat{P}-P\|_{2} + \tilde{c} c_2 \frac{n \|\pi\|_\infty^2}{k }.\label{that long derivation2}
    \end{align}
   Moreover, ${\zeta_2}$ follows, for some constant $\hat{c}$, since $\sqrt{c_2^2 / (k d_{\max}) } = \sqrt{c h / (k d_{\max} \xi^4) } \leq O(1)$ by our assumption in \cref{Main theorem with upper bound}.
   
    \textbf{Bounding $\zeta_0$:}\footnote{The $\zeta_0$ notation is overloaded. Note that we bound a different $\zeta_0$ term this time to the one we addressed earlier in this proof.} To bound $\1_{n}^{\T}(\hat{P}-P)\left(\hat{\pi}^{2}-\pi^{2}\right)$, we utilize the same technique as used in \cref{that long derivation}. Observe that
\begin{align}
 & \nonumber\1_{n}^{\T}(\hat{P}-P)\left(\hat{\pi}^{2}-\pi^{2}\right)\\ 
 & \nonumber =\1_{n}^{\T}(\hat{P}-P)(\hat{\Pi}+\Pi)(\hat{\pi}-\pi) \\
 & \nonumber =\1_{n}^{\T}(\hat{P}-P)(\hat{\Pi}-\Pi + 2\Pi)(\hat{\pi}-\pi) \\
&\nonumber =\1_{n}^{\T}(\hat{P}-P)((\hat{\pi}-\pi)^{2})+2 \1_{n}^\T\left(\hat{P}-P\right) \Pi(\hat{\pi}-\pi) \\
& \nonumber\leq\|\1_{n}^{\T}(\hat{P}\!-\!P)\|_{\infty}\|\hat{\pi}-\pi\|_{2}^{2}+2 \sqrt{n} \| \hat{P} -P\left\|_{2}\right\| \pi\left\|_{\infty}\right\|\hat{\pi}\!-\!\pi \|_{2} \\
&\nonumber \leq\|\1_{n}(\hat{P}- P)\|_{\infty} \frac{c_{2}^{2} n \|\pi\|_{\infty}^{2}}{k d_{\max }}\!+\!2 \sqrt{n}\|\hat{P}- P\|_{2}\frac{ c_{2} \sqrt{n} \|\pi\|^2_{\infty}}{\sqrt{k d_{\max }}}\\
& \leq\|\1_{n}(\hat{P}-P)\|_{\infty} \frac{c_{2}^{2}n \|\pi\|_{\infty}^{2}}{k d_{\max }}+2 c_{2} \frac{n\|\pi\|_{\infty}^{2}}{\sqrt{k d_{\max }}} \|\hat{P}  -P\|_{2} .\label{second simplified bound P}
\end{align}
Now to find concentration bounds for $\|\1_{n}^{\T}(\hat{P}-P)\|_\infty$, observe that its $i$th component can be expressed as
$$
\begin{aligned}
    (\1_{n}^{\T}(\hat{P}-P))_{i}& =\sum_{j:(i, j) \in \calE} (\hat{p}_{j i}-p_{j i})\\ 
    & =\frac{1}{k} \sum_{j: (i, j) \in \calE}\sum_{m=1}^k\left(Z_{m,{j i}}-p_{j i}\right).
\end{aligned}
$$
Utilizing Hoeffding's inequality \cite{HDP}, we get
$$
\P(|(\1_{n}(\hat{P}-P))_{i}|>t) \leq 2 \exp \bigg(\frac{-2 t^{2}}{k d_{\max } (\frac{1}{k^{2}}) }\bigg).
$$
Substituting, $t = c \sqrt{\frac{d_{\max } \log n}{k}}$ for some constant $c$ and utilizing the union bound as in \cref{second simplified bound Q}, we obtain the bound the following bound with probability at least $1 - O(n^{-4})$.
\begin{equation}
\| \1_{n}^\T (\hat{P} - P )\|_{\infty} \leq c \sqrt{\frac{d_{ \max} \log n}{k}}. \label{conc2inf}    
\end{equation}
Substituting the above bound in \cref{second simplified bound P} and using the same argument as in  \cref{that same argument}, we obtain the bound in \cref{that long derivation2} for some constant $\tilde{c} $.

    Now again we need to show that $\|\hat{P} -P\|_2 \leq O(\sqrt{ d_{\max}/k} )$ with high probability. In case of complete graph, by \cite[Theorem 4.4.5]{HDP}, we have $\|\hat{P} -P \|_2 \leq 6c_p \sqrt{d_{\max}/k}$ with high probability for some constant $c_p$ by the same argument as \eqref{SpectralBoundQ} (since $\var(\hat{p}_{ij}) \leq 1/k$). For a general graph model, and by application of matrix Bernstein inequality, again a $\log n$ factor becomes a bottleneck as we obtain $\|\hat{P}- P\|_2 \leq O\big(\sqrt{ \frac{d_{\max} \log n}{k}} \big)$ (with high probability). Therefore, we again utilize \cite[ Corollary 2.15]{brailovskaya2022} to obtain tighter concentration bounds on $\|\hat{P}- P\|_2 $. Applying, \cref{lem:Spectral Norm of Error Improved} we obtain
\begin{align} 
& \mathbb{P}\left(\|\hat{P} - P\|_2 \geq \sqrt{\frac{d_{\max }}{k}}+c\left(\frac{d_{\max }^{1 / 4}}{\sqrt{4 k}}(\log n)^{3 / 4}+\sqrt{\frac{t}{4 k}} \nonumber \vphantom{ +\frac{d_{\max }{ }^{1 / 3}}{(2 k)^{2 / 3}} t^{2 / 3} } \right. \right.  \\ 
    &\ \ \ \ \ \ \ \ \ \ \ \ \ \ \ \ \ \ \ \ \ \ \ \left. \left. \vphantom{\frac{d_{\max }^{1 / 4}}{\sqrt{4 k}}(\log n)^{3 / 4}}  +\frac{d_{\max }^{1 / 3}}{(2 k)^{2 / 3}} t^{2 / 3}+\frac{t}{k}\right)\right) \leq 4 n e^{-t} .\label{Bound4}
\end{align}
Now combining \cref{Bound1,Bound2,Bound3,Bound4}, we obtain  
    \begin{align}
        & \mathbb{P}\left(T_1 \geq  c_0\frac{ n \|\pi\|_\infty^2 \sqrt{h} }{k \xi^2} +\frac{\tilde{c}_0  n \|\pi\|_\infty^2 \sqrt{h}}{k \xi^2 } \left( \left(\frac{\log^3 n }{ d_{\max} } \right)^{\frac{1}{4}}+  \right. \right.  \nonumber \\
        &\left. \left. \!\sqrt{\frac{t}{ d_{\max} }} + \left( \frac{ t^4 }{ k d_{\max } } \right)^{ \frac{1}{6} } \!\! +\sqrt{\frac{t^2}{ k d_{\max}}} \right)\right) \leq 8 n e^{-t} + O\bigg(\frac{1}{n^3}\bigg), 
    \end{align}
    where the term $O\big(\frac{1}{n^3}\big)$ is added to account for the probability with which the event $\mathcal{A}_2$, \cref{second simplified bound Q}, and \cref{conc2inf} hold.
    Substituting $t = \bar{c} \log n$ for some constant $\bar{c}$, and utilizing the fact that $d_{\max} \geq (\log n)^4$, we obtain that there exists a constant $c_{\alpha}$ such that with probability at least $1 - O(1/n^3)$, we have $T_1 \leq c_\alpha\frac{ n \|\pi\|_\infty^2 \sqrt{h}}{k \xi^2} $. The corresponding lower bounds on $T_1$ can be obtained in a similar manner, and therefore, for the tail bound on $|T_1|$, we get a factor of $2$ on the tail-probability, thus proving \cref{tail bound on T1}. 

    \textbf{Bounding $T_2$: }{ Define matrices $P_{2}$ and $P_{3}$ as follows
    $$
    \begin{aligned}
        P_{2}& \triangleq \left \{\begin{array}{cc} p_{ij}^{2}, & i \neq j \text{ and } (i,j) \in \calE \\ 1/2, & i = j \\
        0, & \text{ otherwise} \end{array} \right. ,    \\   
         P_{3} & \triangleq \left\{\begin{array}{cc}\left(1-p_{ji}\right)^{2} + p_{ij}^{2}, & i \neq j \text{ and } (i,j) \in \calE \\
         0, & \text{ otherwise} \end{array}\right. .
             \end{aligned}
    $$
    Now we will bound $T_{2}$ by simplifying it as
    \begin{align*}
    T_{2}& =\sum_{\substack{(i,j) \in \calE: \\ i \neq j} } \left(\left(\hat{\pi}_{i}+\hat{\pi}_{j}\right)^{2}-\left(\pi_{i}+\pi_{j}\right)^{2}\right) p_{ij}^{2}+\hat{\pi}_{j}^{2} -\pi_{j}^{2} \\
    & \ \ \ \ \ \ \ \ \ \ \ \ \ \  -2\left(\hat{\pi}_{j}\left(\hat{\pi}_{i}+\hat{\pi}_{j}\right)-\pi_{j}\left(\pi_{i}+\pi_{j}\right)\right) p_{ij} \\
    & =\sum_{\substack{(i,j) \in \calE: \\ i \neq j} }\left(\hat{\pi}_{i}^{2}-\pi_{i}^{2}\right) p_{ij}^{2} + \left(\hat{\pi}_{j}^{2}-\pi_{j}^{2}\right)\left(p_{ij}^{2}+1-2 p_{ij}\right) \\
    & \ \ \ \ \ \ \ \ \ \ \ \ \ \  +2 \left(\hat{\pi}_{i} \pihat_{j}-\pi_{i} \pi_{j}\right)\left(p_{ij}^{2}-p_{ij}\right) \\
    & = \sum_{\substack{(i,j) \in \calE: \\ i \neq j} }\left(\hat{\pi}_{i}-\pi_{i}\right) \left(\hat{\pi}_{i}+\pi_{i}\right)  \left( p_{ij}^{2} + \left(1-p_{ji}\right)^{2} \right) \\
    & \ \ \ \ \ \ \ \ \ \ \  + 2 \left(\pihat_{i}-\pi_{i}\right) \left(\hat{\pi}_{j}-\pi_{j}\right)\left(p_{ij}^{2}-p_{ij}\right) \\
    & \ \ \ \ \   +2\pi_{i}\left(\hat{\pi}_{j}-\pi_{j}\right)\left(p_{ij}^{2}-p_{ij}\right) +2\pi_{j}\left(\hat{\pi}_{i}-\pi_{i}\right)\left(p_{ij}^{2}-p_{ij}\right) \\
    & = \sum_{\substack{(i,j) \in \calE: \\ i \neq j} }\left(\hat{\pi}_{i}-\pi_{i}\right)^2 ( p_{ij}^{2} + \left(1-p_{ji}\right)^{2} )  + 2\left(\hat{\pi}_{i}-\pi_{i}\right) \times \\
    &  \left(  {\pi}_{i}  ( p_{ij}^{2} + \left(1-p_{ji}\right)^{2} ) 
    +\pi_{j}\left(p_{j i}^{2}-p_{j i}\right) +\pi_{j}\left(p_{ij}^{2}-p_{ij}\right) \right) \\ 
    & \ \ \ \ \ \ \ + 2 \left(\pihat_{i}-\pi_{i}\right) \left(\hat{\pi}_{j}-\pi_{j}\right)\left(p_{ij}^{2}-p_{ij}\right) \\
    & =(\hat{\pi}-\pi)^{2^\T}P_{3} \mathbf{1}_n + \\ & \ \ \ \ \ \ \ \ \ \ \  \sum_{\substack{(i,j) \in \calE: \\ i \neq j} }2\left( \hat{\pi}_{i}-\pi_{i}\right) p_{ij} \left(  {\pi}_{i} p_{ij} + \pi_{j} p_{ij} - \pi_j  \right) \\
    & \ \ \ \ \ \ \ \ \ \ + 2\left( \hat{\pi}_{i} -\pi_{i}\right)  \left(1-p_{ji}\right) \left(\pi_i - \pi_i p_{ji} -\pi_j p_{ji} \right)\\
    & \ \ \ \ \ \ \ \ \ \ \ \ \ +2 \left(\hat{\pi} -\pi\right)^\T (P_{2} -P)\left (\hat{\pi} -\pi\right) \\
    & \stackrel{\zeta_1}{\leq} \|\hat{\pi}-\pi\|^2_2 \|P_{3}\1_n\|_\infty \\ 
    & \ \ \ \ \ \ \ \ \ + 4\sqrt{d_{\max} }\|\pihat -\pi \|_2 \| \Pi P + P\Pi - \PE(\mathbf{1}_n \pi^\T) \|_\F \\
    &  \ \ \ \ \ \ \ \ \ \ \ + \|\pihat -\pi \|^2_2 \|P - P_2\|_2 \\
    & \stackrel{\zeta_2}{\leq} \frac{4c_2 \sqrt{n} \|\pi\|_\infty }{\sqrt{k}}\| \Pi P + P\Pi - \PE(\mathbf{1}_n \pi^\T) \|_\F\! +\! \frac{3 c^2_2 n \|\pi\|^2_\infty}{k} ,
    \end{align*}
    where the explanation for the middle term of $\zeta_1$ is provided below, and $\zeta_2$ follows since the event $\mathcal{A}_2$ holds and $P - P_2$ is a non-negative matrix, which implies that 
    $$
    \begin{aligned}
        \|P -P_2\|_2 &\leq \sqrt{ \max_{i \in [n]} \|e_i^\T(P - P_2)\|_1  } \sqrt{ \max_{j \in [n]} \|(P - P_2)e_j\|_1} \\
        & \leq  d_{\max}. 
    \end{aligned}
    $$ 
    Moreover, we have $\|P_3 \1_n\|_\infty \leq 2 d_{\max}$. The middle term in $\zeta_1$ is bounded as  
    \[
    \begin{aligned}
        & \sum_{\substack{(i,j) \in \calE: \\ i \neq j} }2\left( \hat{\pi}_{i}-\pi_{i}\right) p_{ij} \left(  {\pi}_{i} p_{ij} + \pi_{j} p_{ij} - \pi_j  \right) \\
         & \ \ \ \ \ \ \  = \sum_{i=1}^n 2\left( \hat{\pi}_{i}-\pi_{i}\right) \sum_{\substack{j: (i,j) \in \calE, \\ j \neq i} } p_{ij} \left(  {\pi}_{i} p_{ij} + \pi_{j} p_{ij} - \pi_j  \right) \\
        &\ \ \ \ \ \ \   \leq 2\sqrt{d_{\max}} \|\hat{\pi}-\pi\|_2 \|\Pi P + P\Pi - \PE(\1_n \pi^\T)  \|_\F ,
    \end{aligned}
    \]
    where last inequality follows by utilizing the fact at most $d_{\max}$ entries are non-zero in any row of the matrix $\Pi P + P\Pi - \PE(\1_n \pi^\T),$ and the fact that $(\sum_{i=1}^n a_i)^2 \leq (\sum_{i=1}^n |a_i|)^2 \leq n \sum_{i=1}^n a_i^2$.  This holds because the absolute value of each entry of $P_2 -P$ and $P_3$ is upper bounded by 1. 
    Similarly, we can obtain the corresponding lower bounds on $T_2$ in the same fashion as above, and this completes the proof. \qed

\subsubsection{Proof of \cref{lem:H0proof}} \label{Proof of Mean and Variance of T3}
~\newline
    \indent  
    \textbf{Part 1:} Since $Z_{ij} \sim \text{Bin}(k_{ij},p_{ij})$, we have
    \begin{align*}
        \E[T_3] &= \sum_{i=1}^n \sum_{j=1 }^n(\pi_i+\pi_j)^2  \frac{\E[Z_{ij}^2]  - \E[Z_{ij}]}{k_{ij}(k_{ij} - 1)} +\pi_j^2 \\ 
        & \ \ \ \ \ \ \ \ \ \ \ \ \ \ - 2(\pi_i + \pi_j) \pi_j \frac{\E[Z_{ij}] }{k_{ij}} \\
        & =  \sum_{i=1}^n \sum_{j=1 }^n (\pi_i+\pi_j)^2p_{ij}^2 + \pi_j^2  - 2(\pi_i + \pi_j) \pi_j p_{ij}\\
        & = \|\Pi P - P\Pi -\PE(\1_n \pi^\T) \|^2_\F .
    \end{align*}

    \textbf{Part 2:} Now, to bound $\var(T_3)$, we assume $k_{ij} = k$ for all $i,j \in [n]$, and thus, we have
     \begin{align*}
    & \var\left(T_{3}\right) \! =\!\sum_{\substack{(i,j) \in \calE: \\ i \neq j} }\! \left(\pi_{i}+\pi_{j}\right)^{4} \var(\hat{Y}_{i j}) + 4 \pi_{j}^{2}\left(\pi_{i}+\pi_{j}\right)^{2} \var(\hat{p}_{ij}) \\
     &  \ \ \ \ \ \ \ \ -4 \left(\pi_{i}+\pi_{j}\right)^{3} \pi_{j}\left(\E[\hat{Y}_{i j} \hat{p}_{ij}]-\E[\hat{Y}_{i j}] \E[\hat{p}_{ij}]\right) \\
    & \stackrel{\zeta_1}{=} \sum_{\substack{(i,j) \in \calE: \\ i \neq j} } \left(\pi_{i}+\pi_{j}\right)^{4}\left(\frac{2 p_{ij}^{2}+4(k-2) p_{ij}^{3}+(6-4 k) p_{ij}^{4}}{k(k-1)}\right) \\ 
    &  \ \ \ \ \ \ \ \ \ \ \ \ \ \ \ +\frac{4 \pi_{j}^{2}\left(\pi_{i}+\pi_{j}\right)^{2} p_{ij}\left(1-p_{ij}\right)}{k} \\
    & \ \ \ \ \ \ \ \ \  -4 \pi_{j}\left(\pi_{i}+\pi_{j}\right)^{3}\left(\frac{ 2 p_{ij}^2 - 2 p_{ij}^3}{k}\right) \\
    & \stackrel{}{=} \sum_{\substack{(i,j) \in \calE: \\ i \neq j} } \frac{2(\pi_{i}+\pi_{j})^{2}}{k} \bigg(\frac{(\pi_{i}+\pi_{j})^{2} (p_{ij}^{2} - 2p_{ij}^3 +p_{ij}^4)}{k-1} \\
    &  \ \ \ \ \ \ \ \ \ \ \ + 2 (\pi_{i}+\pi_{j})^{2}p_{ij}^3(1 - p_{ij}) \\ 
    & \ \ \ \ \ \ \ \ \ \ \  + 2 \pi_j^2 p_{ij}(1-p_{ij}) -  4\pi_j (\pi_{i}+\pi_{j}) p_{ij}^{2}(1-p_{ij}) \bigg)\\ 
    & \stackrel{\zeta_2}{\leq } \sum_{\substack{(i,j) \in \calE: \\ i \neq j} } \frac{2(\pi_{i}+\pi_{j})^{2}}{k} \bigg(\frac{(\pi_{i}+\pi_{j})^{2} p_{ij}^{2}(1 - p_{ij})^2}{k-1} \\ 
    &  \ \ \ \ \ \ \ \ \ \ \ \ \ \ \ \  \ \ \ \ \ \ \ \ \ \ \ \ \ \ \ \ \ \ + \frac{1}{2}( (\pi_i +\pi_j)p_{ij} - \pi_j )^2  \bigg) \\
    & \stackrel{\zeta_3}{\leq}  \frac{4 \| \pi\|_{\infty}^{2}}{k }\|\Pi P+P \Pi-\PE(\1_n \pi^\T)\|_{\F}^{2} \\ 
    & \ \ \ \ \ \ \ \ \ \ \ \ \ \ \ \ \ \ \ \ \ \ \ \ + \frac{32 \|\pi\|_{\infty}^{4}n d_{\max} }{k(k-1)} \times \frac{1}{16} \\
    & \leq  \frac{4\|\pi \|^2_{\infty}}{k}\|\Pi P+P \Pi-\PE(\1_n \pi^\T)\|_{\F}^{2}+\frac{4}{k^2}n d_{\max} \|\pi\|_{\infty}^{4},
    \end{align*}
    where $\zeta_1$ follows from the moments of the binomial random variable as described in \cref{Yijproperties} along with additional calculations provided below, $\zeta_2$ follows by upper bounding $p_{ij}(1-p_{ij})$ by $1/4$ and $\zeta_3$ follows by upper bounding $\pi_i$ by $\|\pi\|_\infty$ and $p_{ij}^2(1-p_{ij})^2 $ by $1/16$. To show $\zeta_1$, note that
    
    \begin{align*}
    & \E[\hat{Y}_{i j} \hat{p}_{ij}]-\E[\hat{Y}_{i j}]  \E[\hat{p}_{ij}]  = 
     \E\bigg[ \frac{Z^2_{ij}(Z_{ij}-1)}{k^2 (k -1)}\bigg]-p_{ij}^3 \\
    & = \frac{ k(k-1)(k-2)p_{ij}^3 + 3k(k-1)p_{ij}^2+ k p_{ij}}{k^2 (k -1)} \\
    & \ \ \ \ \ \ \ \ \ \  - \frac{kp_{ij}(1-p_{ij}) + k^2p_{ij}^2}{k^2 (k -1) } - p_{ij}^3\\
    & = \frac{2p_{ij}^2 - 2p_{ij}^3}{k}. 
    \end{align*}
    This completes the proof. \qed

\subsubsection{Proof of \cref{lem:NoiseLB}} \label{Proof of NoiseLB}
We aim to establish an upper bound on $\| S - \hat{S} \|_2$. Recall that for $(i,j) \in \calE$ and $i \neq j,$ we have 
\begin{align}\label{eq:f1}
\hat{S}_{ij} & = \frac{1}{k d} Z_{ij}, 
\end{align}
where $Z_{ij}$ follows a binomial distribution with parameters $k$ and $p_{ij}$, and $Z_{ij} = 0$ otherwise. Moreover, the random variables $Z_{ij}$ are independent for all $(i, j) \in \calE$ and $i \neq j$.  
Define, a diagonal matrix $D \in \R^{n \times n},$ where $D_{ii} = S_{ii} - \hat{S}_{ii}$ for $i \in [n].$ An application of triangle inequality allows us to separately upper bound the spectral norm due to diagonal and off-diagonal entries as
\begin{equation}
    \| S - \hat{S} \|_2 \leq \|D\|_2 + \| S_0 - \hat{S}_0 \|_2,
\end{equation}
where $S_0$ and $\hat{S}_0$ are the same as matrices $S$ and $\hat{S},$ respectively, but with diagonal elements set to $0$. To bound $\|D\|_2,$ Observe that
\begin{align}
        D_{ii} & = S_{ii} - \hat{S}_{ii} \nonumber \\
        & = \left(\!1-\frac{1}{d} \sum_{\substack{ j: j\neq i,\\ (i,j) \in \calE }} p_{ij}\!\right)\!- \! \left(\!1-\frac{1}{d}\sum_{\substack{ j: j\neq i,\\ (i,j) \in \calE }} \hat{p}_{ij}\!\right) \nonumber \\
        & = \frac{1}{k d}\sum_{\substack{ j: j\neq i,\\ (i,j) \in \calE }} \sum_{m=1}^k (Z_{m,{ij}} - p_{ij}) \, . \label{eq:Dii}
\end{align}
Also, since $D$ is a diagonal matrix, we have $\|D\|_2 = \max_i |D_{ii}|$. And by \cref{eq:Dii}, for any fixed $i$, $ k d D_{ii}$ is a sum of at most $k d_{\max}$ independent, zero-mean random variables and each random variable takes values in $(-1, 1)$. Therefore, by applying Hoeffding's inequality,  we have 
\begin{align} \label{azuma}
\forall i \in [n], \ \P\big(kd |D_{ii}| > t \big) & \leq 2 \exp\bigg(-\frac{2t^2}{ 4 kd_{\max} }\bigg). 
\end{align}
Setting $t = 3 \sqrt{k d_{\max}\log n}$, and by application of union bound we get 
\begin{align}
  \P \bigg (\|D\|_2 \geq & \ 3 \sqrt{\frac{d_{\max} \log n}{k d^2}}  \bigg) \nonumber \\ 
  & \leq \sum_{i=1}^n \P\Big(|D_{ii}| > 3\sqrt{\frac{d_{\max} \log n}{kd^2}} \Big) \leq O( n^{-3}).
\end{align}

\textbf{Bounding $\| S_0 - \hat{S}_0\|_2$:} The bound follows by a direct application of \cref{lem:Spectral Norm of Error} (see below) on $\| kd(S_0 - \hat{S}_0)\|_2$. 
Substituting $t =  2 \sqrt{k d_{\max} \log n }$ utilizing our assumption that $d_{\max}\geq \log n$, we obtain 
\begin{equation*}
    \P \big( \| kd(S_0 - \hat{S}_0)\|_2 \geq 2 \sqrt{k d_{\max} \log n} \big) \leq 2 n \times \frac{1}{n^4}.
\end{equation*}
Thus, the following bound holds with probability at least $1 - O(n^{-3})$.
\begin{equation}
\|S_0 - \hat{S}_0\|_2 \leq 2 \sqrt{\frac{d_{\max} \log n}{ k d^2} }. \label{offdiagonalS} 
\end{equation}
Combining \cref{azuma} and \cref{offdiagonalS} and using the fact that $d \geq 2d_{\max} $ completes the proof. \qed

\begin{lemma}[Spectral Norm of Error] \label{lem:Spectral Norm of Error}
Let $A \in \R^{n \times n}$ be a matrix such that $A_{ij} \sim \textup{Bin}(p_{ij},k)$ for every $(i,j) \in \calE$ with $i \neq j$ and $0$ otherwise. Then, we have
\begin{equation*}
    \P ( \| A - \E[A] \|_2 \geq t ) \leq 2 n \exp \left ( \frac{-t^2/2}{k d_{\max}/4 + t/3 }\right ).
\end{equation*}
\end{lemma}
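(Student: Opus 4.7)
The plan is to establish the stated inequality as a direct application of the matrix Bernstein inequality for rectangular random matrices. Since $A_{ij} = \sum_{m=1}^{k} Z_{m,ij}$ for $(i,j) \in \calE$ with $i\neq j$ (with $Z_{m,ij} \sim \Ber(p_{ij})$ mutually independent by assumption) and $A_{ij} = 0$ otherwise, I would first express
\begin{equation*}
A - \E[A] \; = \sum_{\substack{(i,j)\in \calE:\\ i \neq j}} \sum_{m=1}^{k} X_{m,ij}, \qquad X_{m,ij} \triangleq (Z_{m,ij} - p_{ij}) \, e_i e_j^{\T},
\end{equation*}
where $\{e_1,\dots,e_n\}$ is the standard basis of $\R^n$. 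The summands $X_{m,ij}$ are mutually independent zero-mean $n\times n$ random matrices by the independence assumption on $\Z$.

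The second step is to verify the two quantities needed by matrix Bernstein. For the almost-sure bound, since $Z_{m,ij} \in \{0,1\}$ and $p_{ij} \in (0,1)$, each $X_{m,ij}$ is a rank-one matrix with $\|X_{m,ij}\|_2 = |Z_{m,ij} - p_{ij}| \leq 1$. For the matrix variance, I would compute
\begin{equation*}
\E\!\left[X_{m,ij} X_{m,ij}^{\T}\right] = p_{ij}(1-p_{ij})\, e_i e_i^{\T}, \qquad \E\!\left[X_{m,ij}^{\T} X_{m,ij}\right] = p_{ij}(1-p_{ij})\, e_j e_j^{\T}.
\end{equation*}
Summing over $m$ and $(i,j)\in\calE$ with $i\neq j$ produces diagonal matrices whose $i$th diagonal entry is $k \sum_{j:(i,j)\in\calE,\, j\neq i} p_{ij}(1-p_{ij})$ (respectively the column sum for the $i$th coordinate). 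Using $p_{ij}(1-p_{ij}) \leq \tfrac{1}{4}$ and the degree bound $|\{j : (i,j)\in\calE,\, j\neq i\}| \leq d_{\max}$ yields
\begin{equation*}
\max\!\left\{ \Big\| \textstyle\sum \E[X_{m,ij}X_{m,ij}^{\T}]\Big\|_2, \, \Big\|\textstyle\sum \E[X_{m,ij}^{\T}X_{m,ij}]\Big\|_2 \right\} \;\leq\; \frac{k\, d_{\max}}{4}.
\end{equation*}

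Finally, I would invoke the rectangular matrix Bernstein inequality (as in Tropp's user-friendly tail bounds, cf.\ the concentration tools cited elsewhere in the paper), which states that for a sum $Y = \sum_\ell Y_\ell$ of independent zero-mean $n\times n$ random matrices with $\|Y_\ell\|_2 \leq R$ a.s.\ and matrix variance $\sigma^2$ as above,
\begin{equation*}
\P\!\left(\|Y\|_2 \geq t\right) \;\leq\; 2n \exp\!\left(\frac{-t^2/2}{\sigma^2 + R t/3}\right).
\end{equation*}
Substituting $R = 1$ and $\sigma^2 \leq k d_{\max}/4$ yields exactly the claimed bound. There is no real obstacle here beyond bookkeeping; the only mildly subtle point is that $A$ is generally non-symmetric (the pairs $(i,j)$ and $(j,i)$ are sampled independently even though $\calE$ is symmetric), which is precisely why the rectangular version of matrix Bernstein, rather than its Hermitian counterpart, must be used, and why both one-sided variance statistics need to be checked.
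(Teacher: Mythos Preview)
Your proposal is correct and matches the paper's proof essentially line for line: the paper defines the same rank-one summands $U^{ij}_m = (Z_{m,ij}-p_{ij})e_i e_j^\T$, bounds $\|U^{ij}_m\|_2 \leq 1$, and obtains the same variance bound $\sigma^2 \leq k d_{\max}/4$. The only cosmetic difference is that the paper explicitly writes out the Hermitian dilation $\tilde U^{ij}_m = \begin{pmatrix} 0 & U^{ij}_m \\ (U^{ij}_m)^\T & 0 \end{pmatrix}$ and applies the self-adjoint matrix Bernstein inequality, whereas you invoke the rectangular version directly; since the rectangular form is itself derived via this dilation, the two are equivalent.
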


\begin{proof}
We begin by representing $A_{ij}$ as a sum of $k$ independent Bernoulli random variables for $(i,j) \in \calE$ and $i \neq j$. Specifically, let $A_{ij} = \sum_{m=1}^k Z_{m,{ij}},$ where $Z_{m,{ij}} \sim \Ber(p_{ij})$. Also, define matrices $U^{ij},$ where 
\begin{equation*}
    U^{ij}_m = (Z_{m,{ij}} - p_{ij}) e_i e_j^\T \ \  \text{ for $(i,j)\in \calE$ and $i \neq j$},  
\end{equation*}
and $U^{ij}_m = 0$ otherwise. Now define $\tilde{U}^{ij}$, so as to symmetrize the matrices $U^{ij}$:
\begin{equation*}
  \tilde{U}^{ij} = \left (
    \begin{matrix}
    0 & U^{ij} \\
    (U^{ij})^\T & 0
    \end{matrix}  \right ).
\end{equation*}
Observe that $A - \E[A] = \sum_{(i,j) \in \calE, i \neq j} \sum_{m=1}^k U_m^{ij}$ and $\|A - \E[A]\|_2 = \lambda_{\max}( \sum_{(i,j) \in \calE: i \neq j} \sum_{m=1}^k \tilde{U}_m^{ij} )$. Now, we will employ the matrix Bernstein inequality to analyze the sum of independent, zero-mean self-adjoint random matrices. Clearly $\|\tilde{U}_m^{ij}\|_2 \leq 1$, therefore we have
\[
    \P\left(\lambda_{\max} \left( \sum_{\substack{(i,j) \in \calE:\\ i \neq j}} \sum_{m=1}^k \tilde{U}_m^{ij} \right) \geq t\right) \leq 2n \exp\left (\frac{-t^2/2}{\sigma^2 + t/3}\right),
\] 
where 
$$
\begin{aligned}
    \sigma^2 & = \left\|\sum_{(i,j) \in \calE: i \neq j} \sum_{m=1}^k \E[(\tilde{U}_m^{ij})^2]\right\|_2 \\
    & = \left\|\sum_{(i,j) \in \calE: i \neq j} \sum_{m=1}^k p_{ij}(1-p_{ij}) \left (
        \begin{matrix}
            e_{i}e_{i}^\T & 0 \\
            0 & e_j e_j^\T
        \end{matrix}  \right ) \right\|_2 \\ 
        & \leq \frac{k d_{\max}}{4}. 
    \end{aligned}
$$ 
This completes the proof. 
\end{proof}

\subsection{Proof of \cref{thm:Type-1 and Type-2 error}} \label{Proof of Type-1 error}

First, we begin by deriving bounds on the probability of type \Romannum{1} error. Similar to the proof of \cref{Main theorem with upper bound}, we split the test statistic $T$ as $T = T_1 + T_2 + T_3$, where $T_1, T_2$ and $T_3$ are defined in \cref{definiton of T1,definiton of T2,definiton of T3}. To calculate the $\P(T \geq t)$ for {$t \geq 0$}, we use the following inequality
\begin{align}
     & \P(T \geq t) \leq \nonumber \\
     & \ \quad \, \P\left(T_3 \geq \left(t - \frac{ \tilde{c}_\beta \sqrt{n} \|\pi\|_\infty}{ \sqrt{k}} \|\Pi P + P \Pi - \PE(\1_n \pi^\T)  \|_\F  \right. \right. \nonumber \\
     &\ \ \ \ \ \ \ \ \ \ \ \ \ \ \ \ \ \ \ \ \ \ \ \ \ \ \ \ \ \ \ \ \ \ \ \ \ \ \ \ \ \ \ \ \left. \left.  - (\tilde{c}_\alpha+ \tilde{c}_\gamma) \frac{n\|\pi\|^2_\infty }{ k}\right) \right) \nonumber \\
     &\ + \P\bigg(T_1 \geq \tilde{c}_\alpha \frac{n\|\pi\|^2_\infty }{ k} \bigg) \nonumber\\ 
     &\ + \P\bigg(T_2 \geq \frac{\tilde{c}_\beta \sqrt{n} \|\pi\|_\infty}{ \sqrt{k}} \|\Pi P + P \Pi - \PE(\1_n \pi^\T)  \|_\F \nonumber \\
     &\ \ \ \ \ \ \ \ \ \ \ \ \ \ \ \ \ \ \ \ \ \ \ \ \ \ \ \ \ \ \ \ \ \ \ \ \ \ \ \ \ \ \ \ \ \ \ \ \ +  \tilde{c}_\gamma \frac{n\|\pi\|^2_\infty }{ k} \bigg) \nonumber\\
     & \stackrel{\zeta_1}{\leq} \P(T_3 \geq \tilde{t}) + O\bigg( \frac{1}{n^3}\bigg),  \label{split bound}
\end{align}
 where $\zeta_1$ follows from \cref{lem: Bounds on T1 and T2}.
 Now we will derive tail bounds for $T_3$. To proceed, consider the quantity $T^{i j}_3$ for $(i,j) \in \calE$ and $i \neq j$ defined as the $(i,j)$th term of $T_3$:
$$
T_3^{i j} \triangleq  (\pi_{i}+\pi_{j} )^{2} \frac{Z_{i j} (Z_{i j}-1 )}{k(k-1)}+\pi_{j}^{2}-2 (\pi_{i}+\pi_{j} ) \pi_{j} \frac{Z_{i j}}{k}.
$$
Recall that $Z_{ij} = \sum_{m=1}^k Z_{m,{ij}}$. Clearly, $T_3^{i j}$ is a function of ${Z}_{m,{i j}}$ for $m \in [k]$. Observe that \(T_3^{i j} - \E[T_3^{i j}]\) can be expressed as 
\begin{align*}
 & k(k-1)\big(T_{3}^{i j}-\mathbb{E}[T_{3}^{i j}]  \big) \\
 &=(\pi_{i}+\pi_{j})^{2} Z_{i j}(Z_{i j} -1)+\pi_{j}^{2} k(k-1)\\
 & \ \ \ \ \ \ \ - 2(k-1)(\pi_{i}+\pi_{j}) \pi_{j} Z_{i j}\\
& \ \ \ \ \ \ \  -k(k-1)\left(\left(\pi_{i}+\pi_{j}\right) p_{i j}-\pi_{j}\right)^{2} \\
& =\left(\pi_{i}+\pi_{j}\right)^{2}\left(\sum_{m=1}^{k} Z_{m,{i j}}\right)\left(\sum_{m=1}^{k} Z_{m,{i j}}-1\right)\\
& \ \ \ \ \ \ \ -2(k-1)\left(\pi_{i}+\pi_{j}\right) \pi_{j}\left(\sum_{m=1}^{k} Z_{m,{i j}}\right) \\
& \ \ \ \ \ \ \  -k(k-1)\left(\pi_{i}+\pi_{j}\right)^{2} p_{i j}^{2}+2 k(k-1)\left(\pi_{i}+\pi_{j}\right) \pi_{j} p_{i j} \\
& =\left(\pi_{i}+\pi_{j}\right)^{2}\bigg(\sum_{m=1}^{k} \sum_{\substack{m^{\prime} \neq m, \\ m^\prime=1}}^{k}\left(Z_{m,{i j}} Z_{m^{\prime},{i j}}-p_{i j}^{2}\right)\bigg)\\ 
& \ \ \ \ \ \ \ \ \ \ -2(k-1)\left(\pi_{i}+\pi_{j}\right) \pi_{j}\left(\sum_{m=1}^{k}\left(Z_{m,{i j}}-p_{i j}\right)\right).
\end{align*}
Define a vector \(z_{i j} \in \mathbb{R}^{k}\) such that \(z_{i j}=\left[Z_{1,{i j}}, \dots , Z_{k, { i j}}\right]^\T\) and a matrix \(A \in \mathbb{R}^{k \times k}\) such that \(A= \1_{k} \1_{k}^{\T}-I_k\). Observe that 
\begin{align*}
 & k(k-1) (T_{3}^{i j}- \E[T_{3}^{i j}]) =\left(\pi_{i}+\pi_{j}\right)^{2}\left(z_{i j}^{\T} A z_{i j}-p_{i j}^{2} \1_{k}^{\T} A \1_{k}\right)\\ 
 & \ \ \ \ \ \ \ \ \ -2(\pi_{i}+\pi_{j}) \pi_{j}\left(z_{i j}-p_{i j} \1_{k}\right)^{\T} A \1_{k} . 
 \end{align*}
Utilizing the fact that  
 \begin{align*}
& (z_{i j}-p_{i j} \1_{k})^{\T} A(z_{i j}-p_{i j} \1_{k})\\ 
& \ \ \ \ \ \ \ =z_{i j}^{\T} A z_{i j}-2 p_{i j}(z_{i j}-p_{i j} \1_{k})^{\T} A \1_{k} -p_{i j}^{2} \1_{k}^{\T} A \1_{k} ,
\end{align*}
we obtain 
\begin{align}
& \nonumber k(k-1)(T_3^{ij} -  \mathbb{E}[T_{3}^{ij}]) \\ 
& =(\pi_{i}+\pi_{j})^{2}  (z_{i j}-p_{i j} \1_{k})^{\T} A (z_{i j}-p_{i j} \1_{k}) \nonumber \\ 
& + 2 (\pi_{i}+\pi_{j}) ((\pi_{i}+\pi_{j}) p_{i j}-\pi_{j}) (z_{i j}-p_{ij} \1_{k})^{\T} A \1_{k}. \label{decomposition of T3 around expected value}
\end{align}
Observe that under hypothesis \(H_0\), the last term of the above equation reduces to zero.
We will utilize Hanson-Wright inequality \cite[Theorem 1.1]{rudelson2013hanson} to bound the first term. Therefore, under hypothesis \(H_{0}\) and for some constant $c$ we have
\begin{align}
 & \mathbb{P}\left(k(k-1) \sum_{\substack{ (i,j) \in \calE:\\ i \neq j} } T_{3}^{ij} \geq {k}(k-1) t \right) \nonumber \\
  & = \P\left(\sum_{\substack{ (i,j) \in \calE:\\ i \neq j} } \frac{(\pi_{i}+\pi_{j})^{2} (z_{i j}-p_{i j} \1_{k})^{\T} A (z_{i j}-p_{i j} \1_{k}) }{k(k-1)} \geq  t \right) \nonumber\\
 & \leq \P\left(\sum_{\substack{ (i,j) \in \calE:\\ i \neq j} } \frac{4\|\pi\|_{\infty}^2 (z_{i j}-p_{i j} \1_{k})^{\T} A (z_{i j}-p_{i j} \1_{k}) }{k(k-1)} \geq  t \right) \nonumber\\
& \stackrel{\zeta_1}{\leq} \exp \left(-c\min \left\{\frac{k^2(k-1)^{2} t^2}{\|\tilde{A}\|_\F^2 \cdot 4 \|\pi\|_{\infty}^{4} \cdot \left(\frac{1}{2}\right)^4} \right., \right. \nonumber \\
&  \ \ \ \ \ \ \ \ \ \ \ \ \ \ \ \ \ \ \ \ \ \ \ \ \ \ \ \ \ \ \ \ \ \ \ \left. \left. \frac{k(k-1) t}{\|\tilde{A}\|_{2} \cdot 2\|\pi\|_{\infty}^2\cdot \left(\frac{1}{2}\right)^2 } \right\} \right) \nonumber \\
& \stackrel{\zeta_2}{=} \exp \bigg(-c\min \left\{\frac{ 4 k(k-1) t^2}{ |\calE|  \|\pi\|_{\infty}^{4} }, \frac{2k t}{ \|\pi\|_{\infty}^2 } \right\} \bigg),\label{Hanson}
\end{align}
where $\zeta_1$ follows by applying the Hanson-Wright inequality to the concatenated matrix $\tilde{A} \in \R^{|\calE|k \times |\calE|k }$ with matrix $A$ along its diagonal $|\calE|$ times and utilizing the fact that the entrywise sub-Gaussian norm of $Z_{m,{ij}} -p_{ij} $ is upper bounded by $ \frac{1}{2}$, and $\zeta_2$ follows because $\|\tilde{A}\|^2_\F = |\calE| k (k-1) $ and $\|\tilde{A}\|_2 = \|A\|_2 = k-1$. Finally, utilizing $|\calE| \leq n d_{\max}$, we obtain
\begin{align*}
\mathbb{P}\left(T_{3}>t\right) \leq \exp \bigg(-c\min \bigg\{\frac{ 4 k(k-1) t^2}{  n d_{\max  }\|\pi\|_{\infty}^{4} }, \frac{2k t}{ \|\pi\|_{\infty}^2 } \bigg\} \bigg). 
\end{align*}
To obtain tail bounds under hypothesis \(H_{1}\), we again utilize \cref{decomposition of T3 around expected value} as follows:
\begin{align*}
 & \mathbb{P}\left(\sum_{\substack{ (i,j) \in \calE:\\ i \neq j} } (T_{3}^{i j}-\E[ T_{3}^{i j}]  )  \leq -  t\right) \leq \\
 & \mathbb{P}\left( \sum_{\substack{ (i,j) \in \calE:\\ i \neq j} } (\pi_{i}+\pi_{j})^{2}(z_{i j}-p_{i j} \1_{k})^{\T} A (z_{i j}-p_{i j} \1_{k}) \right. \\ 
 &\left. \vphantom{\sum_{\substack{ (i,j) \in \calE:\\ i \neq j} }} \ \ \ \ \ \ \ \ \ \ \ \ \ \ \ \ \ \ \ \ \ \ \ \ \ \ \ \ \ \ \ \ \ \ \ \ \  \leq -\frac{k(k-1) t}{2} \right) \ +  \\
& \P\left( \sum_{\substack{ (i,j) \in \calE:\\ i \neq j} }2(\pi_{i}+\pi_{j})( (\pi_{i}+\pi_{j}) p_{i j}-\pi_{j}) (z_{i j}- p_{ij} \1_{k})^{\T} A \1_{k} \right. \\ 
& \left. \vphantom{\sum_{\substack{ (i,j) \in \calE:\\ i \neq j} }}\ \ \ \ \ \ \ \ \ \ \ \ \ \ \ \ \ \ \ \ \ \ \ \ \ \ \ \ \ \ \ \ \ \ \ \ \ \leq - \frac{k(k-1) t}{2}\right) \\
& \stackrel{\zeta_1}{\leq}  \exp \bigg(-c\min \bigg\{\frac{ 4 k(k-1) t^2}{  |\calE|\|\pi\|_{\infty}^{4} }, \frac{2k t}{ \|\pi\|_{\infty}^2 } \bigg\} \bigg) \ +\\ 
&  \mathbb{P}\left( \sum_{\substack{ (i,j) \in \calE:\\ i \neq j} } \sum_{m=1}^{k} (\pi_{i}+\pi_{j})( (\pi_{i}+\pi_{j} ) p_{i j}-\pi_{j} )  (Z_{m,{i j}} -  p_{i j}) \right. \\
& \left. \vphantom{\sum_{\substack{ (i,j) \in \calE:\\ i \neq j} }}\ \ \ \ \ \ \ \ \ \ \ \ \ \ \ \ \ \ \ \ \ \ \ \ \ \ \ \ \ \ \ \ \ \ \ \ \ \leq -\frac{k t}{4}\right) \\ 
& \stackrel{\zeta_2}{\leq} \exp \bigg(-c\min \bigg\{\frac{ 4 k(k-1) t^2}{  \|\pi\|_{\infty}^{4} }, \frac{2kt}{ \|\pi\|_{\infty}^2 } \bigg\} \bigg) \\ 
& \ \ \ \ \ \ \ + \exp \left(\frac{-2 k t^{2} / 16}{4\|\pi\|_{\infty}^{2} \|\Pi P + P \Pi  - \PE(\1_n \pi^\T)\|_\F^{2} }\right),
\end{align*}
where $\zeta_1 $ follows from \cref{Hanson} and $\zeta_2$ follows by applying Hoeffding's inequality. Therefore, we obtain 
\[
\begin{aligned}
 \mathbb{P}\bigg(T_{3} - \E[T_{3}]  & \leq - t\bigg) \leq \\
 & \exp \bigg(-c\min \bigg\{\frac{4  k(k-1) t^2}{ n d_{\max} \|\pi\|_{\infty}^{4} }, \frac{2k t}{ \|\pi\|_{\infty}^2 } \bigg\} \bigg) + \\
 &  \exp \left(\frac{- k t^{2} }{32\|\pi\|_{\infty}^{2} \| \Pi P+ P\Pi -\PE(\1_n \pi^\T)\|_\F^2 }\right).
\end{aligned}
\] 
This completes the proof. \qed

\section{Proof of Lower Bound} \label{sec: Proof of Lower Bound and Stability}
In this section, we will provide the proof of \cref{thm:Lower Bound}.

\begin{proof}[Proof of \cref{thm:Lower Bound}] 
We will apply the \emph{Ingster-Suslina method} to establish a lower bound on the critical threshold \cite{IngsterSuslina2003}. Throughout the proof, we will assume that the induced graph is complete.
Moreover, under the null hypothesis, we assume that the pairwise comparison matrix $P$ is fixed to be an all $1/2$ matrix, i.e.,
\begin{equation}
H_{0}: \quad P = P_0 \triangleq \frac{1}{2} \mathbf{1}_n\mathbf{1}_n^\T. \label{Lower bound H0}
\end{equation}
We will denote the distribution corresponding to the pairwise comparison matrix $P_0$ by $\P_0$. Additionally, note that under $H_0$, the stationary distribution of the canonical Markov matrix $\S$ is uniform, i.e., $\pi=\frac{1}{n} \mathbf{1}_n$. Under the alternative hypothesis, we assume that the pairwise comparison matrix $P_\theta$ is generated by sampling the parameter $\theta$ uniformly from the set $\Theta$, i.e., 
\begin{equation}
H_1:  \quad  P = P_\theta \text{ and } \theta \sim \text{Unif}(\Theta),  \label{Lower bound H1}
\end{equation}
and for $\theta \in \Theta$, $P_{\theta}$ is given by
\begin{equation}
\begin{aligned}
& P_\theta = \left[\begin{array}{cccc|cccc}
\frac{1}{2} \1_{n/2}\1_{n/2}^\T  & \frac{1}{2} \1_{n/2}\1_{n/2}^\T + \eta Q_{\theta} \\ 
\frac{1}{2} \1_{n/2}\1_{n/2}^\T -\eta Q_{\theta}  & \frac{1}{2} \1_{n/2}\1_{n/2}^\T \\ 
\end{array}
\right],
\end{aligned} \label{pthetaconstruction}
\end{equation}
where $\Theta$ is set of all permutation matrices, and $\Q_{\theta}$ is the $\frac{n}{2} \times \frac{n}{2}$ permutation matrix corresponding to the permutation $\theta$ and the perturbation $\eta \in (0,\frac{1}{2})$. 
Let $\P_\Theta$ denote the overall mixture distribution and $\P_{\theta}$ denotes the distribution corresponding to the pairwise comparison matrix $P_\theta$. The construction of this mixture was inspired by \cite{RastogiBalakrishnanShahAarti2022}. However, there are two notable differences. Firstly, the problem in \cite{RastogiBalakrishnanShahAarti2022} is distinguishing whether two sets of data samples consisting of pairwise comparisons are coming from the same underlying distribution or two different distributions described by a pairwise comparison model. In contrast, our work tests whether or not a single dataset is sampled from a BTL model. Secondly, the manner in which a notion of distance is used to define the deviation of the given data from the null hypothesis is different in the two works. 

Let $S_\theta$ denote the canonical Markov matrix corresponding to $P_{\theta}$. It is straightforward to verify that the stationary distribution of $S_\theta$ is independent of the permutation $\theta$. Let $\pi$ denote the stationary distribution of $S_\theta$. By the symmetry of $S_\theta$, the set of first $n/2$ elements, and respectively, last $n/2$ elements, of $\pi$ are equal, i.e., $\pi_1 = \dots = \pi_{n/2} \triangleq x$, and $\pi_{(n/2) +1} = \dots = \pi_{n} \triangleq y$. Now $x$ and $y$ can be determined by solving the set of linear equations:
$$
    \pi^\T = \pi^\T S_\theta \text{ and } \sum_{i=1}^n \pi_i = 1 \, .     
$$
Solving these equations gives  
$$ x = \frac{1}{n} \left(1-\frac{4\eta}{n} \right) \text{ and }  y = \frac{1}{n}\left(1+ \frac{4\eta}{n}\right) . $$
It is also easy to verify that the deviation from BTL $\| \Pi P_\theta + P_\theta\Pi - \1_n \pi^\T \|_{\F}$ is also independent of the permutation $\theta$ and is given by
\begin{equation} \label{Scaling of lower bound}
\begin{aligned}
& \left\|\Pi  P_\theta + P_\theta \Pi-\mathbf{1}_n\pi^\T \right\|_{\F}^{2} = \\
& \ \frac{n}{2} \left( (x+y) \left(\frac{1}{2} + \eta \right) -y\right)^2 \!  + \frac{n}{2} \left( (x+y) \left(\frac{1}{2} - \eta \right) - x \right)^2 \\ 
& \  +  \frac{n}{2} \left(\frac{n}{2}-1\right) \left(\frac{x+y}{2} -y\right)^2 + \frac{n}{2} \left(\frac{n}{2}-1\right) \left(\frac{x+y}{2} -x\right)^2 \\
& = \frac{2\eta^2}{n}\left( 1 - \frac{2 }{n} \right)^2 + \frac{2\eta^2}{n^2} \left( 1 -\frac{2}{n} \right) . 
\end{aligned}
\end{equation}
Let $\epsilon = \|\Pi P_\theta + P_\theta \Pi-\mathbf{1}_n\pi^\T \|_\F/(n \|\pi\|_\infty)$ to ensure that the $P_{\theta}$'s satisfy the condition of the alternative hypothesis in \eqref{Hypothesis test}. Substituting the values of $\|\pi\|_\infty = y$ and $\|\Pi P_\theta + P_\theta \Pi-\mathbf{1}_n\pi^\T \|_{\F}$ implies that 
\begin{equation}
    \epsilon^2 \leq \frac{4 \eta^2}{n}.  \label{first condition for lb}    
\end{equation}

Now, the Ingster-Suslina method \cite{IngsterSuslina2003} states that
\begin{equation}
     \mathcal{R}_{\mathsf{m}}[\G, \epsilon] \geq   1 - \sqrt{\frac{\chi^2(\P_\Theta||\P_0)}{2}} ,\label{minimax risk lb}
\end{equation}
where $\chi^2(\cdot||\cdot)$ denotes the $\chi^2$-divergence. We compute $\chi^2(\P_\Theta||\P_0)$ by expressing it as an expectation with respect to two independent pairwise models corresponding to permutations $\theta$ and $\theta^\prime$ drawn independently and uniformly at random from $\Theta$ as
\begin{align}
    1+\chi^2(\P_\Theta||\P_0) &= \E_{\theta, \theta^\prime \sim \text{Unif}(\Theta)} \left[ \int \frac{d\P_{\theta} d\P_{\theta^\prime}}{d \P_0} \right] , \nonumber
\end{align}
where $d\P_\theta$ denotes the measure induced by the pairwise comparison model corresponding to permutation $\theta$.
Let $p_{ij }$ and $p_{ij}^\prime$ be the pairwise probabilities corresponding to permutations $\theta$ and $\theta^\prime$. 
Now we will essentially utilize the tensorization property of $1 +\chi^2(P||Q)$, i.e., we utilize the fact that for distributions $P_1, Q_1, \dots, P_n, Q_n$, we have  $$1 +\chi^2\left(\left. \prod_{i=1}^n P_i \right| \left|\prod_{i=1}^n Q_i\right.\right) = \prod_{i=1}^n (1 +\chi^2( P_i||  Q_i)). $$ Therefore, $\chi^2(\P_\Theta||\P_0)$ can be simplified as
\begin{align}
    & \nonumber 1+\chi^2(\P_\Theta||\P_0) = \E_{\theta, \theta^\prime \sim \text{Unif}(\Theta)} \left[  \prod_{i=1}^n \prod_{ \substack{j=1: \\ j\neq i}}^n \right. \\
    & \left. \vphantom{\prod_{i=1}^n \prod_{ \substack{j=1: \\ j\neq i}}^n  } \left(\sum_{m =0 }^k  \frac{ {k \choose m } (p_{ij})^{m} (1 - p_{ij})^{k - m} {k \choose m } (p_{ij}^\prime)^{m} (1 - p^\prime_{ij})^{k - m} }{ {k \choose m } \left(\frac{1}{2}\right)^k } \right) \right] \nonumber \\ 
    & = \E_{\theta, \theta^\prime \sim \text{Unif}(\Theta)} \left[  \prod_{i=1}^n \prod_{ \substack{j=1: \\ j\neq i}}^n \right. \nonumber \\ 
    & \left. \vphantom{ \prod_{i=1}^n \prod_{ \substack{j=1: \\ j\neq i}}^n} \left(\sum_{m =0 }^k  \frac{ {k \choose m } (p_{ij})^{m} (1 - p_{ij})^{k - m} (p_{ij}^\prime)^{m} (1 - p^\prime_{ij})^{k - m} }{  \left(\frac{1}{2}\right)^k } \right)  \right]. \label{version 1}
\end{align}
This calculation of simplifying the $\chi^2$-divergence follows an approach similar to that in \cite{RastogiBalakrishnanShahAarti2022}, but we include the details here for completeness. We will focus on the $(i,j)$th term of the product in \Cref{version 1} for $i \neq j$ and denote it as $f(p_{ij}, p^\prime_{ij})$:
\begin{equation}\label{def of f}
f(p_{ij}, p^\prime_{ij})\!=\!\sum_{m =0 }^k \!\frac{ {k \choose m } (p_{ij})^{m} (1\! -\! p_{ij})^{k - m} (p_{ij}^\prime)^{m} (1\! -\! p^\prime_{ij})^{k - m} }{  \left(\frac{1}{2}\right)^k }.
\end{equation}
By our construction of pairwise comparison matrices both $p_{ij}, p^\prime_{ij}$ take values in set $\{\frac{1}{2}, \frac{1}{2}+\eta\}$ if $j\geq i$ (and $\{\frac{1}{2}, \frac{1}{2}-\eta\}$ otherwise). Furthermore, whenever either $p_{ij}$ or $p_{ij}^\prime$ equals $\frac{1}{2}$, we have $f(p_{ij}, p^\prime_{ij})=1$. Additionally, by \cref{def of f}, we have $f(\frac{1}{2}-\eta, \frac{1}{2}-\eta)=f(\frac{1}{2}+\eta, \frac{1}{2}+\eta)$. Let a random variable $B$ denote the number of occurrences where $p_{ij} = p_{ij}^\prime = \frac{1}{2}+ \eta$ in randomly drawn permutations $\theta$ and $\theta^\prime$ (or equivalently, $p_{ij} = p_{ij}^\prime = \frac{1}{2}- \eta$). Consequently, we obtain
\begin{equation}
      1+ \chi^2(\P_\Theta||\P_0)  = \E_{\theta, \theta^\prime \sim \text{Unif}(\Theta)} \left[  f\left(\frac{1}{2}+ \eta, \frac{1}{2}+ \eta\right)^{2B} \right]. \label{chi square simplified} 
\end{equation}
This is because, by translated skew-symmetry, the number of occurrences of $p_{ij} = p_{ij}^\prime = \frac{1}{2}+ \eta$ is the same as the occurrences where $p_{ij} = p_{ij}^\prime = \frac{1}{2}- \eta$ and $f(p_{ij},p_{ij}^\prime)$ is same in both the scenarios. Moreover, we can simplify $f(p_{ij},p_{ij}^\prime)$ as 
\begin{equation*}
\begin{aligned}
    & f\left(\frac{1}{2}+ \eta, \frac{1}{2}+ \eta\right) \\
    &= 2^k \sum_{m=0}^k {k \choose m} \left(\frac{1}{2}+ \eta\right)^{2m} \left(\frac{1}{2}- \eta\right)^{2k-2m}    \\
    & = 2^k \sum_{m=0}^k {k \choose m} \left(\frac{1}{4}+ \eta^2 + \eta\right)^{m} \left(\frac{1}{4} + \eta^2 - \eta\right)^{k-m}    \\
    & = 2^k \left(\frac{1}{2} + 2\eta^2\right)^k \times \\
    & \ \ \ \ \ \ \ \left(\sum_{m=0}^k {k \choose m} \left(\frac{1}{2}+ \frac{\eta}{\frac{1}{2}+ 2\eta^2}\right)^{m} \left(\frac{1}{2}- \frac{\eta}{\frac{1}{2}+ 2\eta^2}\right)^{k-m} \right) \\ 
    & = (1 + 4\eta^2)^k.
\end{aligned}
\end{equation*}
The above equation combined with \cref{chi square simplified} gives
\begin{align}
    1+  \chi^2(\P_\Theta||\P_0) &=  \E_{\theta, \theta^\prime \sim \text{Unif}(\Theta)} \bigg[   (1+4\eta^2)^{2kB}  \bigg] \nonumber \\
    &   =  \sum_{b=0}^{n/2}\P(B = b)(1+4\eta^2)^{2kb}. \label{Almost simplified}
    \end{align}
To derive an upper bound on $\P(B = b)$, for a fixed permutation $\theta$, we aim to find another permutation $\theta^\prime$ with exactly $b$ aligned elements. These $b$ matches can be selected in ${ {n/2} \choose b }$ ways. For the remaining $n/2 - b$ elements, we require derangements to avoid alignment with the perturbed elements in $\theta$. Therefore, the number of such permutations is  $(\frac{n}{2} - b)! \sum_{i=0}^{n/2-b} \frac{(-1)^i}{i!}$. Clearly, this quantity is upper bounded by $\frac{1}{2} (\frac{n}{2} - b)!$, yielding an upper bound on $\P(B=b)$ as
\[
   \P(B=b) \leq \frac{\frac{1}{2} { {n/2} \choose b } (\frac{n}{2} - b)!}{(n/2)!} \leq \frac{1}{2(b!)}.
\]
Substituting the above bound in \cref{Almost simplified}, we obtain 
\begin{align}
   \nonumber \chi^2(\P_\Theta||\P_0) &\leq \sum_{b=0}^{n/2}\frac{1}{2 (b!)} \bigg ( (1+4\eta^2)^{2kb} -1 \bigg) \\
   & \nonumber \ \ \ \ \ \ \ \ \ \ \ \ \ \ + \sum_{b=0}^{n/2} \P(B=b) - 1\\ 
  \nonumber  & \stackrel{\zeta}{\leq}  \frac{1}{2} \sum_{b=0}^{n/2}\frac{1}{ b!} ( (1+c^\prime)^{b} -1) \\
    & \leq \frac{1}{2} (e^{1+c^\prime} - e) + \frac{1}{2} \sum_{b=\frac{n}{2}+1}^{\infty}\frac{1}{ b!}, \label{almost lb}
\end{align}
where $\zeta$ holds for some constant $c^\prime$ if $8 \eta^2 k \leq \tilde{c}. $
Moreover, the quantity in \cref{almost lb} is bounded above by $\frac{1}{4}$ if $\tilde{c}$ is small enough. However, by \cref{first condition for lb}, we have $\eta^2 \geq \frac{1}{4} n \epsilon^2$. Therefore, we have shown that if $ 2 nk \epsilon^2 \leq \tilde{c}$, for sufficiently small constant $\tilde{c}$ then $\chi^2(\P_\Theta||\P_0) \leq \frac{1}{2}$, which by \cref{minimax risk lb} implies that the minimax risk  $\mathcal{R}_{\mathsf{m}}[\G, \epsilon] \geq \frac{1}{2}$. Hence, $\varepsilon_{\mathsf{c}}^2 \geq c/(kn)$ as desired. 
\end{proof}

\section{Numerical Simulations} \label{sec: Numerical Simulations}
In this section, we will initially introduce a methodology for selecting the threshold for our proposed test. Subsequently, we will validate several of our theoretical findings through simulations. Furthermore, we will examine the outcomes of our proposed test on both synthetic and real-world datasets, thereby exploring its efficacy in practical scenarios.

\subsection{Estimating the Threshold} \label{sec: Estimating the Threshold}
In this section, we aim to select the critical threshold for our hypothesis testing problem for the BTL model. Recall that \cref{Upper Bound on Critical Threshold} indicates the existence of a constant $\gamma$ such that we threshold our test statistic $T$ at the value {$\gamma/nk$}. However, the precise value of $\gamma$ necessary to decide between hypotheses $H_0$ or $H_1$ is unknown. The worst-case constants calculated from the various constants appearing in the analysis may be too large for a smaller $n$ or $k$ (say $n=10$, etc.). Furthermore, this ambiguity may become more profound when the number of comparisons $k_{ij}$ are not the same across all pairs, a common scenario in practice. To address this challenge, we propose two different techniques for selecting the threshold.
    
    \textbf{Empirical Quantile Approach:} In this technique, we randomly generate multiple BTL models with their skill scores $\alpha \in \R^n_+$ selected at random (such that $\max_{i \in [n]} \alpha_i/\min_{j \in [n]} \alpha_j $ is bounded above by some constant). For each model, we generate an equivalent set of synthetic comparisons $k_{ij}$ by independently sampling binomial random variables $\{Z_{ij} \sim \text{Bin}(k_{ij},\alpha_j/(\alpha_i + \alpha_j) )\}_{(i,j) \in \calE}$, matching the original data, and compute the corresponding test statistic $T$.  By repeating this process a sufficient number of times, we build a distribution of test statistics. From this distribution, we extract the $95\%$ percentile value (or $0.95$ quantile) to determine our empirical threshold.

    \textbf{Permutation-Based Scheme:} This approach is motivated by the permutation test method to obtain a sharper threshold for our test in a data-driven manner. Recall that for a standard two-sample hypothesis testing problem, the permutation technique involves randomly shuffling the labels of the two classes. The test statistic is then recalculated for each permutation, obtaining a distribution of the test statistics under the null hypothesis $H_0$. To assess the significance of our observed test statistic, the $p$-value is then computed as the proportion of permuted test statistics that are more extreme than the observed test statistic on unshuffled data. To adapt this technique for the BTL hypothesis testing problem, we perform the following two types of shuffling motivated from \cref{Prop: BTL Model and Reversibility}: 
    \begin{itemize}
        \item \emph{Translated Skew-symmetry:} For each pair $(i,j) \in \calE$ with $i>j$, we collect all the $(k_{ij} + k_{ji})$ samples together and reassign the $k_{ij}$ samples chosen uniformly at random (without replacement) to ``$i$ vs. $j$'' and the rest to ``$j$ vs. $i$''. This shuffling effectively removes any `home advantage effect' between the players, ensuring that the outcomes of matches between $i$ and $j$ are indistinguishable from those between $j$ and $i$.
        \item \emph{Reversibility:} In this approach, we adopt a permutation scheme based on Kolmogorov's loop criterion. Recall that according to Kolmogorov's loop criterion, a Markov transition matrix is reversible if and only if, for every cycle, the forward loop transition probability product is equal to the backward loop transition probability product. 
        To implement the shuffling process, we interpret the given pairwise comparison data, denoted as $\mathcal{Z}$, as corresponding to different transitions of the Markov chain (with associated transition probabilities represented by the canonical Markov matrix of $P$).  Under the null hypothesis $H_0$, where the underlying model is BTL, we know that the canonical Markov matrix representing the process is reversible.
        Consider any cycle of length $l$ in the induced graph, denoted by $i = i_1 \to i_2 \to  \dots \to i_l = i$ for some nodes $ i_1, \dots, i_{l-1} \in [n]$, starting and ending at $i$. For this cycle, we associate two random variables $Z_{FL}$ and $Z_{BL}$ corresponding to the product of observations in a forward loop and a backward loop, defined as:
        \begin{equation}
        Z_{FL} \triangleq \prod_{j=1}^{l-1} Z_{m^j, {i_j i_{j+1}}} \text{ and } Z_{BL} \triangleq \prod_{j=1}^{l-1} Z_{\tilde{m}^j, {i_{l-j+1} i_{l-j}}}, \label{ZFLZBL}
        \end{equation}
        for any fixed $m^1 \in [k_{i_1 i_{2}}], \dots , m^{l-1} \in [k_{i_{l-1}i_{l}}]$ and $\tilde{m}^1  \in [k_{i_{l} i_{l-1}}], \dots , \tilde{m}^{l-1} \in [k_{i_{2}i_{1}}]$. Note that here $Z_{m^j,{i_j i_{j+1}}} $ denotes the $m^j$th observation ($m^j \in [k_{i_j i_{j+1}}] $) of the form ``$i_j$ vs. $i_{j+1}$'' and hence $Z_{m^j, {i_j i_{j+1}}} \sim \Ber(p_{i_j i_{j+1}})$. Therefore, we have that $Z_{FL}$ follows a Bernoulli distribution with parameter $\prod_{j=1}^{l-1} p_{i_j i_{j+1}}$, and $Z_{BL}$ follows a Bernoulli distribution with parameters $\prod_{j=1}^{l-1} p_{i_{l-j+1} i_{l-j}}$.
        Based on Kolgomorov's loop criteria, under the hypothesis $H_0$, we have $\P(\{Z_{FL} = 1\}) $ is the same as $\P(\{Z_{BL}=1\})$ and moreover both the events $\{Z_{FL} =1\}$ and $\{Z_{BL} =1\}$ occur only when each of the random variables $Z_{m^j,{i_j i_{j+1}}} $ and $Z_{\tilde{m}^j,{i_{l-j+1} i_{l-j}}}$ in the product (in \cref{ZFLZBL}) are $1$. Therefore, this motivates the following shuffling process where we essentially replace the data in $\Z$ corresponding to event $\{Z_{FL} = 1\}$ by the data corresponding to event $\{Z_{BL} =1\}$. To proceed with the shuffling process, we begin by uniformly selecting an item $i_1 = i$ and then randomly choose a comparison of the form ``$i$ vs. $j$'' from the data $\mathcal{Z}$, for any $j \neq i$ such that $(i,j) \in \calE$. If the outcome of the comparison results in item $j$ being preferred over item $i$, we move to item $j$ (i.e., set $i_2 = j$) and continue this process from item $j$. Otherwise, we again repeat this step from item $i$. This iterative procedure continues until we revisit item $i$ after at least one departure, effectively completing a cycle (with $i_l = i$). 
        Next, we proceed to remove the data corresponding to the forward cycle $\{Z_{m^1,{i_1 i_{2}}}, \dots, Z_{m^{l-1}, {i_{l-1} i_{l}}} \}$ while adding new data points to $\mathcal{Z}$ corresponding to the backward cycle $\{Z_{m^1, {i_l i_{l-1}}}, \dots, Z_{m^{l-1}, {i_{2} i_{1}}} \}$. This step is illustrated with the following example. Suppose that a cycle of length $3$ is found in our dataset. Assume that in this cycle, item $j$ is preferred over item $i$ in a ``$i$ vs. $j$'' comparison, followed by item $k$ being preferred over item $j$ in the ``$j$ vs. $k$'' comparison, and finally, item $i$ triumphs over item $k$ in the ``$k$ vs. $i$'' comparison. According to Kolmogorov's loop criterion, the forward loop probability product: $p_{ij} \cdot p_{jk} \cdot p_{ki}$ should be the same as the backward loop probability product $p_{ik} \cdot p_{kj} \cdot p_{ji}$. This corresponds to replacing these three outcomes with the following comparisons: item  $k$ being preferred over item $i$ in a ``$i$ vs. $k$'' comparison, item $j$ preferred over item $k$ in a ``$k$ vs. $j$'' comparison, and item $i$ preferred over $j$ in a ``$j$ vs. $i$'' comparison. This entire process of finding a cycle and replacing the data is repeated for a sufficient number of iterations to ensure robust shuffling. Another example illustrating this data transformation process for a cycle of length $4$ is shown in \cref{fig: Data Transformation}.
\end{itemize}
    We repeat the two shuffling techniques sequentially, one after the other, and recalculate the test statistic at each iteration. By iteratively performing this type of shuffling a sufficient number of times, we construct a distribution of test statistics. To establish our empirical threshold, we extract the $95\%$ percentile value (or $0.95$ quantile) from this distribution. Notably, in the case of symmetric settings, shuffling for translated skew symmetry becomes redundant, and only shuffling for reversibility is necessary.

\begin{figure*}
    \centering
    \begin{tikzpicture}
        \tikzstyle{table} = [rectangle, draw, align=center, minimum width=4cm, minimum height=2cm]        
        \node[table] (before) at (0,0) {Data Before Transformation \\ 
            \vdots \\
             $j$ is preferred over $i$ in ``$i$ vs. $j$'' comparison \\
             $k$ is preferred over $j$ in ``$j$ vs. $k$'' comparison \\
             $l$ is preferred over $k$ in ``$k$ vs. $l$'' comparison \\
             $i$ is preferred over $l$ in ``$l$ vs. $i$'' comparison \\
            \vdots};
        \node[table, right=2.7cm of before] (after) {Data After Transformation \\ 
            \vdots \\
             $l$ is preferred over $i$ in ``$i$ vs. $l$'' comparison \\
             $k$ is preferred over $l$ in ``$l$ vs. $k$'' comparison \\
             $j$ is preferred over $k$ in ``$k$ vs. $j$'' comparison \\
             $i$ is preferred over $j$ in ``$j$ vs. $i$'' comparison \\
            \vdots};
        \draw[->, line width=1.5pt] (before) -- (after) node[midway, above] {Shuffling};
    \end{tikzpicture}
    \caption{Illustration of the data transformation process to induce reversibility for a cycle of length four from $i \to j \to k \to l \to i$. The (forward) transition probability corresponding to data in the left is proportional to $p_{ij}\cdot p_{jk}\cdot p_{kl} \cdot p_{li}$. The (backward)  transition probability corresponding to data in the right is proportional to $p_{il}\cdot p_{lk}\cdot p_{kj} \cdot p_{ji}$.  }
    \label{fig: Data Transformation}
\end{figure*}

\subsection{Synthetic Experiments}
In this section, we will perform several experiments on the synthetically generated datasets to verify our theoretical results and the effect of the shuffling techniques discussed above under hypotheses $H_0$ and $H_1$. For the first experiment, we will use the same construction for the pairwise comparison matrix $P$ that we utilized in the proof of \cref{thm:Lower Bound} under the null and alternate hypothesis, which are presented in \cref{Lower bound H0} and \cref{Lower bound H1}. 
We set the number of pairwise comparisons per pair of agents $k = 12$, the number of agents $n$ is linearly increased from $32$ to $128$ in $12$ equally spaced steps, and the perturbation $\eta$ in \eqref{pthetaconstruction} is increased from $0.16$ to $0.32$ in $12$ equally spaced steps. For each value of $\eta$ and $n$, simulations are performed by generating $Z_{ij} \sim \text{Bin}(k, p_{ij})$ for $i\neq j$ for both hypotheses $H_0$ and $H_1$, and the corresponding values of test statistic $T$ under hypothesis $H_1$ and \emph{empirical minimax risk $\hat{\mathcal{R}}_{\mathsf{m}}$} are computed. The threshold used for the decision rule is set to $\eta^2/n$. \cref{fig:1} plots the empirical average of $n\cdot T $ under hypothesis $H_1$ averaged over $250$ iterations. Note that for a fixed value of $\eta $, the values of $n \cdot T$ (under $H_1$) are roughly constant as $n $ increases, and the values of $n \cdot T$ increase as $\eta$ increases. \cref{fig:BayesRisk2} plots the behavior of $\hat{\mathcal{R}}_\mathsf{m}$ for each value of $\eta$ and $n$. Observe that $\I_{\hat{\mathcal{R}}_\mathsf{m}>1/2}$ is largely independent of $n$, which is consistent with our derivation in \cref{sec: Proof of Lower Bound and Stability}, where we have shown that if $8\eta^2 k \leq \tilde{c}$ then $\mathcal{R}_{\mathsf{m}} \geq \frac{1}{2}$. Thus, once $\eta$ exceeds a particular threshold $\mathcal{R}_{\mathsf{m}}$ exceeds $\frac{1}{2}$ for all values of $n$.
\begin{figure*}
\centering
\begin{subfigure}{0.48\textwidth}
  \centering
  \includegraphics[width=\linewidth]{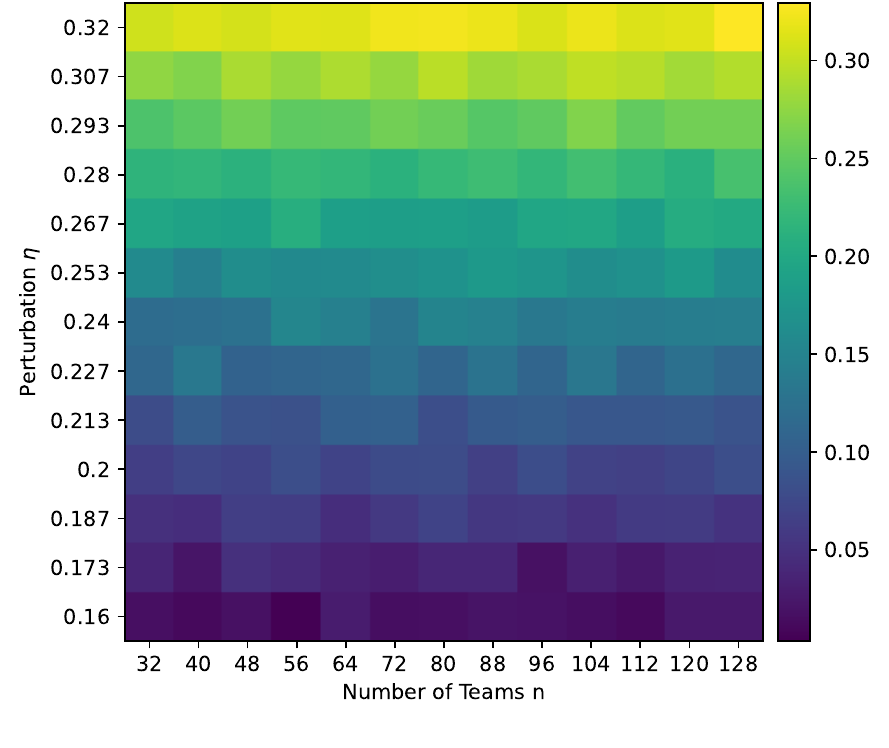} 
  \phantomcaption{(a) Empirical average of $n \cdot T$.}
  \label{fig:1} 
\end{subfigure}
\hfill
\begin{subfigure}{0.48\textwidth}
  \centering
  \includegraphics[width=\linewidth]{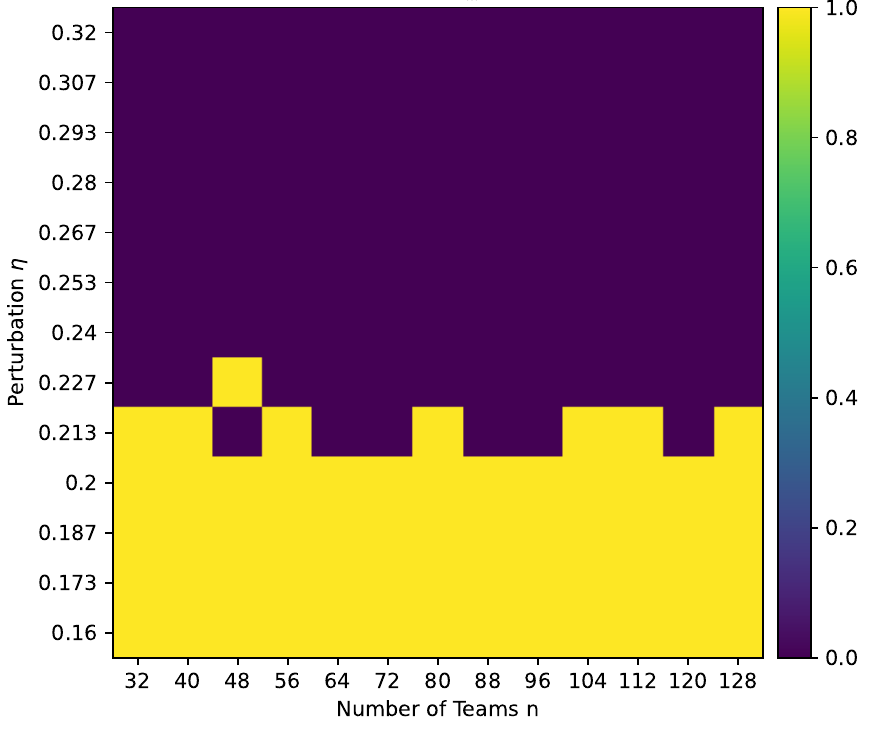} 
  \phantomcaption{(b) Empirical $\mathbbm{1}_{\hat{\mathcal{R}}_\mathsf{m}>1/2}$ }
  \label{fig:BayesRisk2} 
\end{subfigure}
\caption{ Plots 2a and 2b illustrate the empirical average of $n \cdot T$ under hypothesis $H_1$ and $\mathbbm{1}_{\hat{\mathcal{R}}_\mathsf{m}>1/2}$ for various values of $\eta$ and $n$.}
\label{fig:BayesRiskCombined}
\end{figure*}

Now we perform a second set of experiments where we analyze the thresholds derived using the empirical quantile approach and the permutation-based scheme under both hypotheses $H_0$ and $H_1$. We will derive these thresholds independently for various values of $n$ and $k$. Specifically, we consider $n$ values ranging from $10$ to $85$ with intervals of $15$, and $k$ values set to $[12, 24]$. 
For the empirical quantile approach, we begin by generating a collection of $200$ BTL models, where (unscaled) weights are randomly assigned according to $\alpha_i = 0.05 + U[0,1]$, with $U[0,1]$ representing a uniform random variable in $[0,1]$. Next, for each model, we generate synthetic data using the BTL model parameters and the corresponding pairwise comparison probabilities corresponding to a complete graph. The (scaled) test statistic $n\cdot k \cdot T$ is computed for each model, and the $95\%$ percentile value is identified as the threshold. We denote this estimated threshold using the empirical quantile approach as $\gamma_0$, computed separately for each combination of $n$ and $k$. Next, using only one of these models and its synthetic data, we compute an alternative threshold $\gamma_1$ via a permutation-based scheme. This involves two types of shuffling: first, shuffling to induce translated skew-symmetry, followed by $2n$ cyclic shuffles to induce reversibility. The combined shuffling procedure is repeated 200 times for each model, with the 95th percentile of the resulting $n \cdot k \cdot T$ values taken as $\gamma_1$. The entire experiment is then repeated for an observation graph generated from an Erd\H{o}s-R\'enyi model with edge probability $p$ chosen such that $np = \log^2 n$.

\cref{fig:ErdosH0_k} plots the obtained thresholds $\gamma_0$, $\gamma_1$ for various values of $n$ and $k$ for complete and Erd\H{o}s-R\'enyi random graphs. It can be seen that for complete graphs both the scaled thresholds remains roughly constant for various values of $n$ and $k$ thus exhibiting the same high level scaling behavior as the theoretical threshold and moreover, the thresholds, especially $\gamma_0, \gamma_1$, match each other quite closely even though they are obtained from quite different mechanisms. In contrast, the Erd\H{o}s-R\'enyi model shows a gradual decrease in thresholds as $n$ increases, which may reflect the conservative nature of our theoretical bounds.

Now, we will replicate the experimental framework from $H_0$  to both Erd\H{o}s-R\'enyi and complete graphs under the alternative hypothesis $H_1$.  The pairwise comparison model is defined as $P = \text{clip}(B + \Delta, 0.02, 0.98)$, where $B$ is the pairwise comparison matrix generated from a random BTL model, following the same procedure as in the earlier experiment. The clip operation ensures all probabilities remain within the $[0.02, 0.98]$ range to avoid negative values. The perturbation term \( \Delta \) introduces deviations, with \( \Delta_{ij} = 0.5/n^{1/4} \) for \( i > j \), \( \Delta_{ij} = -0.5/n^{1/4} \) for \( j > i \), and \( \Delta_{ij} = 0 \) when \( i = j \). To account for heterogeneity in the data, the number of observations $ k_{ij} = k_{ji}$ and $k_{ij}$ are sampled uniformly and independently from the interval $[12, 24]$. 

\cref{fig:ErdosH1_k} displays the distribution of scaled test statistics $n \cdot k \cdot T$ (with $k=12$) through shaded regions representing the $5\%-95\%$ percentiles, while solid line indicate mean values. For the Erd\H{o}s-R\'enyi graph model, we set the edge probability $np = \log^2 n$, as in the above experiment. The rest of the parameters and method for computing $\gamma_1$ are the same as in the previous experiment. Remarkably, the $\gamma_1$ thresholds computed under $H_1$ show strong agreement with those from $H_0$. This suggests that the combined effects of shuffling corresponding to reversibility and skew-symmetric induce ``BTL-like'' properties in the perturbed data. We emphasize that this permutation-based approach is primarily a conceptual demonstration rather than a scalable solution. While it provides an interesting empirical validation of our theoretical connections, its computational demands grow rapidly with problem size, and formal analysis of its complexity remains an open challenge.

\begin{figure*}
\centering
\begin{subfigure}[b]{0.49\linewidth}
    \centering
    \includegraphics[width=0.99\textwidth]{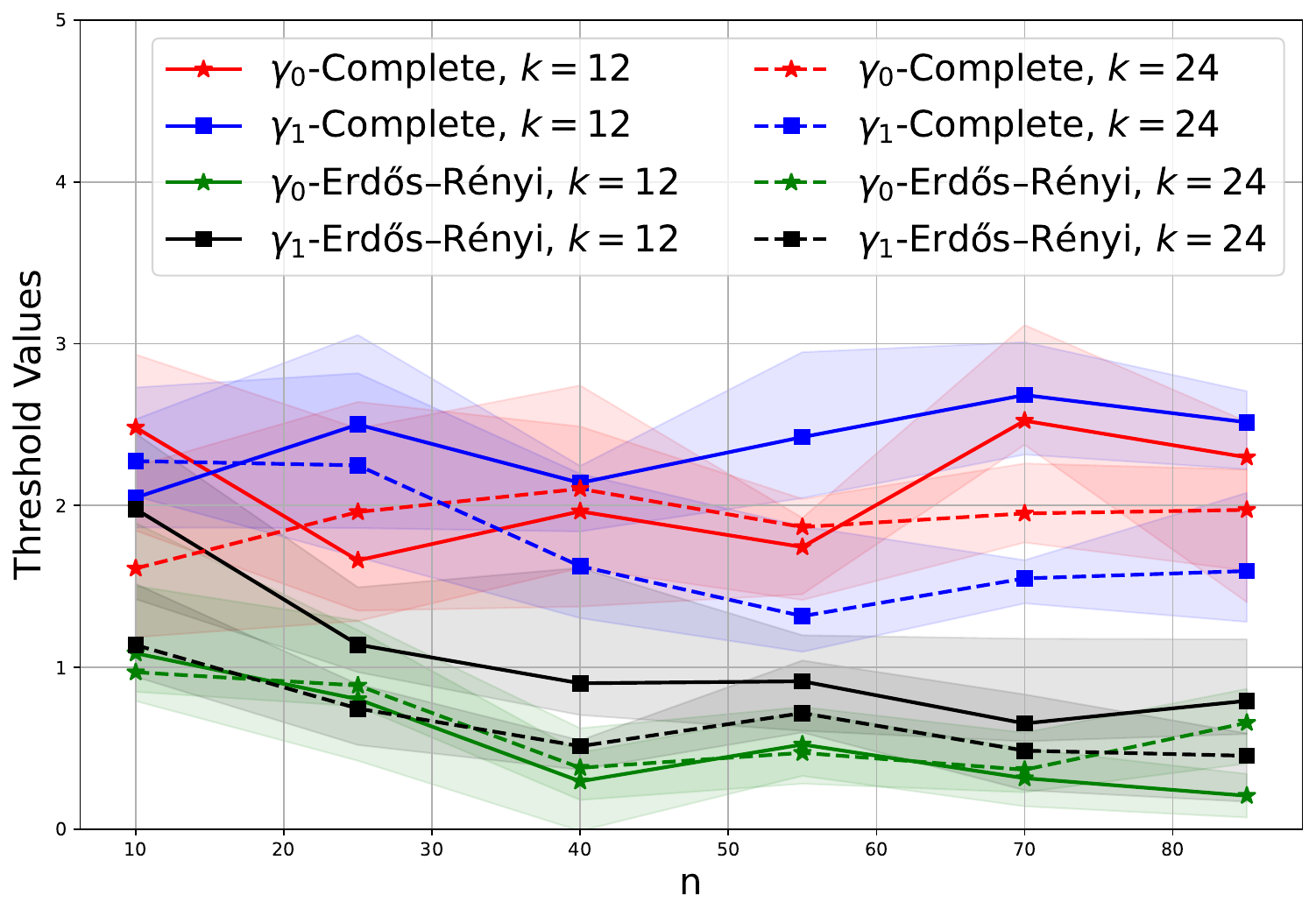}
    \phantomcaption{(a) Estimated thresholds $\gamma_0, \gamma_1$ under   $H_0$. } 
    \label{fig:ErdosH0_k}
\end{subfigure}
\begin{subfigure}[b]{0.47\linewidth}
    \centering
    \includegraphics[width=0.99\textwidth]{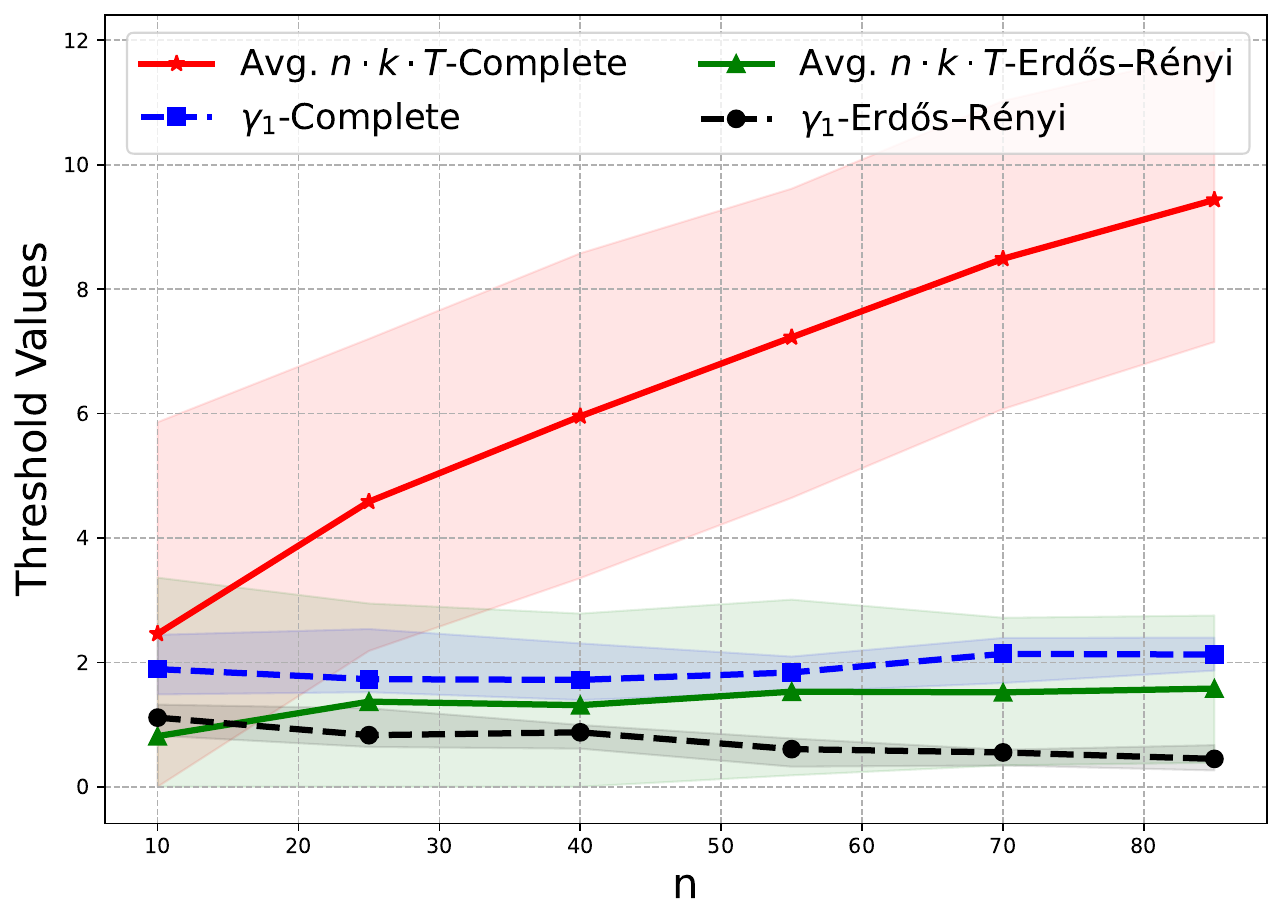}
    \phantomcaption{(b) Average value of $n\cdot k\cdot T$ and $\gamma_1$ under  $H_1$. } 
    \label{fig:ErdosH1_k}
\end{subfigure}
\caption{Plot 3a illustrates estimated thresholds $\gamma_0, \gamma_1$ for various values of $n$ and $k$ for complete and Erd\H{o}s-R\'enyi random graph. Plot 3b illustrates the behavior of estimated thresholds $\gamma_1$ for various values of $n$ and $k$ for complete and Erd\H{o}s-R\'enyi random graph under hypothesis $H_1$. The shaded region highlights 90\% confidence intervals of test statistic $T$.}
\end{figure*}

\subsection{Testing on Real-World Datasets} 

In this section, we apply the testing procedure to real-world datasets, even when these datasets may not be derived from the BTL model. We performed our testing on two distinct datasets representing different sports.

\textbf{Dataset 1:} Our first dataset encompasses public data gathered from cricket One Day International (ODI) matches \cite{CricketDataset} spanning a period of $19$ years, from $1999$ to $2017$. We analyzed matches between the eight most active teams in the dataset: Australia, India, England, West Indies, South Africa, New Zealand, Sri Lanka, and Bangladesh. These teams were selected based on their participation frequency in both home and away matches during the 1999-2017 period. To ensure the robustness of our analysis, we excluded matches resulting in a tie or draw. Furthermore, matches where neither of the teams played as the home team were also removed from consideration. Subsequently, we narrowed our focus to the eight teams that had engaged in the most number of matches against each other in both home and away scenarios. The testing procedure involves calculating the proposed test statistic and the thresholds using the empirical quantile approach and a permutation-based scheme. We calculate the test statistic and the thresholds on the cumulative data of $t$ recent years. \cref{fig:cricket} shows the value of $ n\cdot T$ (with $n=8$) computed on the cumulative data of $t$ most recent years, along with the respective (scaled) thresholds computed using the empirical-quantile approach and permutation-based scheme. As can be seen in the figure, the calculated test statistic consistently exceeded the threshold for cricket matches for most values of $t$. This indicates that the BTL model may not be the most suitable model for accurately characterizing the performance of cricket teams. The marked deviation can be attributed, in part, to a significant observed home advantage in the sport, which could be clearly observed by examining the empirical pairwise probability matrix.

\textbf{Dataset 2:} For our second dataset, we employed a similar process, this time focusing on National Basketball Association (NBA) matches \cite{NBADataset} from the years $2001$ to $2016$. The dataset comprised matches played by fifteen teams that had the most extensive history of total matches. As with the cricket dataset, we performed the same test, calculating the scaled test statistic $n \cdot T$ (with $n= 15$) and comparing it against the threshold computed using the empirical-quantile approach and permutation-based scheme on the cumulative data for $t$ most recent years. Our observations for the NBA dataset are presented in \cref{fig:NBA}, which illustrates that the BTL model has a much better fit, especially for smaller values of $t$. Also, note that the test statistic exceeds the threshold for larger values of $t$. This is reasonable because over such a large period, one could expect significant changes in the skill scores of the teams over time, and therefore, a single BTL model may not be able to accurately capture the data with a single time-independent skill score. Additionally, for both experiments, the thresholds obtained using the two different techniques agree with each other, demonstrating the consistency and effectiveness of both techniques. 

\begin{figure*}
\centering
\begin{subfigure}[b]{0.49\linewidth}
    \centering
    \includegraphics[width=0.99\textwidth]{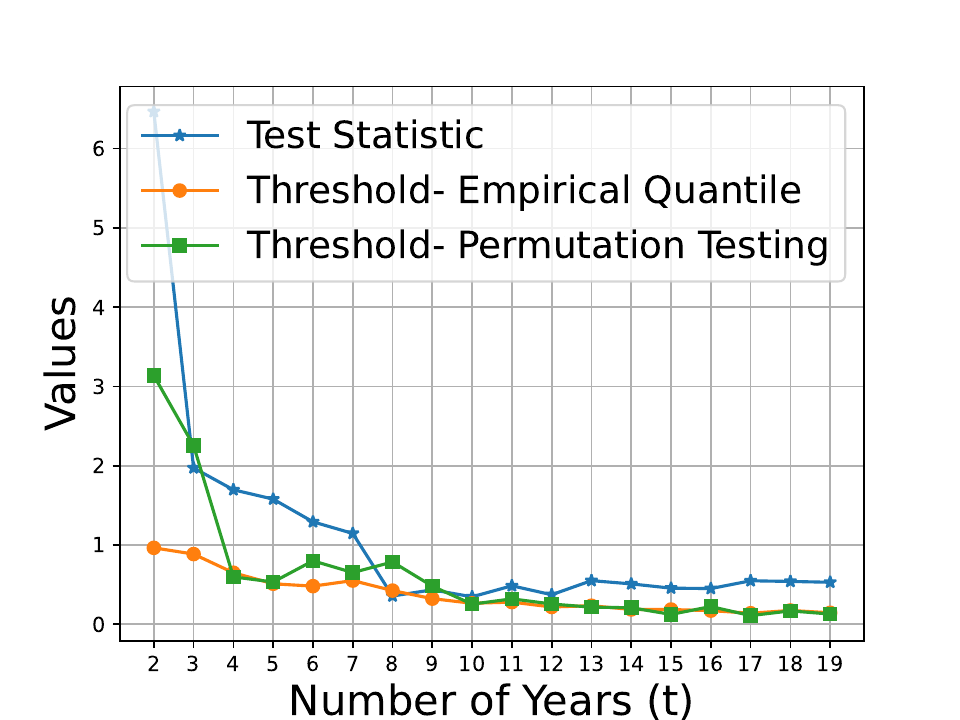}
    \phantomcaption{(a) Cricket ODI dataset. } 
    \label{fig:cricket}
\end{subfigure}
\begin{subfigure}[b]{0.49\linewidth}
    \centering
    \includegraphics[width=0.99\textwidth]{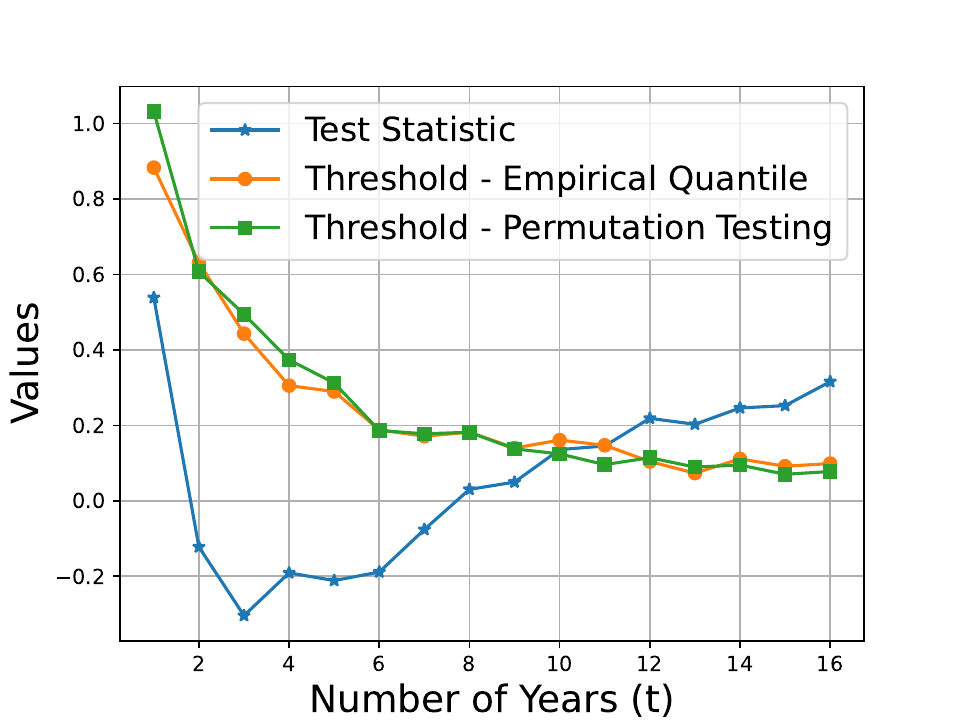}
    \phantomcaption{(b) NBA dataset. } 
    \label{fig:NBA}
\end{subfigure}
\caption{Plots 4a and 4b illustrate the scaled test statistic $n \cdot T$ for the cricket ODI dataset and the NBA dataset. The thresholds computed using both the empirical-quantile approach and the permutation-based scheme are also reported for each dataset.}
\end{figure*}

\section{Conclusion} \label{sec: Conclusion} 
In this work, we studied the problem of testing whether a BTL model accurately represents pairwise comparison data generated by an underlying pairwise comparison model. We introduced a new notion of separation distance to quantify the deviation of a pairwise comparison model from the set of BTL models. This distance measure allowed us to rigorously formulate our minimax hypothesis testing framework. We derived upper bounds on the critical threshold for our hypothesis testing problem under certain fixed induced graph structures and established a matching information-theoretic lower bound for complete induced graphs. 
Furthermore, we established upper bounds on the type \Romannum{1} and type \Romannum{2} probabilities of error for our proposed statistical test. Our work also highlighted the importance of expansion properties and bounded principal ratios for the induced observation graphs in our framework; indeed, our proposed test exhibited theoretical guarantees under such assumptions. In particular, we provided several examples of families of graphs that possessed the desired expansion and regularity properties. Finally, we conducted several experiments on synthetic and real-world datasets to validate our theoretical results. To conduct some of our numerical simulations, we also presented a new non-parametric approach based on the permutation test that constructed a data-driven threshold for our proposed hypothesis test. 

There are several possible future research directions springboarding off of this work. For example, one could extend our hypothesis testing framework to account for general $k$-ary comparison data rather than pairwise comparison data. 
It would also be useful to derive bounds on the principal ratio for more general induced graph structures, even when the graphs depend on the underlying pairwise comparison probabilities. Another interesting direction is to investigate whether bounds on expansion and principal ratio are fundamentally necessary for spectral analysis of the testing problem. Our subsequent work \cite{MakurSingh2025b} presents a maximum-likelihood-based analysis for testing generalized Thurstone models, and circumvents the need for expansion and principal ratio assumptions. Additionally, investigating the impact of other pairwise comparison models, such as those satisfying strong and weak stochastic transitivity, on the critical threshold of the testing problem presents an interesting avenue for future research. Furthermore, extending the minimax lower bounds to more general graph topologies satisfying certain structural conditions presents another important avenue for future work. Lastly, one could delve deeper into the theoretical properties of the new permutation test (based on Kolmogorov's loop criterion) proposed in our work as well.

\appendices

\section{Proofs of Bounded Principal Ratio}
\subsection{Proof of \cref{lem: Height of Perron–Frobenius Eigenvector}}\label[appendix]{Proof of lem: Height of Perron–Frobenius Eigenvector}
        By \cref{Pijbounded} and the Perron-Frobenius theorem, we know that $\pi_i > 0$ for all $i \in [n]$.
        By stationarity of $\pi$, we have 
        $$
            \pi_i = \sum_{k=1}^n \pi_k S_{ki} = \pi_i \left(1 - \frac{1}{d}\sum_{k: k\neq i} p_{ik}\right) + \frac{1}{d } \sum_{k:k\neq i} p_{ki} \pi_k .
        $$
        Rearranging the above equation gives
        $$
            \pi_i = \frac{\sum_{k:k\neq i} p_{ki} \pi_k}{\sum_{k: k\neq i} p_{ik}} .
        $$
        Hence, for any $i \neq j$ such that $\pi_i \leq \pi_j$, we have
        \begin{align*}
            \frac{\pi_i}{\pi_j} &= \frac{\sum_{k:k\neq i} p_{ki} \pi_k}{\sum_{k:k\neq j} p_{kj} \pi_k} \frac{\sum_{k: k\neq j} p_{jk}}{\sum_{k: k\neq i} p_{ik}} \\
            & \geq  \delta \frac{\sum_{k:k\neq i}  \pi_k}{\sum_{k:k\neq j} \pi_k} \frac{\sum_{k: k\neq j} p_{jk}}{\sum_{k: k\neq i} p_{ik}} \\
            & \geq \delta^2 \frac{ 1-\pi_i}{1 - \pi_j} \geq \delta^2 ,
        \end{align*}
        where last inequality holds because $\pi_i \leq \pi_j$ and the other inequalities use \cref{Pijbounded}. The proposition follows by taking  $j = \argmax_k \pi_k$. \qed

    \subsection{Proof of \cref{lem: Height of Perron–Frobenius Eigenvector for expanders}} \label[appendix]{Proof of lem: Height of Perron–Frobenius Eigenvector for expanders}
    We follow the high-level strategy of the argument in \cite[Theorem 3]{AksoyChung2016}. Let the smallest and the largest elements of the stationary distribution be denoted as:
    $$\pi_{s} \triangleq \min_{i \in [n]} \pi_{i} \text{ and } \pi_{\ell} \triangleq \max_{i \in [n]} \pi_{i}.$$
    Let $N(i)$ denotes the neighborhood set of $i$, i.e., $N(i)=\{ j: (i, j) \in \calE\}$. Let $\mathcal{S}$ denote the set $\{v \in N(\ell): \pi_{v} \leq \delta^{2} \pi_{\ell}\}$. By the detailed balance equations, we have
    $$
        \left(\sum_{j \in N(\ell)} p_{\ell j}\right) \pi_{\ell} = \sum_{j \in N(\ell)} \pi_{j} p_{j \ell} =\sum_{ j \in \calS } \pi_{j} p_{j u}+\!\sum_{j \in N(\ell) \setminus \calS} \pi_{j} p_{j \ell} .
    $$
    This gives
    $$
    \begin{aligned}
    \left(\sum_{j \in N(\ell)} p_{\ell j}\right) \pi_{\ell} &\leq  \sum_{ j \in \calS } \delta^{2} \pi_{\ell} p_{j \ell}+\sum_{j \in N(\ell) \setminus \calS}   \pi_{\ell} p_{j \ell} .
    \end{aligned}
    $$
    Utilizing \Cref{Pijbounded}, we obtain
    $$
    \delta \tilde{d} \leq \delta^{2}|\calS|+(\tilde{d}- |\calS|).
    $$
    The above inequality implies $|\calS| \leq \frac{\tilde{d}(1-\delta)}{\left(1-\delta^{2}\right)}$. Define $\mathcal{L}=N(\ell) \setminus \calS$, and therefore, we have
    $$
    \begin{aligned}
    |\mathcal{L}| \geq \tilde{d}-\frac{\tilde{d}(1-\delta)}{1-\delta^{2}}
    & =\tilde{d}\left(1-\frac{1-\delta}{1-\delta^{2}}\right)\\
    & =\tilde{d}\left(\frac{\delta-\delta^{2}}{1-\delta^{2}}\right)=\frac{\delta \tilde{d}}{1+\delta}.
    \end{aligned}
    $$
    Now we consider two cases.
    
    \textbf{Case 1:} Assume $|N(s) \cap \mathcal{L}| \geq  \delta^2 \tilde{d} / (1+\delta)$. In this case, using the detailed balance equations, we obtain 
    $$
    \begin{aligned}
    \left(\sum_{j \in N(s)} p_{s j}\right) \pi_{s}= & \sum_{j \in N(s)} \pi_{j} p_{j s} \geq \sum_{j \in N(s) \cap \mathcal{L}} \pi_{j} p_{j s} \\ 
    & \geq \delta^{2} \pi_{l} \times \frac{\delta}{1+\delta} \times|N(s) \cap  L| \\
    &  \geq \frac{\delta^{5} \pi_{l} \tilde{d}}{(1+\delta)^2}.
    \end{aligned}
    $$
    The left-hand side is upper-bounded by
    $$
    \sum_{j \in N(s)} p_{s j} \pi_{s} \leq \frac{\tilde{d}}{1+\delta} \pi_{s}.
    $$
    Thus, the two equations together give
    $$
    \pi_{s} \geq \frac{ \delta^{5} \pi_{\ell}}{ 1 + \delta}.
    $$
    Thus, we have a bound on the principal ratio as $\frac{1+ \delta}{ \delta^{5}}$.
    
    \textbf{Case 2:} Now we assume $|N(s) \cap \mathcal{L}| \leq \frac{ \delta^2  \tilde{d}}{(1+\delta)}$. Define the set $\mathcal{L}^{\prime}=\mathcal{L} \backslash N(s)$. Then, in this case, we have $\left|\mathcal{L}^{\prime}\right| \geq \frac{(1- \delta)\delta \tilde{d}}{(1+\delta)}$. Now, defining $E_{1} =  \mathcal{E} (\mathcal{L}^{\prime}, N(s) )$ utilizing the property $|\mathcal{E}(\mathcal{T}_1, \mathcal{T}_2)| \geq a|\mathcal{T}_1||\mathcal{T}_2|$ we obtain
    $$
    \begin{aligned}
     |E_{1}| =  |\mathcal{E} (\mathcal{L}^{\prime}, N(s) ) | \geq a \left(\frac{(1-\delta)\delta \tilde{d}}{(1+\delta)}\right) \tilde{d} .
    \end{aligned}
    $$
    Using the detailed balance equations for $\pi_s$, we have
    $$\left(\sum_{j \in N(s)} p_{s j}\right) \pi_{s}=\sum_{j \in N(s)} \pi_{j} p_{j s}. $$
    This gives
    $$
    \begin{aligned}
    \pi_{s} & \geq \frac{\delta}{\tilde{d}} \sum_{j \in N(s)} \pi_{j}  =\frac{\delta}{\tilde{d}} \sum_{j \in N(s)} \sum_{k \in N(j)} \frac{ \pi_{k} p_{ kj }}{\sum_{k \in N(j)} p_{jk}} \\
    &  \geq \frac{\delta}{\tilde{d}} \sum_{(j, k) \in E_1 }   \delta^{2} \pi_{\ell} \times \frac{\delta}{\tilde{d}} = \frac{\delta^4 \pi_{\ell}}{\tilde{d}^2}  |E_1| \geq  a\pi_\ell \delta^5 \left(\frac{1 - \delta}{1+\delta}\right) .
    \end{aligned}
    $$
    Thus, we get a bounded principal ratio of $(1+\delta)/(a(1-\delta)\delta^5)$. \qed     

    \subsection{Additional Details on Properties of $(n, \tilde{d},\lambda_2(\G))$-Expander Graphs}
    \label[appendix]{Expansion}
    This appendix provides supporting details on the expansion properties of $(n, \tilde{d},\lambda_2(\G))$-expanders, including existence results and verification of \cref{assm:Large Expansion of DTM} and the assumptions in \cref{lem: Height of Perron–Frobenius Eigenvector for expanders}.

    For simplicity, we present a single construction of a $(n, \tilde{d} = n/2, \lambda_2(\mathcal{G}))$-expander with $\lambda_2(\mathcal{G}) \leq 0.7 \tilde{d}$. Nonetheless, multiple constructions are expected to exist, analogous to ``optimal'' expanders like Ramanujan graphs, which achieve $\lambda_2(\mathcal{G}) = 2\sqrt{\tilde{d}-1}$. We construct a family of $(n, \tilde{d} = n/2, \lambda_2(\mathcal{G}))$-expanders, where $n = 4m$, $m \in \mathbb{N}$. Let $\mathcal{G}$ be a graph on $n$ vertices, where each vertex $i$ is connected to $m = n/4 $ neighbors on both the left and right sides in a cyclic manner. The adjacency matrix of $\mathcal{G}$ is a circulant matrix with the first row given by:
    $$ (0, \underbrace{1, 1, \ldots, 1}_{m}, \underbrace{0, 0, \ldots, 0}_{2m-1}, \underbrace{1, 1, \ldots, 1}_{m}). $$ 
        The graph $\mathcal{G}$ is $n/2$-regular. The second largest absolute eigenvalue $\lambda_2(\mathcal{G})$ for this graph is known to be (cf. \cite{LeeLuoSaganYeh1992})
    \begin{equation}
    \lambda_2(\mathcal{G}) = \max_{l \in [n-1] } \sum_{j=1}^{m} 2 \cos\left(\frac{2\pi j l}{n} \right).
    \end{equation}
 A trigonometric calculation shows that $\lambda_2(\G) \leq 0.7\tilde{d}$. Thus, $\mathcal{G}$ is an $(n, \tilde{d} = n/2, \lambda_2(\mathcal{G}) \leq 0.7 \tilde{d})$-expander graph.

To show that a $(n, \tilde{d},\lambda_2(\G))$ expander with   $\tilde{d} = \nu n$ and  $\lambda_2(\G) \leq (1- \tilde{\nu}) \tilde{d}$ implies the assumption in \cref{lem: Height of Perron–Frobenius Eigenvector for expanders}, we utilize the expander mixing lemma in \cite[Theorem 1]{abiad2024unified}, which by a simple calculation yields 
\begin{equation}
|\calE(\mathcal{S}, \mathcal{T})| \geq \frac{ (\tilde{d} - \lambda_2(\G))}{n} |\mathcal{S}| |\mathcal{T}|. \label{edge expansion lb}
\end{equation}
This yields the bound $a \geq (1-\nu)\tilde{\nu}$ on $a$ in \cref{lem: Height of Perron–Frobenius Eigenvector for expanders}.
Now to show \cref{assm:Large Expansion of DTM} holds for this expander, we substitute $\mathcal{T} = \mathcal{S}^\complement$ in \cref{edge expansion lb} and which by a simple argument gives 
\begin{equation}
\tilde{\phi}(\G) \geq  \frac{ (\tilde{d} - \lambda_2(\G))}{2}   \geq  \frac{\tilde{\nu} \tilde{d}}{2}.
\end{equation}
Using \cref{trivial lower bound of expansion}, this gives $\xi \geq \frac{\delta^6  }{ \tilde{c}(a) (1+\delta) } \tilde{\nu} $, where $\tilde{c}(a)$ is the constant in \cref{lem: Height of Perron–Frobenius Eigenvector for expanders}.

\subsection{Proof of \cref{lem: Height of Perron–Frobenius Eigenvector under sampling}} \label[appendix]{Proof of Height of Perron–Frobenius Eigenvector under sampling}
The proof immediately follows by utilizing \cref{thm: Perturbation Bound on Sparsified Stationary Distribution} and an application of triangle inequality, giving us the following bounds on $\|\pi\|_\infty$ and $\pi_{\min}$: 
\begin{align}
   \|\pi\|_\infty & \leq   \left(1 + \sqrt{\frac{c_{10}}{\mathsf{c}_p} } \right) \|\pi^*\|_\infty , \label{Bound 1infty} \\
   \pi_{\min} & \geq \pi^*_{\min} - \sqrt{\frac{c_{10} }{\mathsf{c}_p} }  \|\pi^*\|_\infty \stackrel{\zeta}{\geq} \pi^*_{\min} - \sqrt{\frac{c_{10} }{\mathsf{c}_p} }  \frac{\pi^*_{\min}}{\delta^2}, \label{Bound 2infty}
\end{align}
where we define $\pi^*_{\min} \triangleq \min_{i \in [n]} \pi^*_i $, and $\zeta$ follows since the principal ratio for $\pi^{*}$ is upper bounded by $1/\delta^2$ by \cref{lem: Height of Perron–Frobenius Eigenvector}.  
Therefore, we can upper bound the principal ratio for a large enough constant $\mathsf{c}_p$ as
\begin{equation}
     \frac{\|\pi\|_{\infty}}{\pi_{\min}} \leq \frac{(1 + \sqrt{c_{10}/\mathsf{c}_p} ) \|\pi^*\|_\infty }{ (1 - \sqrt{c_{10}/(\delta^4 \mathsf{c}_p) } ) \pi^*_{\min} } \leq \frac{(1 + \sqrt{c_{10}/\mathsf{c}_p} )  }{ (1 - \sqrt{c_{10}/(\delta^4 \mathsf{c}_p) }) \delta^2  }  .
\end{equation} 
Hence,  in the above condition $n p \geq \frac{2c_{10} \log n}{\delta^4}$ suffices to obtain a bounded principal ratio for $\pi$. \qed

\section{$\ell^\infty$-Error Bounds under Sub-sampling}
\label[appendix]{Proof of Perturbation Bound on Sparsified Stationary Distribution}
In this section, we will provide a proof for \cref{thm: Perturbation Bound on Sparsified Stationary Distribution}. For brevity, we will refer to the scenarios in \cref{thm: Perturbation Bound on Sparsified Stationary Distribution} as the \textit{sub-sampled} case. 
Recall that, in the sub-sampled case, $S^*$ and $S$ are defined as 
\[  
\begin{aligned}
    S^*_{ij} & \triangleq  \begin{cases}
        \frac{p^*_{ij} }{3n}, & i\neq j \\
        1-\frac{1}{3n}\sum\limits_{u=1}^n p^*_{iu}, & i = j 
    \end{cases},        \\  
    S_{ij}    &  \triangleq  \begin{cases}
        \frac{p^*_{ij} \I_{(i,j) \in \calE} }{d}, & i\neq j \\
        1 - \frac{1}{d}\sum\limits_{ u=1}^n p^*_{iu} \I_{(i,u) \in \calE }, & i = j 
    \end{cases}.
\end{aligned}
\]
Moreover, in the sub-sampled case, we use $d= 3np$ and we use the notation $d_{\max} = \frac{3np}{2}$ and $d_{\min} = \frac{np}{2}$ to denote the maximum and minimum degrees in the high probability sense (see \cref{lem: Degree Concentration}). Observe that, on the event $\mathcal{A}_0$ defined in \cref{lem: Degree Concentration}, $S$ is a valid row stochastic matrix, and therefore, $\pi$ exists. We remark that our argument in the proof below uses the leave-one-out technique from \cite{Chenetal2019}.

\subsection{Proof of \cref{thm: Perturbation Bound on Sparsified Stationary Distribution}}

In the following section, we will provide a unified proof of \Cref{thm: Perturbation Bound on Sparsified Stationary Distribution}. 
Since ${\pi}$ and $\pi^*$ are stationary distribution of ${S}$ and $S^*$, we can write the $i$th entry as
$$
\begin{aligned}
{\pi}_{i}  -\pi_{i}^{* } & =({\pi}^{\T} {S})_{i}-(\pi^{*^\T} S^*)_{i} \\
&  = (\pi^{*^\T}({S}-S^*))_{i}+(({\pi}-\pi^*)^{\T} {S})_{i} \\
& = \underbrace{\sum_{j=1}^{n} \pi^*_j({S}_{ j i}-S^*_{j i}) }_{L_1}+ \underbrace{\left({\pi}_{i}-\pi^*_{i}\right) {S}_{i i}}_{L_2} \\
& \ \ \ \ \ \ \ \ \ \ \ \  + \underbrace{\sum_{j: j \neq i}\left({\pi}_{ j}-\pi^*_{j}\right) {S}_{ j i }}_{L_3}.
\end{aligned}
$$
In the following two lemmata, we will show that in sub-sampled models the $L_1$ term is bounded as $O\big(\|\pi^*\|_{\infty} \sqrt{\frac{\log n}{n p}}\big) $, with high probability. The precise statement is provided below.
\begin{lemma}[$L_1$ in Sub-sampled Model] \label{lemS: Handling the first term} For the setting of \cref{thm: Perturbation Bound on Sparsified Stationary Distribution}, assume that the sampling probability satisfies $np \geq \log n $, then there exists a constant $c$ such that
\begin{align*}
    \mathbb{P}& \left(\exists i \in [n], \sum_{j=1}^{n} \pi^*_j({S}_{ j i}-S^*_{j i}) \geq  c\|\pi^*\|_{\infty} \sqrt{\frac{\log n}{np }} \right) \\ 
    & \ \ \ \ \ \ \ \ \ \ \ \ \ \ \ \ \ \ \ \ \ \ \ \ \ \ \ \ \ \ \ \ \ \ \ \ \ \ \ \ \ \ \ \ \ \ \ \ \ \ \ \  \leq  O\bigg(\frac{1}{n^3}\bigg).
\end{align*}
\end{lemma} 
The proof of \cref{lemS: Handling the first term} is provided in \cref{Proof of lemS: Handling the first term}. 
Now we focus on the second term $L_2 = \left({\pi}_{i}-\pi^*_{i}\right) {S}_{i i}$. Observe that,  
$$
\begin{aligned}
{S}_{i i} & =1-\frac{1}{d} \sum_{\substack{j: j \neq i \\ (i, j) \in \calE}} p_{i j} \leq  1-\left(\frac{\delta}{1+\delta}\right) \frac{d_{\min} }{ d} .
\end{aligned}
$$
By \cref{lem: Degree Concentration}, we know that $d_{ \min} = \frac{n p}{2}$. Therefore, on the event $\mathcal{A}_0$, we have
\begin{equation} \label{Bound on L2s}
    |({\pi}_{i}-\pi^*_{i}) S_{i i}| \leq \left(1-\frac{\delta}{6(1+\delta)} \right) \left|{\pi}_{i}-\pi^*_{i}\right| .
\end{equation}
Next, we aim to establish an upper bound for the third term $L_3$. However, due to the interdependence between ${\pi}$ and ${S}$, finding a tight bound on this term becomes challenging. To address this, we employ a leave-one-out strategy as in \cite{Chenetal2019} to disentangle the dependence so as to obtain tighter bounds. Therefore, we introduce a new canonical Markov matrix ${S}^{(m)}$ for $m \in [n]$ with its $(i,j)$th entry for $i \neq j$ defined as 
$$({S}^{(m)} )_{i j} \triangleq \begin{cases} {S}_{i j},&  i \neq m  \text { and } j \neq m \\ \frac{p^*_{i j}}{3 n}, & i=m  \text { or } j=m  \end{cases}, $$
i.e., $(S^{(m)})_{ij}$ is the same as $(S^*)_{ij}$ when $i=m$ or $j=m$. Also, we define $({S}^{(m)})_{i i} = 1-\sum_{j: j \neq i} (S^{(m)})_{i j}$. Note that on the event $\mathcal{A}_0$, ${S}^{(m)}$ is also a row-stochastic matrix. Let ${\pi}^{(m)}$ be the stationary distribution of ${S}^{(m)}$. Now, we decompose the term $L_3$ as
$$
\begin{aligned}
    \sum_{j: j \neq m}\left({\pi}_{j }-\pi^*_{j}\right) {S}_{jm} & = \underbrace{\sum_{j: j \neq m} \left({\pi}_{j}-{\pi}^{(m)}_{j}\right) {S}_{jm}}_{L_{3,1}} \\ 
    &\ \ \ \ \ \ \  + \underbrace{\sum_{j: j \neq m}\left({\pi}^{(m)}_{j}-\pi^*_{j}\right) {S}_{ jm}}_{L_{3,2}}.   
\end{aligned}
$$

\textbf{Bounding $L_{3,1}$:} First, we will bound the term $L_{3,1}$ using the Cauchy-Schwarz inequality as
$$
\begin{aligned}
\sum_{j : j \neq i}\left({\pi}_{j} -{\pi}^{(m)}_{j}\right) {S}_{j m} & \leq \frac{1}{d} \sqrt{d_{\max} }\left\|{\pi} - {\pi}^{(m)}\right\|_{2} \\
&  = \frac{1}{\sqrt{6np } } \left\|{\pi}- {\pi}^{(m)}\right\|_{2}.
\end{aligned}
$$
The following two lemmata provide a bound on the $\ell^2$-error for the leave-one-out version of the stationary distribution.
\begin{lemma}[Error Bound for Sub-sampled Case] \label{lem: Error bound on leave-out} There exists constant $c_0>1 $ such that for $n p \geq  c_0 \log n$, the following bound holds with probability at least $1 - O(n^{-3})$, 
\begin{equation}
    \|{\pi}^{(m)}  - {\pi}  \|_2 \leq \|{\pi} - \pi^*\|_\infty + \|\pi^*\|_\infty.
\end{equation}
\end{lemma}
The proof of \Cref{lem: Error bound on leave-out} is provided in \cref{Proof of Error bound on leave-out}. 

\textbf{Bounding $L_{3,2}$:}  To bound the term $L_{3,2}$, we define the graph $\G_{-m}$ as the graph $\G$ without the $m$th node. We will use the following identity to bound $|L_3|$: 
$$
    |L_{3,2}| \leq |L_{3,2} - \E\left[L_{3,2} \mid \G_{-m}\right]| + |\E\left[L_{3,2} \mid \G_{-m}\right]|.
$$
Observe that $\E[{S}^{(m)}_{jm}|\G_{-m}] = S^*_{ij} $. Therefore, we get the following bound on $|L_{3,2}|$
\begin{align}
        |\E\left[L_{3,2} \mid \G_{-m}\right]| & =\sum_{j: j \neq m}\left({\pi}^{(m)}_{j}-\pi^*_{j}\right) S^*_{ij} \nonumber \\ 
        & \leq \left\| {\pi}^{(m)}-\pi^*\right\|_{2} \frac{\sqrt{n}}{3n} \nonumber\\
        & \leq \frac{1}{3\sqrt{n}}\left(\left\|{\pi}^{(m)}-{\pi}\right\|_{2}+\left\|{\pi}-\pi^*\right\|_{2}\right) .\label{Bound on EL32s} 
\end{align}
The bounds for the first term $\|{\pi}^{(m)}-{\pi}\|_2$ have been derived in \cref{lem: Error bound on leave-out} and the error bounds for second term $\|{\pi}-\pi^*\|_{2},$ are provided below in the ensuing lemma.
\begin{lemma}[$\ell^2$-Error Bounds for Sub-sampled Distribution] \label{lem: L2 error subsampling} There exists constants $\tilde{c}, c_0>1 $ such that for $n p \geq  c_0 \log n$, the following bound holds with probability at least $1 - O(n^{-3})$, 
\begin{equation}
    \|{\pi } - \pi^* \|_2 \leq \tilde{c} \sqrt{n}\|\pi^*\|_\infty \sqrt{\frac{\log n}{n p }} .
\end{equation}
\end{lemma}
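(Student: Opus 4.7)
The plan is to apply the eigenvector perturbation bound of \cref{lem:PertubationOfPi} in the weighted norm $\ell^2(\pi^{*-1})$ and then convert to the unweighted $\ell^2$-norm. Concretely, identifying $(S,\pi)\mapsto(S^*,\pi^*)$, $(\tilde S,\tilde \pi)\mapsto(S^*,\pi^*)$, and $(\hat S,\hat\pi)\mapsto(S,\pi)$ in that lemma yields
\begin{equation*}
\|\pi-\pi^*\|_{\pi^{*-1}} \;\leq\; \frac{\|\pi^{*\T}(S^*-S)\|_{\pi^{*-1}}}{1-\|\Pi^{*1/2}S^*\Pi^{*-1/2}-\sqrt{\pi^*}\sqrt{\pi^*}^{\T}\|_2 - \|S-S^*\|_{\pi^{*-1}}},
\end{equation*}
whenever the denominator is positive. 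The final step will be the elementary conversion $\|\pi-\pi^*\|_2\leq \sqrt{\|\pi^*\|_\infty}\,\|\pi-\pi^*\|_{\pi^{*-1}}$, so it remains to bound the numerator by $O(\sqrt{n}\,\|\pi^*\|_\infty^{1/2}\sqrt{\log n/(np)})$ (in the weighted norm) and to show that the denominator is bounded below by a positive constant with high probability.

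For the denominator, the spectral-norm term is controlled exactly as in \cref{lem:SpectralGapLB}: the canonical Markov matrix $S^*$ has a uniformly laziness of $\tfrac12$ and off-diagonal entries of order $1/n$, so the Dobrushin contraction coefficient for TV distance yields $\|\Pi^{*1/2}S^*\Pi^{*-1/2}-\sqrt{\pi^*}\sqrt{\pi^*}^{\T}\|_2 \leq 1-c(\delta)$ for some absolute constant $c(\delta)>0$. For the noise term $\|S-S^*\|_{\pi^{*-1}}$, I will first note that $\E[S]=S^*$ entrywise (the sub-sampling normalization $d=3np$ matches $d^*=3n$ in expectation), and that every entry of $S-S^*$ is bounded by $1/(3np)$. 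A matrix Bernstein argument paralleling \cref{lem:Spectral Norm of Error} then gives $\|S-S^*\|_2\leq O(\sqrt{\log n/(np)})$ with probability at least $1-O(n^{-3})$. Because $\|A\|_{\pi^{*-1}}=\|\Pi^{*1/2}A\Pi^{*-1/2}\|_2\leq \sqrt{h_{\pi^*}}\,\|A\|_2$ and $h_{\pi^*}\leq 1/\delta^2$ by \cref{lem: Height of Perron–Frobenius Eigenvector}, the weighted bound $\|S-S^*\|_{\pi^{*-1}}=O(\sqrt{\log n/(np)})$ follows. Choosing $c_0$ sufficiently large in the hypothesis $np\geq c_0 \log n$ then forces the denominator to exceed $c(\delta)/2$.

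For the numerator, I will apply \cref{lemS: Handling the first term} entrywise to obtain $|(\pi^{*\T}(S-S^*))_i|\leq c\|\pi^*\|_\infty \sqrt{\log n/(np)}$ uniformly in $i$ with probability at least $1-O(n^{-3})$, after absorbing the $\|\pi\|_\infty$ on the right-hand side into $\|\pi^*\|_\infty$ at the cost of a constant factor (this is possible because \cref{Bound 1infty,Bound 2infty} show $\|\pi\|_\infty$ and $\|\pi^*\|_\infty$ agree up to a constant under our regime). A straightforward $\ell^\infty\to\ell^2$ conversion gives $\|\pi^{*\T}(S^*-S)\|_2\leq c\sqrt{n}\,\|\pi^*\|_\infty\sqrt{\log n/(np)}$, and passing to the weighted norm via $\|x\|_{\pi^{*-1}}\leq \|x\|_2/\sqrt{\pi^*_{\min}}\leq \sqrt{h/\|\pi^*\|_\infty}\,\|x\|_2$ produces the desired numerator bound. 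Substituting into the perturbation inequality and undoing the weighted-norm conversion delivers $\|\pi-\pi^*\|_2\leq \tilde c\sqrt{n}\,\|\pi^*\|_\infty\sqrt{\log n/(np)}$, as claimed.

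The principal obstacle is ensuring that the denominator in \cref{lem:PertubationOfPi} is bounded below by a positive constant with high probability; this hinges on the matrix-concentration step for $\|S-S^*\|_2$, where the two matrices use different normalizations ($3np$ versus $3n$) and one must verify that both the off-diagonal entries (independent across pairs $i<j$, accounting for the symmetric Bernoulli coupling $\mathbbm{1}_{(i,j)\in\calE}=\mathbbm{1}_{(j,i)\in\calE}$) and the diagonal corrections concentrate simultaneously. A secondary concern is the apparent self-referential use of $\|\pi\|_\infty$ in \cref{lemS: Handling the first term}, which I resolve by noting that the principal-ratio bounds \cref{Bound 1infty,Bound 2infty} are available independently and only incur constant overhead.
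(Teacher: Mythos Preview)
Your approach is essentially the paper's: apply \cref{lem:PertubationOfPi} with $(S^*,\pi^*)$ as the base chain, bound the denominator using the complete-graph spectral gap (\cref{lem:SpectralGapLB}) together with a matrix-concentration bound on $\|S-S^*\|_2$ (the paper packages this as \cref{lem: Specral Norm sub-sampling bound}), and bound the numerator by controlling each coordinate of $\pi^{*\T}(S-S^*)$ with Bernstein and then passing from $\ell^\infty$ to $\ell^2$. The paper re-derives the coordinate bound by splitting into diagonal and off-diagonal pieces rather than citing \cref{lemS: Handling the first term} directly, but the computation is the same.

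One point to fix: your resolution of the ``$\|\pi\|_\infty$ versus $\|\pi^*\|_\infty$'' issue is circular. The bounds \cref{Bound 1infty,Bound 2infty} are consequences of \cref{thm: Perturbation Bound on Sparsified Stationary Distribution}, whose proof invokes the very lemma you are proving, so you cannot appeal to them here. The correct (and simpler) fix is to note that the proof of \cref{lemS: Handling the first term} actually delivers the bound with $\|\pi^*\|_\infty$: the quantity $\tilde\sigma=\max_{i,j}|\pi^*_i p^*_{ij}-\pi^*_j p^*_{ji}|$ appearing there is bounded by $\|\pi^*\|_\infty$, and the appearance of $\|\pi\|_\infty$ in the lemma statement is a typo. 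With that correction your argument goes through without circularity.
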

The proof is provided in \cref{Proof of L2 error subsampling}. Now, it remains to bound the term  $|L_{3,2}-\E[L_{3,2}|\G_{-m}]|$. To bound this term, we employ the Bernstein inequality \cite{HDP,tropp2012} as follows:
\begin{equation}
\begin{aligned}
    & \P(|L_{3,2} - \E[L_{3,2}|\G_{-m}]| \geq t) \\
    & = \P\bigg(\bigg| \sum_{j: j \neq m} ({\pi}^{(m)}_{j}-\pi^*_j) ({S}_{mj} - S^*_{mj}) \bigg|\geq t\bigg)  \\
    & = \P\bigg(\bigg| \sum_{j: j \neq m} ({\pi}^{(m)}_{j}-\pi^*_j) \frac{p^*_{mj}}{d} ( \I_{(m,j) \in \calE} - p) \bigg|\geq t\bigg) \\
    &   \leq 2 \exp\left(-\frac{t^2/2 }{ \sigma^2 + \frac{r t}{3} }\right),
\end{aligned}
\end{equation}
where $\sigma^2 =  \frac{(1-p)\|{\pi}^{(m)} - \pi^*\|^2_\infty}{9np}$ and $r = \frac{(1-p)\|{\pi}^{(m)} - \pi^*\|_\infty}{3np}$. Substituting $t = \frac{5}{3} \|{\pi}^{(m)} - \pi^*\|_\infty \sqrt{\frac{\log n}{n p }} $, we obtain the following bound with probability at least $1 - O(n^{-4})$:
$$|L_{3,2} - \E[L_{3,2}|\G_{-m}]| \leq \frac{5}{3} \|{\pi}^{(m)} - \pi^*\|_\infty \sqrt{\frac{\log n}{n p }}. $$ 
Therefore, combining the above bound along with \cref{lem: Error bound on leave-out} and \cref{Bound on EL32s}, we obtain the following bound on $L_{3,2}$ in the sub-sampled case:
\begin{align}
    & |L_{3,2}|  \nonumber \leq    \frac{5}{3}\|{\pi}^{(m)} - \pi^*\|_\infty \sqrt{\frac{\log n}{n p }} \\
    & \nonumber \  \ \ \ \ \ \ \ \ \ +   \frac{1}{3\sqrt{n}}\left(\left\|{\pi}^{(m)}-{\pi}\right\|_{2}+\left\|{\pi}-\pi^*\right\|_{2}\right) \\
    &\nonumber \leq \frac{5}{3}( \|{\pi}^{(m)} - {\pi}\|_2 + \|{\pi} - \pi^*\|_\infty ) \sqrt{\frac{\log n}{n p }}  \\
    &\  + \frac{1}{3\sqrt{n}}\left( \|{\pi} - \pi^*\|_\infty + \|\pi^*\|_\infty  +\tilde{c} \sqrt{n}\|\pi^*\|_\infty \sqrt{\frac{\log n}{n p }} \right) \nonumber \\
    & \leq \frac{5}{3}(2 \|{\pi} - \pi^*\|_\infty + \|\pi^*\|_\infty   ) \sqrt{\frac{\log n}{n p }} \nonumber \\ 
    &\ + \frac{1}{3\sqrt{n}}\left( \|{\pi} - \pi^*\|_\infty + \|\pi^*\|_\infty  +\tilde{c} \sqrt{n}\|\pi^*\|_\infty \sqrt{\frac{\log n}{n p }} \right)  \label{combined bounds on L32}.
\end{align}

Finally, combining the bounds on $L_1$ (see \cref{lemS: Handling the first term}), $L_2$ (see \cref{Bound on L2s}), $L_{3}$ (see \cref{combined bounds on L32}) together, we get the following bound simultaneously for all $m \in [n]$:
\begin{align}
& |{\pi}_i - \pi^*_i |  \leq c \|\pi^*\|_{\infty} \sqrt{\frac{\log n}{np} } + \left(1-\frac{\delta}{6(1+\delta)} \right) \left|{\pi}_{i}-\pi^*_{i}\right| \nonumber \\ 
& \ \ \ \  \ \  + \frac{5}{3}(2 \|{\pi} - \pi^*\|_\infty + \|\pi^*\|_\infty ) \sqrt{\frac{\log n}{n p }}  \nonumber \\ \nonumber& +   \frac{1}{3\sqrt{n}}\left( \|{\pi} - \pi^*\|_\infty + \|\pi^*\|_\infty  + \tilde{c} \sqrt{n}\|\pi^*\|_\infty \sqrt{\frac{\log n}{n p }} \right) \\ 
& \  \ \ \ \ \ \ \ \ \ \ \ \ + \frac{1}{\sqrt{6np}}\left( \|{\pi} - \pi^*\|_\infty + \|\pi^*\|_\infty \right) .
\end{align}
Rearranging the terms and taking the maximum over $i \in [n]$ gives
\begin{align*}
& \|{\pi} - \pi^* \|_\infty \left(\frac{\delta}{6(1+\delta)} -  \frac{10}{3} \sqrt{\frac{\log n}{np}} - \frac{1}{\sqrt{6np}} \right) \\ 
&  \leq \|\pi^*\|_{\infty} \bigg( c \sqrt{\frac{\log n}{np}} + \frac{5}{3} \sqrt{\frac{\log n}{n p }} +   \frac{1}{3\sqrt{n}} \\ 
&  \ \ \ \ \ \ \ \ \ \ \ \ \ \ \ \ \ \ \ \ \ \ \ \ \ + \tilde{c}  \sqrt{\frac{\log n}{n p }} + \frac{1}{\sqrt{6np}}
\bigg).
\end{align*}
Thus, we have established the existence of a constant $c_9$ such that for $np \geq c_9 \log n$, the perturbation bound $\|\pi - \pi^*\|_\infty \leq \|\pi^*\|_\infty O(\sqrt{ {\log n}/{n p } })$ holds. This proves  \cref{thm: Perturbation Bound on Sparsified Stationary Distribution}. \qed

\subsection{Proof of \cref{lemS: Handling the first term}} 
\label[appendix]{Proof of lemS: Handling the first term}
For any fixed $i \in [n]$, the term $L_1$ can be simplified as
\begin{align*}
 L_{1} &=\sum_{j=1}^{n} \pi^*_{ j} ({S}_{ j i}-S^*_{j i}) \\ 
 & =\sum_{j : j \neq i}^{n} \pi^*_{ j} \left(\frac{p^*_{j i} \I_{(i,j) \in \calE}}{ d} - \frac{p^*_{j i}}{3 n}\right) \\ 
 & \ \ \ \ \ \ \ \ \ +\pi^*_{i}\left(\sum_{j: j \neq i}\left(\frac{p^*_{i j}}{3 n} - \frac{p^*_{i j} \I_{(i,j) \in \calE}}{ d}\right)\right) \\
& = \sum_{j: j \neq i} \frac{\left(\pi^*_{i} p^*_{ij} - \pi^*_{j} p^*_{ji}\right)}{3 n}\left(1-\frac{3 n \I_{(i,j) \in \calE} }{ d}\right) \\
&  = \sum_{j: j \neq i} \frac{\left(\pi^*_{ i} p^*_{i j} - \pi^*_{j} p^*_{j i} \right)}{3 n}\left(1-\frac{\I_{(i,j) \in \calE}}{p}\right),
\end{align*}
where the last step follows since $d= 3 n p$. Let $\tilde{\sigma}=\max_{i,j} \left|\pi^*_{i} p^*_{i j} - \pi^*_{ j} p^*_{j i}\right|,$ and for any fixed $i$, let
$$
    u_{j}=\frac{\left(\pi^*_{ i} p^*_{i j} -  \pi^*_{j} p^*_{j i}\right)}{3 n}\left(1-\frac{\I_{(i,j) \in \calE}}{p}\right).
$$
Now since each edge $(i, j)$, with $i>j$, is sampled uniformly, we can upper bound $L_{1}$ by using the Bernstein inequality as
$$
\mathbb{P}\left(|L_{1}|>t\right) \leq 2 \exp \bigg(\frac{-t^{2}/2}{\frac{\tilde{\sigma}^{2}(1-p)}{9 n p} + \frac{\tilde{\sigma}(1-p)}{9 n p} t} \bigg).
$$
This is because each of the terms $u_j$ is zero-mean and bounded, i.e., $|u_j| \leq \frac{\tilde{\sigma}(1-p)}{3 n p}$, and has variance bounded by
$$
\begin{aligned}
\E\!\left[u_{j}^{2}\right] & \leq \frac{\tilde{\sigma}^{2}}{9 n^{2}}(1-p)+\frac{\tilde{\sigma}^{2}(1-p)^{2}}{9 n^{2} p} =\frac{\tilde{\sigma}^{2}(1-p)}{9 n^{2} p}.
\end{aligned}
$$
Substituting $t=\frac{5\tilde{\sigma}}{3} \sqrt{\frac{\log n}{n p} }$, and using the fact that $p \geq \frac{\log n}{n}$, we get
$$
\P\left(|L_{1}|>\frac{5\tilde{\sigma}}{3} \sqrt{\frac{\log n}{n p} }\right) \leq O\bigg(\frac{1}{n^{4}}\bigg).
$$
Finally, the lemma follows using $\tilde{\sigma} \leq \|\pi^*\|_\infty$ and a simple union bound argument. \qed

\subsection{Proof of \Cref{lem: Error bound on leave-out}} \label[appendix]{Proof of Error bound on leave-out}

Since ${\pi}^{(m)}$ and ${\pi}$ are stationary distributions of ${S}^{(m)}$ and ${S}$, an application of \cref{lem:PertubationOfPi} gives
\begin{align}
\nonumber & \left\|\pi^{(m)}-{\pi}\right\|_{\pi^{*^{-1}}} \leq \\ 
 & \ \ \frac{\left\| ({\pi}^{(m)})^\T \left({S}^{(m)}-{S}\right)\right\|_{\pi^{*^{-1}}}}{1-\| (\Pi^*)^{\frac{1}{2}} S^* (\Pi^*)^{-\frac{1}{2}} -\sqrt{\pi^*}\sqrt{\pi^{*^\T} }\|_{2}- \left\|{S}-S^*\right\|_{\pi^{*^{-1}}}}. \label{usual step}
\end{align}
Since $S^*$ is the canonical Markov matrix corresponding to a complete graph, we get $\|S^*-\1_n \pi^{*^\T}\|_{\pi^{*^{-1}}} \leq 1-\frac{\delta}{6}$ by \Cref{lem:SpectralGapLB}. We provide a high probability upper bound on $\|{S}-S^*\|_{\pi^{*^{-1}}}$ in the sub-sampled scenario in \cref{lem: Specral Norm sub-sampling bound} (see \cref{Proof of lem: Specral Norm sub-sampling bound}). Combining these two bounds, we obtain that the denominator term is lower bounded by $\delta/12$ for $np \geq O(\log n)$. Finally, we can convert the bound in \cref{usual step} in terms of $\ell^2$-norms as 
\begin{align}
    \left\|\pi^{(m)}-{\pi}\right\|_{2} &\leq \frac{12}{\delta}\sqrt{\frac{\|\pi^*\|_\infty}{\pi^*_{\min}}} \left\| ({\pi}^{(m)})^\T \left({S}^{(m)}-{S}\right)\right\|_{2} \nonumber\\
    & \leq \frac{12}{\delta^2} \left\| ({\pi}^{(m)})^\T \left({S}^{(m)}-{S}\right)\right\|_{2},    \label{eq:usual step 2}
\end{align}
where the last step follows since the principal ratio for $\pi^*$ is upper bounded by $1/\delta^2$ by \cref{lem: Height of Perron–Frobenius Eigenvector}. Now, it remains to bound $\|(\pi^{(m)})^\T({S}^{(m)} -{S})\|_{2}$. For $j \neq m$, the $j$th entry of $(\pi^{(m)})^\T({S}^{(m)} -{S})$ is given by
$$
\begin{aligned}
 & \left[(\pi^{(m)})^\T\left({S}^{(m)}-{S} \right)\right]_{j} \\
 & \ \ \ \ \ \ \ \  ={\pi}^{(m)}_{j} \left({S}^{(m)}_{ jj}- {S}_{j j} \right)+{\pi}_{m}^{(m)}\left({S}^{(m)}_{ mj} -{S}_{m j} \right) \\
& \ \ \ \ \ \ \ \  =-{\pi}^{(m)}_{j} \left({S}^{(m)}_{ j m} -{S}_{j m} \right) + {\pi}_{m}^{(m)}\left({S}_{mj}^{(m)} -{S}_{m j} \right) .
\end{aligned}
$$
Observe that for $j \neq m$, $\big|{S}^{(m)}_{ j m} -{S}_{j m} \big| \leq \frac{1-p}{3np}$ if $(j, m) \in \mathcal{E}$ and $\big|{S}^{(m)}_{j m} -{S}_{j m} \big| \leq \frac{1 }{3n}$ if $(j, m) \notin \mathcal{E}$, and we have
\begin{equation} \label{off diag bound}
    \begin{aligned}
        & \left|\left[(\pi^{(m)})^\T\left({S}^{(m)}-{S} \right)\right]_{j}\right| \\
        & \ \ \ \ \ \ \ \ \ \ \ \ \ \ \  \leq \begin{cases}\frac{2(1-p)}{3np}\left\|\pi^{(m)}\right\|_{\infty}, & \text { if }(j, m) \in \mathcal{E}, \\ \frac{2 }{3n}\left\|\pi^{(m)}\right\|_{\infty}, & \text { otherwise. }\end{cases}
    \end{aligned}
\end{equation}
Similarly, for $j=m$, we get
$$
\begin{aligned}
 & \bigg|[(\pi^{(m)})^\T ( {S}^{(m)}-{S})]_{m} \bigg|\\
 &  \leq\left|{\pi}^{(m)}_{m}\left({S}^{(m)}_{ mm} -{S}_{mm}\right)\right|+\left|\sum_{j: j \neq m}{\pi}^{(m)}_{j} \left({S}^{(m)}_{ jm} -{S}_{j m}\right)\right| \\
& = \bigg|\underbrace{\sum_{j: j \neq m} {\pi}_{m}^{(m)} \left({S}^{(m)}_{mj} -{S}_{m j}\right) }_{ \zeta_{1}} \bigg| \\ 
&  \ \ \ \ \ \ \ \ \ \ \ \ \ \ \ \ \ \ \ \ \ \ \ \ \ \ \ \ +  \bigg| \underbrace{\sum_{j: j \neq m}{\pi}^{(m)}_{j}\left({S}^{(m)}_{ j m} -{S}_{j m}\right)}_{\zeta_{2}} \bigg|.
\end{aligned}
$$
Observe that $\E[{S}_{mj}] = S^{(m)}_{mj}$, and moreover, ${S}_{m j}^{(m)} - {S}_{m j} = \frac{p_{m j}}{d}\left(p-\I_{(m ,j) \in \mathcal{E}}\right)$. Thus, by the Bernstein inequality, we get
$$
\P(|\zeta_1| \geq t) \leq 2\exp\bigg(-\frac{t^2/2}{\frac{(1-p) \|{\pi}^{(m)}\|_\infty^2}{9np} + \frac{ (1-p)\|{\pi}^{(m)}\|_\infty}{3np}\frac{t}{3} } \bigg).
$$
Substituting $ t =  \frac{5}{3} \sqrt{\frac{ \log n}{np}} \left\|{\pi}^{(m)}\right\|_{\infty}$ gives (for $n p \geq \log n $) 
$$
\P\bigg(|\zeta_1| \geq \frac{5}{3} \sqrt{\frac{ \log n}{np}} \left\|{\pi}^{(m)}\right\|_{\infty}\bigg) 
\leq O\bigg(\frac{1}{n^4}\bigg),
$$
where in the last step, we have utilized the fact that $np \geq \log n$. Using the same technique, we can obtain a similar bound for $\zeta_2$. Now, using \cref{eq:usual step 2,off diag bound} and bounds on $\zeta_1$ and $\zeta_2$ to upper bound $\|(\pi^{(m)})^\T(S^{(m)} - S)\|_2$ gives
$$ 
\|{\pi}^{(m)} - \pi\|_2\! \leq\! {\frac{12}{\delta^2} } \left(\frac{10}{3} \sqrt{\frac{ \log n}{np}}\! + \! \frac{2\sqrt{d_{\max}}}{3np}\! +\!\frac{2\sqrt{n}}{3n}\right)\left\|{\pi}^{(m)}\right\|_{\infty}.
$$
Since $d_{\max} = 3np/2$, for $np \geq c_0 \log n$ with constant $c_0$ and parameter $n$ large enough, we get 
$$ \|{\pi}^{(m)} - {\pi}\|_2  \leq\frac{1}{2} \|{\pi}^{(m)}\|_{\infty}. $$
An application of the triangle inequality gives
$$
\|{\pi}^{(m)} - {\pi}\|_2  \leq\frac{1}{2} \|{\pi}^{(m)} - {\pi}\|_{2} + \frac{1}{2}\|{\pi} - \pi^*\|_\infty + \frac{1}{2}\|\pi^*\|_\infty.
$$
Hence, rearranging the terms completes the proof. \qed

\subsection{Proof of \cref{lem: L2 error subsampling}} \label[appendix]{Proof of L2 error subsampling} 
An application of \cref{lem:PertubationOfPi} gives
$$
\begin{aligned}
& \left\|{\pi}-\pi^*\right\|_{\pi^{*^{-1}}} \\ 
&\ \ \ \ \  \leq \frac{\|\pi^{*^{\T}}\left({S}-S^*\right)\|_{\pi^{*^{-1}}}}{1-\|\Pi^{*^{\frac{1}{2}}} S^* \Pi^{*^{-\frac{1}{2}}}- \sqrt{\pi^*} \sqrt{\pi^*}^{\T}\|_{2} - \|{S} - S^*\|_{\pi^{*^{-1}} }}.
\end{aligned}
$$
Utilizing \cref{lem: Specral Norm sub-sampling bound} and the same argument as in \Cref{Proof of Error bound on leave-out} (cf. \cref{eq:usual step 2}), we obtain  
$$\|{\pi}-\pi^*\|_{2} \leq \frac{12}{\delta^2}\|\pi^{*^\T}({S}-S^*)\|_{2}.$$
Now, it remains to bound $\|\pi^{*^{\T}}({S}-S^*)\|_{2}$. An application of the triangle inequality yields
\begin{equation} \label{triangleq to bound the terms}
\|\pi^{*^\T}({S}-S^*)\|_{2} \leq \|\pi^{*^\T}D\|_{2} + \|\pi^{*^\T}({S_0}-S_0^*)\|_{2} ,
\end{equation}
where $D$ is a diagonal matrix with $D_{ii} = {S}_{ii} - S^*_{ii}$, and ${S}_0, S^*_0$ are the same as the matrices ${S}, S^*$ but with diagonal entries set to $0$. Observe that
$$ (\pi^{*^\T} D)_i = \pi^*_i ({S}_{ii} - S^*_{ii}) = \pi^*_i \frac{1}{d} \sum_{j:j\neq i} p^*_{ij}(p - \I_{(i,j) \in \calE}).  $$
An application of the Bernstein inequality gives 
$$ \P(|(\pi^{*^\T} D)_i| \geq t) \leq 2\exp\left(-\frac{-t^2/2}{ \frac{\|\pi^*\|_\infty^2}{9 np} + \frac{\|\pi^*\|_\infty}{3np} \frac{t}{3} }\right) .$$
Substituting $t = \frac{5}{3}\|\pi^*\|_\infty\sqrt{\frac{\log n}{np}}$, we obtain 
$$ \P\bigg(\|\pi^{*^\T}D\|_\infty \geq  \frac{5}{3}\|\pi^*\|_\infty\sqrt{\frac{\log n}{np}} \bigg) \leq O\bigg(\frac{1}{n^3}\bigg). $$ 
Therefore, with probability at least $1 - O(n^{-3})$, we have $\|\pi^{*^\T}D\|_2 \leq \sqrt{n} \|\pi^*\|_{\infty}$. Applying the same technique to $(\pi^{*^\T}({S_0}-S_0^*))_i$ as in \cref{lem:L2error} (or \cite[Theorem 9]{Chenetal2019}) yields a similar high probability bound on $\|\pi^{*^\T}({S_0}-S_0^*)\|_2$, and thus, combining the bounds with \cref{triangleq to bound the terms} completes the proof. \qed

\subsection{Error Bound on Spectral Norm under Sub-sampling} \label{Proof of lem: Specral Norm sub-sampling bound}
\begin{lemma}[Spectral Norm Sub-sampling Bound] \label{lem: Specral Norm sub-sampling bound} 
For the setting of \cref{thm: Perturbation Bound on Sparsified Stationary Distribution}, there exists a constant $c>1$ such that for $n p \geq c \log n$, the following bound holds: 
\begin{equation*}
    \|{S} - S^*\|_{\pi^{*^{-1}}} \leq \sqrt{\frac{\|\pi^*\|_\infty}{\pi^*_{\min}} }\|{S} - S^*\|_{2} \leq \frac{\delta}{12}, 
\end{equation*}
with probability at least $1- O(n^{-3})$.
\end{lemma}

\begin{proof}
 Note that, as the stationary distribution $\pi^*$ corresponds to a complete graph, its principal ratio is bounded by $1/\delta^2$ as established in \cref{lem: Height of Perron–Frobenius Eigenvector}. Consequently, we have
$$
\left\|S-S^{*}\right\|_{\pi^{*}} \leq \sqrt{\frac{\|\pi^{*}\|_{\infty}}{\pi^*_{{\min}}} }  \|S -  S^{*}\|_{2} \leq \frac{1}{\delta} \|S -  S^{*}\|_{2}.
$$
In order to bound $\left\|S-S^{*}\right\|_{2}$, we will decompose it as
$$
\left\|S-S^{*}\right\|_{2} \leq\|\tilde{S}-\tilde{S}^{*} \|_{2} + \| D \|_{2},
$$
where $D$ is a diagonal matrix such that $D_{i i} =\left(S-S^{*}\right)_{i i} $ and $\tilde{S},\tilde{S}^{*}$ are the same as $S, S^{*}$ but with diagonal entries set to $0$. Now, observe that
$$
\begin{aligned}
D_{ii} & =\frac{1}{3 n} \sum_{j: j \neq i} p^*_{i j} \bigg (1  - \frac{ \I_{(i, j) \in \calE} }{p}  \bigg). 
\end{aligned}
$$
Therefore, by the Bernstein inequality, we get
$$
\begin{aligned}     
\P\left(\left|D_{i i}\right|>t\right) & \leq 2\exp\bigg(-\frac{t^{2}/2}{ \frac{1}{9 n p}+\frac{(1-p) t}{9n p}} \bigg).
\end{aligned}
$$
Using the fact that $np \geq \log n$, substituting $t  =\frac{5}{3} \sqrt{\frac{\log n}{n p}}$, and using a simple union bound argument, we obtain the following bound with probability at least $1 -O(n^{-3})$:
$$
\|D\|_2 \leq \frac{5}{3} \sqrt{\frac{\log n}{n p}}.
$$
For $np \geq (\frac{40}{\delta^2})^2 \log n$, we have that $\|D\|_2 \leq \frac{\delta^2}{24}$. Now to bound the term $\|\tilde{S}-\tilde{S}^{*}\|_{2}$, we will invoke \cite[Theorem 6.3]{candes2012exact} to obtain that for $np \geq 3 \log n$, there exists a constant $c_0$ such that we have the following bound with probability at least $1 - O(n^{-3})$:
$$
\|\tilde{S}^{*} - \tilde{S}\|_2 \leq  \sqrt{ \frac{3 c_0 \log n}{n p} }.
$$ 
Now for $n p \geq (1728 c_0/\delta^2) \log n$, we obtain that $\|\tilde{S}^{*} - \tilde{S}\|_2 \leq \frac{\delta^2 }{24}$, thus proving the theorem.
\end{proof}

\section{Proof of \cref{lem:L2error}} \label[appendix]{Proof of L2 error empirical} 

This argument follows the proof of \cite[Theorem 9]{Chenetal2019}. An application of \cref{lem:PertubationOfPi} yields
$$
\begin{aligned}
& \left\|\hat{\pi}-\pi\right\|_{\pi^{-1}}\! \leq \!\frac{\|\pi^{\T} (\hat{S}-S)\|_{\pi^{-1}}}{1-\|\Pi^{\frac{1}{2}} S \Pi^{-\frac{1}{2}} \!-\! \sqrt{\pi} \sqrt{\pi}^{\T}\|_{2} - \|\hat{S} - S\|_{\pi_{*}^{-1}}}.
\end{aligned}
$$
Utilizing \cref{combinedupperbound}, we get  
$$\|\hat{\pi}-\pi\|_{2} \leq \frac{8}{\xi^{2}} \sqrt{\frac{\|\pi\|_{\infty}}{\pi_{\min} }} \|\pi^{\T} (\hat{S}-S)\|_{2}. $$
By \cref{assm: Bounded Principal Ratio}, we know that the principal ratio of $\pi$ is bounded, as it is the stationary distribution corresponding to a complete graph.
Now, it remains to bound $\|\pi^{\T}(\hat{S}-S)\|_{2}.$ 
Consider the $i$th coordinate of $\pi^{\T}(\hat{S}-S)$:
$$
\begin{aligned}
    & \big(\pi^{\T}(\hat{S}-S)\big)_{i} \\
    & \ \ \ \ \ \ =\pi_{ i}\left(\hat{S}_{i i}-{S}_{ i i}\right)+\sum_{j:j \neq i} \pi_{j}\left(\hat{S}_{ ji}-S_{ji}\right)\\
    & \ \ \ \ \ \ = \underbrace{-\pi_{ i}\frac{1}{d}\sum_{j \neq i} \left(\hat{p}_{i j}-{p}_{ i j}\right)}_{\zeta_{1,i} }+\underbrace{\frac{1}{d}\sum_{j: j \neq i} \pi_{j}\left(\hat{p}_{i j}-p_{i j}\right)}_{\zeta_{2,i} }.
\end{aligned}
$$
Observe that each of $\zeta_{1,i}$ (or $\zeta_{2,i}$) is a sum of at most $k d_{\max}$ independent zero-mean random variables different $\zeta_{1,i}$ (or $\zeta_{2,i}$) are independent of one another. Moreover, using Hoeffding's inequality, we can show that either of $\zeta_{1,i}$ or $\zeta_{2,i}$ are sub-Gaussian as
$$ \forall i \in [n], \enspace \mathbb{P}\left(\left|\zeta_{1,i} \right| \geq t\right) \leq 2 \exp \left(-\frac{2 t^{2}}{\frac{1}{(kd)^{2}} d_{\max} k \|\pi \|_{\infty}^{2}}\right).
$$
Hence $\zeta_{1,i}$ (or $\zeta_{2,i}$) can be treated as a sub-Gaussian random variable with variance proxy $\sigma^{2} =   \frac{\|\pi\|_{\infty}^{2}}{8 k d_{\max} }$. Now, we rewrite  $\|\pi^\T (\hat{S} - S)\|_2^2$  as
$$
 \|\pi^\T (\hat{S} - S)\|_2^2 = \sum_{i=1}^n (\pi^{\T} (\hat{S}-S))_i^{2} \leq 2\sum_{i=1}^{n} \zeta_{1,i}^2 + \zeta_{2,i}^2 ,
$$
which is a sum of squares form of a sub-Gaussian vector. Therefore, $\E[  \|\pi^\T (\hat{S} - S)\|_2^2] \leq  4 n \sigma^{2}$. To find a high probability bound, we utilize the Hanson-Wright inequality  \cite[Theorem 1.1]{rudelson2013hanson} to obtain the following bound for some constant $c>0$:
$$
\begin{aligned}
& \mathbb{P}\left( \sum_{i=1}^n \zeta_{1,i}^2 + \zeta_{2,i}^2 -\mathbb{E} [ \zeta_{1,i}^2 + \zeta_{2,i}^2] \geq t\right) \\ 
& \leq \mathbb{P}\left( \sum_{i=1}^n \zeta_{1,i}^2  -\mathbb{E} [ \zeta_{1,i}^2] \geq \frac{t}{2}\right) + \P\left(\sum_{i=1}^n \zeta_{2,i}^2 - \E[\zeta_{2,i}^2] \geq \frac{t}{2} \right) \\ 
& \leq 2 \exp \left(-c \min \left\{\frac{t^{2}}{n \sigma^{4}}, \frac{t}{\sigma^{2}}\right\}\right).
\end{aligned}
$$
Substituting $t = O(\sigma^{2} \sqrt{n \log n})$, and a simple calculation yields that with probability at least $1-O\left(n^{-4}\right)$,
$$
\begin{aligned}
    \|\pi^\T (\hat{S} - S)\|_2^{2} & \leq \mathbb{E}\left[\|\pi^\T (\hat{S} - S)\|_2^2\right]+ \tilde{c} \sigma^{2} \sqrt{n \log n} \\ 
    & \leq \hat{c} n \sigma^{2}  \leq \frac{\hat{c} n}{8 k d_{\max} }\left\|\pi\right\|_{\infty}^{2}.
\end{aligned}
$$
This completes the proof. \qed

\section{Improved Bounds on the Spectral Norm}
The ensuing lemmata present bounds on the spectral norm of the error matrix for the estimation of mean and squared mean. 

\begin{lemma}[Spectral Norm of Estimation Error for Mean]\label{lem:Spectral Norm of Error Improved}
Let $A \in \R^{n \times n}$ be matrix such that $A_{ij} \sim \textup{Bin}(p_{ij},k)$ for every $(i,j) \in \calE$ such that $i \neq j$ and $0$ otherwise. Define a matrix $\hat{A}$ such that $\hat{A}_{ij} = \frac{1}{k} A_{ij} - p_{ij}$ for every $(i,j) \in \calE$ such that $i \neq j$ and $0$ otherwise. Then, there exists a constant $c > 0$ such that we have
\begin{align*}
    & \mathbb{P}\Bigg(\|\hat{A}\|_2 \geq \sqrt{\frac{d_{\max }}{k}}+c\Bigg(\frac{d_{\max }^{1 / 4}}{\sqrt{4 k}}(\log n)^{3 / 4}+\sqrt{\frac{t}{4 k}} \\ 
    &\ \ \ \ \ \ \ \  \ \ \ \ \ \ \ \ \ \ \ \ \ \ \ \ \  +\frac{d_{\max }{ }^{1 / 3}}{(2 k)^{2 / 3}} t^{2 / 3}+\frac{t}{k}\Bigg)\Bigg) \leq 4 n e^{-t} .
\end{align*}
\end{lemma}
\begin{lemma}[Spectral Norm of Estimation Error for Squared Mean]\label{lem:Spectral Norm of Error Improved2} 
Let $Z \in \R^{n \times n}$ be matrix such that $Z_{ij} \sim \textup{Bin}(p_{ij},k)$ for every $(i,j) \in \calE$ for $i \neq j$ and $0$ otherwise. Define a matrix $\hat{Y}$ such that $\hat{Y}_{ij} = \frac{Z_{ij}(Z_{ij} - 1)}{k(k-1)}$ for every $(i,j) \in \calE$ for $i \neq j$ and $0$ otherwise. Then, there exists a constant $c > 0$ such that we have
\begin{align*}
    & \mathbb{P}\left(\|\hat{Y} -\E[Y]\|_2 \geq \sqrt{\frac{24 d_{\max }}{k}}+ c\left(\frac{d_{\max }^{1 / 4}}{\sqrt{k}}(\log n)^{3 / 4} \right. \right. \\ 
    & \ \ \ \ \ \ \ \ \ \ \ \ \ \ \ \ \ \ \ \ \ \ \left. \left.  +\sqrt{\frac{6t}{k}}+\frac{d_{\max }{ }^{1 / 3}}{( k )^{1 / 3}} t^{2 / 3}+ t\right)\right) \leq 4 n e^{-t} .
\end{align*}
\end{lemma}
The proofs of \cref{lem:Spectral Norm of Error Improved} and \cref{lem:Spectral Norm of Error Improved2} are provided in \cref{Proof of lem:Spectral Norm of Error Improved} and \cref{Proof of lem:Spectral Norm of Error Improved2}, respectively. The proofs of both lemmata are based on the ensuing proposition from \cite{brailovskaya2022} (presented almost verbatim as in \cite{brailovskaya2022}).
\begin{proposition}[Spectral Norm Bound {\cite[Corollary 2.15]{brailovskaya2022}}] \label{thm:user friendly bounds} Let $Y=\sum_{i=1}^{d} Z_{i}$, where $Z_{1}, \ldots, Z_{n}$ are independent (possibly non-self-adjoint) ${n} \times {n}$ random matrices with $\E\left[Z_{i}\right]=0$. Then, there exists a constant $c > 0$ such that 
\begin{align}
& \P\big(\|Y\|_2 \geq \|\E [Y^\T Y]\|^{\frac{1}{2}}_2 \! + \! \|\E [Y Y^\T] \|^{\frac{1}{2}}_2 \nonumber \\ 
& \ \ \ \ \ \  + c(\nu(Y)^{\frac{1}{2}} \hat{\sigma}(Y)^{\frac{1}{2}}(\log n)^{\frac{3}{4}}+\hat{\sigma}_{*}(Y) t^{\frac{1}{2}} + \nonumber \\ 
& \ \ \ \ \ \ \ \ \ \ \ \ \ \ R(Y)^{\frac{1}{3}} \hat{\sigma}(Y)^{\frac{2}{3}} t^{\frac{2}{3}} +R(Y) t)\big) \leq 4 n e^{-t}  \label{user friendly bounds}
\end{align}
for all $t \geq 0$. Here, $\hat{\sigma}(Y), \hat{\sigma}_{*}(Y), \nu(Y), R(Y)$ are defined as
\begin{equation} \label{parameters}    
\begin{aligned}
& \hat{\sigma}(Y) \triangleq \max \left(\left\|\E [Y^\T Y]\right\|_2^{\frac{1}{2}},\left\|\E [Y Y^\T] \right\|_2^{\frac{1}{2}}\right), \\
& \hat{\sigma}_{*}(Y)\triangleq\sup _{\|v\|_2=\|w\|_2=1} \E\left[| v^\T(Y-\E [Y]) w|^{2}\right]^{\frac{1}{2}}, \\
& \nu(Y)\triangleq\|\operatorname{cov}(Y)\|_2^{\frac{1}{2}},\\
& R(Y)\triangleq \vertiii{ \max _{1 \leq i \leq n}\| Z_{i}\|_2 }_{\infty} ,
\end{aligned}
\end{equation}
where $\operatorname{cov}(Y) \in \mathbb{R}^{n^{2} \times n^{2}}$ such that $\operatorname{cov}(Y)_{i j, k l}=\mathbb{E}[Y_{i j} Y_{k l}]$, and $\vertiii{M}_\infty$ denotes the essential supremum of the random variable $|M|$.
\end{proposition}

\subsection{Proof of \cref{lem:Spectral Norm of Error Improved}} \label[appendix]{Proof of lem:Spectral Norm of Error Improved} 
For $(i, j) \in \mathcal{E}$ and $i \neq j$, we write $\hat{A}_{ij}$ as a sum of $k$ independent Bernoulli random variables $Z_{m,{ij}} \sim \Ber(p_{ij})$:
$$
\hat{A}_{i j}=\frac{1}{k} \sum_{m=1}^{k}\left(Z_{m,{i j}}-p_{i j}\right).
$$
Define a matrix $V^{i j, m}=\frac{Z_{m,{i j}}-p_{i j}}{k} e_{i} e_{j}^{\T} $ for $(i, j) \in \mathcal{E}$ and $m \in [k]$. Then,
$$
\hat{A} = \sum_{\substack{(i, j) \in \calE: \\ i \neq j}} \sum_{m=1}^{k} V^{i j, m}.
$$
Next, in order to apply \cref{thm:user friendly bounds} on $\hat{A}$, we need to calculate parameters $\hat{\sigma}(\hat{A}), \hat{\sigma}_{*}(\hat{A}), \nu(\hat{A}), R(\hat{A})$ in \cref{parameters}. Note that each entry of $A_{ij}$ is independent of one another. Therefore, for $i \neq j$, $\mathbb{E}[(\hat{A}^{\T}\hat{A})_{i j}]=0$. Moreover, we have
$$
\begin{aligned}
\E[(\hat{A}^{\T}\hat{A})_{j j}] & = \sum_{\substack{i:(i, j) \in \mathcal{E}, \\ i \neq j} }\E[(\hat{A}_{i j})^{2}]  =\sum_{\substack{i:(i, j) \in \calE, \\ i \neq j}} \frac{p_{i j}\left(1-p_{i j}\right)}{k}.
\end{aligned}
$$
Similarly, one can show that $\E[(\hat{A}\hat{A}^{\T})_{j j}]$ is also a diagonal matrix with diagonal entries given by
$$
\begin{aligned}
\E[(\hat{A}\hat{A}^{\T})_{ii}] & = \sum_{\substack{j:(i, j) \in \mathcal{E}, \\ i \neq j} }\E[(\hat{A}_{i j})^{2}]  =\sum_{\substack{j:(i, j) \in \calE, \\ i \neq j}} \frac{p_{i j}\left(1-p_{i j}\right)}{k}.
\end{aligned}
$$

Therefore, both $\E[\hat{A}^{\T}\hat{A}]$ and $\mathbb{E}[\hat{A}\hat{A}^{\T}]$ are diagonal matrices with diagonal entries bounded above by $\frac{d_{\max }}{4 k}$. Therefore $\hat{\sigma}(\hat{A}) \leq \sqrt{\frac{d_{\max }}{4 k}}$. Now we will bound $\hat{\sigma}_{*}(\hat{A})$. For a fixed $v, w \in \mathbb{R}^{n}$, we have
\begin{align*}
\mathbb{E}[|u^{\T}\hat{A} w|^{2}] & =\mathbb{E}\left[\left|\sum_{\substack{(i,j) \in \calE: \\ i \neq j}} \hat{A}_{i j} v_{i} w_{i }\right|^{2}\right] \\
&=\sum_{\substack{(i,j) \in \calE: \\ i \neq j}} v_{i }^{2} w_{j}^{2}\mathbb{E}[\hat{A}_{i j}^{2}] \\
& =\sum_{\substack{(i,j) \in \calE: \\ i \neq j}} v_{i}^{2}w_{j}^{2} \frac{p_{i j}\left(1 - p_{i j}\right)}{k}. 
\end{align*}
Taking supremum on both sides with respect to $v$ and $w$ such that $\|v\|_{2}=1$ and $\|w\|_{2}=1$ gives $\hat{\sigma}_*(\hat{A}) \leq \sqrt{\frac{1}{4k}}.$ Also $\operatorname{cov}(\hat{A}) \in \mathbb{R}^{n^{2} \times n^{2}}$ is a diagonal matrix with its diagonal entries $\operatorname{cov}(\hat{P}-P)_{i j, i j}$ given by
$$
\operatorname{cov}(\hat{A})_{i j, i j}=\frac{p_{i j}\left(1-p_{i j}\right)}{k}, \text{ if } (i,j) \in \calE .
$$
Therefore, $\nu(\hat{A}) \leq \frac{1}{\sqrt{4 k}}$. Finally, $R(\hat{A}) \leq \frac{1}{k}$ as $\left|V^{i j, m}\right| \leq \frac{1}{k}$. Substituting the above bounds into \cref{user friendly bounds} proves the lemma. \qed

\subsection{Proof of \cref{lem:Spectral Norm of Error Improved2}} \label[appendix]{Proof of lem:Spectral Norm of Error Improved2}
 Define $\hat{V}^{ij} = (\hat{Y}_{ij} - p_{ij}^2)e_i e_j^\T$ and note that we can decompose $\hat{Y} - \E[\hat{Y}]$ in terms of $\hat{V}^{ij} $ as 
 $$\hat{Y} - \E[\hat{Y}] = \sum_{\substack{i:(i, j) \in \calE, \\
i \neq j}} \hat{V}^{ij}.$$
Next, we will apply \cref{thm:user friendly bounds} on the $\hat{V}^{ij}$, but first we need to calculate the parameters $\hat{\sigma}(\hat{Y}-\E[\hat{Y}]).$  
Using the same reasoning as in the proof of \cref{lem:Spectral Norm of Error Improved}, for $(i,j) \in \calE$ and $i \neq j$, we have $\mathbb{E}[((\hat{Y}-\E[\hat{Y}])^{\T}(\hat{Y}- \E[\hat{Y}]))_{i j}]=0$. Moreover,
$$
\begin{aligned}
& \E[((\hat{Y}-\E[\hat{Y}])^{\T}(\hat{Y}-\E[\hat{Y}]))_{jj} ]   =\sum_{\substack{i:(i, j) \in \calE, \\
i \neq j}} \mathbb{E}\left[\left(\hat{Y}_{i j}-p_{i j}^{2}\right)^{2}\right] \\
& =\sum_{\substack{i:(i, j) \in \calE, \\
i \neq j}} \frac{-2(2 k-3) p_{i j}^{4}+4(k-2) p_{i j}^{3}+2 p_{i j}^{2}}{k(k-1)}\leq \frac{6d_{\max}}{k} .
\end{aligned}
$$
Therefore, we obtain
$$
\hat{\sigma}(\hat{Y}-\E[\hat{Y}]) \leq \sqrt{\frac{6 d_{\max }}{k}}.
$$
Similarly, using the same reasoning as in the proof of \cref{lem:Spectral Norm of Error Improved}, we can show that
$$
\begin{aligned}
    \hat{\sigma}_{*}(\hat{Y}-\E[\hat{Y}]) &\leq \sqrt{\frac{6}{k}},  \nu(\hat{Y}-\E[\hat{Y}]) \leq \sqrt{\frac{6}{k}}, \\ 
    \text { and } R(\hat{Y}-\E[\hat{Y}]) & \leq 1. 
\end{aligned}
$$
Therefore, the proof follows by plugging in these bounds into \cref{user friendly bounds}. \qed

\section{Stability of the BTL Assumption} \label[appendix]{sec: Stability of the BTL Assumption}

In this appendix, we analyze the stability of the BTL assumption in the context of rankings for complete graphs. Specifically, we present an observation regarding how induced rankings behave when BTL models are perturbed by small amounts bounded by the scaling of the critical threshold of our testing framework. Recall that the \emph{BTL ranking} orders agents based on the stationary distribution $\pi$ of the canonical Markov matrix (even for general pairwise comparison models, although usually, BTL rankings are used in the context of BTL models). Meanwhile, the \emph{Borda ranking} is more general, as it does not rely on the BTL assumption and instead orders agents based on their Borda counts or scores \cite{borda1781} (defined next). We define the \emph{Borda count} $\tau_i(P)$ of an agent $i \in [n]$ as the (scaled) probability that $i$ beats any other agent selected uniformly at random \cite{borda1781}: 
\begin{equation}
        \tau_i(P) \triangleq \sum_{j= 1}^n (1 - p_{ij}).
\end{equation}
If the BTL assumption holds, then the Borda ranking equals the BTL ranking. Our goal here is to determine the size of the deviation from the BTL condition for a pairwise comparison model that is sufficient to produce a discrepancy between the BTL and Borda rankings. In particular, we will treat the Borda rankings as the ``ground truth.'' For simplicity in this section, we consider the symmetric setting where pairwise comparison matrices $P$ satisfy $p_{ij} + p_{ji} =1$ for all $i,j \in [n]$.  
The next proposition shows that the stability of the BTL assumption decreases as $n$ grows, i.e., as $n$ increases, smaller deviations from the BTL condition can lead to inconsistent BTL and Borda rankings. 
\begin{proposition}[Stability of BTL Assumption] \label{Prop: Stability of BTL Model}
Given any pairwise comparison matrix $P \in (0,1)^{n \times n}$ over a complete graph, the following are true:

\begin{enumerate}
\item Define the error matrix $\mathsf{E} \triangleq \Pi P + P^\T \Pi-\mathbf{1}_n\pi^\T$. For any agents $i,j \in [n]$, $i \neq j$, the relative BTL and Borda rankings of $i$ and $j$ have the following relationship: $\pi_i \geq \pi_j$ if and only if
$$
    \tau_{i}(P)-\tau_{j}(P) \geq \sum_{k=1}^{n} \frac{\mathsf{E}_{i k}}{\pi_{i}+\pi_{k}}-\frac{\mathsf{E}_{j k}}{\pi_{j}+\pi_{k}}.
$$
\item There exists a sub-sequence of pairwise comparison matrices $P \in (0,1)^{n \times n}$ such that agents $1$ and $2$ have constant Borda count difference $\Delta \tau \triangleq \tau_{1}(P)-\tau_{2}(P) > 0$, but are ranked in the opposite order in a BTL ranking, i.e., $\pi_{1} < \pi_{2}$.\footnote{We drop the subscript $n$ for the sub-sequence $P_n$ and write it as $P$ for clarity.} Moreover, the deviation of $P$ from the BTL condition decays as 
    $$\frac{\left\|\Pi P  +P  \Pi-\1_n \pi^{\T}\right\|_{\F} }{n \|\pi\|_\infty} \leq \frac{c_{11}}{\sqrt{n}}, $$ 
for some constant $c_{11} > 0$. 
\end{enumerate}
\end{proposition}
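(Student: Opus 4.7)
The plan for Part 1 is to derive the claimed equivalence by exploiting the stationarity of $\pi$ together with the complete-graph symmetry. Starting from $\pi^\T S = \pi^\T$ for the canonical Markov matrix and using $p_{ik} + p_{ki} = 1$ (with $p_{ii} = 1/2$), I would obtain the key identity $\sum_{k=1}^n (\pi_i + \pi_k) p_{ik} = 1$ for every $i \in [n]$, where the self-loop term contributes $\pi_i$. In the symmetric setting, the definition of $\mathsf{E}$ together with $P + P^\T = \1_n \1_n^\T$ gives $\mathsf{E} = \Pi P - P\Pi$ and hence $\mathsf{E}_{ik} = (\pi_i - \pi_k) p_{ik}$. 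Using the elementary identity $\mathsf{E}_{ik}/(\pi_i + \pi_k) = p_{ik} - 2\pi_k p_{ik}/(\pi_i + \pi_k)$ and $\tau_i(P) = n - \sum_k p_{ik}$, the right-hand side of the claim can be rewritten entirely in terms of the weighted sums $W_i \triangleq \sum_k \pi_k p_{ik}/(\pi_i + \pi_k)$. The iff then reduces to a monotonicity statement $\pi_i \geq \pi_j \iff W_i \leq W_j$, which I would establish by writing $p_{ik} = \pi_k/(\pi_i + \pi_k) + \delta_{ik}$: the BTL part of $W_i$, namely $\sum_k \pi_k^2/(\pi_i + \pi_k)^2$, is strictly decreasing in $\pi_i$ by a term-by-term comparison, and the remainder involving $\delta$ is controlled using skew symmetry $\delta_{ik} = -\delta_{ki}$ and the stationarity relation $\sum_k (\pi_i + \pi_k)\delta_{ik} = 0$.

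For Part 2, I would construct an explicit subsequence $P = P_n \in (0,1)^{n \times n}$ as a small perturbation of a nearly-uniform BTL model. Specifically, fix a target stationary distribution $\pi$ with $\pi_2 - \pi_1 = \Theta(1/n^2)$ (so BTL strictly prefers $2$ over $1$) and $\pi_k \approx 1/n$ for $k \geq 3$, adjusted for normalization. Define $p_{ik} = \pi_k/(\pi_i + \pi_k) + \delta_{ik}$, where the deviations $\delta_{ik}$ are designed to concentrate in rows $1$ and $2$: I would take $\delta_{1k}, \delta_{2k}$ of order $1/n$ for $k \geq 3$ with opposite signs chosen so that $\sum_k (\delta_{2k} - \delta_{1k})$ is a positive constant, and determine $\delta_{12}$ along with the remaining entries from the skew-symmetry $\delta_{ik} = -\delta_{ki}$ and the stationarity constraints $\sum_k (\pi_i + \pi_k) \delta_{ik} = 0$ (which form a solvable linear system because $\pi$ is a small perturbation of uniform). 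Using the decomposition $\tau_i(P) - \tau_j(P) = (\tau_i^{\mathrm{BTL}} - \tau_j^{\mathrm{BTL}}) - \sum_k(\delta_{ik} - \delta_{jk})$ together with the sign monotonicity $\operatorname{sign}(\tau_i^{\mathrm{BTL}} - \tau_j^{\mathrm{BTL}}) = \operatorname{sign}(\pi_i - \pi_j)$ established in Part 1, the constant-sized $\delta$ term dominates the vanishing BTL Borda gap and yields $\tau_1(P) - \tau_2(P) > 0$ of constant order while $\pi_1 < \pi_2$.

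To bound the separation distance for the constructed family, I compute $(\Pi P + P\Pi - \1_n \pi^\T)_{ik} = (\pi_i + \pi_k)\delta_{ik}$, so the Frobenius norm squared equals $\sum_{i,k}(\pi_i + \pi_k)^2 \delta_{ik}^2$. Since the nonzero $\delta_{ik}$ are of size $O(1/n)$ and supported on $O(n)$ entries concentrated in rows and columns $1$ and $2$, and $\pi_i + \pi_k = \Theta(1/n)$, this gives $\|\Pi P + P\Pi - \1_n \pi^\T\|_\F = O(n^{-3/2})$. Dividing by $n\|\pi\|_\infty = \Theta(1)$ yields the separation bound $O(n^{-3/2})$, which is stronger than the required $O(1/\sqrt{n})$. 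The main obstacle in Part 1 is the algebraic manipulation required to match the stationarity-derived identity to the weighted $1/(\pi_i + \pi_k)$ form in the statement, rather than simpler row-sum forms of $\mathsf{E}$. The main obstacle in Part 2 is the consistency of the perturbation design: one must show that the stationarity linear system admits a solution with the required sign and magnitude structure, which I would handle by treating it as an $O(1/n)$-perturbation of the uniform-$\pi$ case (where the constraints $\sum_k \delta_{ik} = 0$ and $\delta_{ik} = -\delta_{ki}$ have a clean and easily parametrizable solution set).
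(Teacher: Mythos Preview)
Your reduction is incorrect. With $\mathsf{E}_{ik}/(\pi_i+\pi_k)=p_{ik}-2\pi_kp_{ik}/(\pi_i+\pi_k)$ the displayed inequality rearranges not to $W_i\le W_j$ but to $(\tau_i-\tau_j)+(W_i-W_j)\ge0$, i.e.\ to $V_j\ge V_i$ with $V_i\triangleq\sum_k\pi_ip_{ik}/(\pi_i+\pi_k)$; the $\tau$ terms do not cancel. Worse, $\pi_i\ge\pi_j\Leftrightarrow V_j\ge V_i$ is \emph{false} already for BTL models (try $n=3$, $\pi=(0.1,0.3,0.6)$), so the version of Part~1 you are attempting---reading the statement's $P^\T$ literally---cannot be proved, and your proposed ``control the $\delta$-remainder'' step would have to produce an exact equality, not a bound, to yield an iff. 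The paper in fact works with $\mathsf{E}=\Pi P+P\Pi-\1_n\pi^\T$ (consistent with the rest of the manuscript); then $\mathsf{E}_{ik}/(\pi_i+\pi_k)=p_{ik}-\pi_k/(\pi_i+\pi_k)$ and the entire argument is the one-line identity
\[
\tau_i-\tau_j \;=\; \sum_k\frac{\mathsf{E}_{jk}}{\pi_j+\pi_k}-\sum_k\frac{\mathsf{E}_{ik}}{\pi_i+\pi_k}+(\pi_i-\pi_j)\sum_k\frac{\pi_k}{(\pi_i+\pi_k)(\pi_j+\pi_k)},
\]
from which the iff is immediate because the last sum is strictly positive. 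No stationarity identity, no monotonicity of $W_i$, no $\delta$-remainder analysis is needed.

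\textbf{Part 2.} Your construction cannot produce a constant Borda gap. If $\delta$ is skew-symmetric, supported on rows/columns $1,2$, and $\pi$ is prescribed, then stationarity for each $i\ge3$ reads $(\pi_i+\pi_1)\delta_{1i}+(\pi_i+\pi_2)\delta_{2i}=0$, forcing $\delta_{2i}\approx-\delta_{1i}$ for near-uniform $\pi$; combined with the row-$1$ constraint $\sum_k(\pi_1+\pi_k)\delta_{1k}=0$ (hence $\sum_k\delta_{1k}=O(1/n)$) this yields $\sum_k(\delta_{2k}-\delta_{1k})=-2\delta_{12}-2\sum_{k\ge3}\delta_{1k}+O(1/n)=O(1/n)$, not a constant. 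In fact with $\pi$ exactly uniform the shift is exactly zero, so fixing $\pi$ and localizing $\delta$ kills precisely the quantity you need. (Separately, your BTL Borda gap is not ``vanishing'': with $\pi_2-\pi_1=\Theta(1/n^2)$ and $\pi_k\approx1/n$ each summand of $\tau_1^{\mathrm{BTL}}-\tau_2^{\mathrm{BTL}}$ is $\Theta(1/n)$, giving a $\Theta(1)$ total.) The paper takes the opposite route: it specifies $P$ directly with \emph{constant-size} perturbations on $\Theta(n)$ entries of rows $1$ and $2$ so that $\Delta\tau$ is constant by construction, then computes the resulting $\pi$ explicitly (symmetry reduces this to a small linear system), verifies the BTL order is reversed, and bounds the separation by writing $P=\tfrac12\1_n\1_n^\T+Q$ with $Q$ having $O(n)$ nonzero entries.
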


\begin{proof} ~\newline
\indent
   \textbf{Part 1:} Let $\Delta \tau = \tau_i(P) - \tau_j(P)$, observe that
    \begin{align*}
      & \tau_i(P) - \tau_j(P)  = \sum_{k=1}^n p_{jk} - p_{ik} \\ 
      & = \sum_{k=1}^n p_{jk} - \frac{\pi_{k}}{\pi_j + \pi_k} + \sum_{k=1}^n  \frac{\pi_{k}}{\pi_i + \pi_k}  - p_{ik} \\
      & \ \ \ \ \ \ \ \ \ + (\pi_{i} -\pi_j) \sum_{k=1}^n \frac{\pi_k}{(\pi_j + \pi_k)(\pi_i + \pi_k)}  \\ 
      & = \sum_{k=1}^n \frac{\mathsf{E}_{jk}}{\pi_j + \pi_k} - \frac{\mathsf{E}_{ik}}{\pi_i + \pi_k} + \sum_{k=1}^n \frac{\pi_k (\pi_{i} -\pi_j) }{(\pi_j + \pi_k)(\pi_i + \pi_k)} .
    \end{align*}
    Clearly, $\pi_i \geq \pi_j$ if and only if 
        \[
            \Delta \tau \geq \sum_{k=1}^n \frac{\mathsf{E}_{jk}}{\pi_j + \pi_k} - \frac{\mathsf{E}_{ik}}{\pi_i + \pi_k}.
        \]
 
   \textbf{Part 2:} 
    The construction of the matrix $P \in (0,1)^{n\times n}$ is as follows. (In the proof, we drop the subscript $n$ in the sub-sequence $P_n$ for brevity.) For simplicity, we assume $n$ is even; otherwise, we can replace $n/2$ in the construction with $\lceil n/2 \rceil$ to get the corresponding results. Below, we define the pairwise comparison matrix $P_n$ for $j \geq i$. The rest of the values can be obtained by the skew-symmetric condition $p_{ij} + p_{ji}=1$. 
    $$
        p_{i j}=\left\{\begin{array}{cl}
        1 / 2 & 1 \leq i \leq j \leq \frac{n}{2} \\
        1 / 2 & \frac{n}{2} < i \leq j \leq n \\
        1 / 2+2 \eta / n & i=1, j > n / 2 \\
        1 / 2 & i=2, \frac{n}{2} < j \leq n-l \\
        \frac{1}{2}+(\eta-\alpha) / l & i=2, n-l <j \leq n   \\
        1 / 2 & 3 \leq i \leq \frac{n}{2}, \frac{n}{2} < j \leq n
        \end{array}\right. ,
    $$
    where $l$ is a integer less than $n / 2$ and will be defined below.
    We are primarily interested in the first two rows of $P$. Their respective Borda scores are $\tau_{1}(P)=\frac{n}{2}-\eta$ and $\tau_{2}(P)=\frac{n}{2}-\eta+\alpha$ and hence $\Delta \tau=\tau_{1}(P)-\tau_{2}(P)= -\alpha$. Clearly, under Borda count ranking, item $2$ is ranked higher than $1$. We will show that $1$ is always ranked higher than $2$ under the BTL ranking. Moreover, we will show that the deviation of $P$ from the BTL condition decays as 
    $$\left\|\Pi P+P \Pi-\1_n \pi^{\T}\right\|_{\F} \leq \frac{c}{\sqrt{n}} . $$
    To show both these facts, we need to calculate the stationary distribution $\pi$ of the canonical Markov matrix corresponding to $P$. We set $l=\lceil 2 \eta\rceil$ and $\eta=4 \alpha n$ and $\alpha=0.01$. Let $\pi_{1}=a$ and $\pi_{2}=b$.
    Using the symmetric structure of $P$, it is easy to see that: 
    $$
        \begin{aligned}
        & \pi_{3}=\pi_{4}=\ldots =\pi_{\frac{n}{2}}=c, \\
        & \pi_{n / 2+1}=\ldots =\pi_{n-l}=d, \\
        & \pi_{n-l+1}=\cdots \cdot=\frac{\pi}{2}=e.
        \end{aligned}
    $$
    Here, $e$ denotes a variable and should not be confused with the mathematical constant.
    The above equations and the fact that $\pi$ is a probability vector give
    $$
        a+b+c\left(\frac{n}{2}-2\right)+\left(\frac{n}{2}-l\right) d+l e=1,
    $$
    and by the stationarity of $\pi$, we have the following set of equations: 
    $$
    \begin{aligned}
        & a\left(\frac{n-1}{2}+\eta\right)=\frac{b}{2}+\frac{c}{2}\left(\frac{n}{2}-2\right)+d\left(\frac{n}{2}-l\right)\left(\frac{1}{2}-\frac{2 \eta}{n}\right) \\
        & \ \ \ \ \ \ \ \ \ \ \ \ \ \ \ \ \ \ \ \ \ \ \ \ \ \ \ \ \ \ \ \ \ \ \ \ \ \ \  +l e\left(\frac{1}{2}-\frac{2 \eta}{n}\right), \\
        & b\left(\frac{n-1}{2}+\eta-\alpha\right)=\frac{a}{2}+\frac{c}{2}\left(\frac{n}{2}-2\right)+\frac{d}{2}\left(\frac{n}{2}-l\right) \\
        & \ \ \ \ \ \ \ \ \ \ \ \ \ \ \ \ \ \ \ \ \ \ \ \ \ \ \ \ \ \ \ \ \ \ \ \ \ \ \  +l e\left(\frac{1}{2}-\frac{\eta-\alpha}{l}\right), \\
        & c\left(\frac{n-1}{2}\right)=\frac{a+b}{2}+\frac{c}{2}\left(\frac{n}{2}-3\right)+\frac{d}{2}\left(\frac{n}{2}-l\right)+\frac{l e}{2}, \\
        & d\left(\frac{n-1}{2}-\frac{2 \eta}{n}\right)= a\left( \frac{1}{2} + \frac{2\eta}{n}\right) +\frac{b}{2}+\frac{c}{2}\left(\frac{n}{2}-2\right) \\
        & \ \ \ \ \ \ \ \ \ \ \ \ \ \ \ \ \ \ \ \ \ \ \ \ \ \ \ \ \ \ \ \ \ \ \ \ \ \ \  +\frac{d}{2}\left(\frac{n}{2}-l-1\right)+\frac{l e}{2}.
    \end{aligned}
    $$
    For simplicity, we restrict our attention to sub-sequences where the matrix dimension $n$ is such that $2\eta$ is an integer. Under this assumption, the system of equations can be solved using Wolfram Mathematica, and the results are presented based on the dominant terms of the polynomials. The coefficients of the polynomials are accurate to four decimal places. 
    $$
    \begin{aligned}
        a & = \frac{0.8518}{n}\frac{\left(n^{9}-1.4541 n^{8}+0.6307 n^{7} + O(n^{6}) \right)}{g(n)}, \\
        b & = \frac{0.8518}{n}\frac{\left(n^{9}-1.5491 n^{8}+0.6723 n^{7} + O(n^{6}) \right)}{g(n)}, \\
        c & = \frac{1}{n}, \quad d  = \frac{1}{n}\frac{\left(n^{9}-1.1063 n^{8}+0.2125n^{7} + O(n^{6} \right)}{g(n)}, \\
        e & = \frac{1}{n} \frac{\left(n^{9}+0.7455n^{8}-0.5282 n^{7} + O(n^6)\right)}{g(n)},
    \end{aligned}
    $$
    where $g(n)$ is same across all terms and is given as $g(n) = n^{9}-1.4026n^{8}+0.5877 n^{7} + O(n^6)$.
    {Observe that for $n$ large enough}, we have that $b<a$ although the difference between them decreases with $n$.    
    Now it remains to show  that $\|\Pi P +  P\Pi - \1_n \pi^{\T}\|_\F$ decays as $O(1/\sqrt{n})$. To show this, we decompose $P$ as 
    $$ P = \frac{1}{2}\1_n\1_n^\T + Q, $$
    where $Q$ contains the residual terms. Note that $Q$ has only $n+2l$ non-zero terms. Hence, we can upper bound $\| \Pi P +  P\Pi - \1_n\pi^{\T} \|_{\F} $ as
    $$
      \begin{aligned}
    \big\| \Pi P + & P\Pi - \1_n\pi^{\T} \big\|_{\F} \\
    & = \left\| \frac{1}{2} (\pi \1_n^\T + \1_n\pi^\T)+ \Pi Q + Q \Pi - \1_n \pi^{\T} \right\|_{\F} \\
     & \leq \frac{1}{2}\left\| \pi \1_n^\T - \1_n \pi^{\T}   \right\|_{\F}  + \left\| \Pi Q + Q \Pi \right\|_{\F} \\
     &\leq \left\| a \1_n^\T -  \pi^{\T}   \right\|_{2} + \left\| b \1_n^\T -  \pi^{\T}   \right\|_{2} \\
     & \ \ \ \ \ \ \ \ \  + \frac{1}{2} \left\| u \1_{n-2}^\T - \1_{n-2} u^{\T}  \right\|_{\F}  + \left\| \Pi Q + Q \Pi \right\|_{\F} ,
      \end{aligned}
    $$
    where $u \in \R^{n-2}$ is a vector such that and $u_i = \pi_{i+2}$ for $i\in [n-2]$. It is easy to see that $\| a \1_n^\T -  \pi^{\T}   \|_{2} \leq O(1/\sqrt{n})$ as each term of the vector is bounded above by $2/n$. Moreover,
    observe that for any pair $x,y \in \{c,d,e\}$ the absolute difference $|x-y| \leq 2/n^2$ and hence, $\| u \1_{n-2}^\T - \1_{n-2} u^{\T}   \|_{\F} $ is $O(1/n^2)$. Now it is easy to show that 
    $$\| \Pi Q + Q \Pi   \|_{\F} \leq \frac{c}{\sqrt{n}},$$
    as $\Pi Q + Q \Pi$ has only $O(n)$ non zero terms each of which is bounded above by $2/n$.     
    The proof then follows by a simple calculation. 
\end{proof}

Part 1 of \cref{Prop: Stability of BTL Model} highlights the relationship between Borda and BTL rankings in terms of the weighted sum of entries of the error matrix. Moreover, part 2 conveys that the BTL assumption may potentially generate a ``wrong'' ranking (with respect to the Borda ranking) when the underlying pairwise comparison matrix is $O(1/\sqrt{n})$-separation distance away from the class of BTL models. 
Interestingly, this $O(1/\sqrt{n})$ deviation coincides with the critical threshold for the BTL testing problem (up to constant factors). 
Interestingly, this $O(1/\sqrt{n})$ deviation coincides with the critical threshold for the BTL testing problem (up to constant factors) when considering constant $k$ - we note that our stability analysis is conducted with respect to the true underlying probabilities rather than empirical probabilities which explains why our results do not exhibit dependence on $k$. It would be interesting to further explore the stability of the BTL assumption in the context of rankings in future work. We emphasize that our results here only consider stability in the complete graph case. When the induced graph is not complete, defining Borda rankings becomes ambiguous. 
Indeed, for a general graph, Borda rankings may not coincide with BTL rankings even under the BTL model. Finally, to verify our calculations in the proof, we also plot numerically computed values in \cref{fig:stability} that confirm the theoretical calculation that $\sqrt{n}\|\Pi P+ P\Pi - \1_n\pi^\T \|_\F$ converges to a constant.
    \begin{figure}
        \centering
        \includegraphics[width = 0.35\textwidth]{./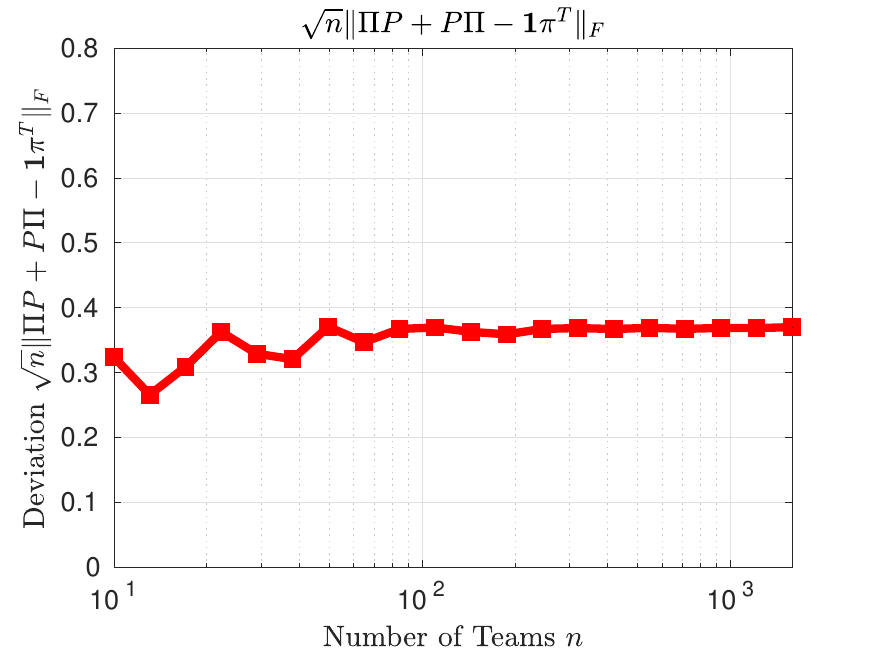}
        \caption{Plot of $\sqrt{n}\|\Pi P+ P\Pi - \1\pi^\T \|_\F$.}
        \label{fig:stability}
    \end{figure}
    
\section*{Acknowledgment}
The authors would like to thank Dongmin Lee and William Lu for discussions.

\balance

\bibliographystyle{IEEEtran}
\bibliography{journalv10_final_references}

\end{document}